\theoremstyle{plain}
\newtheorem{theorem}{Theorem}[section]
\newtheorem{proposition}[theorem]{Proposition}
\newtheorem{lemma}[theorem]{Lemma}
\theoremstyle{definition}
\newtheorem{definition}[theorem]{Definition}
\theoremstyle{remark}
\DeclareMathOperator{\Tr}{Tr}
\DeclareMathOperator{\diag}{diag}
\DeclareMathOperator{\spn}{span}
\DeclareMathOperator{\nul}{null}
\DeclareMathOperator{\rk}{rank}
\DeclareMathOperator*{\argmin}{arg\,min}
\newcommand{\method}{FRLC}
\title{Low-Rank Optimal Transport through Factor Relaxation with Latent Coupling}
\author[1,*]{\textbf{Peter Halmos}}
\author[1,*]{\textbf{Xinhao Liu}}
\author[2,*]{\textbf{Julian Gold}}
\author[1]{\textbf{Benjamin J. Raphael}}
\affil[1]{Department of Computer Science, Princeton University}
\affil[2]{Center for Statistics and Machine Learning, Princeton University}
\begin{document}

\maketitle

\begin{abstract}
Optimal transport (OT) is a general framework for finding a minimum-cost transport plan, or coupling, between probability distributions, and has many applications in machine learning. A key challenge in applying OT to massive datasets is the quadratic scaling of the coupling matrix with the size of the dataset. \cite{forrow19a} introduced a factored coupling for the $k$-Wasserstein barycenter problem, which \cite{Scetbon2021LowRankSF} adapted to solve the primal low-rank OT problem.
We derive an alternative parameterization of the low-rank problem based on the \emph{latent coupling} (LC) factorization
previously introduced by \cite{lin2021making} generalizing \cite{forrow19a}. 
The LC factorization has multiple  advantages for low-rank OT including decoupling the problem into three OT problems and greater flexibility and interpretability. We leverage these advantages to derive a new algorithm \emph{Factor Relaxation with Latent Coupling} (\method), which uses \emph{coordinate} mirror descent  
to compute the LC factorization.
\method\ handles multiple OT objectives (Wasserstein, Gromov-Wasserstein, Fused Gromov-Wasserstein), and marginal constraints (balanced, unbalanced, and semi-relaxed)
with linear space complexity. 
We provide theoretical results on \method, and demonstrate superior performance on diverse applications -- including graph clustering and spatial transcriptomics --  while demonstrating its interpretability.
\end{abstract}

\section{Introduction}
\label{intro}

Optimal transport (OT) is a powerful geometric framework for comparing probability distributions. OT problems seek a transport plan $\bm{P}$ efficiently transforming one distribution ($\bm{a}$) into another ($\bm{b}$), subject to a ground cost $\bm{C}$. The minimum cost yields a distance between $\bm{a}$ and $\bm{b}$, while the optimal transport plan reveals key structural similarities between the distributions. Owing to its versatility -- different ground costs result in different ways to compare data -- OT has found many applications in machine learning and beyond:  from self-attention \cite{tay20a, sander22a, geshkovski2023mathematical} and domain adaptation \cite{courty2014domain, solomon2015convolutional} to computational biology \cite{schiebinger2019optimal, yang2020predicting, Bunne_2023, PASTE2}.

This versatility is compounded by several variants
using different forms of the objective function and/or constraints on the transport plan $\bm{P}$. Wasserstein (W) OT \cite{kantorovich1942transfer} compares distributions over the same space through the expected work of $\bm{P}$, while Gromov-Wasserstein (GW) OT
\cite{memoli2011gromov} compares distributions supported on distinct geometries through the expected metric distortion of $\bm{P}$.  Fused Gromov-Wasserstein (FGW) \cite{Vayer2020} OT is suited to structured data, taking a convex combination of the former two objectives. Independently, one can relax constraints on the \emph{marginals} of $\bm{P}$: in computational applications, $\bm{P}$ is a matrix whose row-sum $\bm{P} \bm{1}_m$ and column-sum $\bm{P}^\mathrm{T} \bm{1}_n$ are called its left and right marginals. \emph{Balanced} OT requires $\bm{P} \bm{1}_m = \bm{a}$ and $\bm{P}^\mathrm{T} \bm{1}_n = \bm{b}$. \emph{Unbalanced} OT \cite{frogner2015learning} replaces these constraints with penalties in the transport cost, and is more robust to outliers. \emph{Semi-relaxed} OT can be used to understand how one dataset embeds into another by imposing one hard constraint on either the left or right marginal, used for feature transfer \cite{dong2023partial}, and alignment of spatiotemporal data \cite{destot}.

An important consideration in applying OT is the quadratic space of the transport plan. To address both the quadratic complexity and to provide robustness under sampling noise,
\cite{forrow19a} introduced another variant of OT, optimizing a $k$-Wasserstein Barycenter proxy for the rank-constrained Wasserstein objective. Their approach 
factors the transport plan through a small set of anchor points called hubs.
Generalizing this approach, \cite{Scetbon2021LowRankSF} introduce the factorization $\bm{P} = \bm{Q} \diag{(1/\bm{g})} \bm{R}^\mathrm{T}$ comprised of \emph{sub-coupling} matrices $\bm{Q}$ and $\bm{R}$ sharing an \emph{inner marginal} $\bm{g}$, meaning $\bm{Q}^\mathrm{T}\bm{1}_{n} = \bm{R}^\mathrm{T}\bm{1}_{m} = \bm{g}$. Building on this, \cite{Scetbon2021LowRankSF,scetbon2022linear,scetbon2023unbalanced} derived algorithms to compute low-rank optimal transport plans for the primal OT problem with general costs, extending low-rank OT to GW and unbalanced problems using factored couplings.

Interestingly, a different factorization of $\bm{P}$ was proposed by \cite{lin2021making} in the context of $k$-Wasserstein barycenters. We call their factorization a \emph{latent coupling} (LC) factorization, given by $\bm{P} = \bm{Q} \mathrm{diag}(1/ \bm{g}_Q) \bm{T} \mathrm{diag}(1 /\bm{g}_R) \bm{R}^\mathrm{T}$, with \emph{two} inner marginals $ \bm{g}_{Q}=\bm{Q}^\mathrm{T}\bm{1}_n$ and $\bm{g}_{R} = \bm{R}^\mathrm{T}\bm{1}_n$ and a general coupling $\bm{T}$. \cite{lin2021making} constrain the transport between $\bm{a}$ and $\bm{b}$ through two sets of learned anchor points, 
where the factorization is defined by three transport plans computed from three cost matrices between the points and their anchors. 
This objective differs from that of \cite{forrow19a, Scetbon2021LowRankSF}, who seek a minimal rank coupling with respect to a single, fixed cost $\bm{C}$. We observe that factored couplings of \cite{forrow19a} correspond to LC factorizations with diagonal $\bm{T}$, suggesting the LC factorization of \cite{lin2021making} may provide an alternative parameterization of  transport plans for the low-rank OT problem considered in \cite{forrow19a, Scetbon2021LowRankSF}. To our knowledge, this idea has not yet been explored.

\vspace{-2mm}
\paragraph{Contributions.} 
We present a new algorithm, Factor Relaxation with Latent Coupling (\method, with the informal mnemonic ``frolic''), to compute a minimum cost low-rank transport plan using the LC factorization.
Parameterizing low-rank transport plans with the LC factorization has a number of advantages. First, 
optimization of the low-rank OT objective decouples into three OT sub-problems on the LC factors $\bm{Q}, \bm{R}, \bm{T}$, leading to a simpler optimization algorithm. Second, this decoupling provides straightfoward extensions of \method\ to low-rank unbalanced and semi-relaxed OT; similar extensions for factored couplings required additional work \cite{scetbon2023unbalanced} beyond the balanced case. Third, the latent coupling $\bm{T}$ in the LC factorization provides additional flexibility to model transport between datasets with different numbers of clusters, and to model mass-splitting between these clusters, providing a high-level and interpretable description of $\bm{P}$ that differs from the factored couplings of \cite{forrow19a}. 
\method\ computes the LC factorization using a novel \emph{coordinate} mirror descent scheme,
alternating descent steps on variables $(\bm{Q}, \bm{R})$ and $\bm{T}$, inspired by the mirror descent approach of \cite{Scetbon2021LowRankSF}. We call the descent step on $(\bm{Q}, \bm{R})$ \emph{factor relaxation}, as the factors $\bm{Q}$ and $\bm{R}$ have relaxed inner marginals, allowing \method\ to be solved by OT sub-problems.
\method\ handles multiple OT objectives (Wasserstein, Gromov-Wasserstein, Fused Gromov-Wasserstein), and marginal constraints (balanced, unbalanced, and semi-relaxed).
We show \method\ performs better than existing state-of-the-art low-rank methods on a range of synthetic and real datasets, retaining the interpretability of \cite{lin2021making}, and inheriting the broad applicability of \cite{Scetbon2021LowRankSF, scetbon2022lowrank, scetbon2022linear, scetbon2023unbalanced}.

\vspace{-2mm}
\section{Background} \label{sec:background}
\paragraph{Wasserstein OT.} Let 
$\{x_1, \dots, x_n \}$ and $\{y_1, \dots y_m\}$ be datasets in a metric space $\mathcal{X}$, and let $\Delta_d$ be the probability simplex of size $d$.
Through
probability vectors $\bm{a} \in \Delta_n$ and $\bm{b} \in \Delta_m$,
each dataset is encoded as a probability measure: 
$\mu = \sum_{i=1}^n \bm{a}_i \delta_{x_i}$ and $\nu  = \sum_{j=1}^m \bm{b}_j \delta_{y_j}$. Let 
\begin{align*}
\Pi_{\bm{a}, \cdot } := \{ \bm{P} \in \mathbb{R}_+^{n \times m} : \bm{P} \bm{1}_m = \bm{a} \}, \,\, \Pi_{\,\cdot, \bm{b}} := \{ \bm{P} \in \mathbb{R}_+^{n \times m} : \bm{P}^\mathrm{T} \bm{1}_n = \bm{b}\}, \,\, \Pi_{\bm{a}, \bm{b}} := \Pi_{\bm{a}, \cdot} \cap \Pi_{\,\cdot, \bm{b}}.
\end{align*}
Thus, $\Pi_{\bm{a}, \bm{b}}$ is the set of transport plans (probabilistic coupling matrices) with marginals $\bm{a}$ and $\bm{b}$. Given a cost function $c: \mathcal{X} \times 
\mathcal{X} \to \mathbb{R}_{+}$, define 
the cost matrix $\bm{C} \in \mathbb{R}_+^{n \times m}$ via $\bm{C}_{ij} = c(x_i, y_j)$.
The Kantorovich formulation \cite{kantorovich1942transfer} of discrete OT, also called the Wasserstein problem, seeks a transport plan $\bm{P}$ of minimal cost:
\begin{equation}\label{eq:vanilla_OT}
\begin{split}
\mathrm{W}(\mu, \nu) := \min_{\bm{P} \in \Pi_{\bm{a}, \bm{b}}}  \langle \bm{C}, \bm{P} \rangle_F .
\end{split}
\end{equation}

\paragraph{Gromov-Wasserstein OT.} 
In many applications, one wishes to compare datasets $\{x_1, \dots, x_n\} \subset \mathcal{X}$ and $\{y_1, \dots, y_m\} \subset \mathcal{Y}$ across distinct metric spaces $\mathcal{X}$ and $ \mathcal{Y}$. 
The Gromov-Wasserstein (GW) objective \cite{memoli2007use,memoli2011gromov} 
addresses the absence of a common metric or coordinate system through intra-domain  cost functions $c_1: \mathcal{X} \times \mathcal{X} \to \mathbb{R}_{+}$ and $c_2: \mathcal{Y} \times \mathcal{Y} \to \mathbb{R}_{+}$, leading to intra-domain cost matrices
$\bm{A}_{ik} = c_{1}(x_{i}, x_{k})$ and $\bm{B}_{jl} = c_{2}(y_{j}, y_{l})$. The GW objective function 
$\mathcal{Q}_{\bm{A}, \bm{B}}(\bm{P}) := \sum_{
i, j, k, l
}
(\bm{A}_{ik} - \bm{B}_{jl})^2 
\bm{P}_{ij} \bm{P}_{kl}$ quantifies the expected metric distortion under $\bm{P}$, leading to the optimization problem:
\begin{equation}\label{eq:GWOT}
\begin{split}
\mathrm{GW}(\mu, \nu) := 
\min_{\bm{P}\in \Pi_{\bm{a}, \bm{b}}}   \mathcal{Q}_{\bm{A}, \bm{B}}(\bm{P}) .
\end{split}
\end{equation}
The Fused Gromov-Wasserstein (FGW) objective function \cite{Vayer2020} is a convex combination of the W and GW objectives, given as $\alpha \langle \bm{C}, \bm{P} \rangle_{F} + (1 - \alpha) \mathcal{Q}_{\bm{A}, \bm{B}}(\bm{P})$, for hyperparameter $\alpha \in (0,1)$.

\paragraph{Relaxed marginal constraints.}
\emph{Balanced} OT \eqref{eq:vanilla_OT} constrains $\bm{P}$ to lie in $\Pi_{\bm{a}, \bm{b}}$. \textit{Unbalanced} OT relaxes constraints $\bm{P} \bm{1}_m = \bm{a}$ and $\bm{P}^\mathrm{T} \bm{1}_n = \bm{b}$, replacing them 
with penalties in the form of KL divergences (or other divergences, see \cite{Chizat2018}):
\begin{equation} \label{eq:unbalanced}
\begin{split}
\textrm{U-W}(\mu,\nu) := \min_{\bm{P} \in \mathbb{R}_+^{n \times m}}  \langle \bm{C}, \bm{P} \rangle_F + \tau_L\mathrm{KL}( \bm{P} \bm{1}_m \| \bm{a}) + \tau_R \mathrm{KL}( \bm{P}^\mathrm{T} \bm{1}_n \| \bm{b}) ,
\end{split}
\end{equation}
where $\tau_L, \tau_R >0$ control the strength of each penalty. 
\emph{Semi-relaxed} optimal transport relaxes exactly one of the hard constraints $\bm{P} \bm{1}_m = \bm{a}$ and $\bm{P}^\mathrm{T} \bm{1}_n = \bm{b}$ 
in the same manner. The semi-relaxed version of \eqref{eq:vanilla_OT} obtained by relaxing only the ``right'' marginal constraint on $\bm{b}$ is:
\begin{equation}\label{eq:SROT}
\begin{split}
\textrm{SR}^{\mathrm{R}}\textrm{-W}(\mu,\nu) := \min_{\bm{P} \in \Pi_{\bm{a},\cdot }}   \langle \bm{C}, \bm{P} \rangle + \tau \mathrm{KL} ( \bm{P}^\mathrm{T} \bm{1}_n \| \bm{b} ), 
\end{split}
\end{equation}
while it's ``left'' marginal counterpart $\textrm{SR}^{\mathrm{L}}\textrm{-W}(\mu,\nu)$ is defined analogously over $\bm{P} \in \Pi_{\,\cdot, \bm{b}}$, using penalty $\tau \mathrm{KL} ( \bm{P} \bm{1}_m \| \bm{a})$. Likewise, one can form semi-relaxed or unbalanced GW and FGW problems.

\paragraph{Entropy regularization.} 
The seminal work \cite{sinkhorn} introduced the Sinkhorn algorithm to solve an entropy regularized version of \eqref{eq:vanilla_OT}, $
\mathrm{W}_\epsilon ( \mu, \nu) := \min_{\bm{P} \in \Pi_{\bm{a}, \bm{b}}} \langle \bm{C}, \bm{P} \rangle _F - \epsilon H( \bm{P}),$
massively improving the $O(n^3 \log n)$ time complexity of classical techniques \cite{Orlin1997, Tarjan1997}. Above, $H$ is the entropy, $H(\bm{P}) = -\sum_{ij} \bm{P}_{ij} ( \log \bm{P}_{ij} -1)$, and $\epsilon > 0$ is the regularization strength. 

\textbf{Low-rank regularization.} 
The nonnegative rank $\mathrm{rk}_+( \bm{M})$ of matrix $\bm{M}$ is the least number of nonnegative rank-one matrices summing to $\bm{M}$. For $r \geq 1$, define
\begin{equation}\label{eq:rank_constrained}
\begin{split}
\Pi_{\bm{a}, \cdot}(r) = \{ \bm{P} \in \Pi_{\bm{a}, \cdot}  :  \mathrm{rk}_+(\bm{P}) \leq r \}, \quad
\Pi_{\,\cdot, \bm{b}}(r) = \{ \bm{P} \in \Pi_{\,\cdot, \bm{b}} : \mathrm{rk}_+(\bm{P}) \leq r \},
\end{split}
\end{equation}
and let $\Pi_{\bm{a}, \bm{b}}(r) = \Pi_{\bm{a}, \cdot}(r) \cap \Pi_{\,\cdot, \bm{b}}(r)$. 
To estimate Wasserstein distances with greater stability and accuracy under sampling noise, \cite{forrow19a} proposed a low-rank regularization on the coupling matrix, factoring the transport through a small set of anchor points. More explicitly, \cite{Scetbon2021LowRankSF} parameterized the set as $\Pi_{\bm{a}, \bm{b}}(r)$ through the set $\mathsf{FC}_{\bm{a}, \bm{b}}(r)$ of \emph{factored couplings},
\begin{align*}
\mathsf{FC}_{\bm{a}, \bm{b}}(r) : = \{ (\bm{Q}, \bm{R}, \bm{g}) \in \mathbb{R}_+^{n \times r} \times \mathbb{R}_+^{m \times r} \times (\mathbb{R}_+^*)^r : \bm{Q} \in \Pi_{\bm{a}, \bm{g}}, \bm{R} \in \Pi_{\bm{b}, \bm{g}} \} .
\end{align*}
The set $\mathsf{FC}_{\bm{a}, \bm{b}}(r)$ parameterizes $\Pi_{\bm{a}, \bm{b}}(r)$ through $(\bm{Q}, \bm{R}, \bm{g}) \mapsto \bm{Q} \mathrm{diag}(1/\bm{g}) \bm{R}^\mathrm{T}$, as shown by \cite{cohen1993nonnegative}. 

\cite{Scetbon2021LowRankSF} apply this factorization to solve the Wasserstein problem subject to $\bm{P} \in \Pi_{\bm{a}, \bm{b}}(r)$ for general cost matrices:
\begin{align}
\label{eqn:general_low_rank_problem}
\mathrm{W}_r (\mu, \nu) := \min_{\bm{P} \in \Pi_{\bm{a}, \bm{b}}(r)}
\langle \bm{C}, \bm{P} \rangle_F
\end{align}
GW, unbalanced and semi-relaxed low-rank OT problems are defined as in \eqref{eq:GWOT}, \eqref{eq:unbalanced} and \eqref{eq:SROT}, replacing $\mathbb{R}^{n \times m}_+$, $\Pi_{\bm{a}, \cdot}$, or $\Pi_{\,\cdot, \bm{b}}$ with rank-constrained counterparts \eqref{eq:rank_constrained}. \cite{scetbon2022lowrank,scetbon2022linear, scetbon2023unbalanced} developed a robust framework for solving all of these problems.
\vspace{-2mm}
\section{Factor Relaxation with Latent Coupling (\method) algorithm}
\vspace{-2mm}
\subsection{Latent Coupling Factorization}
We parameterize low-rank coupling matrices $\bm{P} \in \Pi_{\bm{a}, \bm{b}}(r)$ using a factorization introduced in \cite{lin2021making}, which we call the  \emph{latent coupling (LC) factorization} (Fig. \ref{fig:fig1}). The key property of this factorization is
the presence of a coupling matrix $\bm{T}$ linking two distinct inner marginals. For simplicity we describe this factorization using an $r$-dimensional latent space, but we also extend to non-square matrices linking two latent spaces of different dimensions, as demonstrated in the results. 

\begin{definition}[Inner marginals]\label{def:inner_marginals}
Given a factorization $\bm{P} = \bm{Q X R}^\mathrm{T}$ of a coupling matrix $\bm{P} \in \Pi_{\bm{a},\bm{b}}(r)$, the 
\emph{inner marginals} of $\bm{Q}$ and $\bm{R}$ are
$\bm{g}_Q := \bm{Q}^\mathrm{T} \bm{1}_n$ and $\bm{g}_R := \bm{R}^\mathrm{T} \bm{1}_m$, respectively, where $\bm{g}_{Q}, \bm{g}_{R} \in \Delta_{r}$.
\end{definition}

To distinguish the different marginals, we refer to $\bm{a}$ and $\bm{b}$ as \emph{outer marginals}.

\begin{definition}[LC factorization]\label{def:LFRC_factorization}
Given a coupling matrix $\bm{P} \in \Pi_{\bm{a}, \bm{b}}(r)$, a \emph{latent coupling (LC) factorization of $\bm{P}$} is $\bm{P} = \bm{Q} \mathrm{diag}(1/\bm{g}_Q) \bm{T} \mathrm{diag}(1/ \bm{g}_R) \bm{R}^\mathrm{T}$, where $\bm{g}_Q$ and $\bm{g}_R$ are the inner marginals of $\bm{Q}$ and $\bm{R}$, $\bm{Q} \in \Pi_{\bm{a},\cdot}$, $\bm{R} \in \Pi_{\bm{b}, \cdot}$, and $\bm{T} \in \Pi_{\bm{g}_Q, \bm{g}_R}$.
\end{definition}
We call the factors $\bm{Q},\bm{R}, \bm{T}$ in an LC factorization  \emph{sub-couplings}. 
Let $\mathcal{R}_+ := \mathbb{R}_+^{n \times r} \times \mathbb{R}_+^{m \times r} \times \mathbb{R}_+^{r \times r}$.  Given probability vectors $\bm{a} \in \Delta_n$,  $\bm{b} \in \Delta_m$ and a positive integer rank $r$, let 
\begin{align*}
\mathsf{LC}_{\bm{a}, \bm{b}}(r) := \{ (\bm{Q}, \bm{R}, \bm{T} ) 
\in \mathcal{R}_+ : \bm{Q} \in \Pi_{\bm{a}, \cdot}, \bm{R} \in \Pi_{\bm{b}, \cdot}, \bm{T} \in 
\Pi_{\bm{g}_Q, \bm{g}_R} \},
\end{align*}
be the set of admissible sub-couplings for the LC factorization. 
Definition~\ref{def:LFRC_factorization} gives the following map from $\mathsf{LC}_{\bm{a}, \bm{b}}(r)$ to $\Pi_{\bm{a}, \bm{b}}(r)$: 
\begin{align}
\label{eqn:LC_map}
(\bm{Q}, \bm{R}, \bm{T} ) 
\mapsto \bm{Q} \mathrm{diag}(1/\bm{g}_Q) \bm{T} \mathrm{diag}(1/ \bm{g}_R) \bm{R}^\mathrm{T} = : \bm{P}_{(\bm{Q}, \bm{R}, \bm{T})}.
\end{align}
Since this map is surjective, 
the set 
$\mathsf{LC}_{\bm{a}, \bm{b}}(r)$ parameterizes $\Pi_{\bm{a}, \bm{b}}(r)$. 
Surjectivity follows from the fact that $\mathsf{FC}_{\bm{a}, \bm{b}}(r)$ maps injectively into $\mathsf{LC}_{\bm{a}, \bm{b}}(r)$, through $(\bm{Q}, \bm{R}, \bm{g}) \mapsto (\bm{Q}, \bm{R}, \mathrm{diag}(\bm{g}))$, and $\mathsf{FC}_{\bm{a}, \bm{b}}(r)$ maps surjectively onto $\Pi_{\bm{a}, \bm{b}}(r)$ via $(\bm{Q}, \bm{R}, \bm{g}) \mapsto \bm{Q} \mathrm{diag}(1/\bm{g})\bm{R}^\mathrm{T}$. 
Definition~\ref{def:inner_marginals} and Definition~\ref{def:LFRC_factorization} are readily extended in two directions: 
the case when the outer marginal constraints are relaxed such that
$\bm{Q} \in \mathbb{R}_+^{n \times m}$ or $\bm{R} \in \mathbb{R}_+^{n\times m}$, while maintaining the constraint that $\bm{T} \in \Pi_{\bm{g}_{Q},\bm{g}_{R}}$; as well as the case of non-square $\bm{T}$.  

\begin{figure}[]
\begin{center}
\centerline{\includegraphics[width=.6\linewidth]{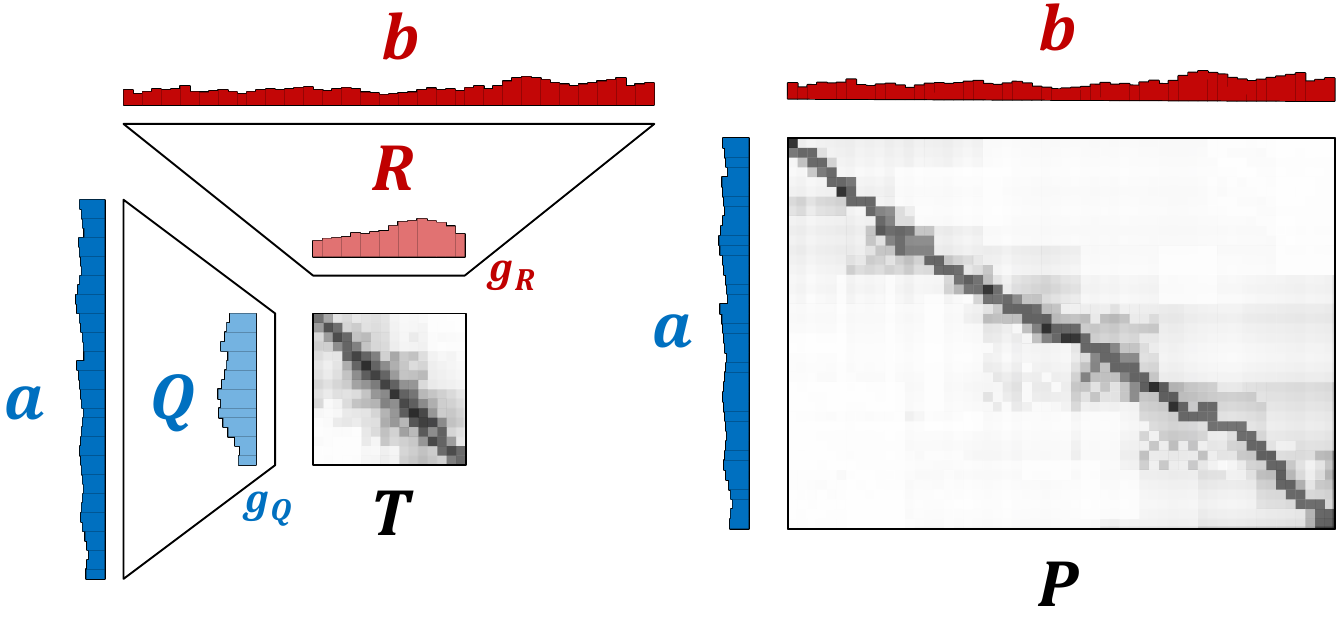}}
\caption{(Left) The LC factorization $\bm{P} = \bm{Q} \mathrm{diag}(1/\bm{g}_Q)\bm{T} \mathrm{diag}(1/\bm{g}_R) \bm{R}^\mathrm{T}$ of coupling matrix $\bm{P}$ with outer marginals $\bm{a},\bm{b}$, inner marginals $\bm{g}_Q,\bm{g}_R$, factors $\bm{Q}, \bm{R}$, and latent coupling $\bm{T}$. (Right) Full-rank coupling matrix $\bm{P}$.}
\label{fig:fig1}
\end{center}
\end{figure}

\subsection{The Balanced \method\ Algorithm}
We introduce an algorithm Factor Relaxation with Latent Coupling (\method), to compute a LC factorization of minimum cost.
We first describe the FRLC algorithm for the balanced Wasserstein problem. Extensions to other and marginal constraints are discussed later. 
The FRLC objective function, for low-rank,
balanced Wasserstein OT, is
\begin{align}
\label{eqn:FRLC_objective}
\mathcal{L}_{\mathrm{LC}} ( \bm{Q}, \bm{R}, \bm{T} ) := 
\langle \bm{C}, 
\bm{P}_{(\bm{Q}, \bm{R}, \bm{T})}
\rangle_F, 
\end{align}
where $P_{(\bm{Q}, \bm{R}, \bm{T})}$ is defined by \eqref{eqn:LC_map}.
Since $\mathsf{LC}_{\bm{a}, \bm{b}}(r)$ parameterizes $\Pi_{\bm{a}, \bm{b}}(r)$, problem \eqref{eqn:FRLC_objective} is equivalent to low rank problem \eqref{eqn:general_low_rank_problem}.
\textcolor{black}{The \method\ algorithm is built from projections onto convex sets, described by constraints on the outer marginals alone for $(\bm{Q},\bm{R})$ and by the inner marginals alone for $\bm{T}$.} Given $(\bm{Q}, \bm{R}, \bm{T}) \in \mathsf{LC}_{\bm{a}, \bm{b}}(r)$, sub-couplings $\bm{Q}$ and $\bm{R}$ are constrained by:
\begin{align*}
\mathcal{C}_1({\bm{a}}) := \{ ( \bm{Q},  \bm{R}, \bm{T} ) \in \mathcal{R}_+ : \bm{Q 1}_{r} = \bm{a} \} , \quad
\mathcal{C}_1({\bm{b}}) := \{ ( \bm{Q}, \bm{R}, \bm{T} ) \in \mathcal{R}_+ : \bm{R 1}_{r} = \bm{b} \}.
\end{align*}
The convex sets constraining the latent coupling matrix $\bm{T}$ are
\begin{align*}
\mathcal{C}_2(\bm{g}_Q) := \{ ( \bm{Q},  \bm{R}, \bm{T} ) \in \mathcal{R}_+ : \bm{T} \bm{1}_r = \bm{g}_Q \} , \quad 
\mathcal{C}_2(\bm{g}_R) := \{ ( \bm{Q}, \bm{R}, \bm{T} ) \in \mathcal{R}_+ : \bm{T}^\mathrm{T} \bm{1}_r = \bm{g}_R \},
\end{align*}
where $\bm{g}_{Q} = \bm{Q}^\mathrm{T} \bm{1}_n$ and $\bm{g}_{R} = \bm{R}^\mathrm{T} \bm{1}_m$ as per Definition~\ref{def:inner_marginals}. Writing $\mathcal{C}_1 = \mathcal{C}_1(\bm{a}) \cap \mathcal{C}_1(\bm{b})$ and $\mathcal{C}_2 = \mathcal{C}_2(\bm{g}_Q) \cap \mathcal{C}_2(\bm{g}_R)$, one has $\mathsf{LC}_{\bm{a}, \bm{b}}(r) = \mathcal{C}_1 \cap \mathcal{C}_2$.

We use \emph{coordinate} mirror descent to optimize \eqref{eqn:FRLC_objective}, 
building on the mirror descent (MD) approach of \cite{Scetbon2021LowRankSF, scetbon2022lowrank, scetbon2022linear, scetbon2023unbalanced} for the low-rank problem. First we take a descent step in the variables $(\bm{Q}, \bm{R})$ for a fixed $\bm{T}$, using KL penalties on their inner marginals. These ``soft'' constraints allow the joint optimization in $(\bm{Q}, \bm{R})$ to decouple into two semi-relaxed OT problems, one for each variable. \textcolor{black}{We call this step \emph{factor relaxation} as this allows $(\bm{Q}, \bm{R})$ to have relaxed inner marginals $\bm{g}_{Q}$ and $\bm{g}_{R}$.} Next we take a 
descent step in the latent coupling variable $\bm{T}$, fixing the $\bm{Q}$ and $\bm{R}$, equivalent to solving a balanced OT problem. 
Thus, solving both coordinate descent steps corresponds to solving three OT problems.

We now provide further details on these coordinate descent steps, with the full algorithm given in Algorithm~\ref{alg:balancedLOT}.
Let $(\gamma_k)_{k=1}^{N}$ be a sequence of step sizes. As in \cite{scetbon2022lowrank}, we choose $\ell^{\infty}$-normalization for the step-sizes. Our coordinate mirror descent in the factor relaxation step is:
\begin{align*}
( \bm{Q}_{k+1}, \bm{R}_{k+1}) 
\gets 
\argmin_{
( \bm{Q},\bm{R}) \, :\, ( \bm{Q}, \bm{R}, \bm{T}_k) \in {\mathcal{C}}_1
}
& \langle ( \bm{Q},\bm{R}) , \nabla_{(\bm{Q},\bm{R})} \mathcal{L}_{\mathrm{LC}} \rangle  
+ \frac{1}{\gamma_k} \mathrm{KL} ( (\bm{Q},\bm{R}) \| (\bm{Q}_k, \bm{R}_k) ) \\
&\,\,+ \tau \mathrm{KL}((\bm{Q}^\mathrm{T} \bm{1}_{n}, \bm{R}^\mathrm{T} \bm{1}_{m}) \| ( \bm{Q}_{k}^\mathrm{T} \bm{1}_n, \bm{R}_{k}^\mathrm{T} \bm{1}_m) ) 
\end{align*}
The Sinkhorn kernels for the semi-relaxed OT problems arising from the factor relaxation step are:
\begin{align*}
\bm{K}_{\bm{Q}}^{(k)} &:= \bm{Q}_k \odot \exp(\, -\gamma_k (\bm{C} \bm{R}_{k} \bm{X}_{k}^\mathrm{T} - \bm{1}_{n} \mathrm{diag}^{-1}((\bm{C} \bm{R}_{k} \bm{X}_{k}^\mathrm{T})^\mathrm{T}\bm{Q}_{k}\mathrm{diag}(1/\bm{g}_{Q_{k}}))^\mathrm{T})\,) \\
\bm{K}_{\bm{R}}^{(k)} &:= \bm{R}_k \odot \exp(\,-\gamma_k (\bm{C}^\mathrm{T} \bm{Q}_{k} \bm{X}_{k} - \bm{1}_{m} \mathrm{diag}^{-1}(\mathrm{diag}(1/\bm{g}_{R_k}) \bm{R}_{k}^\mathrm{T} \bm{C}^\mathrm{T} \bm{Q}_{k} \bm{X}_{k})^\mathrm{T}) \,) ,
\end{align*}
introducing the shorthand $\bm{X} = \diag\left(1/\bm{g}_{Q}\right) \bm{T} \diag\left(1/\bm{g}_{R}\right)$ and where $\diag^{-1}(\cdot): \mathbb{R}^{r \times r} \to \mathbb{R}^{r}$ denotes the matrix-to-vector extraction of the diagonal. This $\tau$-dependent regularization also 
allows us to show smoothness of the objective in Proposition~\ref{sup_prop:betasmooth}, from which the convergence guarantee Proposition~\ref{prop:convergence} follows. We derive the semi-relaxed projection Algorithm~\ref{alg:sr_right_projection} of the sub-couplings $\bm{Q}$ and $\bm{R}$ in Appendix~\ref{sec:dualupdates} for completeness. 
We also show in Lemma~\ref{lemma:no_opt_innermarginal} that $\bm{g}_{Q}$ and $\bm{g}_{R}$ induced by the semi-relaxed projection are both feasible and locally optimal, not requiring separate optimization.

As $( \bm{Q}_{k+1}, \bm{R}_{k+1}, \bm{T}) \in {\mathcal{C}}_2$ if and only if $\bm{T} \in \Pi_{ \bm{g}_{Q_{k+1}}, \bm{g}_{R_{k+1}}}$, after the factor relaxation step, we next take a coordinate MD step on the latent coupling $\bm{T}$:
\begin{align}
\label{eqn:T_update_main}
\bm{T}_{k+1}
\gets
\argmin_{ \bm{T} \, : \, (\bm{Q}_{k+1}, \bm{R}_{k+1}, \bm{T}) \in \mathcal{C}_2} \langle \bm{T} , \nabla_{\bm{T}} \mathcal{L}_{\mathrm{LC}} \rangle + \frac{1}{\gamma_k} \mathrm{KL}( \bm{T} \| \bm{T}_k) .
\end{align}
This is equivalent to applying Sinkhorn (Algorithm~\ref{alg:sinkhorn}) to $\bm{T}$ given $\bm{g}_{Q}$ and $\bm{g}_{R}$ with the kernel:
\begin{align*}
\bm{K}_{\bm{T}}^{(k)} &:= \mathbf{T}_k \odot \exp (\, -\gamma_k \mathrm{diag}(1/\bm{g}_{Q_{k+1}}) \bm{Q}_{k+1}^\mathrm{T} \bm{C} \bm{R}_{k+1} \mathrm{diag}(1/\bm{g}_{R_{k+1}}) \,).
\end{align*}
After the final iteration of the coordinate-MD scheme, $\bm{X} = \diag\left(1/\bm{g}_{Q}\right) \bm{T} \diag\left(1/\bm{g}_{R}\right)$ satisfies $\bm{X}\bm{g}_{R} = \bm{1}_r$ and $\bm{X}^\mathrm{T}\bm{g}_{Q} = \bm{1}_r$ as $\bm{T}$ is a coupling between $\bm{g}_{Q}$ and $\bm{g}_{R}$. Thus $\bm{P}_{r} = \bm{Q} \bm{X} \bm{R}^\mathrm{T} \in \Pi_{\bm{a},\bm{b}}$ and the iterates $(\bm{Q}_{k}, \bm{T}_{k},\bm{R}_{k})$ remain in the intersection of the constraint sets. Thus, in contrast to other approaches  \cite{Scetbon2021LowRankSF,lin2021making,forrow19a}, we do not require Dykstra projections back into the intersection to maintain feasability. We note that our implementation of FRLC allows for a non-square latent coupling $\bm{T}$, providing greater interpretability in problem-specific applications. Above, we presented FRLC in the simplest case that $\bm{T}$ is square.
\begin{algorithm}
\caption{\quad Balanced \method\ }\label{alg:balancedLOT}
\begin{algorithmic}
\State Input $\bm{C},r,\bm{a},\bm{b},\tau,\gamma,\delta,\varepsilon$
\State Initialize $\bm{g}_{Q}, \bm{g}_R = \frac{1}{r} \bm{1}_{r}$
\State $\bm{Q}_{0}, \bm{R}_{0}, \bm{T}_{0} \gets \textit{Initialize-Couplings}(\bm{a}, \bm{b}, \bm{g}_{Q}, \bm{g}_R)$ 
\quad \textcolor{gray}{\# \, Alg.~\ref{alg:init}}
\State 
$\bm{X}_{0} \gets \diag(1/ \bm{Q}_{0}^\mathrm{T}\bm{1}_{n}) \bm{T}_{0} \diag(1/\bm{R}_{0}^\mathrm{T}\bm{1}_{m})$
\While{$\Delta((\bm{Q}_{k},\bm{R}_{k},\bm{T}_{k}),(\bm{Q}_{k-1},\bm{R}_{k-1},\bm{T}_{k-1})) > \varepsilon$} \quad \textcolor{gray}{\# \, $\Delta$ as in \eqref{eqn:delta}}
\State $\nabla_{\bm{Q}} \gets \bm{C} \bm{R}_{k} \bm{X}_{k}^\mathrm{T} - \bm{1}_{n} \mathrm{diag}^{-1}((\bm{C} \bm{R}_{k} \bm{X}_{k}^\mathrm{T})^\mathrm{T}\bm{Q}_{k}\mathrm{diag}(1/\bm{g}_{Q}))^\mathrm{T}$
\State $\nabla_{\bm{R}} \gets \bm{C}^\mathrm{T} \bm{Q}_{k} \bm{X}_{k} - \bm{1}_{m} \mathrm{diag}^{-1}(\mathrm{diag}(1/\bm{g}_{R}) \bm{R}_{k}^\mathrm{T} \bm{C}^\mathrm{T} \bm{Q}_{k} \bm{X}_{k})^\mathrm{T}$
\State $\gamma_{k} \gets \gamma / \max\{ \lVert \nabla_{\bm{Q}} \rVert_{\infty}, \lVert \nabla_{\bm{R}} \rVert_{\infty} \}$ \quad \textcolor{gray}{\# \, $\ell^\infty$-normalization of \cite{scetbon2022lowrank}}
\State $\bm{K}_{\bm{Q}}^{(k)}, \bm{K}_{\bm{R}}^{(k)} \gets \bm{Q}_k \odot \exp(\, -\gamma_k \nabla_{\bm{Q}}\,), \bm{R}_k \odot \exp(\,-\gamma_k \nabla_{\bm{R}} \,)$
\State $\bm{Q}_{k} \gets$ \textit{ $\mathrm{SR}^\mathrm{R}$-projection}($\bm{K}_{\bm{Q}}^{(k)}, \gamma_{k}, \tau, \bm{a}, \bm{Q}_{k-1}^\mathrm{T}\bm{1}_{n}, \delta$) \quad \textcolor{gray}{\# \, Semi-relaxed OT, Alg.~\ref{alg:sr_right_projection}}
\State $\bm{R}_{k} \gets$ \textit{$\mathrm{SR}^\mathrm{R}$-projection}($\bm{K}_{\bm{R}}^{(k)}, \gamma_{k}, \tau, \bm{b}, \bm{R}_{k-1}^\mathrm{T}\bm{1}_{m}, \delta$) \quad \textcolor{gray}{\# \, Semi-relaxed OT}
\State $\bm{g}_{Q}, \bm{g}_{R} \gets \bm{Q}_{k}^\mathrm{T} \bm{1}_{n}, \bm{R}_{k}^\mathrm{T} \bm{1}_{m}$
\State $\nabla_{\bm{T}} = \diag(1/\bm{g}_{Q}) \bm{Q}_{k}^\mathrm{T} \bm{C} \bm{R}_{k} \diag(1/\bm{g}_{R})$
\State $\gamma_{\bm{T}} = \gamma / \lVert \nabla_{\bm{T}} \rVert_{\infty}$ \quad \textcolor{gray}{\# \, $\ell^\infty$-normalization}
\State $\bm{K}^{(k)}_{\bm{T}} \gets \bm{T}_{k} \odot \exp(\, - \gamma_{\bm{T}} \nabla_{\bm{T}} \,)$
\State $\bm{T}_{k} \gets$ \textit{Sinkhorn}($\bm{K}^{(k)}_{\bm{T}}, \bm{g}_{R}, \bm{g}_{Q}, \delta$) \quad \textcolor{gray}{\# \, Balanced OT, Alg.~\ref{alg:sinkhorn}}
\State $\bm{X}_{k} \gets \diag(1/\bm{g}_{Q}) \bm{T} \diag(1/\bm{g}_{R})$
\EndWhile
\State Return $\bm{P}_{r} = \bm{Q} \bm{X} \bm{R}^\mathrm{T}$
\end{algorithmic}
\end{algorithm}
\subsection{Initialization, convergence, and FRLC extensions}
\paragraph{Full-rank random initializations of the sub-coupling matrices.}\label{subsec:init} We propose a new initialization of the sub-couplings $(\bm{Q}, \bm{R}, \bm{T})$ for the LC-factorization in Algorithm~\ref{alg:init}.This generates a full-rank initialization (Proposition~\ref{prop:fullrankinit}) in the set of rank-$r$ couplings $\Pi_{\bm{a},\bm{b}} (r)$ and is accomplished by applying Sinkhorn to random matrices. 
Our approach differs from \cite{Scetbon2021LowRankSF, scetbon2022lowrank} who use initializations for the diagonal factorization of \cite{forrow19a}, and are not applicable to a latent coupling that is non-diagonal, non-square, or with two distinct inner marginals.
\paragraph{Convergence analysis of \method.}\label{sec:convergence}
As objective \eqref{eqn:FRLC_objective} is non-convex, it is important to have convergence guarantees. Our convergence criterion $\Delta(\cdot,\cdot)$ is defined in \eqref{eqn:delta}.
To prove convergence we require a lower bound on the entries of $\bm{g}_{Q}$ and $\bm{g}_{R}$. Previous works introduce a lower-bound vector $\bm{\alpha} \leq \bm{g}$ enforced element-wise for stability and smoothness \cite{Scetbon2021LowRankSF}. In \method\ the use of semi-relaxed projections naturally enforces a lower-bound. In Appendix~\ref{sup_prop:betasmooth}, we show that for any $\delta \in (0, \frac{1}{r})$, the \method\ algorithm's $\tau$-weighted regularization on the inner marginals can guarantee a uniform lower-bound of $\delta$ on the entries: for sufficiently large $\tau$ and $\Tilde{O}(m^{2}/\epsilon)$ iterations for the sub-coupling \cite{Pham2020OnUO}, one guarantees a lower bound of $\delta$ on $\bm{g}_{R}$ and $\bm{g}_{Q}$. This allows us to show objective smoothness in Proposition~\ref{sup_prop:betasmooth}. Previous work on low-rank optimal transport \cite{Scetbon2021LowRankSF} use the non-asymptotic convergence criterion of \cite{Ghadimi2014}.
Following existing works \cite{Dang2015-ky} establishing convergence rates of coordinate mirror-descent for smooth objectives, we show in Proposition~\ref{prop:convergence} this criterion may be extended to coordinate-MD by adapting the block-descent lemma of \cite{Beck2013OnTC}.
\begin{proposition}
\label{prop:convergence}
Suppose one has $f \in C^1(\mathcal{X}, \mathbb{R})$ with block-coordinate Lipschitz gradient and block smoothness constants $(L_i)_{i=1}^p$,
and a function $h \in C(\mathcal{X}, \mathbb{R})$ which is $\alpha$-strongly convex. For $\Phi = f + h$, suppose one performs coordinate mirror descent on $\Phi$ minimized over a product of closed convex sets $\mathcal{X} = \prod_{i=1}^{p} \mathcal{X}_{i}$. Let the sub-iterates with respect to the $i$-th block update be $\{ \mathbf{x}_{k}^i\}_{i=0}^{p}$ where $\mathbf{x}_k := \mathbf{x}_{k}^0$ for $k \in [N]$ outer iterations. Then one has:
$$
\min_{k} \Delta(\mathbf{x}_{k},\mathbf{x}_{k-1})  \leq \frac{D^{2}L}{N(\alpha^{2}/2L)} = \frac{2D^{2}L^{2}}{N \alpha^{2}},
$$
where $D$ is \eqref{eqn:diameter}, $L$ is the global smoothness constant, stepsizes $\gamma_{k,i} := \alpha / L$, and convergence criterion $\Delta(\mathbf{x}_{k},\mathbf{x}_{k-1})$ is given in \eqref{eqn:convergence_criterion}.
\end{proposition}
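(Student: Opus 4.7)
The plan is to mirror the standard non-asymptotic analysis for (non-convex) mirror descent pioneered in Ghadimi et al.\ and extend it blockwise via a Beck–Tetruashvili-style block descent lemma. I will treat $\Phi = f + h$ with $f$ having block Lipschitz gradients (constants $L_i$, with global constant $L := \max_i L_i$) and $h$ being $\alpha$-strongly convex in the mirror map's induced norm. The convergence criterion $\Delta(\mathbf{x}_k,\mathbf{x}_{k-1})$, as defined in \eqref{eqn:convergence_criterion}, plays the role of the (squared) gradient-mapping quantity used in \cite{Ghadimi2014}; the game is to obtain a per-iteration descent of $\Phi$ proportional to $\Delta$, then telescope.

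First I would fix an outer iteration $k$ and a block index $i$, and write the coordinate mirror-descent update for block $i$ as the minimizer of a linearization of $f$ at $\mathbf{x}_k^{i-1}$ plus a Bregman regularizer $\frac{1}{\gamma_{k,i}} D_h(\cdot, \mathbf{x}_k^{i-1})$ restricted to $\mathcal{X}_i$. Using the three-point identity for Bregman divergences together with the first-order optimality condition of this subproblem, and invoking the $L_i$-smoothness of $f$ in block $i$ (along the coordinate direction, so only the $i$-th block moves), I obtain a standard descent inequality of the form
\begin{equation*}
\Phi(\mathbf{x}_k^{i}) \leq \Phi(\mathbf{x}_k^{i-1}) - \Bigl(\tfrac{1}{\gamma_{k,i}} - \tfrac{L_i}{2}\Bigr) \lVert \mathbf{x}_k^i - \mathbf{x}_k^{i-1} \rVert_i^2 ,
\end{equation*}
where $\alpha$-strong convexity of $h$ converts the Bregman term into a squared-norm lower bound (this is the step that explicitly brings the $\alpha^2/(2L)$ factor in once $\gamma_{k,i} = \alpha/L$ is substituted). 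Summing over blocks $i = 1,\dots,p$ within one outer iteration yields a sufficient-decrease bound
\begin{equation*}
\Phi(\mathbf{x}_{k+1}) \leq \Phi(\mathbf{x}_k) - \tfrac{\alpha^2}{2L} \sum_{i=1}^p \lVert \mathbf{x}_k^i - \mathbf{x}_k^{i-1} \rVert_i^2 ,
\end{equation*}
and by the definition of $\Delta$ in \eqref{eqn:convergence_criterion} the right-hand side is $-\tfrac{\alpha^2}{2L}\,\Delta(\mathbf{x}_{k+1},\mathbf{x}_k)$ up to the scaling that matches its definition.

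Next I telescope over $k = 1, \dots, N$ and bound $\Phi(\mathbf{x}_1) - \Phi(\mathbf{x}_{N+1})$ from above by $D^2 L$ (using that $\Phi$ is $L$-smooth on the bounded domain of diameter $D$ given in \eqref{eqn:diameter}, hence the achievable drop in objective across the iterates is at most a multiple of $D^2 L$). This gives $\sum_{k=1}^N \Delta(\mathbf{x}_{k+1},\mathbf{x}_k) \leq \frac{2L}{\alpha^2} \cdot D^2 L = \frac{2 D^2 L^2}{\alpha^2}$, and since the minimum is at most the average, $\min_k \Delta(\mathbf{x}_k,\mathbf{x}_{k-1}) \leq \frac{2 D^2 L^2}{N\alpha^2}$, matching the claimed rate.

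The main obstacle I expect is the blockwise descent inequality in the mirror-descent (non-Euclidean) setting. Standard block coordinate descent analyses \cite{Beck2013OnTC} are phrased with Euclidean proximal steps, while here the regularizer is a Bregman divergence and only the $i$-th block moves under block $i$'s subproblem. I would carefully adapt Beck's block-Lipschitz lemma by restricting the mirror map and Bregman divergence to $\mathcal{X}_i$ (exploiting that the composite mirror map $h$ decomposes across blocks, or at worst is uniformly $\alpha$-strongly convex so that the block-restricted Bregman divergence remains lower-bounded by $\tfrac{\alpha}{2}\lVert\cdot\rVert_i^2$). Once this lemma is in place, the rest is a direct telescoping argument and the choice $\gamma_{k,i} = \alpha/L$ is precisely what equalizes the descent constant across blocks and yields the $\alpha^2/(2L)$ coefficient appearing in the statement.
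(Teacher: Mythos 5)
Your proposal follows the same route as the paper: combine Beck--Tetruashvili's block descent lemma (Lemma~\ref{lemma:blockdescent}) with the Ghadimi-type per-step sufficient-decrease bound, sum over blocks, telescope over outer iterations, and finish with the definition of $D$. The one structural difference is how you handle the Bregman projection: the paper invokes Ghadimi's Lemma~1 (restated as Lemma~\ref{lemma:ghadimi_L1}) directly on each block sub-problem, while you propose to re-derive its content from the three-point identity and the first-order optimality condition. These are the same thing; the paper's route is slightly cleaner because Lemma~\ref{lemma:blockdescent} is a statement about $\nabla f$ alone (no Bregman machinery at all), and the strong-convexity/$\alpha$ bookkeeping lives entirely inside Lemma~\ref{lemma:ghadimi_L1}, so the ``adaptation of Beck's lemma to the non-Euclidean setting'' you flag as the main obstacle does not actually arise. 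A more substantive imprecision: your displayed per-iteration descent attaches the coefficient $\alpha^2/(2L)$ directly to $\sum_i \lVert \mathbf{x}_k^i - \mathbf{x}_k^{i-1}\rVert^2$, but the correct per-block coefficient in that variable is $(\alpha/\gamma_{k,i} - L_i/2) = L/2$ at the stated step size, and the $\alpha^2/(2L)$ only appears after converting to the generalized projection norm $\lVert \mathbf{g}_{\mathcal{X},k,i}\rVert^2 = \gamma_{k,i}^{-2}\lVert \mathbf{x}_k^i - \mathbf{x}_k^{i-1}\rVert^2$ that actually defines $\Delta$ in~\eqref{eqn:convergence_criterion}. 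You flag this yourself with ``up to the scaling that matches its definition,'' and your final bound is correct, but it is worth writing the per-block inequality in the $\mathbf{g}_{\mathcal{X},k,i}$ variable from the start (as the paper does) to avoid having to patch the constants. Finally, $D$ in~\eqref{eqn:diameter} is not a geometric domain diameter; it is defined so that $D^2 L = \Phi(\mathbf{x}_0) - \Phi^\star$, and the telescoping bound uses this identity directly rather than any domain-boundedness argument.
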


Specialized to the LC-parametrization, the criterion $\Delta_{k}(\bm{x}_k, \bm{x}_{k+1})$ is: 
\begin{align}
\label{eqn:delta}
\Delta_{k}(\bm{x}_k, \bm{x}_{k+1}) := \frac{1}{\gamma_{k}^{2}} \left[ \lVert \bm{Q}_{k+1} - \bm{Q}_{k} \rVert_{F}^{2} + \lVert \bm{R}_{k+1} - \bm{R}_{k} \rVert_{F}^{2} + \lVert \bm{T}_{k+1} - \bm{T}_{k} \rVert_{F}^{2} \right]
\end{align}
for $\bm{x}_{k} = (\bm{Q}_{k},\bm{R}_{k}, \bm{T}_{k})$. We show through Propositions~\ref{prop:convergence}, \ref{sup_prop:betasmooth} the following result:
\begin{proposition}
\label{prop:convergence_FRLC}
    The \method\ algorithm with step-sizes $\gamma_{k} = \alpha/L$ and iterates $\bm{x}_{k} = (\bm{Q}_{k}, \bm{R}_{k},\bm{T}_{k})$ has non-asymptotic stationary convergence in the criterion $\Delta(\cdot,\cdot)$ with:
    $$
    \min_{k \in 1,..,N-1} \Delta_{k}(\bm{x}_k, \bm{x}_{k+1}) \leq 2D^{2}L^{2} / N \alpha^{2}
    $$
    Where $N$ is the number of iterations, $D$ the optimality-gap as in \eqref{eqn:diameter}, and $L = \max_{i \in \{1,..,3\}} (L_{i})$ the global smoothness for 
    $L_{i} = \mathrm{poly}(\lVert \bm{C} \rVert_{F}, n, m, r, \delta)$ the block-wise smoothness constants.
\end{proposition}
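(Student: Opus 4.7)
The plan is to specialize Proposition~\ref{prop:convergence} to \method\ by identifying $f = \mathcal{L}_{\mathrm{LC}}$ and letting $h$ be the $\tau$-weighted KL regularizer that plays the role of the $\alpha$-strongly convex mirror map, with the constraint set taken as the product of the closed convex sets $\mathcal{C}_1(\bm{a})$, $\mathcal{C}_1(\bm{b})$ and $\mathcal{C}_2$ defining $\mathsf{LC}_{\bm{a}, \bm{b}}(r)$. Since the factor-relaxation step decouples into two independent semi-relaxed OT sub-problems on $\bm{Q}$ and $\bm{R}$ and the latent coupling is updated by a balanced Sinkhorn on $\bm{T}$, the iterates of Algorithm~\ref{alg:balancedLOT} are exactly the coordinate MD iterates for $p = 3$ blocks in the sense of Proposition~\ref{prop:convergence}. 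Once (i) block-coordinate Lipschitz smoothness of $\mathcal{L}_{\mathrm{LC}}$ with constants $L_i = \mathrm{poly}(\|\bm{C}\|_F, n, m, r, \delta)$ is established along the trajectory and (ii) the step-sizes are set to $\gamma_k = \alpha/L$ with $L = \max_i L_i$, the generic bound $2 D^2 L^2 / (N \alpha^2)$ transfers verbatim.

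The first substantive step is to invoke Proposition~\ref{sup_prop:betasmooth} to certify the uniform lower bound $\bm{g}_{Q_k}, \bm{g}_{R_k} \geq \delta \bm{1}_r$ along the full FRLC trajectory. This bound is delivered by the $\tau$-weighted KL penalty on the inner marginals introduced inside the semi-relaxed projection (Algorithm~\ref{alg:sr_right_projection}), provided $\tau$ is taken sufficiently large and the inner Sinkhorn is run for $\tilde{O}(m^2/\varepsilon)$ steps. This bound is essential because $\mathcal{L}_{\mathrm{LC}}$ depends on $\diag(1/\bm{g}_Q)$ and $\diag(1/\bm{g}_R)$, so every gradient-Lipschitz estimate degenerates as the inner marginals approach the boundary of $\Delta_r$, and we need $\delta > 0$ to keep them bounded away from it.

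With this bound in hand, I would then compute the three block-Lipschitz constants. For the $\bm{T}$-block, the objective is linear in $\bm{T}$, so $L_{\bm{T}}$ reduces to a bound on the operator norm of the bilinear dependence on $(\bm{Q}, \bm{R})$ through $\diag(1/\bm{g}_Q)\bm{Q}^{\mathrm{T}}\bm{C}\bm{R}\diag(1/\bm{g}_R)$; using $\|\bm{Q}\|_F, \|\bm{R}\|_F \leq 1$ together with $1/\bm{g} \leq 1/\delta$ yields $L_{\bm{T}} = O(\|\bm{C}\|_F / \delta^2)$. For the $\bm{Q}$- and $\bm{R}$-blocks the gradients $\nabla_{\bm{Q}}$ and $\nabla_{\bm{R}}$ of Algorithm~\ref{alg:balancedLOT} are bilinear in the remaining factors through $\bm{X}_k = \diag(1/\bm{g}_Q) \bm{T} \diag(1/\bm{g}_R)$ and carry an additional diagonal-extraction correction; differentiating in the active block and bounding the resulting expressions produces $L_{\bm{Q}}, L_{\bm{R}} = \mathrm{poly}(\|\bm{C}\|_F, n, m, r, 1/\delta)$.

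The main obstacle is the bookkeeping for these block constants: the gradient expressions mix the $\ell^\infty$-normalized step-size scheme of \cite{scetbon2022lowrank} with $1/\bm{g}$ singularities and the diagonal-extraction terms, so one must verify that the polynomial-in-$(\|\bm{C}\|_F, n, m, r, \delta)$ bound holds uniformly along the descent trajectory rather than only at initialization. Once this is accomplished, applying Proposition~\ref{prop:convergence} with $p = 3$ blocks and $\Delta_k$ as specialized in \eqref{eqn:delta} yields $\min_{k \leq N-1} \Delta_k(\bm{x}_k, \bm{x}_{k+1}) \leq 2 D^2 L^2 / (N \alpha^2)$ with $L = \max_i L_i$, matching the claim.
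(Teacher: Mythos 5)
Your proposal matches the paper's proof structure exactly: specialize Proposition~\ref{prop:convergence} to the three blocks $(\bm{Q},\bm{R},\bm{T})$, invoke Proposition~\ref{sup_prop:betasmooth} both for the uniform $\delta$-floor on the inner marginals and for the block-wise smoothness constants $L_i$, and substitute $L=\max_i L_i$ into the generic bound $2D^2L^2/(N\alpha^2)$. Two small imprecisions worth noting: since $\mathcal{L}_{\mathrm{LC}}$ is linear in $\bm{T}$ with $\bm{Q},\bm{R}$ held fixed, $\nabla_{\bm{T}}\mathcal{L}_{\mathrm{LC}}$ does not depend on $\bm{T}$ at all and $L_{\bm{T}}$ can be taken essentially to be zero (the paper explicitly notes the component-wise Lipschitz inequality ``holds trivially for any $L_T>0$''), so the bilinear operator-norm bound you sketch is unnecessary; and the $\tau$-weighted KL penalty is not the $\alpha$-strongly convex $h$ of the Ghadimi framework (it depends on the previous iterate rather than being a fixed function of the current point) — in the paper it serves only to guarantee the $\delta$-floor on $\bm{g}_Q,\bm{g}_R$, while $\alpha$ is the modulus of the KL prox-function.
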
 
The proof of Proposition~\ref{prop:convergence_FRLC} follows directly from our extension of the non-asymptotic criterion with the block-descent result Proposition~\ref{prop:convergence} and the proof that this lemma holds in FRLC Proposition~\ref{sup_prop:betasmooth}.
We also mention two improvements to other low rank approximation results in literature.  In Proposition~\ref{prop:minimize_g} we show that one can \textit{analytically} solve for the block-optimal $\bm{g}$ for the factorization of \cite{Scetbon2021LowRankSF}, and we improve the bound on the low-rank approximation error in Proposition~\ref{prop:lowrankapprox}. 

\vspace{-2mm}
\paragraph{\method\ for other marginal constraints and objectives.}
\label{sec:otherconstraintsobjectives} The balanced \method\ algorithm can be extended simply to other marginal constraints owing to the decoupling of the coordinate MD scheme. In particular, by using either the semi-relaxed projections (Algorithm~\ref{alg:sr_right_projection}) or fully-relaxed (unbalanced) projections (Algorithm~\ref{alg:soft_relaxed_projection}) on sub-couplings $\bm{Q}$ and $\bm{R}$, one can solve the balanced problem, the problem with the left or right marginal relaxed, or the unbalanced problem. As such, \textit{all} variants of marginal constraints can be handled by a single algorithm, given in Algorithm~\ref{alg:allcases}.

We also extend the \method\ algorithm to the Gromov-Wasserstein problem. This consists of computing a GW-specific gradient with the appropriate marginal constraints applied to simplify their form, and re-computing Sinkhorn kernels as exponentiations of these gradients. The matrix form of the quadratic GW objective is $\mathbf{1}_m^\mathrm{T} \bm{P}^\mathrm{T}
\bm{A}^{\odot 2} 
\bm{P} \mathbf{1}_m 
+
\mathbf{1}_n^\mathrm{T} \bm{P} 
\bm{B}^{\odot 2}
\bm{P}^\mathrm{T} \mathbf{1}_n 
- 2 \langle \bm{A} \bm{P} \bm{B}, \bm{P} \rangle$,
where $\odot$ denotes the Hadamard (entrywise) product. Then the GW-specific Sinkhorn kernels are
\begin{align*}
\bm{K}_{\bm{Q}}^{(k)} &\leftarrow \exp\left(2\gamma_{k} (2\bm{A} \bm{Q} \bm{X} \bm{R}^\mathrm{T} \bm{B} \bm{R} \bm{X}^\mathrm{T} - \bm{A}^{\odot 2} \bm{Q} \bm{1}_{r} \bm{1}_{r}^\mathrm{T} ) \right),\\
\bm{K}_{\bm{R}}^{(k)} &\leftarrow \exp\left(2\gamma_{k} (2 \bm{B} \bm{R} \bm{X}^\mathrm{T} \bm{Q}^\mathrm{T} \bm{A} \bm{Q} \bm{X} -  \bm{B}^{\odot 2} \bm{R} \bm{1}_{r} \bm{1}_{r}^\mathrm{T}  ) \right),
\\
\bm{K}_{\bm{T}}^{(k)} &\leftarrow \exp(4 \gamma_{k} \diag(\bm{g}_{Q}^{-1}) \bm{Q}^\mathrm{T} \bm{A} \bm{Q} \bm{X} \bm{R}^\mathrm{T} \bm{B} \bm{R} \diag(\bm{g}_{R}^{-1}) ).
\end{align*}
In Algorithm~\ref{alg:allcases}, one can solve the GW-problem by using the kernels above. Here, we present the kernels omitting a rank-1 perturbation, which is given in Appendix~\ref{sect:GW}. From the Wasserstein and GW gradients, the FGW gradient is easily taken as a convex combination of the two. In this work, we primarily focus on the LC-factorization for the rank $r$ Wasserstein problem  \eqref{eqn:general_low_rank_problem}.
\vspace{-2mm}
\section{Experimental Results}

\begin{figure*}[tbp]
\begin{center}
\centerline{\includegraphics[width=\linewidth]{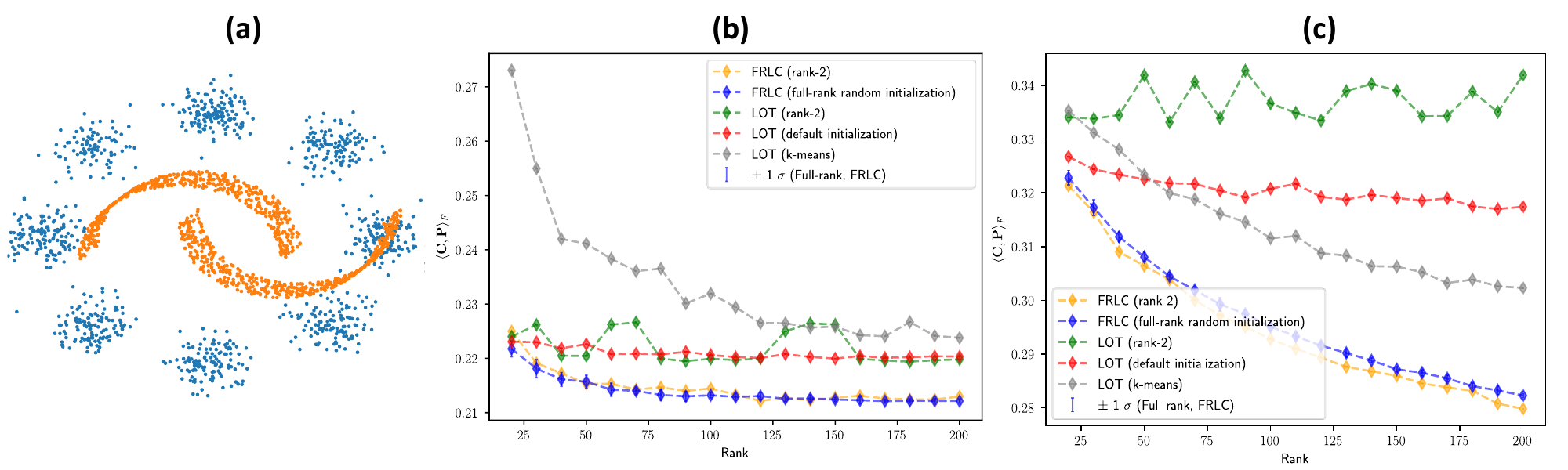}}
\caption{(a) Simulated dataset containing points from two moons (orange) and eight Gaussians (blue). (b) Transport cost $\langle \bm{C}, \bm{P} \rangle_F$ achieved by \method\ and LOT \cite{Scetbon2021LowRankSF} for the balanced Wasserstein problem on the dataset in (a) for different ranks and initializations. FLRC full rank (blue curve) is average over 10 random initializations. 
(c) 
Results on the 10D mixture of Gaussians dataset. }
\label{fig:synthetic}
\end{center}
\end{figure*}

We compare \method\ \footnote{We make our code publicly available at \href{https://github.com/raphael-group/FRLC}{https://github.com/raphael-group/FRLC}.} to existing low-rank and full-rank optimal transport algorithms on several datasets: simulated datasets previously used in \cite{tong2023improving} and \cite{Scetbon2021LowRankSF}; a massive spatial-transcriptomics dataset \cite{chen2022spatiotemporal}; and a graph partitioning task \cite{chowdhury2021generalized}. 
Further details of each experiment (e.g. pre-processing, validation) are in Appendices \ref{sec:sup_gaussian_simulation}, \ref{sec:graphpartitioning}, and  \ref{sec:sup_spatiotemporal}. In the section below, LOT refers to the works of \cite{Scetbon2021LowRankSF, scetbon2023unbalanced, scetbon2022linear} and Latent OT refers to \cite{lin2021making}.

\subsection{Evaluation of Low-rank Approximations for Balanced OT on Synthetic Data}\label{sec: synthetic_experiment}
We first 
compare the balanced OT version of \method\ with the the low-rank balanced OT algorithm LOT of \cite{Scetbon2021LowRankSF}
on a synthetic dataset following \cite{tong2023improving}. The dataset consists of $m =1000$ points from two moons and $n = 1000$ points sampled from eight 2D Gaussian densities (Fig. \ref{fig:synthetic}a). We solve the Wasserstein problem \eqref{eq:vanilla_OT} with cost matrix $\bm{C}$ computed using the Euclidean distance. 
The full-rank coupling matrix $\bm{P}$ has rank 1000, and we compute both \method\ and LOT solutions with rank between 20 and 200. For each rank, we initialize \method\ adapting the deterministic rank-2 initialization proposed in \cite{Scetbon2021LowRankSF} and the random initialization of Alg. \ref{alg:init}.  We initialize LOT using the rank-2 initialization and two other options in \texttt{ott-jax} \cite{cuturi2022optimal}. 

We find that \method\ obtains lower transport cost $\langle \bm{C}, \bm{P} \rangle_F$ with increasing rank (Fig. \ref{fig:synthetic}b) and consistently achieves lower transport cost than LOT across all ranks and all initializations. Specifically, starting both methods at the same rank-2 initialization, \method\ consistently achieves a lower cost than LOT for all ranks. Additionally, we observe smooth convergence of \method\ for both rank-2 initialization and the full-rank random initialization of Alg.~\ref{alg:init} (Fig. \ref{fig:convergence}).

We also evaluate \method\ and LOT  on two  datasets of Gaussian mixtures, one in 2-dimensions and one in 10-dimensions, 
each with $n=m=5,000$ points from two mixtures of Gaussians, following \cite{Scetbon2021LowRankSF}, with further details in Appendix \ref{sec:sup_gaussian_simulation}. 
We observe the same trend as the previous simulation for both datasets (Fig.~\ref{fig:synthetic}c, Fig. \ref{fig:gaussian_mixture}), with \method\ achieving lower transport costs than LOT across all ranks and all initializations. 
In addition \method\ has half the runtime of LOT (CPU) -- including the setup time of \method\ but excluding the setup time of LOT in \texttt{ott-jax} -- on datasets of $n=m=1000$ points from all three datasets with rank $r=100$ (Table~\ref{table:runtime}). At the same time \method\ achieves lower primal cost $\langle \bm{C}, \bm{P} \rangle_{F}$ with tighter marginals $\lVert \bm{P 1}_{n} - \bm{a} \rVert_{2}$ and $\lVert \bm{P}^{T} \bm{1}_{m} - \bm{b} \rVert_{2}$. \cite{lin2021making} only solves a proxy for the rank-constrained Wasserstein problem, and thus is not the focus of our comparisons. Nevertheless, we verify that on all synthetic experiments that \method\ achieves significantly lower primal OT cost than Latent OT (Table~\ref{table:comparisonLin}).

\begin{figure*}[tbp]
\begin{center}
\centerline{\includegraphics[width=\linewidth]{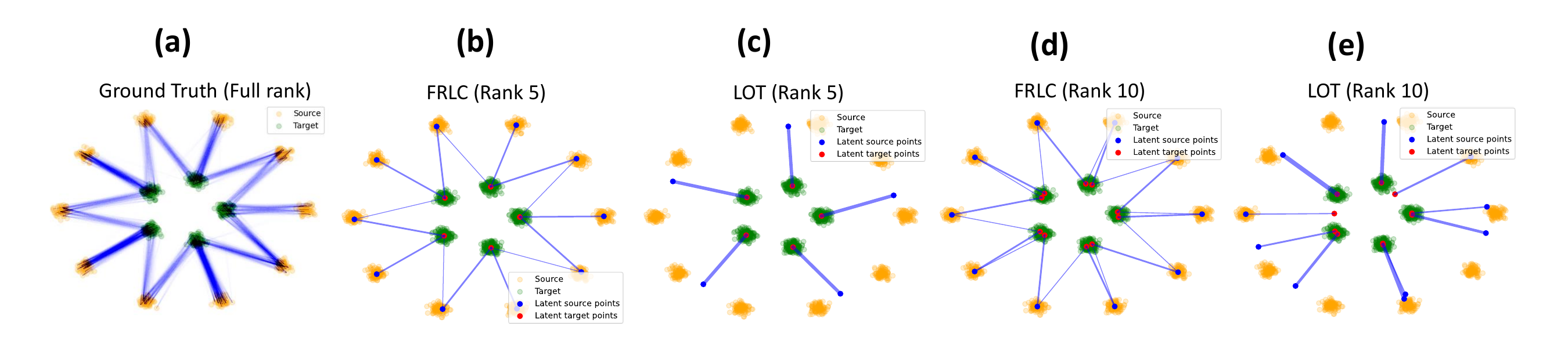}}
\caption{LC-projections of couplings of Gaussians centered on the 5\textsuperscript{th}-roots of unity (green) and 10\textsuperscript{th} roots of unity (yellow).
(a) Ground-truth full-rank coupling.
(b) Non-square rank-5 latent-coupling of \method\, (c) LC-projection barycenters aligned with rank-5 diagonal coupling of LOT \cite{Scetbon2021LowRankSF}.  (d) Square rank-10 latent coupling of \method. (e) Rank-10 diagonal coupling of LOT 
.}
\label{fig:projection_new}
\end{center}
\end{figure*}
\vspace{-2mm}
\subsection{Interpretation of the Latent Coupling and LC-Projection}
\label{subsec:LC_projection}
We demonstrate the intepretability of the latent coupling $\bm{T}$ in the LC factorization. In both the LC factorization and factored couplings,
the sub-couplings $\bm{Q}$ and $\bm{R}$ each have associated barycentric projection operators which coarse-grain input datasets $\mathbf{Z}^{(1)}, \mathbf{Z}^{(2)}$.  
In particular, the LC projection is defined from the LC factorization as follows.
\begin{definition}[LC-Projection]\label{def:LC_projection}
Let $\bm{Q} \diag(1/\bm{g}_{Q}) \bm{T} \diag(1/\bm{g}_{R}) \bm{R}^\mathrm{T}$ be an LC factorization of of a coupling matrix $\bm{P} \in \Pi_{\bm{a},\bm{b}}(r)$ computed from datasets $\mathbf{Z}^{(1)} \in \mathbb{R}^{n \times d}, \mathbf{Z}^{(2)} \in \mathbb{R}^{m \times d}$, with $\bm{T} \in \mathbb{R}_+^{r_1 \times r_2}$. The 
\emph{LC-projections} $\mathbf{Y}^{(1)}$ and $\mathbf{Y}^{(2)}$ of $\mathbf{Z}^{(1)}$ and $\mathbf{Z}^{(2)}$ are
$\mathbf{Y}^{(1)} := \diag(1/\bm{g}_{Q}) \bm{Q}^\mathrm{T} \mathbf{Z}^{(1)}$, and $\mathbf{Y}^{(2)} := \diag(1/\bm{g}_{R}) \bm{R}^\mathrm{T}\mathbf{Z}^{(2)}$.
\end{definition}
By interpreting any factored coupling $(\bm{Q}, \bm{R}, \bm{g})$ as an LC factorization $(\bm{Q}, \bm{R}, \mathrm{diag}(\bm{g}))$, 
Definition~\ref{def:LC_projection} describes the barycentric projections for both factorizations. 
We compare the projections of the coupling computed by \method\ to those of LOT \cite{Scetbon2021LowRankSF} 
on a dataset containing 1000 samples from 2D-Gaussians centered at the 5\textsuperscript{th}-roots of unity and  1000 samples from 2D Gaussians centered at the 10\textsuperscript{th}-roots of unity (the latter scaled by a factor of two, Fig.~\ref{fig:projection_new}a). In both cases, the latent coupling $\bm{T}$ or $\mathrm{diag}(\bm{g})$ is visualized as a transport between barycenters. 
We run FRLC and LOT with ranks $r=5$ and $r=10$ to match the number of target and source clusters. 
In the rank-5 case, \method\ uses a \emph{non-square} latent coupling $\bm{T} \in \mathbb{R}_{+}^{10 \times 5}$ which correctly captures the coupling between clusters (Fig.~\ref{fig:projection_new}(b)),   
while the LOT rank-5 projection computes barycenters that are outside of the clusters (Fig.~\ref{fig:projection_new}c).  A similar result is observed for square rank-10 latent couplings computed by  \method\ (Fig.~\ref{fig:projection_new}d) and LOT (Fig.~\ref{fig:projection_new}e) demonstrating that the LOT barycenters in Fig.~\ref{fig:projection_new}b) are not an artifact of using the lowest rank. We observe similar results on other simulated datasets (Fig. \ref{fig:projection}).
\vspace{-2mm}
\subsection{Evaluation on Spatial Transcriptomics Alignment}
We compare \method\ and the algorithm (LOT-U) of \cite{scetbon2023unbalanced} (which solves unbalanced low-rank Wasserstein, GW, and FGW problems) on the task of computing an alignment 
between cells from different time points during mouse embryonic development. Specifically, we compute an alignment between a spatial transcriptomics (ST) dataset of an E11.5 stage mouse embryo and an E12.5 stage mouse embryo \cite{chen2022spatiotemporal}. Optimal transport is a popular approach to align single-cell \cite{schiebinger2019optimal} and spatial trancriptomics datasets \cite{zeira2022PASTE, PASTE2, klein2023moscot}. In single-cell transcriptomics, one measures a gene expression vector for each cell, and in spatial transcriptomics one additionally measures the 2D location of each cell. The cost matrix $\bm{C}$ describes the difference between gene expression vectors and intra-domain cost matrices $\bm{A}$ and $\bm{B}$ are derived from the 2D coordinates within each slice. Therefore, OT problems of W, GW, and FGW objectives can be solved and the coupling matrix represents the cell-cell alignment (Appendix \ref{sec:sup_spatiotemporal}). However, computation of a full-rank OT solution is not feasible in our large-scale dataset: the E11.5 slice has about 30,000 cells while the E12.5 slice has about 50,000 cells.

We evaluate the alignments by assessing performance on two prediction tasks from \cite{scetbon2023unbalanced}: (1) a \emph{gene expression prediction} task where we predict the expression of a gene in E12.5 from expression of the gene in E11.5 using the alignment; (2) a \emph{cell type prediction task} where we predict the cell types of E12.5 from the cell type clustering of E11.5 (Appendix \ref{sec:sup_spatiotemporal}). We evaluate the accuracy of the gene expression prediction task through the Spearman correlation $\rho$ between the predicted expression and the ground truth expression of 10 test marker genes. We evaluate the accuracy of the cell type prediction task by computing the Adjusted Rand Index (ARI) and Adjusted Mutual Information (AMI) between the predicted cell types and the cell types derived in the original publication \cite{chen2022spatiotemporal}. Being a comparison between different objectives, this relies on downstream metrics. For completeness, we validate the efficacy of \method\ on directly minimizing the balanced Wasserstein cost $\langle \bm{C}, \bm{P} \rangle_{F}$ against \cite{Scetbon2021LowRankSF} in Figure~\ref{fig:mouse_embryo_W}. 

\begin{figure*}[tbp]
\centering
  \begin{subfigure}[b]{0.55\textwidth}
  \includegraphics[width=\linewidth]{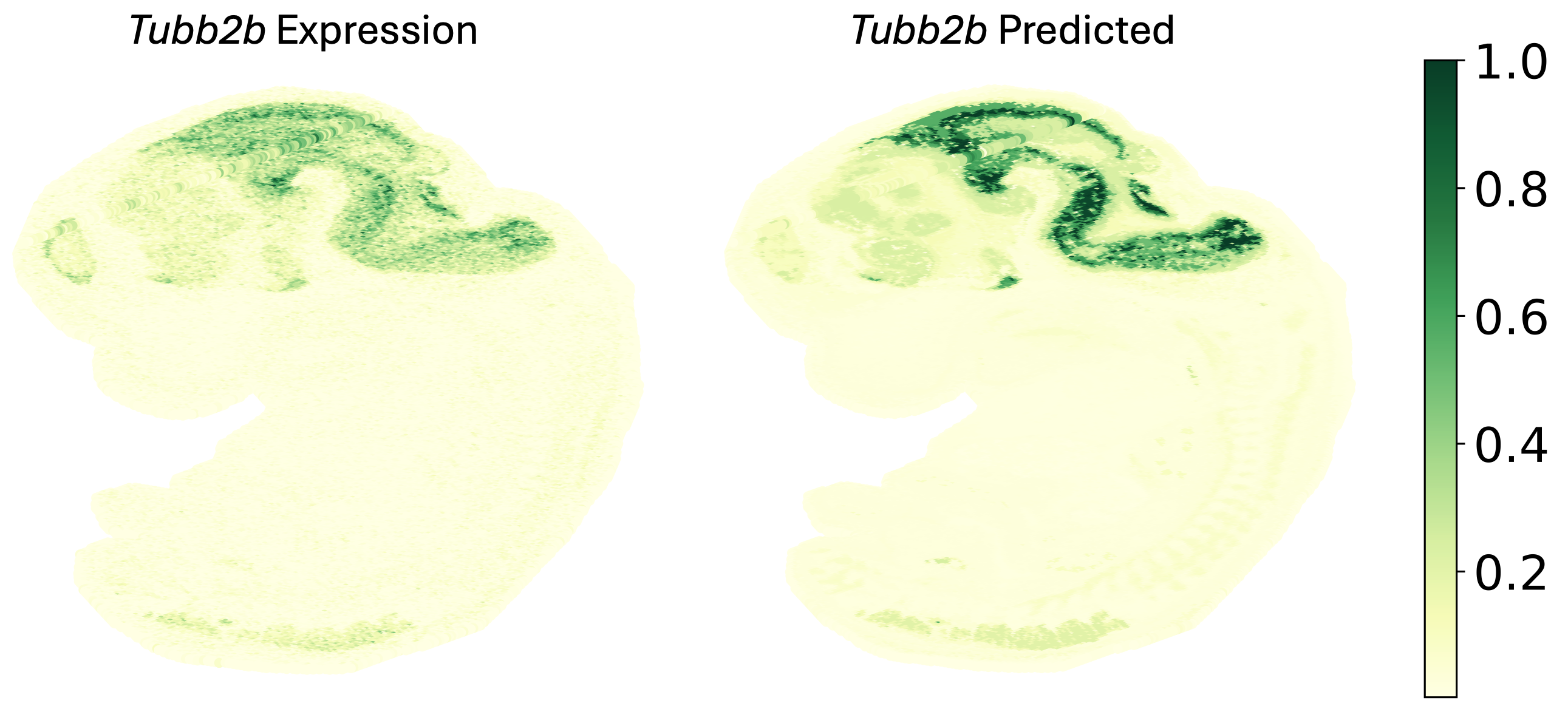} 
  \caption{}
  \label{fig:tubb2b}
  \end{subfigure}
  \begin{subtable}[b]{0.44\textwidth}
  \centering
  \begin{adjustbox}{width=\columnwidth,center}
    \begin{tabular}{ccccc}
\toprule
Objective & Algorithm & Spearman $\rho$ & ARI & AMI  \\ \midrule
W & LOT-U    & 0.394 & 0.332   & 0.397 \\
  &\method-U  & \textbf{0.465}   & \textbf{0.466}   & \textbf{0.484}   \\
  \cmidrule{2-5}
    & LOT-SR    & 0.391 & 0.319 & 0.396  \\
  &\method-SR  & \textbf{0.467}   & \textbf{0.475}   & \textbf{0.492}          \\
\midrule
GW  & LOT-U    & 0.005 & 0.0 & 0.0 \\
&\method-U  & \textbf{0.266}   & \textbf{0.299} & \textbf{0.364}     \\ \cmidrule{2-5}
&LOT-SR & 0.004 & 0.0 & 0.0 
\\
  &\method-SR  & \textbf{0.275}   & \textbf{0.318}   & \textbf{0.381}          \\ \midrule
FGW  & LOT-U    & \textbf{0.391} & 0.330   & 0.400     \\
&\method-U  &  0.368  &  \textbf{0.391}  &   \textbf{0.420}   \\ \cmidrule{2-5}
& LOT-SR  & 0.396 & 0.337 & 0.401 \\
  &\method-SR  & \textbf{0.465}   & \textbf{0.469}   & \textbf{0.499}          \\
\bottomrule
\end{tabular}
\end{adjustbox}
  \label{table:table}
  \end{subtable}
\caption{(a) Brain marker gene \textit{Tubb2b} expression and \method\ prediction. (b) Comparison of the low-rank unbalanced (LOT-U) algorithm of \cite{scetbon2023unbalanced} and \method\ on aligning spatial transcriptomics data. Bold indicates top performing method for each metric on each objective.}
\label{table: spatiotempopral}
\end{figure*}
For a direct comparison, we use \method\ to solve the same unbalanced problems (denoted \method-U). We perform an extensive grid search (Appendix~\ref{sec:sup_spatiotemporal_validation}) to pick the best hyperparameters (including rank $\ll 30,000$) for all algorithms. \cite{scetbon2023unbalanced} previously showed  that unbalanced FGW algorithm has the best performance on ST alignment.
We find that unbalanced \method\ achieves comparable or better results than the previous state-of-the-art unbalanced low-rank method on all three objectives (Table~\ref{table: spatiotempopral}). We also solve a semi-relaxed version of each problem motivated by the observation that all cells from E12.5 have an ancestor, but not all cells from E11.5 have the same number of descendants due to cell growth and death. Thus the former marginal is tight, and the latter relaxed \cite{destot}. We run both semi-relaxed \method\ (FRLC-SR) and a setting of LOT-U that recovers the semi-relaxed problem (LOT-SR). Semi-relaxed \method\ achieves the best results on all three metrics by a large margin (Table~\ref{table: spatiotempopral}). As one example, the expression of \textit{Tubb2b}, a mouse brain marker gene, agreeing with the expression predicted from the semi-relaxed alignment of \method\ (Fig. \ref{fig:tubb2b}).

\begin{table}[tp!]
\centering
\setlength{\tabcolsep}{3pt} 
\begin{tabular}{@{}lcccccc@{}}
\toprule
\small{\textbf{Method}} & \small{\textbf{Factorization}} & \small{\textbf{Cost}} & \small{\textbf{Variables}} & \small{\textbf{Algorithm}} & \makecell{ \small{\textbf{Sub-routine}} \\ \small{\textbf{for coupling}}} \\ \midrule
\makecell[l]{ \small{Factored Coupling} \\ \small{\cite{forrow19a}}} & \small{Factored coupling} & \makecell[c]{ \small{$k$-Wasserstein} \\ \small{barycenter}} & \makecell[c]{ \small{Anchors} \& \\ \small{sub-couplings}} & \small{Lloyd-type} & \small{Dykstra's} \\ \midrule
\makecell[l]{\small{Latent OT} \\ \small{\cite{lin2021making}}} & \small{Latent coupling} & \makecell[c]{ \small{Extension of} \\ \small{$k$-Wasserstein} \\ \small{barycenter} } & \makecell[c]{\small{Anchors} \& \\ \small{sub-couplings}} & \small{Lloyd-type} & \small{Dykstra's} \\ \midrule
\makecell[l]{ \small{LOT} \\ \small{\cite{Scetbon2021LowRankSF}}} & \small{Factored coupling} & \small{Primal OT cost} & \makecell[c]{\small{Sub-couplings} \& \\ \small{inner marginal}} & \small{Mirror-descent} & \small{Dykstra's} \\ \midrule
\makecell[l]{ \small{FRLC (this work)}} & \small{Latent coupling} & \small{Primal OT cost} & \small{Sub-couplings} & \makecell[c]{ \small{Coordinate} \\ \small{mirror-descent}} & \small{OT} \\ \bottomrule
\end{tabular}
\vspace{3pt}
\caption{Comparing aspects of low-rank OT methods. Factorization indicates the structure of the inner matrix.}
\label{table:comparison}
\end{table}

\vspace{-3mm}
\subsection{Additional Experiments}
\vspace{-2mm}
We evaluate \method\ on an unsupervised graph partitioning problem \cite{chowdhury2021generalized} on four real-world graph datasets \cite{yang2012defining,yin2017local,banerjee2013diffusion}. We benchmark the performance of the semi-relaxed and GW settings of \method\ against (1) GWL \cite{xu2019gromov}, solving a balanced GW problem; (2) SpecGWL \cite{chowdhury2021generalized} using the heat kernel on the graph Laplacian as the cost matrix. We find \method\ achieves the better clustering performance than GWL and SpecGWL on 9/12 and 11/12 of the datasets (Table \ref{table:graph_partition} and 
Appendix~\ref{sec:graphpartitioning}).
\vspace{-5mm}
\section{Discussion}
\vspace{-4mm}
We provide comparison of existing low-rank solvers in Table~\ref{table:comparison}. The \method\ algorithm has a number of advantages, including (1) coarsening a full-rank plan $\bm{P}$ to non-diagonal latent coupling $\bm{T}$; (2) minimizing the primal OT problem for general cost $\bm{C}$ rather than a barycenteric problem; (3) optimizing only sub-couplings; and (4) using Sinkhorn alone as the sub-routine for low-rank OT. While we argue these are substantial advantages, \method\ has limitations which warrant follow-up work. In particular, three key limitations of our work, common to the existing low-rank OT algorithms, are: (1) selecting values of the latent coupling ranks; (2) strengthening the convergence criterion; (3) addressing sensitivity to the initialization from non-convexity of the objective. A limitation specific to our work is the selection of the $\tau$ hyperparameter 
 controlling the smoothness of the trajectory. These and other limitations are discussed in Section N of the Appendix. Another direction for further investigation is to better understand what structure LC factorizations capture when the optimal plan is known to have full rank, e.g. when the Monge map exists, as has been explored by \cite{liu2021sparse}.

\vspace{-4mm}
\section{Conclusion}
\vspace{-4mm}
We introduce \method, an algorithm to compute low-rank optimal transport plan from the latent coupling (LC) factorization. \method\ handles different OT objective costs and relaxations of the marginal constraints. Moreover, the LC factorization provides an interpretable coarse-graining of the full transport plan and its marginals through the mapping $(\bm{P},\bm{a},\bm{b}) \to (\bm{T}, \bm{g}_{Q}, \bm{g}_{R})$. We demonstrate the superior performance of \method\ compared to state-of-the-art low-rank methods on real and synthetic datasets.

\newpage
\begin{ack}
This work is supported by NCI grant U24CA248453 to B.J.R.
J.G. gratefully acknowledges support from the Schmidt DataX Fund at Princeton University made possible through a major gift from the Schmidt Futures Foundation.

\end{ack}

\bibliography{SpatialOT}

\begin{thebibliography}{57}
\providecommand{\natexlab}[1]{#1}
\providecommand{\url}[1]{\texttt{#1}}
\expandafter\ifx\csname urlstyle\endcsname\relax
  \providecommand{\doi}[1]{doi: #1}\else
  \providecommand{\doi}{doi: \begingroup \urlstyle{rm}\Url}\fi

\bibitem[Bakshi \& Woodruff(2018)Bakshi and Woodruff]{bakshi2018sublinear}
Bakshi, A. and Woodruff, D.
\newblock Sublinear {T}ime {L}ow-{R}ank {A}pproximation of {D}istance {M}atrices.
\newblock \emph{Advances in Neural Information Processing Systems}, 31, 2018.

\bibitem[Banerjee et~al.(2013)Banerjee, Chandrasekhar, Duflo, and Jackson]{banerjee2013diffusion}
Banerjee, A., Chandrasekhar, A.~G., Duflo, E., and Jackson, M.~O.
\newblock The {D}iffusion of {M}icrofinance.
\newblock \emph{Science}, 341\penalty0 (6144):\penalty0 1236498, 2013.

\bibitem[Bauschke \& Lewis(2000)Bauschke and Lewis]{Bauschke2000}
Bauschke, H.~H. and Lewis, A.~S.
\newblock Dykstras algorithm with {B}regman projections: A convergence proof.
\newblock \emph{Optimization}, 48\penalty0 (4):\penalty0 409–427, January 2000.
\newblock ISSN 1029-4945.
\newblock \doi{10.1080/02331930008844513}.
\newblock URL \url{http://dx.doi.org/10.1080/02331930008844513}.

\bibitem[Beck \& Tetruashvili(2013)Beck and Tetruashvili]{Beck2013OnTC}
Beck, A. and Tetruashvili, L.
\newblock On the {C}onvergence of {B}lock {C}oordinate {D}escent {T}ype {M}ethods.
\newblock \emph{SIAM J. Optim.}, 23:\penalty0 2037--2060, 2013.
\newblock URL \url{https://api.semanticscholar.org/CorpusID:6866704}.

\bibitem[Benamou et~al.(2015)Benamou, Carlier, Cuturi, Nenna, and Peyré]{Benamou2015}
Benamou, J.-D., Carlier, G., Cuturi, M., Nenna, L., and Peyré, G.
\newblock Iterative {B}regman {P}rojections for {R}egularized {T}ransportation {P}roblems.
\newblock \emph{SIAM Journal on Scientific Computing}, 37\penalty0 (2):\penalty0 A1111–A1138, January 2015.
\newblock ISSN 1095-7197.
\newblock \doi{10.1137/141000439}.
\newblock URL \url{http://dx.doi.org/10.1137/141000439}.

\bibitem[Bregman(1967)]{bregman1967relaxation}
Bregman, L.~M.
\newblock The relaxation method of finding the common point of convex sets and its application to the solution of problems in convex programming.
\newblock \emph{USSR computational mathematics and mathematical physics}, 7\penalty0 (3):\penalty0 200--217, 1967.

\bibitem[Bunne et~al.(2023)Bunne, Stark, Gut, del Castillo, Levesque, Lehmann, Pelkmans, Krause, and R{\"a}tsch]{Bunne_2023}
Bunne, C., Stark, S.~G., Gut, G., del Castillo, J.~S., Levesque, M., Lehmann, K.-V., Pelkmans, L., Krause, A., and R{\"a}tsch, G.
\newblock Learning single-cell perturbation responses using neural optimal transport.
\newblock \emph{Nature Methods}, 20\penalty0 (11):\penalty0 1759--1768, September 2023.
\newblock ISSN 1548-7105.
\newblock \doi{10.1038/s41592-023-01969-x}.
\newblock URL \url{http://dx.doi.org/10.1038/s41592-023-01969-x}.

\bibitem[Charikar et~al.(2023)Charikar, Chen, R{\'e}, and Waingarten]{pmlr-v195-charikar23a}
Charikar, M., Chen, B., R{\'e}, C., and Waingarten, E.
\newblock Fast {A}lgorithms for a {N}ew {R}elaxation of {O}ptimal {T}ransport.
\newblock In Neu, G. and Rosasco, L. (eds.), \emph{Proceedings of Thirty Sixth Conference on Learning Theory}, volume 195 of \emph{Proceedings of Machine Learning Research}, pp.\  4831--4862. PMLR, 12--15 Jul 2023.
\newblock URL \url{https://proceedings.mlr.press/v195/charikar23a.html}.

\bibitem[Chen et~al.(2022)Chen, Liao, Cheng, Ma, Wu, Lai, Qiu, Yang, Xu, Hao, et~al.]{chen2022spatiotemporal}
Chen, A., Liao, S., Cheng, M., Ma, K., Wu, L., Lai, Y., Qiu, X., Yang, J., Xu, J., Hao, S., et~al.
\newblock Spatiotemporal transcriptomic atlas of mouse organogenesis using {DNA} nanoball-patterned arrays.
\newblock \emph{Cell}, 185\penalty0 (10):\penalty0 1777--1792, 2022.

\bibitem[Chen \& Price(2017)Chen and Price]{chen2017condition}
Chen, X. and Price, E.
\newblock Condition number-free query and active learning of linear families.
\newblock 2017.

\bibitem[Chizat et~al.(2018)Chizat, Peyré, Schmitzer, and Vialard]{Chizat2018}
Chizat, L., Peyré, G., Schmitzer, B., and Vialard, F.-X.
\newblock Unbalanced {O}ptimal {T}ransport: {D}ynamic and {K}antorovich {F}ormulations.
\newblock \emph{Journal of Functional Analysis}, 274\penalty0 (11):\penalty0 3090–3123, June 2018.
\newblock ISSN 0022-1236.
\newblock \doi{10.1016/j.jfa.2018.03.008}.
\newblock URL \url{http://dx.doi.org/10.1016/j.jfa.2018.03.008}.

\bibitem[Chowdhury \& Needham(2021)Chowdhury and Needham]{chowdhury2021generalized}
Chowdhury, S. and Needham, T.
\newblock Generalized {S}pectral {C}lustering via {G}romov-{W}asserstein {L}earning.
\newblock In \emph{International Conference on Artificial Intelligence and Statistics}, pp.\  712--720. PMLR, 2021.

\bibitem[Chung(2005)]{chung2005laplacians}
Chung, F.
\newblock Laplacians and the {C}heeger {I}nequality for {D}irected {G}raphs.
\newblock \emph{Annals of Combinatorics}, 9:\penalty0 1--19, 2005.

\bibitem[Cohen \& Rothblum(1993)Cohen and Rothblum]{cohen1993nonnegative}
Cohen, J.~E. and Rothblum, U.~G.
\newblock Nonnegative {R}anks, {D}ecompositions, and {F}actorizations of {N}onnegative {M}atrices.
\newblock \emph{Linear Algebra and its Applications}, 190:\penalty0 149--168, 1993.

\bibitem[Courty et~al.(2014)Courty, Flamary, and Tuia]{courty2014domain}
Courty, N., Flamary, R., and Tuia, D.
\newblock Domain adaptation with regularized optimal transport.
\newblock In \emph{Machine Learning and Knowledge Discovery in Databases: European Conference, ECML PKDD 2014, Nancy, France, September 15-19, 2014. Proceedings, Part I 14}, pp.\  274--289. Springer, 2014.

\bibitem[Cuturi(2013{\natexlab{a}})]{cuturi2013sinkhorn}
Cuturi, M.
\newblock Sinkhorn {D}istances: {L}ightspeed {C}omputation of {O}ptimal {T}ransport.
\newblock \emph{Advances in neural information processing systems}, 26, 2013{\natexlab{a}}.

\bibitem[Cuturi(2013{\natexlab{b}})]{sinkhorn}
Cuturi, M.
\newblock Sinkhorn distances: Lightspeed computation of optimal transport.
\newblock \emph{Advances in Neural Information Processing Systems}, pp.\  2292--2300, 2013{\natexlab{b}}.
\newblock URL \url{https://proceedings.neurips.cc/paper/2013/hash/af21d0c97db2e27e13572cbf59eb343d-Abstract.html}.

\bibitem[Cuturi et~al.(2022)Cuturi, Meng-Papaxanthos, Tian, Bunne, Davis, and Teboul]{cuturi2022optimal}
Cuturi, M., Meng-Papaxanthos, L., Tian, Y., Bunne, C., Davis, G., and Teboul, O.
\newblock Optimal {T}ransport {T}ools ({OTT}): A {JAX} {T}oolbox for all things {W}asserstein.
\newblock \emph{arXiv preprint arXiv:2201.12324}, 2022.

\bibitem[Dang \& Lan(2015)Dang and Lan]{Dang2015-ky}
Dang, C.~D. and Lan, G.
\newblock Stochastic {B}lock {M}irror {D}escent {M}ethods for {N}onsmooth and {S}tochastic {O}ptimization.
\newblock \emph{SIAM J. Optim.}, 25\penalty0 (2):\penalty0 856--881, January 2015.

\bibitem[Dong et~al.(2023)Dong, Pan, Fu, Xu, Shi, Yang, Shi, and Zhuo]{dong2023partial}
Dong, S., Pan, Z., Fu, Y., Xu, D., Shi, K., Yang, Q., Shi, Y., and Zhuo, C.
\newblock Partial unbalanced feature transport for cross-modality cardiac image segmentation.
\newblock \emph{IEEE Transactions on Medical Imaging}, 2023.

\bibitem[Dykstra(1983)]{Dykstra1983}
Dykstra, R.~L.
\newblock An {A}lgorithm for {R}estricted {L}east {S}quares {R}egression.
\newblock \emph{Journal of the American Statistical Association}, 78\penalty0 (384):\penalty0 837–842, December 1983.
\newblock ISSN 1537-274X.
\newblock \doi{10.1080/01621459.1983.10477029}.
\newblock URL \url{http://dx.doi.org/10.1080/01621459.1983.10477029}.

\bibitem[Forrow et~al.(2019)Forrow, H\"{u}tter, Nitzan, Rigollet, Schiebinger, and Weed]{forrow19a}
Forrow, A., H\"{u}tter, J.-C., Nitzan, M., Rigollet, P., Schiebinger, G., and Weed, J.
\newblock Statistical {O}ptimal {T}ransport via {F}actored {C}ouplings.
\newblock In Chaudhuri, K. and Sugiyama, M. (eds.), \emph{Proceedings of the Twenty-Second International Conference on Artificial Intelligence and Statistics}, volume~89 of \emph{Proceedings of Machine Learning Research}, pp.\  2454--2465. PMLR, 16--18 Apr 2019.
\newblock URL \url{https://proceedings.mlr.press/v89/forrow19a.html}.

\bibitem[Frieze et~al.(2004)Frieze, Kannan, and Vempala]{10.1145/1039488.1039494}
Frieze, A., Kannan, R., and Vempala, S.
\newblock Fast {M}onte-{C}arlo {A}lgorithms for {F}inding {L}ow-rank {A}pproximations.
\newblock \emph{J. ACM}, 51\penalty0 (6):\penalty0 1025–1041, nov 2004.
\newblock ISSN 0004-5411.
\newblock \doi{10.1145/1039488.1039494}.
\newblock URL \url{https://doi.org/10.1145/1039488.1039494}.

\bibitem[Frogner et~al.(2015)Frogner, Zhang, Mobahi, Araya, and Poggio]{frogner2015learning}
Frogner, C., Zhang, C., Mobahi, H., Araya, M., and Poggio, T.~A.
\newblock Learning with a {W}asserstein {L}oss.
\newblock \emph{Advances in neural information processing systems}, 28, 2015.

\bibitem[Geshkovski et~al.(2023)Geshkovski, Letrouit, Polyanskiy, and Rigollet]{geshkovski2023mathematical}
Geshkovski, B., Letrouit, C., Polyanskiy, Y., and Rigollet, P.
\newblock A mathematical perspective on {T}ransformers.
\newblock \emph{arXiv preprint arXiv:2312.10794}, 2023.

\bibitem[Ghadimi et~al.(2014)Ghadimi, Lan, and Zhang]{Ghadimi2014}
Ghadimi, S., Lan, G., and Zhang, H.
\newblock Mini-batch stochastic approximation methods for nonconvex stochastic composite optimization.
\newblock \emph{Mathematical Programming}, 155\penalty0 (1–2):\penalty0 267–305, December 2014.
\newblock ISSN 1436-4646.
\newblock \doi{10.1007/s10107-014-0846-1}.
\newblock URL \url{http://dx.doi.org/10.1007/s10107-014-0846-1}.

\bibitem[Halmos et~al.(2024)Halmos, Liu, Gold, Chen, Ding, and Raphael]{destot}
Halmos, P., Liu, X., Gold, J., Chen, F., Ding, L., and Raphael, B.~J.
\newblock De{ST}-{OT}: Alignment of {S}patiotemporal {T}ranscriptomics {D}ata.
\newblock In \emph{International Conference on Research in Computational Molecular Biology}, pp.\  434--437. Springer, 2024.

\bibitem[Indyk et~al.(2019)Indyk, Vakilian, Wagner, and Woodruff]{pmlr-v99-indyk19a}
Indyk, P., Vakilian, A., Wagner, T., and Woodruff, D.~P.
\newblock Sample-optimal low-rank approximation of distance matrices.
\newblock In Beygelzimer, A. and Hsu, D. (eds.), \emph{Proceedings of the Thirty-Second Conference on Learning Theory}, volume~99 of \emph{Proceedings of Machine Learning Research}, pp.\  1723--1751. PMLR, 25--28 Jun 2019.
\newblock URL \url{https://proceedings.mlr.press/v99/indyk19a.html}.

\bibitem[Kantorovich(1942)]{kantorovich1942transfer}
Kantorovich, L.
\newblock On the {T}ranslocation of {M}asses: Doklady akademii nauk ussr.
\newblock 1942.

\bibitem[Klein et~al.(2023)Klein, Palla, Lange, Klein, Piran, Gander, Meng-Papaxanthos, Sterr, Bastidas-Ponce, Tarquis-Medina, et~al.]{klein2023moscot}
Klein, D., Palla, G., Lange, M., Klein, M., Piran, Z., Gander, M., Meng-Papaxanthos, L., Sterr, M., Bastidas-Ponce, A., Tarquis-Medina, M., et~al.
\newblock Mapping cells through time and space with moscot.
\newblock \emph{bioRxiv}, pp.\  2023--05, 2023.

\bibitem[Lin et~al.(2021)Lin, Azabou, and Dyer]{lin2021making}
Lin, C.-H., Azabou, M., and Dyer, E.~L.
\newblock Making transport more robust and interpretable by moving data through a small number of anchor points.
\newblock \emph{Proceedings of machine learning research}, 139:\penalty0 6631, 2021.

\bibitem[Liu et~al.(2021)Liu, Zhang, Zheng, and Qian]{liu2021sparse}
Liu, W., Zhang, C., Zheng, N., and Qian, H.
\newblock Approximating optimal transport via low-rank and sparse factorization.
\newblock \emph{CoRR}, abs/2111.06546, 2021.
\newblock URL \url{https://arxiv.org/abs/2111.06546}.

\bibitem[Liu et~al.(2023)Liu, Zeira, and Raphael]{PASTE2}
Liu, X., Zeira, R., and Raphael, B.~J.
\newblock Partial alignment of multislice spatially resolved transcriptomics data.
\newblock \emph{Genome Research}, 33\penalty0 (7):\penalty0 1124--1132, 2023.

\bibitem[M{\'e}moli(2007)]{memoli2007use}
M{\'e}moli, F.
\newblock On the use of {G}romov-{H}ausdorff {D}istances for {S}hape {C}omparison.
\newblock 2007.

\bibitem[M{\'e}moli(2011)]{memoli2011gromov}
M{\'e}moli, F.
\newblock Gromov--{W}asserstein {D}istances and the {M}etric {A}pproach to {O}bject {M}atching.
\newblock \emph{Foundations of computational mathematics}, 11:\penalty0 417--487, 2011.

\bibitem[Nesterov(2012)]{nesterov2012efficiency}
Nesterov, Y.
\newblock Efficiency of {C}oordinate {D}escent {M}ethods on {H}uge-{S}cale {O}ptimization {P}roblems.
\newblock \emph{SIAM Journal on Optimization}, 22\penalty0 (2):\penalty0 341--362, 2012.

\bibitem[Orlin(1997)]{Orlin1997}
Orlin, J.~B.
\newblock A polynomial time primal network simplex algorithm for minimum cost flows.
\newblock \emph{Mathematical Programming}, 78\penalty0 (2):\penalty0 109--129, Aug 1997.
\newblock ISSN 1436-4646.
\newblock \doi{10.1007/BF02614365}.
\newblock URL \url{https://link.springer.com/content/pdf/10.1007/BF02614365.pdf}.

\bibitem[Pham et~al.(2020)Pham, Le, Ho, Pham, and Bui]{Pham2020OnUO}
Pham, K., Le, K., Ho, N., Pham, T., and Bui, H.~H.
\newblock On {U}nbalanced {O}ptimal {T}ransport: An {A}nalysis of {S}inkhorn {A}lgorithm.
\newblock In \emph{International Conference on Machine Learning}, 2020.
\newblock URL \url{https://api.semanticscholar.org/CorpusID:211068892}.

\bibitem[Sander et~al.(2022)Sander, Ablin, Blondel, and Peyr\'e]{sander22a}
Sander, M.~E., Ablin, P., Blondel, M., and Peyr\'e, G.
\newblock Sinkformers: Transformers with {D}oubly {S}tochastic {A}ttention.
\newblock In Camps-Valls, G., Ruiz, F. J.~R., and Valera, I. (eds.), \emph{Proceedings of The 25th International Conference on Artificial Intelligence and Statistics}, volume 151 of \emph{Proceedings of Machine Learning Research}, pp.\  3515--3530. PMLR, 28--30 Mar 2022.
\newblock URL \url{https://proceedings.mlr.press/v151/sander22a.html}.

\bibitem[Scetbon \& Cuturi(2022)Scetbon and Cuturi]{scetbon2022lowrank}
Scetbon, M. and Cuturi, M.
\newblock Low-rank {O}ptimal {T}ransport: {A}pproximation, {S}tatistics and {D}ebiasing.
\newblock In Oh, A.~H., Agarwal, A., Belgrave, D., and Cho, K. (eds.), \emph{Advances in Neural Information Processing Systems}, 2022.
\newblock URL \url{https://openreview.net/forum?id=4btNeXKFAQ}.

\bibitem[Scetbon et~al.(2021)Scetbon, Cuturi, and Peyr\'{e}]{Scetbon2021LowRankSF}
Scetbon, M., Cuturi, M., and Peyr\'{e}, G.
\newblock Low-{R}ank {S}inkhorn {F}actorization.
\newblock In \emph{International Conference on Machine Learning}, 2021.
\newblock URL \url{https://api.semanticscholar.org/CorpusID:232147563}.

\bibitem[Scetbon et~al.(2022)Scetbon, Peyr{\'e}, and Cuturi]{scetbon2022linear}
Scetbon, M., Peyr{\'e}, G., and Cuturi, M.
\newblock Linear-time {G}romov {W}asserstein {D}istances using {L}ow {R}ank {C}ouplings and {C}osts.
\newblock In \emph{International Conference on Machine Learning}, pp.\  19347--19365. PMLR, 2022.

\bibitem[Scetbon et~al.(2023)Scetbon, Klein, Palla, and Cuturi]{scetbon2023unbalanced}
Scetbon, M., Klein, M., Palla, G., and Cuturi, M.
\newblock Unbalanced {L}ow-rank {O}ptimal {T}ransport {S}olvers, 2023.

\bibitem[Schiebinger et~al.(2019)Schiebinger, Shu, Tabaka, Cleary, Subramanian, Solomon, Gould, Liu, Lin, Berube, et~al.]{schiebinger2019optimal}
Schiebinger, G., Shu, J., Tabaka, M., Cleary, B., Subramanian, V., Solomon, A., Gould, J., Liu, S., Lin, S., Berube, P., et~al.
\newblock Optimal-{T}ransport {A}nalysis of {S}ingle-{C}ell {G}ene {E}xpression {I}dentifies {D}evelopmental {T}rajectories in {R}eprogramming.
\newblock \emph{Cell}, 176\penalty0 (4):\penalty0 928--943, 2019.

\bibitem[Solomon et~al.(2015)Solomon, De~Goes, Peyr{\'e}, Cuturi, Butscher, Nguyen, Du, and Guibas]{solomon2015convolutional}
Solomon, J., De~Goes, F., Peyr{\'e}, G., Cuturi, M., Butscher, A., Nguyen, A., Du, T., and Guibas, L.
\newblock Convolutional {W}asserstein {D}istances: Efficient {O}ptimal {T}ransportation on {G}eometric {D}omains.
\newblock \emph{ACM Transactions on Graphics (ToG)}, 34\penalty0 (4):\penalty0 1--11, 2015.

\bibitem[St{\aa}hl et~al.(2016)St{\aa}hl, Salm{\'e}n, Vickovic, Lundmark, Navarro, Magnusson, Giacomello, Asp, Westholm, Huss, et~al.]{staahl2016visualization}
St{\aa}hl, P.~L., Salm{\'e}n, F., Vickovic, S., Lundmark, A., Navarro, J.~F., Magnusson, J., Giacomello, S., Asp, M., Westholm, J.~O., Huss, M., et~al.
\newblock Visualization and analysis of gene expression in tissue sections by spatial transcriptomics.
\newblock \emph{Science}, 353\penalty0 (6294):\penalty0 78--82, 2016.

\bibitem[Tarjan(1997)]{Tarjan1997}
Tarjan, R.~E.
\newblock Dynamic trees as search trees via {E}uler tours, applied to the network simplex algorithm.
\newblock \emph{Mathematical Programming}, 78\penalty0 (2):\penalty0 169--177, Aug 1997.
\newblock ISSN 1436-4646.
\newblock \doi{10.1007/BF02614369}.
\newblock URL \url{https://link.springer.com/content/pdf/10.1007/BF02614369.pdf}.

\bibitem[Tay et~al.(2020)Tay, Bahri, Yang, Metzler, and Juan]{tay20a}
Tay, Y., Bahri, D., Yang, L., Metzler, D., and Juan, D.-C.
\newblock Sparse {S}inkhorn {A}ttention.
\newblock In III, H.~D. and Singh, A. (eds.), \emph{Proceedings of the 37th International Conference on Machine Learning}, volume 119 of \emph{Proceedings of Machine Learning Research}, pp.\  9438--9447. PMLR, 13--18 Jul 2020.
\newblock URL \url{https://proceedings.mlr.press/v119/tay20a.html}.

\bibitem[Tong et~al.(2023)Tong, Malkin, Huguet, Zhang, Rector-Brooks, Fatras, Wolf, and Bengio]{tong2023improving}
Tong, A., Malkin, N., Huguet, G., Zhang, Y., Rector-Brooks, J., Fatras, K., Wolf, G., and Bengio, Y.
\newblock Improving and generalizing flow-based generative models with minibatch optimal transport.
\newblock In \emph{ICML Workshop on New Frontiers in Learning, Control, and Dynamical Systems}, 2023.

\bibitem[Vayer et~al.(2020)Vayer, Chapel, Flamary, Tavenard, and Courty]{Vayer2020}
Vayer, T., Chapel, L., Flamary, R., Tavenard, R., and Courty, N.
\newblock Fused {G}romov-{W}asserstein distance for structured objects.
\newblock \emph{Algorithms}, 13\penalty0 (9):\penalty0 212, August 2020.
\newblock ISSN 1999-4893.
\newblock \doi{10.3390/a13090212}.
\newblock URL \url{http://dx.doi.org/10.3390/a13090212}.

\bibitem[Vincent-Cuaz et~al.(2022)Vincent-Cuaz, Flamary, Corneli, Vayer, and Courty]{ICLR_semi}
Vincent-Cuaz, C., Flamary, R., Corneli, M., Vayer, T., and Courty, N.
\newblock Semi-relaxed {G}romov-{W}asserstein divergence and applications on graphs.
\newblock In \emph{International Conference on Learning Representations}, 2022.
\newblock URL \url{https://openreview.net/forum?id=RShaMexjc-x}.

\bibitem[Wolf et~al.(2018)Wolf, Angerer, and Theis]{wolf2018scanpy}
Wolf, F.~A., Angerer, P., and Theis, F.~J.
\newblock {SCANPY}: large-scale single-cell gene expression data analysis.
\newblock \emph{Genome biology}, 19:\penalty0 1--5, 2018.

\bibitem[Xu et~al.(2019)Xu, Luo, Zha, and Duke]{xu2019gromov}
Xu, H., Luo, D., Zha, H., and Duke, L.~C.
\newblock Gromov-{W}asserstein {L}earning for {G}raph {M}atching and {N}ode {E}mbedding.
\newblock In \emph{International conference on machine learning}, pp.\  6932--6941. PMLR, 2019.

\bibitem[Yang \& Leskovec(2012)Yang and Leskovec]{yang2012defining}
Yang, J. and Leskovec, J.
\newblock Defining and {E}valuating {N}etwork {C}ommunities based on {G}round-truth.
\newblock In \emph{Proceedings of the ACM SIGKDD Workshop on Mining Data Semantics}, pp.\  1--8, 2012.

\bibitem[Yang et~al.(2020)Yang, Damodaran, Venkatachalapathy, Soylemezoglu, Shivashankar, and Uhler]{yang2020predicting}
Yang, K.~D., Damodaran, K., Venkatachalapathy, S., Soylemezoglu, A.~C., Shivashankar, G., and Uhler, C.
\newblock Predicting cell lineages using autoencoders and optimal transport.
\newblock \emph{PLoS computational biology}, 16\penalty0 (4):\penalty0 e1007828, 2020.

\bibitem[Yin et~al.(2017)Yin, Benson, Leskovec, and Gleich]{yin2017local}
Yin, H., Benson, A.~R., Leskovec, J., and Gleich, D.~F.
\newblock Local {H}igher-{O}rder {G}raph {C}lustering.
\newblock In \emph{Proceedings of the 23rd ACM SIGKDD international conference on knowledge discovery and data mining}, pp.\  555--564, 2017.

\bibitem[Zeira et~al.(2022)Zeira, Land, Strzalkowski, and Raphael]{zeira2022PASTE}
Zeira, R., Land, M., Strzalkowski, A., and Raphael, B.~J.
\newblock Alignment and integration of spatial transcriptomics data.
\newblock \emph{Nature Methods}, 19\penalty0 (5):\penalty0 567--575, 2022.

\end{thebibliography}
\bibliographystyle{icml2024}

\newpage
\appendix

\section{Low-rank optimal transport}\label{sec:lowrankOT_overview} 

\subsection{Low-rank factorizations}

\paragraph{The set of low-rank couplings.}

Given $\bm{M} \in \mathbb{R}_+^{n \times m}$, the \emph{nonnegative rank} of $\bm{M}$ is the least number of nonnegative, rank-1 matrices that sum to $\bm{M}$:
\begin{align*}
\mathrm{rk}_+ ( \bm{M}) = \min_{r \geq 1} \left\{ \bm{M} = \sum_{i=1}^r \bm{M}_i, \, \text{ such that } \mathrm{rk}( \bm{M}_i) = 1 \text{ and } \bm{M}_i \geq 0 \text{ for all } i \right\} .
\end{align*}
Let $\bm{a} \in \Delta_n, \bm{b} \in \Delta_m$ be probability vectors, and let $\Pi_{\bm{a}, \bm{b}}(r)$ denote the set of rank-$r$ coupling matrices with marginals $\bm{a}$ and $\bm{b}$:
\begin{align*}
\Pi_{\bm{a}, \bm{b}}(r) = \{ \bm{P} \in \mathbb{R}_+^{n \times m} : \bm{P}^\mathrm{T} \bm{1}_m = \bm{a}, \, \bm{P} \bm{1}_n = \bm{b},\, \mathrm{rk}_+(\bm{P}) \leq r \}.
\end{align*}
To optimize any cost over $\Pi_{\bm{a}, \bm{b}}(r)$, one requires a parameterization of this set. 

\paragraph{Factored couplings.}

The \emph{factored coupling} parameterization of $\Pi_{\bm{a}, \bm{b}}(r)$ introduced in \cite{forrow19a}, and used by \cite{Scetbon2021LowRankSF,scetbon2022lowrank, scetbon2022linear, scetbon2023unbalanced}  is
\begin{align*}
\mathsf{FC}_{\bm{a}, \bm{b}}(r) := \{ (\bm{Q}, \bm{R}, \bm{g}) \in \mathbb{R}_+^{n \times r} \times \mathbb{R}_+^{m \times r} \times (\mathbb{R}_+^*)^r : \bm{Q} \in \Pi_{\bm{a}, \bm{g}}, \bm{R} \in \Pi_{\bm{b}, \bm{g}} \} .
\end{align*}
\cite{cohen1993nonnegative} show that any $\bm{P} \in \Pi_{\bm{a}, \bm{b}}(r)$ may be decomposed as $\bm{P} = \bm{Q} \mathrm{diag}(1/ \bm{g}) \bm{R}^\mathrm{T}$ for some triple $(\bm{Q}, \bm{R}, \bm{g}) \in \mathsf{FC}$. Thus, for cost matrix $\bm{C} \in \mathbb{R}^{n \times m}$, the general low-rank optimal transport problem is equivalent to an optimization over factored couplings:
\begin{align}
\label{eqn:FC_equivalence}
\min_{ \bm{P} \in \Pi_{\bm{a}, \bm{b}}(r)} 
\langle \bm{C}, \bm{P} \rangle_F
= \min_{( \bm{Q}, \bm{R}, \bm{g}) \in \mathsf{FC}_{\bm{a}, \bm{b}}(r)
} 
\langle 
\bm{C}, 
\bm{Q} \mathrm{diag}(1/\bm{g}) \bm{R}^\mathrm{T} 
\rangle_F .
\end{align}

\paragraph{Latent coupling factorization.}

The \emph{latent coupling} parameterization of $\Pi_{\bm{a}, \bm{b}}(r)$ introduced in \cite{lin2021making}, and used in the present work is
\begin{align*}
\mathsf{LC}_{\bm{a}, \bm{b}}(r) := \{ (\bm{Q}, \bm{R}, \bm{T} ) 
\in \mathbb{R}_+^{n \times r} \times \mathbb{R}_+^{m \times r} \times \mathbb{R}_+^{r \times r} : \bm{Q} \in \Pi_{\bm{a}, \cdot}, \bm{R} \in \Pi_{\bm{b}, \cdot}, \bm{T} \in 
\Pi_{\bm{g}_Q, \bm{g}_R} \},
\end{align*}
where $\bm{g}_Q, \bm{g}_R$ are the inner marginals of $\bm{Q}$ and $\bm{R}$.

\textbf{Latent coupling diagonalization.}\label{remark:diagonalization} The LC-factorization recovers the factorization of \cite{forrow19a} as a sub-case. While the diagonal factorization of previous works cannot be directly converted to the LC-factorization, the LC-factorization can easily recover the diagonal factorization. In particular, taking $\bm{Q}' \gets \bm{Q}\diag(1/\bm{g}_{Q})\bm{T}$ one can refactor $$\bm{P}_{r} = \bm{Q}\diag(1/\bm{g}_{Q})\bm{T} \diag(1/\bm{g}_{R})\bm{R}^\mathrm{T} = \bm{Q}' \diag(1/\bm{g}_{R})\bm{R}^\mathrm{T}$$
or alternatively taking $\bm{R}' = \bm{R} \diag(1/\bm{g}_{R}) \bm{T}^\mathrm{T}$ may refactor as
$$\bm{P}_{r} = \bm{Q}\diag(1/\bm{g}_{Q})\bm{T} \diag(1/\bm{g}_{R})\bm{R}^\mathrm{T} = \bm{Q} \diag(1/\bm{g}_{Q})(\bm{R}')^\mathrm{T}$$
So that instead of returning $(\bm{Q},\bm{R},\bm{T})$ one may alternatively return $(\bm{Q},\bm{R},\bm{T}) \to (\bm{Q}',\bm{R}, \diag(\bm{g}_{R}))$ or $(\bm{Q},\bm{R},\bm{T})\to (\bm{Q},\bm{R}', \diag(\bm{g}_{Q}))$ to recover the \cite{forrow19a} factorization. An example of this diagonal-conversion is offered in Figure~\ref{fig:diagprojection}.

\subsection{Balanced low-rank optimal transport}

\paragraph{The FRLC optimization problem} Our optimization problem is over the variables $(\bm{Q},\bm{R},\bm{T})$ and defined as follows:  
\begin{align}
\label{eqn:central_problem}
    \min_{( \bm{Q},\bm{R}, \bm{T}) \in
    \mathsf{LC}_{\bm{a}, \bm{b}}(r)
    } 
    \mathcal{L}_{\mathrm{LC}}( \bm{Q},\bm{R},\bm{T}),
\end{align}
where our objective function $\mathcal{L}_{\mathrm{LC}}$ is
\begin{align}
\label{eqn:FRLC_objective_prime}
\mathcal{L}_{\mathrm{LC}}(\bm{Q},\bm{R},\bm{T}) = \langle \bm{C}, \bm{Q} (\mathrm{diag}(1/ \bm{Q}^\mathrm{T} \bm{1}_n )) \bm{T} (\mathrm{diag}(1/ \bm{R}^\mathrm{T}\bm{1}_m )) \bm{R}^\mathrm{T} \rangle 
\end{align}
Given $(\bm{Q}, \bm{R}, \bm{T}) \in \mathsf{LC}_{\bm{a}, \bm{b}}(r)$, sub-couplings $\bm{Q}$ and $\bm{R}$ are constrained by:
\begin{align*}
\mathcal{C}_1({\bm{a}}) := \{ ( \bm{Q},  \bm{R}, \bm{T} ) \in \mathcal{R}_+ : \bm{Q 1}_{r} = \bm{a} \} , \quad
\mathcal{C}_1({\bm{b}}) := \{ ( \bm{Q}, \bm{R}, \bm{T} ) \in \mathcal{R}_+ : \bm{R 1}_{r} = \bm{b} \},
\end{align*}
while the convex sets constraining the latent coupling matrix $\bm{T}$ are
\begin{align*}
\mathcal{C}_2(\bm{g}_Q) := \{ ( \bm{Q},  \bm{R}, \bm{T} ) \in \mathcal{R}_+ : \bm{T} \bm{1}_r = \bm{g}_Q \} , \quad 
\mathcal{C}_2(\bm{g}_R) := \{ ( \bm{Q}, \bm{R}, \bm{T} ) \in \mathcal{R}_+ : \bm{T}^\mathrm{T} \bm{1}_r = \bm{g}_R \},
\end{align*}
where $\bm{g}_{Q} = \bm{Q}^\mathrm{T} \bm{1}_n$ and $\bm{g}_{R} = \bm{R}^\mathrm{T} \bm{1}_m$ as per Definition~\ref{def:inner_marginals}. Under these definitions,  $\mathsf{LC}_{\bm{a}, \bm{b}}(r) = {\mathcal{C}}_1 \cap {\mathcal{C}}_2$, where 
\begin{align}
\label{eqn:calC_tilde_sets}
\mathcal{C}_1 = \mathcal{C}_1(\bm{a}) \cap \mathcal{C}_1(\bm{b})
\quad \text{ and } \quad
\mathcal{C}_2 = \mathcal{C}_2(\bm{g}_Q) \cap \mathcal{C}_2(\bm{g}_R)
\end{align}
To solve \eqref{eqn:central_problem}, we separate the variables into two ``blocks'' of variables, $(\bm{Q},\bm{R})$ and $\bm{T}$, and 
perform two block updates per iteration, as follows. Let $(\gamma_k)_{k \geq 0}$ be a positive sequence of stepsizes. Suppose we have $(\bm{Q}_k, \bm{R}_k, \bm{T}_k) \in {\mathcal{C}}$. We update the first variable block $(\bm{Q},\bm{R})$ by taking a locally optimal (mirror descent) update step, while the second variable block $\bm{T}$ is held fixed:
\begin{equation}
\label{eqn:QR_update}
\begin{split}
( \bm{Q}_{k+1}, \bm{R}_{k+1}) 
&\gets 
\argmin_{
( \bm{Q},\bm{R}) : ( \bm{Q}, \bm{R}, \bm{T}_k) \in {\mathcal{C}}_1
}
\langle ( \bm{Q},\bm{R}) , \nabla_{(\bm{Q},\bm{R})} \mathcal{L}_{\mathrm{LC}} \rangle  
+ \frac{1}{\gamma_k} \mathrm{KL} ( (\bm{Q},\bm{R}) \| (\bm{Q}_k, \bm{R}_k) ) 
\end{split}
\end{equation}
Here, we slightly abused the notation by putting $(\bm{Q},\bm{R})$ inside an inner product.  
The triple $( \bm{Q}_{k+1}, \bm{R}_{k+1}, \bm{T}_k)$ produced by this update lies in ${\mathcal{C}}_1$. Next, we update $\bm{T}$, the second variable block, by taking another locally optimal (mirror descent) step, while the first variable block is held fixed.
\begin{align}
\label{eqn:T_update}
\bm{T}_{k+1} \gets
\argmin_{ \bm{T} : ( \bm{Q}_{k+1}, \bm{R}_{k+1}, \bm{T}) \in {\mathcal{C}}_2}
\langle \bm{T}, \nabla_{\bm{T}} \mathcal{L}_{\mathrm{LC}} \rangle 
+ \frac{1}{\gamma_k}
\mathrm{KL}( \bm{T} \| \bm{T}_k).
\end{align}
By construction, the triple $(\bm{Q}_{k+1}, \bm{R}_{k+1}, \bm{T}_{k+1} ) \in {\mathcal{C}}_2$. However, because the set ${\mathcal{C}}_1$ only constrains the first variable block $\bm{Q}_{k+1}, \bm{R}_{k+1}$, we have that $(\bm{Q}_{k+1}, \bm{R}_{k+1}, \bm{T}_{k+1}) \in {\mathcal{C}}_2$, and hence $(\bm{Q}_{k+1}, \bm{R}_{k+1}, \bm{T}_{k+1}) \in {\mathcal{C}}$. Thus, each iteration produces a feasible triple $(\bm{Q}_{k+1}, \bm{R}_{k+1}, \bm{T}_{k+1})$ through a pair of locally optimal block updates. 

\paragraph{The LOT optimization problem \cite{Scetbon2021LowRankSF}} For comparison, recall the optimization problem that is solved in the LOT framework: 
\begin{align}
\label{eqn:LOT_central_problem}
\min_{( \bm{Q},\bm{R},\bm{g}) \in \widetilde{\mathcal{C}}} \mathcal{L}_{\mathrm{LOT}} ,
\end{align}
where the objective function $\mathcal{L}_{\mathrm{LOT}}$ is
\begin{align}
\label{eqn:LOT_objective}
\mathcal{L}_{\mathrm{LOT}} := 
\langle \bm{C}, \bm{Q} \mathrm{diag}(1 / \bm{g}) \bm{R}^\mathrm{T} \rangle 
\end{align}
and where $\widetilde{\mathcal{C}} = \widetilde{\mathcal{C}}_1 \cap \widetilde{\mathcal{C}}_2$ with:
\begin{align*}
\widetilde{\mathcal{C}}_1 &:= \{ (\bm{Q}, \bm{R}, \bm{g}) \in \mathbb{R}_+^{n \times r} 
\times  \mathbb{R}_+^{m \times r}
\times (\mathbb{R}_+^*)^r : \bm{Q} \bm{1}_r = \bm{a}, \bm{R} \bm{1}_r = \bm{b} \}
\\
\widetilde{\mathcal{C}}_2 &:= \{ ( \bm{Q}, \bm{R}, \bm{g}) \in \mathbb{R}_+^{n \times r} 
\times  \mathbb{R}_+^{m \times r}
\times \mathbb{R}_+^r : 
\bm{Q}^\mathrm{T} \bm{1}_n = \bm{g} = \bm{R}^\mathrm{T} \bm{1}_m 
\} .
\end{align*}
Here, there are also three optimization variables $(\bm{Q},\bm{R},\bm{g})$, but they are updated \emph{together} in each iteration of LOT.  That is, 
given feasible $(\bm{Q}_k, \bm{R}_k, \bm{g}_k) \in \mathcal{C}$, an iteration of LOT updates this triple via
\begin{align*}
(\bm{Q}_{k+1}, \bm{R}_{k+1}, \bm{g}_{k+1}) := \argmin _{ (\bm{Q},\bm{R},\bm{g}) \in \widetilde{\mathcal{C}}} \langle (\bm{Q},\bm{R}, \bm{g}) , \nabla_{( \bm{Q},\bm{R},\bm{g})} \mathcal{L}_{\mathrm{LOT}} \rangle + \frac{1}{\gamma_k} \mathrm{KL} ( (\bm{Q}, \bm{R}, \bm{g}) \| (\bm{Q}_k, \bm{R}_k, \bm{g}_k) ),
\end{align*}
where $(\gamma_k)_{k \geq 0}$ is again a positive sequence of stepsizes. \cite{Scetbon2021LowRankSF} then compute the unconstrained argmin across all variables to yield a set of unconstrained kernels $( \bm{K}_{Q},\bm{K}_{R},\bm{k}_{g})$, using Dykstra to jointly project the unconstrained update onto the intersection $\widetilde{\mathcal{C}}$ of the constraint sets. 

\paragraph{OT subroutine in FRLC}\label{sect:ot_subroutine} To see why we do not need Dykstra in the FRLC scheme, observe that the update \eqref{eqn:QR_update} of variables $( \bm{Q}, \bm{R})$ can be equivalently expressed as
\begin{align*}
( \bm{Q}_{k+1}, \bm{R}_{k+1} ) &\gets
\argmin_{ 
\bm{Q}, \bm{R} \, : \, ( \bm{Q}, \bm{R}, \bm{T}_k) \in \mathcal{C}_1
} 
\langle (\bm{Q},\bm{R}) , \nabla_{(\bm{Q},\bm{R})}\mathcal{L}_{\mathrm{LC}} 
\rangle + \frac{1}{\gamma_k} 
\mathrm{KL}( (\bm{Q},\bm{R}) \| (\bm{Q}_k, \bm{R}_k) ) 
, 
\end{align*}
Thus, even though the pair $(\bm{Q},\bm{R})$ is being updated at this step, solving for $(\bm{Q}_{k+1}, \bm{R}_{k+1})$ above is equivalent to updating each individually because $\bm{Q}$ and $\bm{R}$ do not share an inner marginal:
\begin{align}
\label{eqn:Q_update}
\bm{Q}_{k+1} &\gets 
\argmin_{ 
\bm{Q} \,:\, \bm{Q} \bm{1}_r = \bm{a} 
} 
\langle \bm{Q}, \nabla_{\bm{Q}} \mathcal{L}_{\mathrm{LC}} \rangle 
+ \frac{1}{\gamma_k} \mathrm{KL} (\bm{Q} \| \bm{Q}_k) 
\end{align}
\begin{align}
\label{eqn:R_update}
\bm{R}_{k+1} &\gets
\argmin_{ 
\bm{R} \, : \, \bm{R}\bm{1}_r = \bm{b} 
}
\langle \bm{R}, \nabla_{\bm{R}} \mathcal{L}_{\mathrm{LC}} \rangle
+ \frac{1}{\gamma_k} \mathrm{KL}( \bm{R} \| \bm{R}_k) 
\end{align}
We choose to add the regularization $\tau \mathrm{KL}((\bm{Q}^\mathrm{T} \bm{1}_{n}, \bm{R}^\mathrm{T} \bm{1}_{m}) \| ( \bm{Q}_{k}^\mathrm{T} \bm{1}_n, \bm{R}_{k}^\mathrm{T} \bm{1}_m) )$ to turn each update here into an entropy-regularized semi-relaxed optimal transport problem, and to ensure $\beta$-smoothness:
\begin{align}
\label{eqn:Q_update_new}
\bm{Q}_{k+1} &\gets 
\argmin_{ 
\bm{Q} \,:\, \bm{Q} \bm{1}_r = \bm{a} 
} 
\langle \bm{Q}, \nabla_{\bm{Q}} \mathcal{L}_{\mathrm{LC}} \rangle 
+ \frac{1}{\gamma_k} \mathrm{KL} (\bm{Q} \| \bm{Q}_k) + \tau \mathrm{KL}( \bm{Q}^\mathrm{T} \bm{1}_{n} \| \bm{Q}_{k}^\mathrm{T} \bm{1}_n) 
\end{align}
\begin{align}
\label{eqn:R_update_new}
\bm{R}_{k+1} &\gets
\argmin_{ 
\bm{R} \, : \, \bm{R}\bm{1}_r = \bm{b} 
}
\langle \bm{R}, \nabla_{\bm{R}} \mathcal{L}_{\mathrm{LC}} \rangle
+ \frac{1}{\gamma_k} \mathrm{KL}( \bm{R} \| \bm{R}_k) + \tau \mathrm{KL}( \bm{R}^\mathrm{T} \bm{1}_{m} \| \bm{R}_{k}^\mathrm{T} \bm{1}_m)
\end{align}
After updating $\bm{Q}$ and $\bm{R}$, the update on $\bm{T}$ then follows a similar form:
\begin{align}
\label{eqn:T_update_new}
\bm{T}_{k+1}
\gets
\argmin_{ 
\bm{T} \, : \, (\bm{Q}_{k+1}, \bm{R}_{k+1}, \bm{T}) \in \mathcal{C}_2 
} 
\langle \bm{T} , \nabla_{\bm{T}} \mathcal{L}_{\mathrm{LC}} \rangle + \frac{1}{\gamma_k} \mathrm{KL}( \bm{T} \| \bm{T}_k) ,
\end{align}
leading to balanced constraints on $\bm{T}$ of the form $\bm{T}^\mathrm{T} \bm{1}_r = \bm{Q}_{k+1}^\mathrm{T} \bm{1}_n$ and $\bm{T} \bm{1}_r = \bm{R}_{k+1}^\mathrm{T} \bm{1}_m$, allowing the problem to be solved by Sinkhorn.

Importantly, there are no constraints in ${\mathcal{C}}_1$ involving both $\bm{Q}$ and $\bm{R}$, which is what allows the optimization to split in this way.  If there were such constraints, we would have needed to use Dykstra to update the pair $(\bm{Q},\bm{R})$. Because our update scheme is equivalent to the three updates of individual variables given in \eqref{eqn:Q_update_new}, \eqref{eqn:R_update_new}, \eqref{eqn:T_update_new}, we can solve for each update using optimal transport. As $\bm{Q}$ and $\bm{R}$ are not required to match the inner marginals exactly, the OT problems associated to $\bm{Q}$ and $\bm{R}$ are semi-relaxed by construction.

The separation of our block updates into a step where $(\bm{Q},\bm{R},\bm{T}_{k}) \in {\mathcal{C}}_{1}$ and $(\bm{Q}_{k+1}, \bm{R}_{k+1},\bm{T}) \in {\mathcal{C}}_{2}$ allows us to entirely remove the optimization over inner marginals $\bm{g}_Q$ and $\bm{g}_R$, as done in all previous works on low-rank optimal transport which optimize $\bm{g}$ explicitly as both a variable and a constraint of the optimization \cite{Scetbon2021LowRankSF,scetbon2023unbalanced,scetbon2022lowrank}. If one were to introduce an extended loss in the style of previous works which adds $\bm{g}_Q$ and $\bm{g}_R$ as variables in the form $\mathcal{H}( \bm{Q}, \bm{R}, \bm{T}, \bm{g}_{Q}, \bm{g}_{R}) = \langle \bm{Q} \mathrm{diag}(1/ \bm{g}_Q) \bm{T}\mathrm{diag}(1/ \bm{g}_R) \bm{R}^\mathrm{T}, \bm{C} \rangle_{F}$, one observes an equivalence to simply taking a semi-relaxed projection.

\begin{lemma}\label{lemma:no_opt_innermarginal}
Define the function $\mathcal{H}( \bm{Q}, \bm{R}, \bm{T}, \bm{g}_{Q}, \bm{g}_{R}) = \langle \bm{C}, \bm{Q} \mathrm{diag}(1/ \bm{g}_Q) \bm{T}\mathrm{diag}(1/ \bm{g}_R) \bm{R}^\mathrm{T} \rangle_{F}$ and let $\mathcal{L}_{\mathrm{LC}}( \bm{Q}, \bm{R}, \bm{T})$ be as in \eqref{eqn:FRLC_objective_prime}.  
One has the following equivalence:
\begin{align} 
\label{eqn:H_tilde_equivalence}
\min_{
\bm{g}_R \in \Delta_{r}, 
\bm{g}_Q \in \Delta_{r}, 
\bm{Q}\in\Pi_{\bm{a},\bm{g}_{Q}}, 
\bm{R}\in\Pi_{\bm{b},\bm{g}_{R}}
}\mathcal{H}(\bm{Q},\bm{R}, \bm{T}_k, \bm{g}_{Q},\bm{g}_{R}) = \min_{(\bm{Q},\bm{R},\bm{T}_{k})\in \mathcal{C}_{1}}\mathcal{L}_{\mathrm{LC}}(\bm{Q},\bm{R},{\bm{T}_{k}})
\end{align}
Thus the semi-relaxed projections yield locally optimal inner marginals.
\end{lemma}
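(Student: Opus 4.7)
The plan is to establish \eqref{eqn:H_tilde_equivalence} as a two-sided inequality by exploiting the fact that membership of $\bm{Q}$ in $\Pi_{\bm{a},\bm{g}_Q}$ and $\bm{R}$ in $\Pi_{\bm{b},\bm{g}_R}$ \emph{forces} $\bm{g}_Q = \bm{Q}^\mathrm{T}\bm{1}_n$ and $\bm{g}_R = \bm{R}^\mathrm{T}\bm{1}_m$. In other words, on the left-hand side the inner marginals $(\bm{g}_Q,\bm{g}_R)$ are redundant optimization variables, uniquely pinned down by the sub-couplings through the transport-polytope constraints. Once this is recognized, the right-hand side --- which carries no explicit inner-marginal variables but evaluates $\mathcal{L}_{\mathrm{LC}}$ using $\bm{Q}^\mathrm{T}\bm{1}_n$ and $\bm{R}^\mathrm{T}\bm{1}_m$ by definition \eqref{eqn:FRLC_objective_prime} --- is visibly the same optimization written in a different parameterization.

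For the ``$\le$'' direction I would fix any feasible quadruple $(\bm{Q},\bm{R},\bm{g}_Q,\bm{g}_R)$ on the left. The constraints $\bm{Q}\in\Pi_{\bm{a},\bm{g}_Q}$ and $\bm{R}\in\Pi_{\bm{b},\bm{g}_R}$ immediately yield $\bm{Q}\bm{1}_r=\bm{a}$, $\bm{R}\bm{1}_r=\bm{b}$, and hence $(\bm{Q},\bm{R},\bm{T}_k)\in\mathcal{C}_1$; they also yield $\bm{g}_Q=\bm{Q}^\mathrm{T}\bm{1}_n$ and $\bm{g}_R=\bm{R}^\mathrm{T}\bm{1}_m$. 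Substituting these identities into $\mathcal{H}$ reproduces $\mathcal{L}_{\mathrm{LC}}(\bm{Q},\bm{R},\bm{T}_k)$ verbatim, so the right-hand minimum is bounded above by $\mathcal{H}$ at this quadruple. Taking the infimum on the left finishes one direction.

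For the reverse ``$\ge$'' direction I would take any $(\bm{Q},\bm{R})$ with $(\bm{Q},\bm{R},\bm{T}_k)\in\mathcal{C}_1$ and define $\bm{g}_Q:=\bm{Q}^\mathrm{T}\bm{1}_n$, $\bm{g}_R:=\bm{R}^\mathrm{T}\bm{1}_m$. A brief check confirms $\bm{g}_Q,\bm{g}_R\in\Delta_r$: nonnegativity is inherited from $\bm{Q},\bm{R}\ge 0$, while $\bm{1}_r^\mathrm{T}\bm{g}_Q=\bm{1}_n^\mathrm{T}\bm{Q}\bm{1}_r=\bm{1}_n^\mathrm{T}\bm{a}=1$ (and symmetrically for $\bm{g}_R$). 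By construction the quadruple $(\bm{Q},\bm{R},\bm{g}_Q,\bm{g}_R)$ is feasible on the left, and $\mathcal{H}$ evaluated there again coincides with $\mathcal{L}_{\mathrm{LC}}(\bm{Q},\bm{R},\bm{T}_k)$, so the left-hand minimum does not exceed the right-hand value. Combining the two inequalities yields \eqref{eqn:H_tilde_equivalence}. The final claim about semi-relaxed projections is then immediate: since the $(\bm{Q},\bm{R})$-update \eqref{eqn:Q_update_new}--\eqref{eqn:R_update_new} is precisely a solve of (an entropy-regularized version of) the right-hand problem, the induced inner marginals $\bm{Q}_{k+1}^\mathrm{T}\bm{1}_n$ and $\bm{R}_{k+1}^\mathrm{T}\bm{1}_m$ agree with the marginals of the corresponding block-minimizer on the left. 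I do not expect a genuine obstacle here --- the lemma is essentially a change-of-variables tautology --- and the only care needed is the simplex check for $\bm{g}_Q,\bm{g}_R$.
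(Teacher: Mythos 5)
Your proof is correct and follows essentially the same route as the paper's: the only nontrivial step is the simplex check $\bm{g}_Q,\bm{g}_R\in\Delta_r$ (nonnegativity from $\bm{Q},\bm{R}\ge 0$ and sum-to-one via $\bm{1}_r^\mathrm{T}\bm{Q}^\mathrm{T}\bm{1}_n=\bm{a}^\mathrm{T}\bm{1}_n=1$), which is exactly what the paper verifies, and the rest is the change-of-variables observation that the transport-polytope constraints pin down $\bm{g}_Q=\bm{Q}^\mathrm{T}\bm{1}_n$ and $\bm{g}_R=\bm{R}^\mathrm{T}\bm{1}_m$. You merely package the argument more explicitly as a two-sided inequality, which is a modest improvement in rigor over the paper's single-direction feasibility observation followed by ``so the two minimums coincide.''
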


\begin{proof} To see why this is true, notice that, so long as the outer marginals are tightly satisfied $\bm{Q} \bm{1}_r = \bm{a}$ and $\bm{R} \bm{1}_r = \bm{b}$ for $\bm{Q} \geq \bm{0}_{n\times r},\bm{R} \geq \bm{0}_{m \times r}$, one has
$$\sum_{i} \bm{g}_{R,i} = \sum_{i} \langle \bm{R}^\mathrm{T}_{i,.} , \bm{1}_{m} \rangle = \sum_{i} \langle \bm{R}_{.,i} , \bm{1}_{m} \rangle = \sum_{\ell} \sum_{i} \bm{R}_{\ell,i} = \sum_{\ell} \bm{b}_{\ell} = 1$$
and
$$\sum_{i} \bm{g}_{Q,i} = \sum_{i} \langle \bm{Q}^\mathrm{T}_{i,.} , \bm{1}_{n} \rangle = \sum_{i} \langle \bm{Q}_{.,i} , \bm{1}_{m} \rangle = \sum_{\ell} \sum_{i} \bm{Q}_{\ell,i} = \sum_{\ell} \bm{a}_{\ell} = 1$$
Therefore, the inner marginals $\bm{g}_{Q}^{*} = (\bm{Q}^{*})^\mathrm{T}\bm{1}_n$ and $\bm{g}_{R}^{*} = (\bm{R}^{*})^\mathrm{T} \bm{1}_m$ induced by the optimal $\bm{Q}^*, \bm{R}^*$ of the optimization problem on the right hand side satisfy the constraints $\bm{g}_R^* \in \Delta_{r}, \bm{g}_Q^* \in \Delta_{r}$ on the left hand side, so the two minimums coincide.
\end{proof}
This implies that an extra optimization for $\bm{g}_{Q}$ and $\bm{g}_{R}$ is unnecessary in a coordinate update which alternates $(\bm{Q},\bm{R})$ and $\bm{T}$. If we did not perform a block-update, in the form of standard MD in the objective described above, we would encounter some difficulty. In particular $\bm{g}_Q$ and $\bm{g}_R$ would optimized with the constraint $\bm{g}_Q \in \Delta_r$ and $\bm{g}_R \in \Delta_r$, and would concurrently constrain all of the other variables as $\bm{Q}^\mathrm{T} \bm{1}_n = \bm{g}_Q$, $\bm{T} \bm{1}_r = \bm{g}_Q$, and $\bm{R}^\mathrm{T} \bm{1}_m = \bm{g}_R$, $\bm{T}^\mathrm{T} \bm{1}_r = \bm{g}_R$.

\section{Block-Coordinate steps for the OT sub-problems}

We use a latent non-diagonal coupling instead of an inner diagonal coupling $\diag(\bm{g})$ of the form of \cite{forrow19a}. This allows us to loosen the constraint that the inner marginals have to be joined by a common coupling $\bm{Q}^\mathrm{T}\bm{1}_{n} = \bm{R}^\mathrm{T}\bm{1}_{m} = \bm{g}$. The fundamental advantage of this choice is that we can decouple the convex-optimization problem for $(\bm{Q}, \bm{R}, \bm{T})$ entirely. One can simply solve for the optimal $\bm{Q}$ and $\bm{R}$ \textit{independently}, yield the associated inner marginals for each $\bm{Q}^\mathrm{T}\bm{1}_{n} = \bm{g}_{\bm{Q}}$ and $\bm{R}^\mathrm{T}\bm{1}_{m} = \bm{g}_{\bm{R}}$, and then find the optimal $\bm{T}$ which links the two. This link is provided by the aforementioned form of the problem, where:
$$
\bm{P} = \bm{Q} \bm{X} \bm{R}^\mathrm{T}
$$
For $\bm{Q}, \bm{R}$ in either the appropriate set of couplings or a relaxation thereof (which we will describe shortly). $\bm{X}$ is related to $\bm{T}$ by:
$$
\bm{X} = \diag(1/\bm{g}_{{Q}})\bm{T}\diag(1/\bm{g}_{{R}})
$$
And, $\bm{T} \in \Pi_{\bm{g}_{\bm{Q}}, \bm{g}_{\bm{R}}}$ consistently for all cases. As the semi-relaxed case is intermediate between fully-relaxed and balanced, it has ideas which generalize to both directly. As such, we use it as the leading example again. As in \cite{Scetbon2021LowRankSF}, we take proximal-steps of the form:
$$
\min_{\bm{\zeta}} \langle \nabla \mathcal{L}(\bm{\zeta}) \mid_{\bm{\zeta}_{k}}, \bm{\zeta} \rangle_{F} + \frac{1}{\gamma_{k}} \mathrm{KL}(\bm{\zeta} \| \bm{K}^{(k)})
$$
Where these steps are now in block-wise fashion on $(\bm{Q},\bm{R})$ and $\bm{T}$, rather than joint. One may identify for each sub-factor in $(\bm{Q}, \bm{R})$ and $\bm{T}$ a linearized gradient as before, which yields a set of objectives which each solve an independent optimal-transport for the sub-factors. In particular, we have that:
$$
\langle \bm{Q} \bm{X} \bm{R}^\mathrm{T} , \bm{C} \rangle_{F} = \Tr\left[ \bm{Q} \bm{X} \bm{R}^\mathrm{T} \bm{C}^\mathrm{T} \right] = \langle \bm{C R X}^\mathrm{T}, \bm{Q} \rangle_{F}
$$
$$
\langle \bm{Q} \bm{X} \bm{R}^\mathrm{T} , \bm{C} \rangle_{F} = \Tr\left[ \bm{Q} \bm{X} \bm{R}^\mathrm{T} \bm{C}^\mathrm{T} \right] = \langle \bm{C}^\mathrm{T} \bm{Q} \bm{X}, \bm{R} \rangle_{F}
$$
$$
\langle \bm{Q} \bm{X} \bm{R}^\mathrm{T} , \bm{C} \rangle_{F} = \Tr\left[\bm{R}^\mathrm{T} \bm{C}^\mathrm{T} \bm{Q} \bm{X} \right] = \langle \bm{Q}^\mathrm{T} \bm{C} \bm{R}, \bm{X} \rangle_{F}
$$
A linearization in the left-slot of the inner product as $\langle \bm{Q}, \bm{C} \bm{R} \bm{X}(\bm{Q}_{k})^\mathrm{T} \rangle := \langle \bm{Q}, \bm{C R X}^\mathrm{T} \rangle $ or $\langle \bm{C}^\mathrm{T} \bm{Q} \bm{X}(\bm{R}_{k}), \bm{R} \rangle_{F} := \langle \bm{C}^\mathrm{T} \bm{Q} \bm{X}, \bm{R} \rangle_{F}$ is common practice for quadratic problems. In this case, the directional derivative of $\bm{Q}$ and $\bm{R}$ in the matrix-direction $\bm{V}$ are respectively:
\begin{align*}
D \langle \bm{C R X}^\mathrm{T}, \bm{Q} \rangle_{F} \circ (\bm{V}) = \langle \bm{C R X}^\mathrm{T}, \bm{V} \rangle_{F} \implies \nabla_{\bm{Q}} \mathcal{L} &= \bm{C R X}^\mathrm{T} \\
D \langle \bm{C}^\mathrm{T} \bm{Q} \bm{X}, \bm{R} \rangle_{F} \circ (\bm{V}) = \langle \bm{C}^\mathrm{T} \bm{Q} \bm{X}, \bm{V} \rangle_{F} \implies \nabla_{\bm{R}} \mathcal{L} &= \bm{C}^\mathrm{T} \bm{Q} \bm{X} \end{align*}
Without this linearization assumption on $\bm{X}$, the full gradient may be evaluated as well. In particular, we note that for $\mathrm{\diag}^{-1}(\cdot)$ the matrix-to-vector extraction of the diagonal, the directional derivative on $\bm{Q}$ is:
\begin{align*}
D \langle \bm{C}, \bm{Q X R}^\mathrm{T} \rangle_{F} \circ \bm{V} 
&= \langle \bm{C R X}^\mathrm{T}, \bm{V} \rangle_{F} + \langle \bm{C}, \bm{Q} D \bm{X}^\mathrm{T} \circ (\bm{V}) \bm{R}^\mathrm{T} \rangle_{F} \\ 
&= \langle \bm{C R X}^\mathrm{T} - \bm{1}_{n} \mathrm{diag}^{-1}((\bm{C R X}^\mathrm{T})^\mathrm{T}\bm{Q}\mathrm{diag}(1/\bm{g}_{Q}))^\mathrm{T} , \bm{V} \rangle_F
\end{align*}
Thus, without the linearization assumption one may use product rule on $\bm{X}$ as an implicit function of $\bm{Q}$ (resp. $\bm{R}$) to take the total derivative:
\begin{align*}
\nabla_{\bm{Q}} \mathcal{L}_{ \mathrm{FRLC}} =  \bm{C R X}^\mathrm{T} - \bm{1}_{n} \mathrm{diag}^{-1}((\bm{C R X}^\mathrm{T})^\mathrm{T}\bm{Q}\mathrm{diag}(1/\bm{g}_{Q}))^\mathrm{T} 
\end{align*}
Likewise for $\bm{R}$,
\begin{align*}
D \langle \bm{C}, \bm{Q X R}^\mathrm{T} \rangle_{F} \circ \bm{V} 
&= \langle \bm{C}^\mathrm{T} \bm{Q X}, \bm{V} \rangle_{F} + \langle \bm{C}, \bm{Q} D \bm{X} \circ (\bm{V}) \bm{R}^\mathrm{T} \rangle_{F} \\
& = \langle \bm{C}^\mathrm{T} \bm{Q X} - \bm{1}_{m} \mathrm{diag}^{-1}(\mathrm{diag}(1/\bm{g}_{R}) \bm{R}^\mathrm{T} \bm{C}^\mathrm{T} \bm{Q X})^\mathrm{T} , \bm{V} \rangle_F ,
\end{align*}
and so
\begin{align*}
\nabla_{\bm{R}} \mathcal{L}_{ \mathrm{FRLC}} = \bm{C}^\mathrm{T} \bm{Q X} - \bm{1}_{m} \mathrm{diag}^{-1}(\mathrm{diag}(1/\bm{g}_{R}) \bm{R}^\mathrm{T} \bm{C}^\mathrm{T} \bm{Q X})^\mathrm{T} .
\end{align*}
Both the $\mathrm{W}$ and $\mathrm{GW}$ problem have these rank-one perturbations of the gradient from the derivative with respect to $\bm{X}$.

Lastly, owing to the block-coordinate updates which fix $(\bm{Q},\bm{R})$ preceding the update on $\bm{T}$, the derivative on $\bm{T}$ follows directly by chain rule on $\bm{X}$:
\begin{align*}
D \langle \bm{Q}^\mathrm{T} \bm{C} \bm{R}, \bm{X} \rangle_{F} \circ (\bm{V}) = \langle \bm{Q}^\mathrm{T} \bm{C} \bm{R}, \bm{V} \rangle_{F} \implies
\nabla_{\bm{X}} \mathcal{L} &= \bm{Q}^\mathrm{T} \bm{C} \bm{R} \\ \nabla_{\bm{T}} \mathcal{L} &= \diag(1/\bm{g}_{\bm{Q}})\bm{Q}^\mathrm{T} \bm{C} \bm{R} \diag(1/\bm{g}_{\bm{R}})
\end{align*}
Let $(\gamma_k)$ be a sequence of step sizes and consider the first-order conditions required for the proximal step as before. From these we have the updated proximal-step updates:
\begin{align*}
\bm{K}_{\bm{Q}}^{(k)} &\gets \min_{\bm{Q}} \langle  \bm{C} \bm{R}  \bm{X}^\mathrm{T} , \bm{Q} \rangle_F + \frac{1}{\gamma_k} \mathrm{KL}(\bm{Q}_k \| \bm{K}_{\bm{Q}}^{(k)} )\\
\bm{K}_{\bm{R}}^{(k)} &\gets \min_{\bm{R}} \langle \bm{C}^\mathrm{T} \bm{Q} \bm{X}, \bm{R} \rangle_F  
+ \frac{1}{\gamma_k} \mathrm{KL}(\bm{R}_k \| \bm{K}_{\bm{R}}^{(k)} )
\\
\bm{K}_{\bm{T}}^{(k)} &\gets \min_{\bm{T}} \langle \bm{Q}^\mathrm{T} \bm{C} \bm{R}, \bm{X} \rangle_F + \frac{1}{\gamma_k} \mathrm{KL}(\bm{T}_k \| \bm{K}^{(k)}_{\bm{T}} ),
\end{align*}
with kernels $\bm{K}_{\bm{\zeta}_{j}}^{(k)}$, for $j=1,2,3$ given by
\begin{align*}
\bm{K}_{\bm{Q}}^{(k)} &:= \bm{Q}_k \odot \exp(\, -\gamma_k \bm{C} \bm{R}_k \bm{X}_k^\mathrm{T} \,) \\
\bm{K}_{\bm{R}}^{(k)} &:= \bm{R}_k \odot \exp(\,-\gamma_k \bm{C}^\mathrm{T} \bm{Q}_k \bm{X}_k \,) \\
\bm{K}_{\bm{T}}^{(k)} &:= \mathbf{T}_k \odot \exp (\, -\gamma_k \mathrm{diag}(\bm{g}_Q^{-1}) \bm{Q}^\mathrm{T} \bm{C} \bm{R} \mathrm{diag}( \bm{g}_R^{-1}) \,)
\end{align*}
Or, dropping the linearization assumption on $(\bm{Q},\bm{R})$ the updates are:
$\bm{K}_{\bm{\zeta}_{j}}^{(k)}$, for $j=1,2$ given by
\begin{align*}
\bm{K}_{\bm{Q}}^{(k)} &:= \bm{Q}_k \odot \exp(\, -\gamma_k (\bm{C} \bm{R}_{k} \bm{X}_{k}^\mathrm{T} - \bm{1}_{n} \mathrm{diag}^{-1}((\bm{C} \bm{R}_{k} \bm{X}_{k}^\mathrm{T})^\mathrm{T}\bm{Q}_{k}\mathrm{diag}(1/\bm{g}_{Q_{k}}))^\mathrm{T})\,) \\
\bm{K}_{\bm{R}}^{(k)} &:= \bm{R}_k \odot \exp(\,-\gamma_k (\bm{C}^\mathrm{T} \bm{Q}_{k} \bm{X}_{k} - \bm{1}_{m} \mathrm{diag}^{-1}(\mathrm{diag}(1/\bm{g}_{R}) \bm{R}_{k}^\mathrm{T} \bm{C}^\mathrm{T} \bm{Q}_{k} \bm{X}_{k})^\mathrm{T}) \,) \\
\end{align*}
The first projection is onto the set that satisfies the marginal constraint $\bm{R 1}_{r} = \bm{b}$. In particular, following the discussion above, one has the coordinate-MD step:
\begin{equation}\label{eq:projection1_version2}
\begin{split}
\min_{(\bm{Q},\bm{R},\bm{T})}\qquad  & \frac{1}{\gamma_{k}}\mathrm{KL}\left( \left(\bm{Q}, \bm{R}, \bm{T} \right) \| \left(\bm{K}_{\bm{Q}}, \bm{K}_{\bm{R}}, \bm{K}_{\bm{T}} \right) \right) + \tau \mathrm{KL}(\bm{Q}\bm{1}_{r} \| \bm{a})
         \\
s.t.\qquad  & \bm{R} \mathbf{1}_{r} = \bm{b} 
\end{split}
\end{equation}
As before, there is no difference from the previous case, where one takes the unconstrained projection $\bm{R} = \diag(\bm{b}/\bm{K}_{\bm{R}}\bm{1}_{r})\bm{K}_{\bm{R}}$. To generalize this, we also consider adding a soft-constraint on the inner marginal of $\bm{R}$ to be near that of the previous iteration. In particular, we consider the problem:
\begin{equation}\label{eq:projection1_version22}
\begin{split}
\min_{(\bm{Q},\bm{R},\bm{T})}\qquad  & \frac{1}{\gamma_{k}}\mathrm{KL}((\bm{Q}, \bm{R}, \bm{T}) \| (\bm{K}_{\bm{Q}}, \bm{K}_{\bm{R}}, \bm{K}_{\bm{T}})) \\ &\qquad + \tau \mathrm{KL}(\bm{Q}\bm{1}_{r} \| \bm{a}) + \tau \mathrm{KL}(\bm{R}^\mathrm{T} \bm{1}_{m} \|  \bm{g}_{R}^{(k-1)} \textcolor{gray}{\equiv \bm{R}_{k-1}^\mathrm{T}\bm{1}_{m} })
         \\
s.t.\qquad  & \bm{R} \mathbf{1}_{r} = \bm{b} 
\end{split}
\end{equation}
Which yields the relaxed solution of $\bm{R} = \mathrm{SR}^{\mathrm{R}}\textit{-projection}(\bm{K}_{\bm{R}}, \gamma_{k}, \tau, \bm{b}, \bm{g}_{R}^{(k-1)})$ which generalizes the original projection and is equivalent to it for $\tau = 0$. This regularization is essential, as it ensures $\beta$-smoothness of the objective. For $\bm{Q}$, as the constraint on $\bm{g}$ is fully relaxed, the Lagrange multiplier $\bm{\lambda}_{1} = \bm{0}$ entirely, such that the problem \ref{eqn:dual_update} now becomes fully-unconstrained:
\begin{equation}\label{eqn:dual_updatev2}
\inf_{\bm{Q}} \left( \frac{1}{\gamma_{k}} \mathrm{KL}(\bm{Q} \| \bm{K}_{\bm{Q}}) + \tau \mathrm{KL}(\bm{Q} \bm{1}_{r} \| \bm{a})
\right)
\end{equation}
To generalize this solution, we again consider adding a soft-regularization on the inner marginal of $\bm{Q}$, where we consider the alternate problem:
\begin{equation}\label{eqn:dual_updatev3}
\inf_{\bm{Q}} \left( \frac{1}{\gamma_{k}} \mathrm{KL}(\bm{Q} \| \bm{K}_{\bm{Q}}) + \tau \mathrm{KL}(\bm{Q} \bm{1}_{r} \| \bm{a}) + \tau \mathrm{KL}(\bm{Q}^\mathrm{T} \bm{1}_{n} \| \bm{g}_{\bm{Q}}^{(k-1)} \textcolor{gray}{\equiv \bm{Q}_{k-1}^\mathrm{T} \bm{1}_{n}}  ) 
\right)
\end{equation}
Which trivially recovers the original for $\tau = 0$. This form has a solution given by an unbalanced optimal transport with kernel $\bm{K}_{\bm{Q}}$. We see these two, convex problems in sequence give \textit{independent} optimal solutions for $\bm{Q}$ and $\bm{R}$ as the two matrices are not required to share an inner marginal. The last step is to link them via $\bm{T}$, corresponding to the projection of $(\bm{Q}_{k},\bm{R}_{k},\bm{T})$ onto the set of valid rank-$r$ couplings $\min_{(\bm{Q}_{k},\bm{R}_{k},\bm{T}) \in {C}} \mathcal{L}_{\mathrm{LC}}(\bm{Q}_{k},\bm{R}_{k},\bm{T}) := \min_{\bm{T} \in \Pi(\bm{g}_{Q},\bm{g}_{R})} \mathcal{L}_{\mathrm{LC}}(\bm{Q}_{k},\bm{R}_{k},\bm{T})$. We verify in \ref{sec:balanced_lowrank} that if $\bm{T} \in \Pi_{\bm{g}_{\bm{Q}} = \bm{Q}^\mathrm{T}\bm{1}_{n}, \bm{g}_{\bm{R}} = \bm{R}^\mathrm{T}\bm{1}_{m}}$ then $\bm{P} \in \Pi_{\bm{a},\bm{b}}$. As such, one does not require any projection onto the intersection of convex sets, as done in \cite{Scetbon2021LowRankSF,scetbon2023unbalanced} via the Dykstra projection algorithm \cite{Dykstra1983}. Alternating a coordinate-MD step in $(\bm{Q},\bm{R})$ and a step on $\bm{T}$, one not only minimizes the objective in an alternating fashion but remains in the feasible set without the need for projection algorithms beyond Sinkhorn. As such, the final linking step is done via:
\begin{equation}\label{eq:T_opt}
\begin{split}
\min_{\bm{T}}\qquad  & \frac{1}{\gamma_{k}}\mathrm{KL}(\bm{T} \| \bm{K}_{\bm{T}})
         \\
s.t.\qquad  & \bm{T} \mathbf{1}_{r} = \bm{g}_{\bm{Q}}, \bm{T}^\mathrm{T}\bm{1}_{r} = \bm{g}_{\bm{R}}
\end{split}
\end{equation}

This formulation amounts to solving a \textit{balanced} Sinkhorn problem on $\bm{T}$ with respect to the proximal step kernel matrix.

\begin{algorithm}
\caption{\quad $\mathrm{SR}^\mathrm{R}$-projection \quad \emph{(semi-relaxed OT, right marginal relaxed)}}
\label{alg:sr_right_projection}
\begin{algorithmic}
\State Input $\bm{K}, \gamma, \tau, \bm{a}, \bm{b}, \delta$
\State $\bm{u} \gets \bm{1}_{n}$
\State $\bm{v} \gets \bm{1}_{r}$
\Repeat
    \State $\Tilde{\bm{u}} \gets \bm{u}$
    \State $\Tilde{\bm{v}} \gets \bm{v}$
    \State $\bm{u} \gets \left(\bm{a} / \bm{K} \bm{v} \right)$
    \State $\bm{v} \gets \left(\bm{b}/\bm{K}^\mathrm{T}\bm{u} \right)^{\tau/(\tau + \gamma^{-1})}$
\Until{$\gamma^{-1} \max\{ \lVert \log{\bm{\Tilde{u}}/\bm{u}} \rVert_{\infty}, \lVert \log{\bm{\Tilde{v}}/\bm{v}} \rVert_{\infty} \} < \delta$}
\State return $\diag(\bm{u})\bm{K}\diag(\bm{v})$
\end{algorithmic}
\end{algorithm}
\FloatBarrier

\begin{algorithm}
\caption{\quad $\mathrm{U}$-projection \quad \emph{(unbalanced OT)}}\label{alg:soft_relaxed_projection}
\begin{algorithmic}
\State Input $\bm{K}, \gamma, \tau, \bm{a}, \bm{b}, \delta$
\State $\bm{u} \gets \bm{1}_{n}$
\State $\bm{v} \gets \bm{1}_{r}$
\Repeat
    \State $\Tilde{\bm{u}} \gets \bm{u}$
    \State $\Tilde{\bm{v}} \gets \bm{v}$
    \State $\bm{u} \gets \left(\bm{a} / \bm{K} \bm{v} \right)^{\tau/(\tau + \gamma^{-1})}$
    \State $\bm{v} \gets \left(\bm{b}/\bm{K}^\mathrm{T}\bm{u} \right)^{\tau/(\tau + \gamma^{-1})}$
\Until{$\gamma^{-1} \max\{ \lVert \log{\bm{\Tilde{u}}/\bm{u}} \rVert_{\infty}, \lVert \log{\bm{\Tilde{v}}/\bm{v}} \rVert_{\infty} \} < \delta$}
\State return $\diag(\bm{u})\bm{K}\diag(\bm{v})$
\end{algorithmic}
\end{algorithm}
\FloatBarrier

\section{The Latent Coupling Matrix}\label{sec:balanced_lowrank}

To solve the balanced form (and generalize the principle to the relaxed problems), we consider an alternative parametrization of the inner matrix. In particular, previous works \cite{Scetbon2021LowRankSF,scetbon2023unbalanced} consider $\diag(1/\bm{g})$ to be the inner matrix, with marginals $\bm{Q}^\mathrm{T} \bm{1}_{n} = \bm{R}^\mathrm{T} \bm{1}_{m} = \bm{g}$ to ensure that the outer conditions $\bm{Q 1}_{r} = \bm{a}$ and $\bm{R 1}_{r} = \bm{b}$ hold. We instead relax the constraint that $\bm{Q}^\mathrm{T} \bm{1}_{n}$ and $\bm{R}^\mathrm{T} \bm{1}_{m}$ are equal, by allowing $\bm{Q}^\mathrm{T} \bm{1}_{n} = \bm{g}_{Q}$ and $\bm{R}^\mathrm{T} \bm{1}_{m} = \bm{g}_{R}$ to vary arbitrarily, and considering a non-diagonal inner matrix $\bm{X} \in \mathbb{R}^{r \times r}$ in the place of $\diag(1 / \bm{g})$ where we have the conditions:
$$
\bm{X} \bm{g}_{R} = \bm{X R}^\mathrm{T} \bm{1}_{m} = \bm{1}_{r}
$$
And
$$
\bm{X}^\mathrm{T} \bm{g}_{Q} = \bm{X Q}^\mathrm{T} \bm{1}_{n} = \bm{1}_{r}
$$
Thus, if one considers the coupling formed as $\bm{P}_{r} = \bm{Q X R}^\mathrm{T}$, one maintains that $\bm{P}_{r} \in \Pi_{\bm{a},\bm{b}}$ from the condition of ${\mathcal{C}}_{1}$, defined in the balanced case as in \eqref{eqn:calC_tilde_sets}. We clearly have that:
$$\bm{P}_{r} \bm{1}_{m} = \bm{Q X R}^\mathrm{T} \bm{1}_{m} = \bm{Q 1}_{r} = \bm{a}$$
And:
$$\bm{P}_{r}^\mathrm{T} \bm{1}_{n} = \bm{R}\bm{X}^\mathrm{T} \bm{Q} \bm{1}_{n} = \bm{R 1}_{r} = \bm{b}$$
We first consider two approaches to optimizing for such a $\bm{X}$, given that it is not a coupling. First, we consider the appropriate proximal step for $\bm{X}$
$$
\min_{\bm{\zeta}} \langle \nabla \mathcal{L}(\bm{\zeta}) \mid_{\bm{\xi}_{k}}, \bm{\zeta} \rangle_{F} + \frac{1}{\gamma_{k}} \mathrm{KL}(\bm{\zeta} \| \bm{K}^{(k)})
$$
We first note that the gradient of our loss with respect to $\bm{X}$ is given as
$\bm{Q}^\mathrm{T} \bm{C R}$. If one supposes that $\bm{X}$ is invertible with $\bm{X}^{-1} = \bm{T}$, we have that for any such $\bm{T}$:
$$
\bm{X}^{-1} \bm{1}_{r} = \bm{T 1}_{r} = \bm{g}_{R}
$$
And
$$
\bm{X}^{-T} \bm{1}_{r}=\bm{T}^\mathrm{T} \bm{1}_{r} = \bm{g}_{Q}
$$
This implies that this inverse matrix $\bm{T}$ is a coupling such that $\bm{T} \in \Pi_{\bm{g}_{R}, \bm{g}_{Q}}$, which also suggests one might be able to update it using Sinkhorn. In fact, being a density in $\mathbb{R}^{r \times r}_{+}$ it represents a transition matrix between the latent $r$-dimensional variables. In particular, writing the proximal step in full, we have:
$$
\min_{\bm{T}} \langle \bm{Q}^\mathrm{T} \bm{C} \bm{R}, \bm{T}^{-1} \rangle_{F} + \frac{1}{\gamma_{k}} \mathrm{KL}(\bm{T}_{k} \| \bm{K}_{\bm{T}}^{(k)})
$$
Noting the derivative $D (\bm{X}^{-1}) \circ \bm{V} = - \bm{X}^{-1} \bm{V} \bm{X}^{-1}$, we have from the first-order condition that:
$$- \bm{T}^{-T} \bm{Q}^\mathrm{T} \bm{C} \bm{R} \bm{T}^{-T} + \frac{1}{\gamma_{k}} \log{\left[\frac{\bm{T}_{k}}{\bm{K}^{(k)}_{\bm{T}}}\right]} = \bm{0}$$
This implies the kernel matrix update:
$$
\bm{K}_{\bm{T}}^{(k)} = \bm{T}_{k} \odot \exp\{ + \gamma_{k} \bm{T}^{-T} \bm{Q}^\mathrm{T} \bm{C} \bm{R} \bm{T}^{-T} \}
$$
Where one then takes the Sinkhorn projection \ref{sect:projection} onto the set $\Pi_{\bm{g}_{R}, \bm{g}_{Q}}$ as $\mathcal{P}_{\Pi_{\bm{g}_{R}, \bm{g}_{Q}}}(\bm{K}_{\bm{T}}^{(k)})$ using the Sinkhorn algorithm \cite{sinkhorn}. However, a more stable and inversion-free update exists which ensures $\bm{X}$ remains positive by a diagonal re-scaling in the form introduced by \cite{lin2021making}. In particular, if one takes $\bm{X} = \diag(1/\bm{g}_{Q}) \bm{T} \diag(1/\bm{g}_{R})$, then 
$$\bm{X} \bm{g}_{R} = \diag(1/\bm{g}_{Q}) \bm{T} \diag(1/\bm{g}_{R}) \bm{g}_{R} = \diag(1/\bm{g}_{Q}) \bm{T} \bm{1}_{r} = \bm{1}_{r}$$
and likewise
$$\bm{X}^\mathrm{T} \bm{g}_{Q} = \diag(1/\bm{g}_{R}) \bm{T}^\mathrm{T} \diag(1/\bm{g}_{Q})  \bm{g}_{Q} = \diag(1/\bm{g}_{R}) \bm{T}^\mathrm{T} \bm{1}_{r} = \bm{1}_{r}$$
so that $\bm{X}$ necessarily satisfies $\bm{X} \bm{g}_{R} = \bm{1}_{r}$ and $\bm{X}^\mathrm{T} \bm{g}_{Q} = \bm{1}_{r}$. Thus $\bm{T 1}_{r} = \bm{g}_{Q}$ and $\bm{T}^\mathrm{T} \bm{1}_{r} = \bm{g}_{R}$ and $\bm{T} \in \Pi_{\bm{g}_{Q}, \bm{g}_{R}}$. With analogous reasoning to before, one has a step for the coupling $\bm{T}$ in the form:
$$
\min_{\bm{T}} \langle \bm{Q}^\mathrm{T} \bm{C} \bm{R}, \diag(1/\bm{g}_{Q}) \bm{T} \diag(1/\bm{g}_{R}) \rangle_{F} + \frac{1}{\gamma_{k}} \mathrm{KL}(\bm{T}_{k} \| \bm{K}_{\bm{T}}^{(k)})
$$
Which yields the kernel matrix:
$$
\bm{K}_{\bm{T}}^{(k)} = \bm{T}_{k} \odot \exp\{ - \gamma_{k} \diag(\bm{g}_{Q})^{-1} \bm{Q}^\mathrm{T} \bm{C} \bm{R} \diag(\bm{g}_{R})^{-1}\}
$$
Which is likewise projected onto $\Pi_{\bm{g}_{Q}, \bm{g}_{R}}$ using the Sinkhorn algorithm. From this $\bm{T} \in \Pi_{\bm{g}_{Q},\bm{g}_{R}}$, one takes $\bm{X} = \diag(1/\bm{g}_{Q}) \bm{T} \diag(1/\bm{g}_{R})$ as the inner matrix that corresponds the unequal marginals $\bm{g}_{Q}$ and $\bm{g}_{R}$ which ensuring $\bm{P}_{r} \in \Pi_{\bm{a},\bm{b}}$.

\begin{algorithm}
\caption{\quad FRLC \quad \emph{(General marginal constraint low-rank optimal transport)}}\label{alg:allcases}
\begin{algorithmic}
\State Input $\bm{C},r,r_{2},\bm{a},\bm{b},\tau,\tau_{2},\gamma,\delta,\varepsilon$
\State Initialize $\bm{g}_{Q}, \bm{g}_R = \frac{1}{r} \bm{1}_{r}, \frac{1}{r_{2}} \bm{1}_{r_{2}}$
\State $\bm{Q}_{0}, \bm{R}_{0}, \bm{T}_{0} \gets \textit{Initialize-Couplings}(\bm{a}, \bm{b}, \bm{g}_{Q}, \bm{g}_R)$
\If{$r = r_{2}$}
\State $\bm{X}_{0} \gets \bm{T}_{0}^{-1}$  \quad \textcolor{gray}{\# \, Invertible case}
\Else
\State $\bm{X}_{0} \gets \diag(1/\bm{Q}_{0}^\mathrm{T}\bm{1}_{n}) \bm{T}_{0} \diag(1/\bm{R}_{0}^\mathrm{T}\bm{1}_{m})$  \quad \textcolor{gray}{\# \, General case}
\EndIf
\While{$\Delta((\bm{Q}_{k},\bm{R}_{k},\bm{T}_{k}),(\bm{Q}_{k-1},\bm{R}_{k-1},\bm{T}_{k-1})) > \varepsilon$}
\State $\nabla_{\bm{Q}} \gets \bm{C} \bm{R}_{k} \bm{X}_{k}^\mathrm{T} - \bm{1}_{n} \mathrm{diag}^{-1}((\bm{C} \bm{R}_{k} \bm{X}_{k}^\mathrm{T})^\mathrm{T}\bm{Q}_{k}\mathrm{diag}(1/\bm{g}_{Q}))^\mathrm{T}$
\State $\nabla_{\bm{R}} \gets \bm{C}^\mathrm{T} \bm{Q}_{k} \bm{X}_{k} - \bm{1}_{m} \mathrm{diag}^{-1}(\mathrm{diag}(1/\bm{g}_{R}) \bm{R}_{k}^\mathrm{T} \bm{C}^\mathrm{T} \bm{Q}_{k} \bm{X}_{k})^\mathrm{T}$
\State $\gamma_{k} \gets \gamma / \max\{ \lVert \nabla_{\bm{Q}}  \rVert_{\infty}, \lVert \nabla_{\bm{R}} 
\rVert_{\infty} \}$ \quad \textcolor{gray}{\# \, $\ell^\infty$-normalization of \cite{scetbon2022lowrank}}
\State $\bm{K}_{\bm{Q}}^{(k)}, \bm{K}_{\bm{R}}^{(k)} \gets \bm{Q}_k \odot \exp(\, -\gamma_k \nabla_{\bm{Q}}\,), \bm{R}_k \odot \exp(\,-\gamma_k \nabla_{\bm{R}} \,)$
\If{Balanced}
\State $\bm{Q}_{k} \gets$ \textit{ $\mathrm{SR}^\mathrm{R}$-projection}($\bm{K}_{\bm{Q}}^{(k)}, \gamma_{k}, \tau, \bm{a}, \bm{Q}_{k-1}^\mathrm{T}\bm{1}_{n}, \delta$) \quad \textcolor{gray}{\# \, Semi-relaxed OT}
\State $\bm{R}_{k} \gets$ \textit{ $\mathrm{SR}^\mathrm{R}$-projection}($\bm{K}_{\bm{R}}^{(k)}, \gamma_{k}, \tau, \bm{b}, \bm{R}_{k-1}^\mathrm{T}\bm{1}_{m}, \delta$) \quad \textcolor{gray}{\# \, Semi-relaxed OT}
\ElsIf{Unbalanced}
\State $\bm{Q}_{k} \gets$ $\mathrm{U}$-$\textit{projection}(\bm{K}_{\bm{Q}}^{(k)}, \gamma_{k}, \tau, \bm{a}, \bm{Q}_{k-1}^\mathrm{T}\bm{1}_{n}, \delta$) \quad \textcolor{gray}{\# \, Unbalanced OT}
\State $\bm{R}_{k} \gets \mathrm{U}$-$\textit{projection}(\bm{K}_{\bm{R}}^{(k)}, \gamma_{k}, \tau_{2}, \bm{b}, \bm{R}_{k-1}^\mathrm{T}\bm{1}_{m}, \delta)$ \quad \textcolor{gray}{\# \, Unbalanced OT}
\ElsIf{Semi-Relaxed Left}
\State $\bm{Q}_{k} \gets$ $\mathrm{U}$-$\textit{projection}(\bm{K}_{\bm{Q}}^{(k)}, \gamma_{k}, \tau, \bm{a}, \bm{Q}_{k-1}^\mathrm{T}\bm{1}_{n}, \delta$) \quad \textcolor{gray}{\# \, Unbalanced OT}
\State $\bm{R}_{k} \gets$ \textit{$\mathrm{SR}^\mathrm{R}$-projection}($\bm{K}_{\bm{R}}^{(k)}, \gamma_{k}, \tau, \bm{b}, \bm{R}_{k-1}^\mathrm{T}\bm{1}_{m}, \delta$) \quad \textcolor{gray}{\# \, Semi-relaxed OT}
\ElsIf{Semi-Relaxed Right}
\State $\bm{Q}_{k} \gets$ \textit{ $\mathrm{SR}^\mathrm{R}$-projection}($\bm{K}_{\bm{Q}}^{(k)}, \gamma_{k}, \tau, \bm{a}, \bm{Q}_{k-1}^\mathrm{T}\bm{1}_{n}, \delta$) \quad \textcolor{gray}{\# \, Semi-relaxed OT}
\State $\bm{R}_{k} \gets \mathrm{U}$-$\textit{projection}(\bm{K}_{\bm{R}}^{(k)}, \gamma_{k}, \tau, \bm{b}, \bm{R}_{k-1}^\mathrm{T}\bm{1}_{m}, \delta$) \quad \textcolor{gray}{\# \, Unbalanced OT}
\EndIf
\State $\bm{g}_{Q}, \bm{g}_{R} \gets \bm{Q}_{k}^\mathrm{T} \bm{1}_{n}, \bm{R}_{k}^\mathrm{T} \bm{1}_{m}$
\State $\nabla_{\bm{T}} = \diag(\bm{g}_{Q})^{-1} \bm{Q}_{k}^\mathrm{T} \bm{C} \bm{R}_{k} \diag(\bm{g}_{R})^{-1}$
\State $\gamma_{\bm{T}} = \gamma / \lVert \nabla_{\bm{T}} \rVert_{\infty}$ \quad \textcolor{gray}{\# \, $\ell^\infty$-normalization}
\State $\bm{K}^{(k)}_{\bm{T}} \gets \bm{T}_{k} \odot \exp\{ - \gamma_{\bm{T}} \nabla_{\bm{T}} \}$
\State $\bm{T}_{k} \gets$ \textit{Sinkhorn}($\bm{K}^{(k)}_{\bm{T}},\bm{g}_{R}, \bm{g}_{Q}, \delta$) \quad \textcolor{gray}{\# \, Balanced OT}
\State $\bm{X}_{k} \gets \diag(1/\bm{g}_{Q}) \bm{T} \diag(1/\bm{g}_{R})$
\EndWhile
\State Return $\bm{P}_{r} = \bm{Q} \bm{X} \bm{R}^\mathrm{T}$
\end{algorithmic}
\end{algorithm}
\FloatBarrier

\begin{algorithm}
\caption{\quad Sinkhorn Algorithm \quad \emph{(\cite{sinkhorn}, balanced OT)}}\label{alg:sinkhorn}
\begin{algorithmic}
\State Input $\bm{K},\bm{a},\bm{b},\delta$
\State $\bm{u} \gets \bm{1}_{n}$
\State $\bm{v} \gets \bm{1}_{m}$
\While{$\lVert \diag(\bm{u}) \bm{K} \bm{v} - \bm{a} \rVert_1 + \lVert \diag(\bm{v}) \bm{K}^\mathrm{T} \bm{u}  - \bm{b} \rVert_1 > \delta$}
\State $\bm{u}^{(l+1)} \gets \bm{a} / \bm{K v}^{(l)}$
\State $\bm{v}^{(l+1)} \gets \bm{b} / \bm{K^\mathrm{T} u}^{(l+1)}$
\EndWhile
\State Return $\diag(\bm{u})\bm{K}\diag(\bm{v})$
\end{algorithmic}
\end{algorithm}
\FloatBarrier

\section{Gromov-Wasserstein (GW)}\label{sect:GW}

As defined in, the general Gromov-Wasserstein problem concerns a minimization of the energy:
\begin{equation}\label{eq:quadratic_cost}
    \begin{split}
\mathcal{Q}_{\bm{A}, \bm{B}}(\bm{P}) &= \sum_{
i, j, k, l
}
(\bm{A}_{ik} - \bm{B}_{jl})^2 
\bm{P}_{ij} \bm{P}_{kl}
    \end{split}
\end{equation}
Where the minimization is over the set of all couplings $\bm{\Pi}_{\bm{a},\bm{b}}$:
\begin{equation}\label{eq:GWOT_appendix}
\begin{split}
\mathrm{GW}(\mu, \nu) := 
\min_{\bm{P}\in \Pi_{\bm{a}, \bm{b}}}  & \mathcal{Q}_{\bm{A}, \bm{B}}(\bm{P})
\end{split}
\end{equation}
We consider extending the semi-relaxed framework to the GW problem under the low-rank restriction on $\bm{P}$. This extension has thus far been considered for the balanced and unbalanced case in two previous works \cite{scetbon2023unbalanced,scetbon2022linear}. In both of these works, the algorithms for the Wasserstein case extend trivially to the Gromov-Wasserstein (GW) problem. In particular, each kernel with variable $\bm{\zeta}$ has an update of the form $\bm{K} \gets \bm{\zeta} \odot \exp\left( - \gamma_{k} \nabla_{\bm{\zeta}} \mathcal{L}(\bm{\zeta}) \right)$ for $\mathcal{L}(\bm{\zeta})$ heretofore taken to be the Wasserstein loss $\langle \bm{P}(\bm{\zeta}), \bm{C} \rangle_{F}$ where the coupling $\bm{P} = \bm{Q} \bm{X} \bm{R}^\mathrm{T}$ is interpreted as a function of the low-rank sub-factor variables $\bm{\zeta} \in \{ \bm{Q}, \bm{R}, \bm{X} \}$. Taking $\mathcal{L} := \mathcal{Q}_{\bm{A}, \bm{B}}(\bm{P}(\bm{\zeta}))$ one can simply extend the gradient through the GW-loss and directly use it in place of the Wasserstein gradient in the update $\bm{K} \gets \bm{\zeta} \odot \exp\left( - \gamma_{k} \nabla_{\bm{\zeta}} \mathcal{L}(\bm{\zeta}) \right)$ for $\mathcal{L}(\bm{\zeta})$ of each algorithm. The matrix-form of the GW-cost is expressed as:
$$
\mathcal{Q}_{\bm{A}, \bm{B}}(\bm{P}) =
\mathbf{1}_m^\mathrm{T} \bm{P}^\mathrm{T}
\bm{A}^{\odot 2} 
\bm{P} \mathbf{1}_m 
+
\mathbf{1}_n^\mathrm{T} \bm{P} 
\bm{B}^{\odot 2}
\bm{P}^\mathrm{T} \mathbf{1}_n 
- 2 \langle \bm{A} \bm{P} \bm{B}, \bm{P} \rangle$$
Which, using the constraints of $\mathcal{C}_{2}$ reduces the cost as a function of $\bm{Q}$, $\bm{R}$, $\bm{X}$ to:
$$
\mathcal{Q}_{\bm{A}, \bm{B}}(\bm{Q}, \bm{R}, \bm{X}) =
\bm{1}_{r}^\mathrm{T} \bm{Q}^\mathrm{T} \bm{A}^{\odot 2} \bm{Q} \bm{1}_{r} + \bm{1}_{r}^\mathrm{T}\bm{R}^\mathrm{T}\bm{B}^{\odot 2} \bm{R} \bm{1}_{r} - 2 \langle \bm{Q} \bm{X} \bm{R}^\mathrm{T}, \bm{A} \bm{Q} \bm{X} \bm{R}^\mathrm{T} \bm{B} \rangle_{F}
$$

$$
\nabla_{\bm{Q}} \mathcal{Q}_{\bm{A}, \bm{B}}(\bm{Q}, \bm{R}, \bm{X}) = 2 \bm{A}^{\odot 2} \bm{Q} \bm{1}_{r} \bm{1}_{r}^\mathrm{T} - 4 \bm{A} \bm{Q} \bm{X} \bm{R}^\mathrm{T} \bm{B} \bm{R} \bm{X}^\mathrm{T}
$$
Which is proportional to $\nabla_{\bm{Q}} \mathcal{Q}_{\bm{A}, \bm{B}}(\bm{Q}, \bm{R}, \bm{X}) \propto - 4 \bm{A} \bm{Q} \bm{X} \bm{R}^\mathrm{T} \bm{B} \bm{R} \bm{X}^\mathrm{T}$ for the balanced and right-marginal semi-relaxed case. And:
$$
\nabla_{\bm{R}} \mathcal{Q}_{\bm{A}, \bm{B}}(\bm{Q}, \bm{R}, \bm{X}) = 2 \bm{B}^{\odot 2} \bm{R} \bm{1}_{r} \bm{1}_{r}^\mathrm{T} - 4 \bm{B} \bm{R} \bm{X}^\mathrm{T} \bm{Q}^\mathrm{T} \bm{A} \bm{Q} \bm{X}
$$
Which likewise can be reduced in proportionality to $\nabla_{\bm{R}} \mathcal{Q}_{\bm{A}, \bm{B}}(\bm{Q}, \bm{R}, \bm{g}) \propto -4 \bm{B} \bm{R} \bm{X}^\mathrm{T} \bm{Q}^\mathrm{T} \bm{A} \bm{Q} \bm{X}$ in the balanced and left-marginal semi-relaxed case. The gradients, as presented above, assume a linearization in $\bm{X} \gets \bm{X}_{k}$. If one does not make this assumption and takes $\bm{X}(\bm{Q},\bm{R}) = \diag(1/\bm{Q}^{T}\bm{1}_{n}) \bm{T} \diag(1/\bm{R}^{T}\bm{1}_{m})$, a rank-one perturbation must be added to the $\bm{Q}$ and $\bm{R}$ gradient of the form:
$$
\nabla_{\bm{Q}}^{(2)} = 4 \bm{1}_{n} \diag^{-1}(\bm{X}\bm{R}^{\mathrm{T}}\bm{B}(\bm{Q}\bm{X}\bm{R}^{\mathrm{T}})^{\mathrm{T}}\bm{A}\bm{Q}\diag(1/\bm{g}_{\bm{Q}}))^{\mathrm{T}}
$$
$$
\nabla_{\bm{R}}^{(2)} = 4 \bm{1}_{m}\diag^{-1}(\bm{X}^{T}\bm{Q}^{T}\bm{A} \bm{Q}\bm{X}\bm{R}^{\mathrm{T}} \bm{B}\bm{R}\diag(1/\bm{g}_{R}))^{\mathrm{T}}
$$
Analogously, for the gradient on $\bm{T}$ one can simply take the gradient with respect to $\bm{X}$, and subsequently $\bm{T}$ as $\bm{X}(\bm{T}) = \diag(\bm{g}_{Q})^{-1} \bm{T} \diag(\bm{g}_{R})^{-1}$. The gradient with respect to $\bm{X}$ is given as:
$$
\nabla_{\bm{X}} \mathcal{Q}_{\bm{A}, \bm{B}}(\bm{Q}, \bm{R}, \bm{X}) = - 4 \bm{Q}^\mathrm{T} \bm{A} \bm{Q} \bm{X} \bm{R}^\mathrm{T} \bm{B} \bm{R}
$$
And thus, the directional derivative with respect to $\bm{X}$ in the direction $\bm{V}_{\bm{X}} = \diag(\bm{g}_{Q})^{-1} \bm{V}_{\bm{T}} \diag(\bm{g}_{R})^{-1}$ and thus by the chain rule $\bm{V}_{\bm{T}}$ is:
\begin{align*}
D \mathcal{Q}_{\bm{A}, \bm{B}}(\bm{Q}, \bm{R}, \bm{X}) \circ (\bm{V}_{\bm{X}}) &= - 4 \langle \bm{Q}^\mathrm{T} \bm{A} \bm{Q} \bm{X} \bm{R}^\mathrm{T} \bm{B} \bm{R}, \bm{V}_{\bm{X}} \rangle_{F} \\
&=  - 4 \langle \bm{Q}^\mathrm{T} \bm{A} \bm{Q} \bm{X} \bm{R}^\mathrm{T} \bm{B} \bm{R}, \diag(1/\bm{g}_{Q}) \bm{V}_{\bm{T}} \diag(1/\bm{g}_{R}) \rangle_{F}
\end{align*}
So that the gradient with respect to the coupling matrix $\bm{T}$ is given as:
$$
\nabla_{\bm{T}} \mathcal{Q}_{\bm{A}, \bm{B}}(\bm{Q}, \bm{R}, \bm{T}) = -4 \diag(1/\bm{g}_{Q})\bm{Q}^\mathrm{T} \bm{A} \bm{Q} \bm{X} \bm{R}^\mathrm{T} \bm{B} \bm{R} \diag(1/\bm{g}_{R})
$$

\section{Convergence Analysis and Other Proofs}\label{sec:conv_analysis}

\subsection{Convergence and Smoothness of the Objective}\label{sect:convergence}

We show in Proposition~\ref{prop:convergence} that directly applying the block-descent lemma of \cite{Beck2013OnTC} to the template of Ghadimi et al.'s proof \cite{Ghadimi2014} is sufficient to show the non-asymptotic stationary convergence of a coordinate mirror descent procedure in Ghadimi's criterion. The non-asymptotic guarantee of \cite{Ghadimi2014} follows directly in the case of coordinate mirror descent using Lemma~\ref{lemma:blockdescent} and Lemma~\ref{lemma:ghadimi_L1} below. For completeness, we define all notation used, describe the coordinate mirror descent algorithm in general, and discuss a few relevant preliminaries.

Suppose that the vector of $n$ variables $\mathbf{x} \in \mathbb{R}^n$ is partitioned into $p$ blocks, $\mathbf{x} = ( \mathbf{x}(1), \dots, \mathbf{x}(p) )$, where $\mathbf{x}(i) \in \mathbb{R}^{n_i}$. Here, $n_1, \dots, n_p$ are positive integers summing to $n$. Following the notation of  \cite{Beck2013OnTC, nesterov2012efficiency}, we define matrices $\mathbf{U}_i \in \mathbb{R}^{n \times n_i}$ such that $\mathbf{x}(i) = \mathbf{U}_i^\mathrm{T} \mathbf{x}$ for all $i = 1, \dots, p$. This also implies $\mathbf{x} = \sum_{i=1}^p \mathbf{U}_i \mathbf{x}(i)$. This allows us to define the vector of partial derivatives corresponding to each block of variables $\mathbf{x}(i)$:
\begin{align*}
\nabla_i  f( \mathbf{x}) := \mathbf{U}_i^\mathrm{T} \nabla f( \mathbf{x}) .
\end{align*}
In \cite{Beck2013OnTC}, the gradient of $f$ is assumed to be block-coordinate-wise Lipschitz, with $L_i$ the smoothness constant associated to the $i$-th block of variables: for all $\mathbf{h}_i \in \mathbb{R}^{n_i}$, one has
\begin{align}
\label{eqn:block_lipschitz}
\| \nabla_i f( \mathbf{x} + \mathbf{U}_i \mathbf{h}_i ) - \nabla_i f(\mathbf{x}) \| \leq L_i \| \mathbf{h}_i \| ,
\end{align}
and for such functions we denote by $L := \max_{i} L_i$ the (global) smoothness constant of $\nabla f$. To be clear, a \emph{smoothness constant} associated to $f$ is a Lipschitz constant of its gradient.

\begin{lemma}[Block descent lemma, \cite{Beck2013OnTC}, Lemma 3.2.]\label{lemma:blockdescent}
    Suppose $f \in C^{1}(\mathbb{R}^n, \mathbb{R})$ is a continuously differentiable function over $\mathbb{R}^{n}$ whose gradient is block-coordinatewise Lipschitz \eqref{eqn:block_lipschitz} for $L_i$ the smoothness constant associated to the $i$-th block of variables $\mathbf{x}(i)$. 
Let $\mathbf{u},\mathbf{v}$ be two vectors differing only in the $i$-th block: there exists $\mathbf{h}_i \in \mathbb{R}^{n_{i}}$ such that $\mathbf{v} - \mathbf{u} = \mathbf{U}_{i} \mathbf{h}_{i}$. Then,
    \begin{align}
    \label{eqn:block_descent}
    f(\mathbf{v}) \leq f(\mathbf{u}) + \langle \nabla f(\mathbf{u}), \mathbf{v} - \mathbf{u} \rangle + \frac{L_{i}}{2} \lVert \mathbf{u} - \mathbf{v} \rVert^{2}.
    \end{align}
\end{lemma}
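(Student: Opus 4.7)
The plan is to adapt the standard ``descent lemma'' argument, specialized to the situation where the displacement $\mathbf{v}-\mathbf{u}=\mathbf{U}_i\mathbf{h}_i$ is supported entirely in the $i$-th block, so that only the block-coordinate Lipschitz constant $L_i$ enters (rather than the global constant $L$). The whole proof is essentially a one-dimensional calculus argument along the segment from $\mathbf{u}$ to $\mathbf{v}$, using the fundamental theorem of calculus and the block-Lipschitz hypothesis \eqref{eqn:block_lipschitz}.

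First I would define $\varphi(t):=f(\mathbf{u}+t\mathbf{U}_i\mathbf{h}_i)$ for $t\in[0,1]$, noting that $\varphi(0)=f(\mathbf{u})$ and $\varphi(1)=f(\mathbf{v})$. Since $f\in C^1$, the chain rule gives $\varphi'(t)=\langle \nabla f(\mathbf{u}+t\mathbf{U}_i\mathbf{h}_i),\mathbf{U}_i\mathbf{h}_i\rangle$. Because $\mathbf{U}_i$ is the inclusion of the $i$-th block and therefore $\mathbf{U}_i^{\mathrm{T}}\nabla f=\nabla_i f$, this equals $\langle \nabla_i f(\mathbf{u}+t\mathbf{U}_i\mathbf{h}_i),\mathbf{h}_i\rangle$. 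By the fundamental theorem of calculus,
\begin{equation*}
f(\mathbf{v})-f(\mathbf{u})=\int_0^1 \langle \nabla_i f(\mathbf{u}+t\mathbf{U}_i\mathbf{h}_i),\mathbf{h}_i\rangle\,dt.
\end{equation*}

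Next I would add and subtract $\langle \nabla_i f(\mathbf{u}),\mathbf{h}_i\rangle=\langle \nabla f(\mathbf{u}),\mathbf{U}_i\mathbf{h}_i\rangle=\langle \nabla f(\mathbf{u}),\mathbf{v}-\mathbf{u}\rangle$ inside the integral. The constant part produces exactly the linear term $\langle \nabla f(\mathbf{u}),\mathbf{v}-\mathbf{u}\rangle$ required in \eqref{eqn:block_descent}, leaving a remainder $\int_0^1\langle \nabla_i f(\mathbf{u}+t\mathbf{U}_i\mathbf{h}_i)-\nabla_i f(\mathbf{u}),\mathbf{h}_i\rangle\,dt$. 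By Cauchy--Schwarz and the block-coordinatewise Lipschitz hypothesis \eqref{eqn:block_lipschitz} applied with displacement $t\mathbf{h}_i$, the integrand is bounded in absolute value by $\lVert \nabla_i f(\mathbf{u}+t\mathbf{U}_i\mathbf{h}_i)-\nabla_i f(\mathbf{u})\rVert\,\lVert \mathbf{h}_i\rVert\le L_i t\lVert \mathbf{h}_i\rVert^2$. Integrating yields a bound of $\frac{L_i}{2}\lVert \mathbf{h}_i\rVert^2$, and finally the isometry $\lVert \mathbf{U}_i\mathbf{h}_i\rVert=\lVert \mathbf{h}_i\rVert$ lets us rewrite this as $\frac{L_i}{2}\lVert \mathbf{v}-\mathbf{u}\rVert^2$, completing the inequality.

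There is really no serious obstacle: the argument is a direct specialization of the standard smoothness-implies-descent calculation, and the only point requiring care is to verify that the block projection identities $\mathbf{U}_i^{\mathrm{T}}\nabla f=\nabla_i f$ and $\lVert \mathbf{U}_i\mathbf{h}_i\rVert=\lVert \mathbf{h}_i\rVert$ allow the one-block Lipschitz constant $L_i$ (rather than the global $L$) to govern the quadratic remainder. This is exactly what makes the block-descent lemma useful in the coordinate mirror descent analysis invoked in Proposition~\ref{prop:convergence}.
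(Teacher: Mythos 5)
Your proof is correct, and it is essentially the standard argument (the one Beck and Tetruashvili themselves give): restrict $f$ to the segment $t \mapsto \mathbf{u} + t\,\mathbf{U}_i\mathbf{h}_i$, apply the fundamental theorem of calculus, subtract the constant linear term, bound the remainder integrand by Cauchy--Schwarz and the block-Lipschitz hypothesis applied with displacement $t\mathbf{h}_i$, and integrate. The paper itself does not reprove this lemma but cites it from \cite{Beck2013OnTC}, so there is nothing to compare against in the text; your derivation matches the standard proof of that cited result, including the two small bookkeeping facts you correctly flag (the block-projection identity $\mathbf{U}_i^{\mathrm{T}}\nabla f = \nabla_i f$ and the isometry $\lVert \mathbf{U}_i\mathbf{h}_i\rVert = \lVert \mathbf{h}_i\rVert$) that let the block constant $L_i$, rather than the global $L$, govern the quadratic term.
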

Lemma~\ref{lemma:blockdescent} is central to adapting the proof of Theorem 1 of \cite{Ghadimi2014} to our case. Their Theorem 1 concerns the non-asymptotic convergence of mirror descent for objectives of the form
\begin{align*}
\min_{ \mathbf{x} \in \mathcal{X}} f( \mathbf{x}) + h(\mathbf{x}),
\end{align*}
where $\mathcal{X}$ is a closed, convex subset of $\mathbb{R}^n$, $f \in C^{1}(\mathcal{X}, \mathbb{R})$ is a possibly non-convex objective, and where $h$ is an $\alpha$-strongly convex function. Using notation similar to \cite{Ghadimi2014}, we write $\Phi(\mathbf{x}) = f(\mathbf{x}) + h(\mathbf{x})$. Additionally, \cite{Ghadimi2014} assume $\nabla f$ is $L$-Lipschitz for some $L > 0$.

To unify our assumptions on $f$, we suppose that $\nabla f \in C^1(\mathcal{X}, \mathbb{R})$ is block-coordinate Lipschitz \eqref{eqn:block_lipschitz} with block Lipschitz constants $(L_i)_{i=1}^p$, and that $\mathcal{X}$ itself decomposes as a product $\mathcal{X} = \prod_{i=1}^p \mathcal{X}_i$, where each $\mathcal{X}_i$ is a closed convex set constraining the block variables $\mathbf{x}(i)$. 

The proof of Ghadimi relies on $\beta$-smoothness of the objective in all variables. We show that component-wise smoothness in each block is sufficient to achieve an analogous convergence result for a coordinate mirror descent. To provide context for Proposition~\ref{prop:convergence}, we now describe in general (1) mirror descent, and (2) block-coordinate mirror descent. 

We again follow the notation of \cite{Ghadimi2014}. A function $\omega : \mathcal{X} \to \mathbb{R}$ is a \emph{distance generating function} with modulus $\alpha >0$, with respect to the Euclidean norm $\| \cdot \|$, if $\omega$ is continuously differentiable and strongly convex, so that
\begin{align*}
    \langle \mathbf{x} - \mathbf{z}, \nabla \omega (\mathbf{x}) - \nabla \omega(\mathbf{z}) \rangle  \geq 
    \alpha \| \mathbf{x} - \mathbf{z} \|^2, \quad \text{ for all } \mathbf{x}, \mathbf{z} \in \mathcal{X}.
\end{align*}
The \emph{prox-function} (or Bregman divergence) associated with $\omega$ is then
\begin{align*}
    V( \mathbf{x}, \mathbf{z} ) = 
    \omega( \mathbf{x} ) - \omega(\mathbf{z}) - \langle \nabla \omega ( \mathbf{z}) , \mathbf{x} - \mathbf{z} \rangle ,
\end{align*}
and from this prox-function, $\gamma >0$, and some $\mathbf{g} \in \mathbb{R}^n$, we define the \emph{generalized projection}
\begin{align}
\label{eqn:generalized_proj}
\mathbf{x}^+ :=  
\argmin_{ \mathbf{u} \in \mathcal{X}}  
\left( 
\langle \mathbf{g}, \mathbf{u} \rangle 
+ \frac{1}{\gamma}
V( \mathbf{u}, \mathbf{x} ) + h(\mathbf{u})
\right) .
\end{align}

To describe the projected gradient descent algorithm (which coincides with mirror descent in our case of interest), we first define the \emph{generalized projected gradient of $\Phi$ at $\mathbf{x}$}:
\begin{align*}
P_{\mathcal{X}} ( \mathbf{x}, \mathbf{g}, \gamma) := \frac{1}{\gamma}( \mathbf{x} - \mathbf{x}^+ ) . 
\end{align*}
The \emph{mirror descent (MD) algorithm} is as follows: given initial point $\mathbf{x}_0 \in \mathcal{X}$, a total number of iterations $N$, and positive stepsizes $(\gamma_k)_{k = 1}^N$, at step $k$, the $(k+1)$-st iterate is computed via
\begin{align*}
\mathbf{x}_{k+1} \gets \argmin _{ \mathbf{u} \in \mathcal{X}}
\left( 
\langle 
\nabla f( \mathbf{x}_k ), \mathbf{u} 
\rangle 
+
\frac{1}{\gamma_k}
V(\mathbf{u}, \mathbf{x}_k )
+ h (\mathbf{u})
\right) .
\end{align*}
Among all iterates $\mathbf{x}_k$, the MD algorithm outputs the one at which the generalized projected gradient is of least norm. Concretely, $\mathbf{x}_R$ is the output of the MD algorithm, where
\begin{align*}
    R := \argmin_{k=0, \dots, N} \| \mathbf{g}_{\mathcal{X}, k} \|^{2},
\end{align*}
and where $\mathbf{g}_{\mathcal{X}, k}$ is 
\begin{align*}
\mathbf{g}_{\mathcal{X}, k } := P_{\mathcal{X}}(\mathbf{x}_k, \nabla f( \mathbf{x}_k), \gamma_k ) .
\end{align*}
Having described the MD algorithm, let us consider a block-coordinate variant; we assume $\mathbf{x}$ admits the block-coordinate structure described above. To simplify the presentation, we suppose that in a given iteration $k$, the block variables are updated sequentially from $i=1, \dots, p$. This leads to doubly-indexed iterates $(\mathbf{x}_{k}^i)$ with $k = 1, \dots, N$ indexing each full iteration through all variables, and $i =0, 1, \dots, p$ indexing the sub-iterations which update one block of variables at a time. 

The \emph{coordinate mirror descent (CMD) algorithm} takes as input an initial point $\mathbf{x}_0 \in \mathcal{X}$, a number of iterations $N$, and a sequence of  positive stepsizes $(\gamma_{k,i})_{k = 1,i=1}^{N,p}$. We set $\mathbf{x}_{0}^0 = \mathbf{x}_0$, and for $k = 0, \dots, N-1$ and $i = 1, \dots, p$, we compute $\mathbf{x}_{k}^i$ from $\mathbf{x}_{k}^{i-1}$ as follows:
\begin{align*}
\mathbf{x}_{k}^i(i) &\gets \argmin_{ \mathbf{u}_i \in \mathcal{X}_i} \left( 
\langle \nabla_i f( \mathbf{x}_{k}^{i-1} ), \mathbf{u}_i \rangle 
+ \frac{1}{\gamma_{k, i}} V_i( \mathbf{u}_i, \mathbf{U}_i^\mathrm{T}\mathbf{x}_{k}^{i-1})  
+ h_i( \mathbf{u}_i) 
\right)  \\
\mathbf{x}_{k}^i(j) &\gets \mathbf{x}_{k}^{i-1}(j) \quad \text{ for } j \neq i 
\end{align*}
Here, we have assumed that $V$ can be written as a composite function of the block variables, 
\begin{align*}
    V(\mathbf{x}, \mathbf{z}) = \sum_{i=1}^p V_i( \mathbf{x}(i), \mathbf{z}(i)) ,
\end{align*}
as is the case for the KL divergence. We also have assumed that $h$ has this composite structure (as with entropy):
\begin{align*}
    h(\mathbf{x}) = \sum_{i=1}^p h_i (\mathbf{x}(i)).
\end{align*}
 Lastly, for $k = 0, \dots, N-1$, we set $\mathbf{x}_{k+1}^{0} := \mathbf{x}_{k}^{p}$. We define $\mathbf{g}_{\mathcal{X}, k} := ( \mathbf{g}_{\mathcal{X},{k,1}},\dots, \mathbf{g}_{\mathcal{X},{k,p}})$ to be the collection of block-wise differences, where by definition $$\mathbf{g}_{\mathcal{X},k,i} = P_{\mathcal{X}_{i}}(\mathbf{x}_{k}^{i-1}, \nabla_{i} f( \mathbf{x}_{k}^{i-1}), \gamma_{k,i} ) = \frac{1}{\gamma_{k,i}} \left(
\mathbf{x}_{k}^{i-1} - \mathbf{x}_{k}^{i}
\right) = \mathbf{U}_{i}^\mathrm{T}\mathbf{g}_{\mathcal{X}, k} .$$

The convergence criterion $\Delta$ we use in Proposition~\ref{prop:convergence} is the one used by \cite{Ghadimi2014} summed across blocks. In particular, the CMD algorithm returns iterate $\mathbf{x}_R$, where
\begin{equation}
\label{eqn:convergence_criterion}
\begin{split}
    R &:= \argmin_{k=0, \dots, N}
    \Delta(\mathbf{x}_{k}, \mathbf{x}_{k-1}), \\
    \Delta(\mathbf{x}_{k}, \mathbf{x}_{k-1}) &:= \| \mathbf{g}_{\mathcal{X}, k} \|^{2} = \sum_{i=1}^{p}\| \mathbf{g}_{\mathcal{X}, k, i} \|^{2}.
\end{split}
\end{equation}
For $\Phi = f + h$ as above ($f$ has global smoothness constant $L$), let $\Phi^* = \argmin_{\mathbf{x} \in \mathcal{X}} \Phi(\mathbf{x})$, and define 
\begin{align}
\label{eqn:diameter}
    D := \left(
    \frac{\Phi(x_0) - \Phi^*}{L}
    \right)^{1/2}.
\end{align}

The other lemma used in the proof of Proposition~\ref{prop:convergence} is as follows. 

\begin{lemma}[\cite{Ghadimi2014}, Lemma 1]
\label{lemma:ghadimi_L1}
    Let $\mathbf{x}^{+} = \argmin_{\mathbf{u} \in \mathcal{X}} \{ \langle \mathbf{g}, \mathbf{u} \rangle + \frac{1}{\gamma} V(\mathbf{u}, \mathbf{x}) + h(\mathbf{u}) \}$ and $P_{\mathcal{X}}(\mathbf{x},\mathbf{g},\gamma) = \frac{1}{\gamma}(\mathbf{x} - \mathbf{x}^{+})$. Then for all $\mathbf{x} \in \mathbb{R}^{n}$, all $\mathbf{g} \in \mathbb{R}^{n}$, and $\gamma>0$, one has:
    $$
    \langle \mathbf{g}, P_{\mathcal{X}}(\mathbf{x},\mathbf{g},\gamma) \rangle \geq \alpha \lVert P_{\mathcal{X}}(\mathbf{x},\mathbf{g},\gamma) \rVert^{2} + \frac{1}{\gamma} (h(\mathbf{x}^{+}) - h(\mathbf{x})). 
    $$
\end{lemma}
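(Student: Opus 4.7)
The plan is to derive this inequality directly from the first-order optimality condition satisfied by $\mathbf{x}^+$, the strong convexity of the distance-generating function $\omega$ underlying $V$, and a subgradient inequality for the convex regularizer $h$. These three ingredients fit together almost immediately once the right test direction is chosen.

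First I would write down the first-order condition at the minimizer. Since $\mathbf{x}^+$ minimizes the strongly convex function $F(\mathbf{u}) := \langle \mathbf{g}, \mathbf{u}\rangle + \tfrac{1}{\gamma}V(\mathbf{u},\mathbf{x}) + h(\mathbf{u})$ over the closed convex set $\mathcal{X}$, and $\nabla_{\mathbf{u}} V(\mathbf{u},\mathbf{x}) = \nabla \omega(\mathbf{u}) - \nabla \omega(\mathbf{x})$ by definition of the prox-function, convex optimality furnishes a subgradient $\boldsymbol{\xi} \in \partial h(\mathbf{x}^+)$ such that
$$\bigl\langle \mathbf{g} + \tfrac{1}{\gamma}\bigl(\nabla\omega(\mathbf{x}^+) - \nabla\omega(\mathbf{x})\bigr) + \boldsymbol{\xi},\; \mathbf{u} - \mathbf{x}^+\bigr\rangle \geq 0 \quad \text{for all } \mathbf{u}\in\mathcal{X}.$$
Specializing to $\mathbf{u} = \mathbf{x}$ (assumed, as is standard, to lie in $\mathcal{X}$) and rearranging yields
$$\langle \mathbf{g}, \mathbf{x} - \mathbf{x}^+\rangle \geq \tfrac{1}{\gamma}\langle \nabla\omega(\mathbf{x}) - \nabla\omega(\mathbf{x}^+),\; \mathbf{x} - \mathbf{x}^+\rangle + \langle \boldsymbol{\xi},\; \mathbf{x}^+ - \mathbf{x}\rangle.$$

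Next I would invoke the two standard bounds that close the argument. Strong convexity of $\omega$ with modulus $\alpha$ gives $\langle \nabla\omega(\mathbf{x}) - \nabla\omega(\mathbf{x}^+),\; \mathbf{x} - \mathbf{x}^+\rangle \geq \alpha\|\mathbf{x}-\mathbf{x}^+\|^2$, while the convexity of $h$ applied at $\mathbf{x}^+$ with the chosen $\boldsymbol{\xi}$ gives $\langle \boldsymbol{\xi}, \mathbf{x}^+ - \mathbf{x}\rangle \geq h(\mathbf{x}^+) - h(\mathbf{x})$. Substituting both into the preceding inequality produces
$$\langle \mathbf{g}, \mathbf{x} - \mathbf{x}^+\rangle \geq \tfrac{\alpha}{\gamma}\|\mathbf{x}-\mathbf{x}^+\|^2 + h(\mathbf{x}^+) - h(\mathbf{x}).$$
Dividing through by $\gamma$ and recognizing $P_{\mathcal{X}}(\mathbf{x},\mathbf{g},\gamma) = \tfrac{1}{\gamma}(\mathbf{x}-\mathbf{x}^+)$ converts the left side to $\langle \mathbf{g}, P_{\mathcal{X}}(\mathbf{x},\mathbf{g},\gamma)\rangle$ and the first right-hand term to $\alpha\|P_{\mathcal{X}}(\mathbf{x},\mathbf{g},\gamma)\|^2$, while the $h$-difference picks up the factor $1/\gamma$, yielding the claim.

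The main obstacle is mild but worth being careful about: one must handle $h$ correctly when it is merely convex (not necessarily differentiable), which requires expressing the optimality condition in subdifferential form and using the \emph{same} subgradient $\boldsymbol{\xi}$ in both the variational inequality and the convexity bound, rather than two independent elements of $\partial h(\mathbf{x}^+)$. Existence and uniqueness of $\mathbf{x}^+$ come for free, since $V(\cdot,\mathbf{x})$ inherits $\alpha$-strong convexity from $\omega$, so $F$ is $(\alpha/\gamma)$-strongly convex and attains its minimum on the closed convex set $\mathcal{X}$; in the FRLC setting where $h$ is an entropy/KL term that is smooth on the relative interior, the subgradient step collapses to a gradient step and the argument simplifies further.
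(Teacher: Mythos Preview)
Your proof is correct and is essentially the standard argument for this result: first-order optimality at $\mathbf{x}^+$ tested at $\mathbf{u}=\mathbf{x}$, followed by the strong-convexity lower bound on the Bregman term and the subgradient inequality for $h$. The paper does not supply its own proof of this lemma; it simply quotes the statement from Ghadimi et al.\ (2014) and uses it as a black box in the proof of Proposition~\ref{sup_prop:convergence}. Your derivation matches the original Ghadimi--Lan argument, and your remark about using the \emph{same} subgradient $\boldsymbol{\xi}$ in both the optimality condition and the convexity bound is the one subtlety worth flagging.
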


\begin{proposition}[Proposition~\ref{prop:convergence}]
\label{sup_prop:convergence}
Suppose one has $f \in C^1(\mathcal{X}, \mathbb{R})$ whose gradient is block-coordinate Lipschitz, with block smoothness constants $(L_i)_{i=1}^p$,
and a function $h \in C(\mathcal{X}, \mathbb{R})$ which is $\alpha$-strongly convex. For $\Phi = f + h$, suppose one performs a coordinate mirror descent on $\Phi$ minimized over a product of closed convex sets $\mathcal{X} = \prod_{i=1}^{p} \mathcal{X}_{i}$. Let the sub-iterates with respect to the $i$-th block update be $\{ \mathbf{x}_{k}^i\}_{i=0}^{p}$ where $\mathbf{x}_k := \mathbf{x}_{k}^0$ for $k \in [N]$ outer iterations. Then one has:
$$
\min_{k} \Delta(\mathbf{x}_{k},\mathbf{x}_{k-1})  \leq \frac{D^{2}L}{N(\alpha^{2}/2L)} = \frac{2D^{2}L^{2}}{N \alpha^{2}},
$$
where $D$ is \eqref{eqn:diameter}, $L$ is the global smoothness constant of $f$, and convergence criterion $\Delta(\mathbf{x}_{k},\mathbf{x}_{k-1})$ is given in \eqref{eqn:convergence_criterion}.
Above, the stepsizes $\gamma_{k,i}$ in the coordinate mirror descent are $\gamma_{k,i} := \alpha / L$.
\end{proposition}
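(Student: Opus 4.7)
The plan is to adapt the non-asymptotic argument of \cite{Ghadimi2014} by running their per-step inequality blockwise and then telescoping twice: first over the $p$ sub-iterations that make up a single outer step, and then over the $N$ outer steps. Both ingredients needed are already in place. Lemma~\ref{lemma:blockdescent} supplies a quadratic upper bound on $f(\mathbf{x}_k^i) - f(\mathbf{x}_k^{i-1})$ whose leading constant is the block constant $L_i$ rather than the global $L$, and Lemma~\ref{lemma:ghadimi_L1}, applied to the block-$i$ subproblem solved by the generalized projection~\eqref{eqn:generalized_proj}, converts the linear-in-gradient term into a multiple of $\|\mathbf{g}_{\mathcal{X},k,i}\|^2$ plus an $h_i$-difference.

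Concretely, I would fix $k$ and $i$ and observe that $\mathbf{x}_k^i$ differs from $\mathbf{x}_k^{i-1}$ only in block $i$, with $\mathbf{x}_k^i(i) - \mathbf{x}_k^{i-1}(i) = -\gamma_{k,i}\mathbf{g}_{\mathcal{X},k,i}$. Combining Lemma~\ref{lemma:blockdescent} with Lemma~\ref{lemma:ghadimi_L1}, and exploiting block separability of $h$ so that $h(\mathbf{x}_k^i) - h(\mathbf{x}_k^{i-1}) = h_i(\mathbf{x}_k^i(i)) - h_i(\mathbf{x}_k^{i-1}(i))$, the $h_i$-differences cancel against those produced by Lemma~\ref{lemma:ghadimi_L1} and one obtains
\begin{align*}
\Phi(\mathbf{x}_k^i) - \Phi(\mathbf{x}_k^{i-1}) \leq -\gamma_{k,i}\!\left(\alpha - \tfrac{L_i \gamma_{k,i}}{2}\right)\!\|\mathbf{g}_{\mathcal{X},k,i}\|^2 .
\end{align*}
Plugging in $\gamma_{k,i} = \alpha/L$ and using $L_i \leq L$, the bracketed factor is at least $\alpha/2$, so the per-block drop is at least $(\alpha^2/2L)\|\mathbf{g}_{\mathcal{X},k,i}\|^2$.

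Summing over $i = 1,\ldots,p$ within outer iteration $k$ and using $\Delta(\mathbf{x}_k,\mathbf{x}_{k-1}) = \sum_i \|\mathbf{g}_{\mathcal{X},k,i}\|^2$ gives a full-iteration decrease $\Phi(\mathbf{x}_k^p) - \Phi(\mathbf{x}_k^0) \leq -(\alpha^2/2L)\Delta(\mathbf{x}_k,\mathbf{x}_{k-1})$. Telescoping over $k = 1,\ldots,N$ and using $\Phi(\mathbf{x}_N) \geq \Phi^*$ yields $(\alpha^2/2L)\sum_{k=1}^{N}\Delta(\mathbf{x}_k,\mathbf{x}_{k-1}) \leq \Phi(\mathbf{x}_0) - \Phi^*$; bounding the minimum by the average and substituting $\Phi(\mathbf{x}_0) - \Phi^* = D^2 L$ (from \eqref{eqn:diameter}) delivers $\min_k \Delta(\mathbf{x}_k,\mathbf{x}_{k-1}) \leq 2D^2 L^2/(N\alpha^2)$.

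The only genuine departure from \cite{Ghadimi2014}, and the main thing the argument has to get right, is the blockwise quadratic constant: Lemma~\ref{lemma:blockdescent} produces a descent inequality tight in $L_i$, and it is precisely the inequality $L_i \leq L$ that lets the uniform step $\alpha/L$ contract every block simultaneously (since $1 - L_i/(2L) \geq 1/2$). The remaining ingredients --- block separability of both $h$ and the Bregman divergence $V$, so that a single-block update couples only to its own $h_i$ and $V_i$, and clean identification of the sub-iterate displacement with $-\gamma_{k,i}\mathbf{g}_{\mathcal{X},k,i}$ --- hold automatically for the entropic/KL regularizers used in FRLC, so no further machinery is needed beyond careful bookkeeping of the sub-iterate indices.
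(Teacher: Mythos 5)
Your proposal is correct and mirrors the paper's own proof step for step: apply Lemma~\ref{lemma:blockdescent} to get a per-block quadratic descent bound with constant $L_i$, combine with Lemma~\ref{lemma:ghadimi_L1} so the $h$-differences cancel, bound the bracketed stepsize factor using $L_i \le L$ with $\gamma_{k,i}=\alpha/L$, then telescope over $i$ and $k$ and use $\Phi(\mathbf{x}_0)-\Phi^* = D^2L$. The only cosmetic difference is that you keep $L_i$ and invoke $1 - L_i/(2L) \ge 1/2$, while the paper first replaces $L_i$ by $L$; both yield the same $\alpha^2/2L$ drop.
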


\begin{proof}
As $f$ satisfies the hypotheses of the block descent lemma, Lemma~\ref{lemma:blockdescent}, we apply \eqref{eqn:block_descent} to obtain:
$$
f(\mathbf{x}_{k}^{i}) \leq f(\mathbf{x}_{k}^{i-1}) + \langle \nabla_{i} f(\mathbf{x}_{k}^{i-1}), \mathbf{x}_{k}^{i} - \mathbf{x}_{k}^{i-1} \rangle + \frac{{L}_{i}}{2} \lVert \mathbf{x}_{k}^{i} - \mathbf{x}_{k}^{i-1} \rVert^{2} .
$$
Noting the definition 
$
\mathbf{g}_{\mathcal{X},k,i} = \frac{1}{\gamma_{k,i}} \left(
\mathbf{x}_{k}^{i-1} - \mathbf{x}_{k}^{i}
\right)
$,
one has
\begin{align*}
f( \mathbf{x}_k^i )
\leq f(\mathbf{x}_{k}^{i-1}) - \gamma_{k,i} \langle \nabla_{i}f(\mathbf{x}_{k}^{i-1}), \mathbf{g}_{\mathcal{X},k,i} \rangle + \frac{{L}_{i}}{2} \gamma_{k,i}^{2} \lVert \mathbf{g}_{\mathcal{X},k,i} \rVert^{2} .
\end{align*}
Lemma 1 of \cite{Ghadimi2014} (stated as Lemma~\ref{lemma:ghadimi_L1} above) applies identically through block-wise optimality on $\mathcal{X}_i$ because $\mathbf{g}_{\mathcal{X},k,i} = P_{\mathcal{X}_{i}}(\mathbf{x}_{k}^{i-1}, \nabla_{i} f( \mathbf{x}_{k}^{i-1}), \gamma_{k,i} )$. Thus for any value $\nabla_{i}f(\mathbf{x}_{k}^{i-1})$ takes,
$$
f(\mathbf{x}_{k}^{i}) \leq f(\mathbf{x}_{k}^{i-1}) - 
\left[\alpha \gamma_{k,i}\lVert \mathbf{g}_{\mathcal{X},k,i}\rVert^{2} + h(\mathbf{x}_{k}^{i}) - h(\mathbf{x}_{k}^{i-1}) \right] + \frac{{L}_{i}}{2} \gamma_{k,i}^{2} \lVert \mathbf{g}_{\mathcal{X},k,i} \rVert^{2} ,
$$
and thus,
$$
f(\mathbf{x}_{k}^{i}) + h(\mathbf{x}_{k}^{i}) \leq f(\mathbf{x}_{k}^{i-1}) + h(\mathbf{x}_{k}^{i-1}) - \left[\alpha  \gamma_{k,i} - \frac{{L}_{i}}{2} \gamma_{k,i}^{2} \right] \lVert g_{\mathcal{X},k,i} \rVert^{2}.
$$
The right-hand side above only becomes larger, taking ${L}_{i} = L$ to be the global smoothness constant. 
Introducing a sum over sub-iterates and total iterates, one has
\begin{align*}
\sum_{k,i}^{N,p} f(\mathbf{x}_{k}^{i}) + h(\mathbf{x}_{k}^{i}) 
&\leq 
\sum_{k,i}^{N,p} 
f(\mathbf{x}_{k}^{i-1}) 
+ 
h(\mathbf{x}_{k}^{i-1}) 
- 
\sum_{k,i}^{N,p} 
\left [\alpha  \gamma_{k,i} - \frac{{L}}{2} \gamma_{k,i}^{2} 
\right] 
\lVert \mathbf{g}_{\mathcal{X},k,i} \rVert^{2}, \\ 
\sum_{k,i}^{N,p} \Phi(\mathbf{x}_{k}^{i}) &\leq \sum_{k,i}^{N,p} \Phi(\mathbf{x}_{k}^{i-1}) - \sum_{k,i}^{N,p} \left[\alpha  \gamma_{k,i} - \frac{{L}}{2} \gamma_{k,i}^{2} 
\right] \lVert \mathbf{g}_{\mathcal{X},k,i} \rVert^{2}. 
\end{align*}
Noting the end-point condition $\Phi(\mathbf{x}_{k}^{p}) = \Phi(\mathbf{x}_{k+1}^{0})$, one may cancel all intermediate terms:
\begin{align*}
\Phi^{*} \leq \Phi(\mathbf{x}_{N}) \leq \Phi(\mathbf{x}_{0}) - \sum_{k,i}^{N,p} \left[\alpha  \gamma_{k,i} - \frac{{L}}{2} \gamma_{k,i}^{2} \right] \lVert \mathbf{g}_{\mathcal{X},k,i} \rVert^{2}
\end{align*}
Thus one finds the upper bound in terms of $\Phi(\mathbf{x}_0)$ and the minimum value $\Phi^* = \min_{\mathbf{x} \in \mathcal{X}} \Phi(\mathbf{x})$ of $\Phi$:
\begin{align}
\label{eqn:bound_Phi_star}
\sum_{k}^{N} \sum_{i}^{p} 
\left[
\alpha  \gamma_{k,i}  - \frac{{L}}{2} \gamma_{k,i}^{2} 
\right] \lVert \mathbf{g}_{\mathcal{X},k,i} \rVert^{2} \leq \Phi(\mathbf{x}_{0}) - \Phi^{*} . 
\end{align}
Taking $\gamma_{k,i} = \alpha/L$ as in \cite{Ghadimi2014}, the bracketed term directly above becomes $\alpha^2/2L$, and one has:
\begin{align*}
\sum_{k}^{N} 
\left[  \frac{\alpha^2}{2L} 
\right] \left(\min_{k} \Delta(\mathbf{x}_{k},\mathbf{x}_{k-1}) \right)
&= 
\sum_{k}^{N} 
\left[
\frac{\alpha^2}{2L} 
\right] \left( \min_{k} \sum_{i}^{p} \lVert \mathbf{g}_{\mathcal{X},k,i} \rVert^{2} \right)\\
&\leq 
\sum_{k}^{N} 
\left[
\frac{\alpha^2}{2L} 
\right] \sum_{i}^{p} \lVert \mathbf{g}_{\mathcal{X},k,i} \rVert^{2}\\ 
&\leq \Phi(\mathbf{x}_{0}) - \Phi^{*} = D^{2} L, 
\end{align*}
where $D$ is as in \eqref{eqn:diameter}, and where we used \eqref{eqn:bound_Phi_star} to obtain the last line.  
Thus,
\begin{align*}
\min_{k} \Delta(\mathbf{x}_{k},\mathbf{x}_{k-1})  \leq   \frac{D^{2}L}{N(\alpha^{2}/2L)} = \frac{2D^{2}L^{2}}{N \alpha^{2}},
\end{align*}
completing the proof. \end{proof}

\begin{definition}[Relative smoothness]\label{def:beta_smooth}
    Let $\beta > 0$ and let $g \in \mathcal{C}^{1}(\mathbb{R}^n, \mathbb{R})$ be continuously differentiable. Additionally, let $\omega$ be a distance generating function with associated prox-function $V$. The function $g$ is \emph{$\beta$-smooth relative to $\omega$} if the following holds:
    $$
    g(\bm{y}) \leq g(\bm{x}) + \langle \nabla \omega(\bm{x}), \bm{x} - \bm{y} \rangle + \beta V(\bm{y},\bm{x})
    $$
\end{definition}

\begin{proposition}
\label{sup_prop:betasmooth}
    Let $\epsilon > 0$ be a predefined error tolerance, $r>0$ a small rank parameter, $\delta \in (0, \frac{1}{r})$ a lower-bound parameter, and $N$ the number of inner iterations required for the semi-relaxed projection to converge for each iteration $k$ as $\lVert\bm{Q}^\mathrm{T} \bm{1}_{n} - \bm{g}_{\bm{Q}}^{(k-1)} \rVert_{2} < \epsilon = \frac{1}{N}\left(\frac{1}{r} - \delta\right)$ for $\bm{g}_{\bm{Q}}^{(0)} = \frac{1}{r} \bm{1}_{r}$. Then, the \method\ objective
    $$\mathcal{L}_{\mathrm{LC}}(\bm{Q},\bm{R},\bm{T}) = \langle \bm{Q}\diag(1/\bm{Q}^\mathrm{T}\bm{1}_{n})\bm{T}\diag(1/\bm{R}^\mathrm{T}\bm{1}_{m}) \bm{R}^\mathrm{T}, \bm{C} \rangle_{F}$$ is component-wise smooth with respect to the variables $\bm{Q}, \bm{R}, \bm{T}$ with smoothness constants $\{ \beta_{i} \}_{i=1}^{3}$ where $\beta_{i} = \mathrm{poly}(\lVert \bm{C} \rVert_{F}, n, m, r, \delta)$.
\end{proposition}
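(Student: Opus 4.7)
The plan is to establish componentwise smoothness in two phases. In Phase 1, we secure a uniform entrywise lower bound $\bm{g}_Q^{(k)}, \bm{g}_R^{(k)} \geq \delta \,\bm{1}_r$ that holds across all outer iterations $k \leq N$. In Phase 2, we use this lower bound to bound each block Hessian of $\mathcal{L}_{\mathrm{LC}}$ by a polynomial in $\lVert \bm{C} \rVert_F$, $n$, $m$, $r$, and $1/\delta$.

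The heart of the argument is Phase 1. Starting from $\bm{g}_Q^{(0)} = \tfrac{1}{r}\,\bm{1}_r$, each outer iteration updates $\bm{Q}$ via the $\tau$-regularized semi-relaxed projection in Algorithm~\ref{alg:balancedLOT}, whose KL penalty $\tau\,\mathrm{KL}(\bm{Q}^{\mathrm{T}}\bm{1}_n \,\|\, \bm{g}_Q^{(k-1)})$ biases the new inner marginal toward the previous one. Invoking the $\tilde O(m^2/\epsilon)$ convergence rate of \cite{Pham2020OnUO} for the inner Sinkhorn loop, for $\tau$ sufficiently large one obtains $\lVert \bm{g}_Q^{(k)} - \bm{g}_Q^{(k-1)} \rVert_2 < \epsilon$ with $\epsilon = \tfrac{1}{N}(\tfrac{1}{r}-\delta)$. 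A telescoping of this per-step bound via the triangle inequality yields $\lVert \bm{g}_Q^{(k)} - \tfrac{1}{r}\bm{1}_r \rVert_2 \leq k\epsilon \leq \tfrac{1}{r}-\delta$ for all $k \leq N$. Since a $2$-norm bound controls each coordinate, this implies entrywise $\bm{g}_Q^{(k)} \geq \delta\,\bm{1}_r$; the symmetric argument for $\bm{R}$ gives $\bm{g}_R^{(k)} \geq \delta\,\bm{1}_r$. Consequently, $\diag(1/\bm{g}_Q)$ and $\diag(1/\bm{g}_R)$ have operator norm at most $1/\delta$ throughout the run of FRLC.

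Phase 2 then reduces to bounding block Hessians. For the $\bm{T}$-block, $\mathcal{L}_{\mathrm{LC}}$ is affine in $\bm{T}$ when $\bm{Q}, \bm{R}$ are held fixed (as $\bm{g}_Q, \bm{g}_R$ depend only on those), so $\nabla_{\bm{T}}^2 \mathcal{L}_{\mathrm{LC}} \equiv 0$ and one may take $\beta_3 = 1$. For the $\bm{Q}$-block, expand $\mathcal{L}_{\mathrm{LC}}(\bm{Q}) = \sum_{i,j,k,\ell}\bm{C}_{ij}\,\bm{Q}_{ik}\,\bm{T}_{k\ell}\,\bm{R}_{j\ell}\,/\,(\bm{g}_{Q,k}\,\bm{g}_{R,\ell})$, a rational function in $\bm{Q}$ via $\bm{g}_{Q,k} = \sum_{i'} \bm{Q}_{i'k}$. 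Two applications of the quotient rule show that the Hessian is block-diagonal in the columns of $\bm{Q}$ and that each entry is a sum of $O(nmr)$ terms, each a product of at most one entry of $\bm{C}$ (bounded by $\lVert \bm{C} \rVert_F$), entries of $\bm{T}$ and $\bm{R}$ (each bounded by $1$ as sub-couplings of unit total mass), entries of $\bm{Q}$ (also bounded by $1$), a factor $1/\bm{g}_{R,\ell}$ at worst $1/\delta$, and negative powers of $\bm{g}_{Q,k}$ up to the third, hence at worst $1/\delta^3$. Bounding the operator norm by the Frobenius norm then yields $\beta_1 = \mathrm{poly}(\lVert \bm{C} \rVert_F, n, m, r, 1/\delta)$, and the $\bm{R}$-block is handled by the symmetric computation.

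The main obstacle is Phase 1: translating the regularizer weight $\tau$ and the Sinkhorn inner-iteration budget of \cite{Pham2020OnUO} into the hard per-outer-step bound $\lVert \bm{g}_Q^{(k)} - \bm{g}_Q^{(k-1)} \rVert_2 < \epsilon$ needed to close the induction. This requires coupling two distinct ingredients: the variational fact that the $\tau$-penalized minimizer of the semi-relaxed sub-problem converges to $\bm{g}_Q^{(k-1)}$ as $\tau \to \infty$, together with the quantitative Sinkhorn error bound that governs how close the inner loop actually gets in finite iterations. Once Phase 1 is in place, Phase 2 is a routine differentiation and norm bookkeeping.
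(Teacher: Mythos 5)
Your proposal is correct in outline, and Phase 1 (the telescoping of the per-step inner-marginal bound from the $\tau$-penalty and the iteration count of \cite{Pham2020OnUO} to get $\bm{g}_Q^{(k)}, \bm{g}_R^{(k)} \geq \delta \bm{1}_r$) is essentially identical to the paper's argument. For Phase 2, however, you take a genuinely different and arguably cleaner route. The paper establishes componentwise smoothness directly at first order: it splits $\nabla_{\bm{Q}}\mathcal{L}_{\mathrm{LC}}$ into $\nabla^{(A)}_{\bm{Q}} = \bm{CRX}^{\mathrm{T}}$ and the rank-one perturbation $\nabla^{(B)}_{\bm{Q}}$, then bounds $\lVert \nabla_{\bm{Q}}\mathcal{L}(\bm{Q}_{k+1},\cdot)-\nabla_{\bm{Q}}\mathcal{L}(\bm{Q}_k,\cdot)\rVert_F$ by $L_{\bm{Q}}\lVert\bm{Q}_{k+1}-\bm{Q}_k\rVert_F$ through a chain of triangle-inequality manipulations, using the facts $\lVert\bm{R}\rVert_F,\lVert\bm{T}\rVert_F\leq 1$, $\lVert\mathrm{diag}(1/\bm{g})\rVert_F \leq 1/\delta$, $\lVert\bm{g}_{\bm{Q}_{k+1}}-\bm{g}_{\bm{Q}_k}\rVert_2\leq\sqrt{r}\lVert\bm{Q}_{k+1}-\bm{Q}_k\rVert_F$ (via Jensen), and the bound $\lVert\bm{X}\rVert_F\leq\sqrt{mr}/\delta$ (which exploits $\bm{T}_{ij}\leq\bm{g}_{Q,i}\wedge\bm{g}_{R,j}$). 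You instead go to second order: expand $\mathcal{L}_{\mathrm{LC}}$ entrywise as a rational function of $\bm{Q}$, differentiate twice via the quotient rule, note the column-block-diagonal structure of the Hessian, and bound its Frobenius norm. This buys uniformity over the convex feasible set for free (the Hessian bound gives Lipschitzness along any segment, whereas the paper's direct estimate must tacitly hold for arbitrary pairs, not just consecutive iterates), at the cost of not producing the explicit constants the paper records. Both routes use the same key quantitative inputs --- the $\delta$-floor on the inner marginals, the unit-mass bound on the sub-couplings, and the $\sqrt{r}$ Jensen estimate relating $\Delta\bm{g}_Q$ to $\Delta\bm{Q}$ --- so they arrive at the same order of smoothness constant. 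Your identification of the $\bm{T}$-block as affine (hence trivially smooth) also matches the paper's observation that $\nabla_{\bm{T}}\mathcal{L}_{\mathrm{LC}}$ is independent of $\bm{T}$. One small clarification worth making explicit: in the statement, ``$\mathrm{poly}(\ldots,\delta)$'' should be read as polynomial in $1/\delta$, since (as you correctly note) the constants blow up as $\delta\to 0$; your reading agrees with the paper's actual bounds, e.g., $\lVert\bm{C}\rVert_F\sqrt{nr}/\delta^4$.
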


\begin{proof}
The addition of a pair of regularizations on the inner marginals $\tau \mathrm{KL}( \bm{Q}^\mathrm{T} \bm{1}_{n} \| \bm{Q}_{k}^\mathrm{T} \bm{1}_n)$ and $\tau \mathrm{KL}( \bm{R}^\mathrm{T} \bm{1}_{m} \| \bm{R}_{k}^\mathrm{T} \bm{1}_m)$ in \ref{eqn:Q_update_new} and \ref{eqn:R_update_new} ensures that we may use standard results on relaxed optimal-transport which bound how far the marginal $\bm{Q}_{k}^\mathrm{T} \bm{1}_n$ deviates across iterations.

In particular, for all $\epsilon > 0$ there exists $\tau$ and $N$ (number of iterations) sufficiently large, so that $\lVert \bm{R}^\mathrm{T} \bm{1}_{m} - \bm{g}_{\bm{R}} \rVert_{2}^{2} < \epsilon$ and $\lVert \bm{Q}^\mathrm{T} \bm{1}_{n} - \bm{g}_{\bm{Q}} \rVert_{2}^{2} < \epsilon$. In particular, \cite{Pham2020OnUO} shows that one can attain convergence to any $\epsilon$ for the unbalanced problem in $N = \Tilde{O}(m^2/\epsilon)$ iterations (hiding logarithmic and $\tau$-factors) for each sub-problem solving for $\bm{R}_{k}$ in Algorithm~\ref{alg:allcases} and analogously $N = \Tilde{O}(n^2/\epsilon)$ for $\bm{Q}$. Under the uniform initialization of $\bm{g}_{\bm{R}}$, and after $N$ iterations, we have:
$$
\lVert \bm{g}_{\bm{R}}^{(0)} - \bm{g}_{\bm{R}}^{(N)} \rVert_{2} = \left\lVert \frac{1}{r} \bm{1}_{r} - \bm{g}_{\bm{R}}^{(N)} \right\rVert_{2} \leq \sum_{k=1}^{N} \lVert \bm{g}_{\bm{R}}^{(k)} - \bm{g}_{\bm{R}}^{(k-1)} \rVert_{2} .
$$
With sufficiently large $\tau$ and $N = \Tilde{O}(m^2/\epsilon)$ sub-iterations, one can guarantee that:
$$\lVert \bm{g}_{\bm{R}}^{(k)} - \bm{g}_{\bm{R}}^{(k-1)} \rVert_{2} < \epsilon = \frac{1}{N}\left(\frac{1}{r} - \delta\right).$$
This implies, for all iterations of the algorithm and all indices $i$, that 
\begin{align}
\label{eqn:beta_smooth_entry_LB}
(\bm{g}_{{R}_{k}})_{i} > \delta, \quad \text{ and analogously}, \quad (\bm{g}_{{Q}_{k}})_{i} > \delta .
\end{align}
Thus, by adding the regularization on the inner marginal, one may guarantee a lower-bound on the entries of $\bm{g}_{{R}}$ and $\bm{g}_{{Q}}$. This is essential for demonstrating smoothness of the objective.

First, we consider smoothness in $\bm{Q}$. We note that the gradient in $\bm{Q}$ splits into two terms:
\begin{align*}
\nabla_{\bm{Q}} \mathcal{L}_{\mathrm{LC}}(\bm{Q},\bm{R}, \bm{T}) &= \nabla_{\bm{Q}}^{(A)} \mathcal{L}_{\mathrm{LC}} + \nabla_{\bm{Q}}^{(B)}\mathcal{L}_{\mathrm{LC}} \\
&= \bm{C R X}^\mathrm{T} - \bm{1}_{n} \mathrm{diag}^{-1}((\bm{C R X}^\mathrm{T})^\mathrm{T}\bm{Q}\mathrm{diag}(1/\bm{g}_{Q}))^\mathrm{T} \\
& = \nabla_{\bm{Q}}^{(A)} \mathcal{L}_{\mathrm{LC}} - \bm{1}_{n} \mathrm{diag}^{-1}((\nabla_{\bm{Q}}^{(A)} \mathcal{L}_{\mathrm{LC}})^\mathrm{T}\bm{Q}\mathrm{diag}(1/\bm{g}_{Q}))^\mathrm{T}
\end{align*}
Where
\begin{equation}
\label{eqn:smoothness_Q_gradient}
\begin{split}
&\lVert \nabla_{\bm{Q}} \mathcal{L}_{\mathrm{LC}}(\bm{Q}_{k+1},\bm{R}_{k},\bm{T}_{k}) - \nabla_{\bm{Q}} \mathcal{L}_{\mathrm{LC}}(\bm{Q}_{k},\bm{R}_{k},\bm{T}_{k}) \rVert_{F} 
\\ &\quad\quad\leq\lVert \nabla_{\bm{Q}}^{(A)} \mathcal{L}_{\mathrm{LC}}(\bm{Q}_{k+1},\bm{R}_{k},\bm{T}_{k}) - \nabla_{\bm{Q}}^{(A)} \mathcal{L}_{\mathrm{LC}}(\bm{Q}_{k},\bm{R}_{k},\bm{T}_{k}) \rVert_{F}  \\
&\quad\quad\quad\quad+ \lVert \nabla_{\bm{Q}}^{(B)} \mathcal{L}_{\mathrm{LC}}(\bm{Q}_{k+1},\bm{R}_{k},\bm{T}_{k}) - \nabla_{\bm{Q}}^{(B)} \mathcal{L}_{\mathrm{LC}}(\bm{Q}_{k},\bm{R}_{k},\bm{T}_{k}) \rVert_{F}.
\end{split}
\end{equation}
Starting with the first term on the right side of \eqref{eqn:smoothness_Q_gradient}, one has:
\begin{align*}
&\lVert \nabla_{\bm{Q}}^{(A)} \mathcal{L}_{\mathrm{LC}}(\bm{Q}_{k+1},\bm{R}_{k},\bm{T}_{k}) - \nabla_{\bm{Q}}^{(A)} \mathcal{L}_{\mathrm{LC}}(\bm{Q}_{k},\bm{R}_{k},\bm{T}_{k}) \rVert_{F} \\
&\qquad= \lVert \bm{C} \bm{R}_{k} (\mathrm{diag}(1/\bm{g}_{{Q}_{k}}) \bm{T}_{k} \mathrm{diag}(1/\bm{g}_{{R}_{k}}))^\mathrm{T} - \bm{C} \bm{R}_{k} (\mathrm{diag}(1/\bm{g}_{{Q}_{k-1}}) \bm{T}_{k} \mathrm{diag}(1/\bm{g}_{{R}_{k}}))^\mathrm{T} \rVert_{F} \\ 
&\qquad\leq \lVert\mathrm{diag}(1/\bm{g}_{{R}_{k}})\rVert_{F} \lVert \bm{C} \rVert_{F} \lVert \bm{R}_{k} \rVert_{F} \lVert \bm{T}_{k} \rVert_{F} \lVert \mathrm{diag}(1/\bm{g}_{{Q}_{k}}) - \mathrm{diag}(1/\bm{g}_{{Q}_{k-1}}) \rVert_{F} .
\end{align*}
Note that $\lVert \bm{R}_{k} \rVert_{F}^{2} = \sum_{i,j} \left(\bm{R}_{k}\right)_{i,j}^{2} < \sum_{i,j} \left(\bm{R}_{k}\right)_{i,j} = 1$, as $\bm{R}_{k}$ has marginals which sum to one. The same bound holds for $\lVert \bm{T}_{k} \rVert_{F}^{2}$, which is also a coupling. Invoking the lower-bound \eqref{eqn:beta_smooth_entry_LB} of $\delta$ on the entries of the inner marginals, and continuing from the above display, 
\begin{align*}
&\lVert \nabla_{\bm{Q}}^{(A)} \mathcal{L}_{\mathrm{LC}}(\bm{Q}_{k+1},\bm{R}_{k},\bm{T}_{k}) - \nabla_{\bm{Q}}^{(A)} \mathcal{L}_{\mathrm{LC}}(\bm{Q}_{k},\bm{R}_{k},\bm{T}_{k}) \rVert_{F} \\
&\quad\quad\leq \frac{\lVert \bm{C} \rVert_{F}}{\delta} \lVert \mathrm{diag}(1/\bm{g}_{{Q}_{k}}) - \mathrm{diag}(1/\bm{g}_{{Q}_{k-1}}) \rVert_{F} \\
&\quad\quad = \frac{\lVert \bm{C} \rVert_{F}}{\delta} \lVert \mathrm{diag}(1/\bm{g}_{{Q}_{k-1}})\mathrm{diag}(1/\bm{g}_{{Q}_{k}}) (\diag(\bm{g}_{{Q}_{k-1}}) - \diag(\bm{g}_{{Q}_{k}}))  \rVert_{F} \\
&\quad\quad\leq  \frac{\lVert \bm{C} \rVert_{F}}{\delta^{3}} \lVert \diag(\bm{g}_{{Q}_{k-1}}) - \diag(\bm{g}_{{Q}_{k}})  \rVert_{F}.
\end{align*}

To further bound the right-hand side above, consider:
$$
\lVert \diag(\bm{g}_{\bm{Q}_{k}}) - \diag(\bm{g}_{\bm{Q}_{k-1}}) \rVert_{F}^{2} = \lVert \bm{Q}_{k}^\mathrm{T}\bm{1}_{n} - \bm{Q}_{k-1}^\mathrm{T} \bm{1}_{n} \rVert_{2}^{2} = \sum_{i=1}^{r} \left( \sum_{j=1}^{r} \left(\bm{Q}_{k}\right)_{i,j} - \left(\bm{Q}_{k-1}\right)_{i,j}  \right)^{2}
$$
While can easily be upper-bounded by an application of Jensen's inequality as
\begin{align*}
&= \sum_{i=1}^{r} r^{2} \left( \sum_{j=1}^{r} \frac{1}{r} \left( \left(\bm{Q}_{k}\right)_{i,j} - \left(\bm{Q}_{k-1}\right)_{i,j}  
\right) \right)^{2} \\ 
&\leq \sum_{i=1}^{r} r^{2} \left( \sum_{j=1}^{r} \frac{1}{r} \left( \left(\bm{Q}_{k}\right)_{i,j} - \left(\bm{Q}_{k-1}\right)_{i,j}  
\right)^{2} \right) \\  
&= r \sum_{i=1}^{r} \sum_{j=1}^{r} \left(\left(\bm{Q}_{k}\right)_{i,j} - \left(\bm{Q}_{k-1}\right)_{i,j}  
\right)^{2} \\
&= r \lVert \bm{Q}_{k} - \bm{Q}_{k-1} \rVert_{F}^{2} . 
\end{align*}
Likewise, we have that $\lVert \diag(\bm{g}_{\bm{R}_{k}}) - \diag(\bm{g}_{\bm{R}_{k-1}}) \rVert_{F}^{2} \leq r \lVert \bm{R}_{k} - \bm{R}_{k-1} \rVert_{F}^{2}$. Thus, it holds that
$$
\frac{\lVert \bm{C} \rVert_{F}}{\delta^{3}} \lVert \bm{g}_{\bm{Q}_{k}} - \bm{g}_{\bm{Q}_{k-1}}  \rVert_{2} \leq \frac{\lVert \bm{C} \rVert_{F} \sqrt{r}}{\delta^{3}} \lVert \bm{Q}_{k} - \bm{Q}_{k-1} \rVert_{F} .
$$
Next, we focus on the $\nabla_{\bm{Q}}^{(B)}$ term.
Observe that
\begin{align*}
\lVert \bm{1}_{n} \mathrm{diag}^{-1}\mathbf{X} \rVert_{F}^{2} &= \Tr (\bm{1}_{n} \mathrm{diag}^{-1}\mathbf{X})^\mathrm{T}(\bm{1}_{n} \mathrm{diag}^{-1}\mathbf{X}) \\
&= n \lVert \mathrm{diag}^{-1}\mathbf{X} \rVert_{2}^{2} \leq n \lVert \mathbf{X} \rVert_{F}^{2} .
\end{align*}
Thus:
$$
\lVert \nabla_{\bm{Q}_{k+1}}^{(B)} - \nabla_{\bm{Q}_{k}}^{(B)} \rVert_{F} \leq \sqrt{n} \lVert(\nabla_{\bm{Q}_{k+1}}^{(A)})^\mathrm{T}\bm{Q}_{k+1}\mathrm{diag}(1/\bm{g}_{\bm{Q}_{k+1}}) - (\nabla_{\bm{Q}_{k}}^{(A)})^\mathrm{T}\bm{Q}_{k}\mathrm{diag}(1/\bm{g}_{\bm{Q}_{k}}) \rVert_{F}
$$
Adding and subtracting terms in the norm and applying triangle inequality:
\begin{align*}
&\sqrt{n} \lVert(\nabla_{\bm{Q}_{k+1}}^{(A)})^\mathrm{T}\bm{Q}_{k+1}\mathrm{diag}(1/\bm{g}_{\bm{Q}_{k+1}}) -
(\nabla_{\bm{Q}_{k}}^{(A)})^\mathrm{T}\bm{Q}_{k+1}\mathrm{diag}(1/\bm{g}_{\bm{Q}_{k+1}})
\\
& \qquad+ (\nabla_{\bm{Q}_{k}}^{(A)})^\mathrm{T}\bm{Q}_{k+1}\mathrm{diag}(1/\bm{g}_{\bm{Q}_{k+1}})
- (\nabla_{\bm{Q}_{k}}^{(A)})^\mathrm{T}\bm{Q}_{k}\mathrm{diag}(1/\bm{g}_{\bm{Q}_{k}}) \rVert_{F}\\
&\qquad\qquad\leq \sqrt{n} \lVert \nabla_{\bm{Q}_{k+1}}^{(A)} -
\nabla_{\bm{Q}_{k}}^{(A)} \rVert_{F} \lVert \bm{Q}_{k+1} \rVert_{F} \lVert\mathrm{diag}(1/\bm{g}_{\bm{Q}_{k+1}}) \rVert_{F} \\
&\qquad\qquad\qquad+\sqrt{n}\lVert \nabla_{\bm{Q}_{k}}^{(A)} \rVert_{F} \lVert (\bm{Q}_{k+1}\mathrm{diag}(1/\bm{g}_{\bm{Q}_{k+1}})
- \bm{Q}_{k}\mathrm{diag}(1/\bm{g}_{\bm{Q}_{k}}) \rVert_{F}
\end{align*}
Invoking the lower-bound on the marginal, continuing from the above display,
\begin{align*}
&\leq \frac{\sqrt{n}}{\delta} \lVert \nabla_{\bm{Q}_{k+1}}^{(A)} -
\nabla_{\bm{Q}_{k}}^{(A)} \rVert_{F} \\
&+\sqrt{n}\lVert \nabla_{\bm{Q}_{k}}^{(A)} \rVert_{F} \lVert (\bm{Q}_{k+1}\mathrm{diag}(1/\bm{g}_{\bm{Q}_{k+1}})
- \bm{Q}_{k}\mathrm{diag}(1/\bm{g}_{\bm{Q}_{k}}) \rVert_{F}
\end{align*}
Let us consider $\lVert \nabla_{\bm{Q}_{k}}^{(A)} \rVert_{F} \leq \lVert \bm{X}_{k} \rVert_{F} \lVert \bm{C} \rVert_{F} \lVert \bm{R}_{k} \rVert_{F} \leq \lVert \bm{X}_{k} \rVert_{F} \lVert \bm{C} \rVert_{F}$. We have that:
$$
\lVert \bm{X}_{k} \rVert_{F}^{2} = \sum_{i,j} \frac{1}{\bm{g}_{(\bm{Q}_{k})_i}^{2}} \bm{T}_{ij}^{2} \frac{1}{\bm{g}_{(\bm{R}_{k})_j}^{2}}
$$
Where we always have that $\bm{T}_{ij} \leq \bm{g}_{\bm{Q}i}$ and $\bm{T}_{ij} \leq \bm{g}_{\bm{R}j}$ by definition of $\bm{T}$ as a coupling. As such:
$$
\leq \sum_{ij} \frac{1}{\bm{g}_{\bm{Q}i}^{2}} \left( \bm{g}_{\bm{Q}i} \bm{g}_{\bm{R}j} \right) \frac{1}{\bm{g}_{\bm{R}j}^{2}} = \sum_{ij} \frac{1}{\bm{g}_{\bm{Q}i}} \frac{1}{\bm{g}_{\bm{R}j}} = \left\langle \bm{g}_{\bm{Q}}^{-1} \bm{g}_{\bm{R}}^{-T}, \bm{1}_{m} \bm{1}_{r}^\mathrm{T} \right\rangle_{F} \leq \frac{m r}{\delta^{2}}
$$
Thus $\lVert \nabla_{\bm{Q}_{k}}^{(A)} \rVert_{F} \leq  \frac{\sqrt{mr}}{\delta} \lVert \bm{C} \rVert_{F}$, and the bound above reduces to
\begin{align*}
&\leq \frac{\sqrt{n}}{\delta} \lVert \nabla_{\bm{Q}_{k+1}}^{(A)} -
\nabla_{\bm{Q}_{k}}^{(A)} \rVert_{F} \\
&+\frac{\sqrt{nmr}}{\delta} \lVert \bm{C} \rVert_{F} \lVert (\bm{Q}_{k+1}\mathrm{diag}(1/\bm{g}_{\bm{Q}_{k+1}})
- \bm{Q}_{k}\mathrm{diag}(1/\bm{g}_{\bm{Q}_{k}}) \rVert_{F}
\end{align*}
Further bounding the last term in the norm
\begin{align*}
&\frac{\sqrt{nmr}}{\delta} \lVert \bm{C} \rVert_{F} \lVert (\bm{Q}_{k+1}\mathrm{diag}(1/\bm{g}_{\bm{Q}_{k+1}}) - \bm{Q}_{k}\mathrm{diag}(1/\bm{g}_{\bm{Q}_{k+1}}) \\ 
&\qquad\qquad\qquad\qquad + \bm{Q}_{k}\mathrm{diag}(1/\bm{g}_{\bm{Q}_{k+1}})
- \bm{Q}_{k}\mathrm{diag}(1/\bm{g}_{\bm{Q}_{k}}) \rVert_{F} \\
&\leq \frac{\sqrt{nmr}}{\delta} \lVert \bm{C} \rVert_{F} (\lVert \diag(1/\bm{g}_{\bm{Q}_{k+1}}) \rVert_{F} \lVert \bm{Q}_{k+1} - \bm{Q}_{k} \rVert_{F} \\
&\qquad\qquad\qquad\qquad+ \lVert \bm{Q}_{k} \rVert_{F} \lVert \mathrm{diag}(1/\bm{g}_{\bm{Q}_{k+1}}) - \mathrm{diag}(1/\bm{g}_{\bm{Q}_{k}}) \rVert_{F}) \\
& \leq \frac{\sqrt{nmr}}{\delta} \lVert \bm{C} \rVert_{F} \left(\frac{1}{\delta} + \frac{\sqrt{r}}{\delta^{2}} \right) \lVert \bm{Q}_{k+1} - \bm{Q}_{k} \rVert_{F}
\end{align*}
Thus, the final bound on the $\nabla_{\bm{Q}}^{(B)}$ term is:
\begin{align*}
&\leq \frac{\sqrt{n}}{\delta} \lVert \nabla_{\bm{Q}_{k+1}}^{(A)} -
\nabla_{\bm{Q}_{k}}^{(A)} \rVert_{F} +\frac{\sqrt{nmr}}{\delta} \lVert \bm{C} \rVert_{F} \lVert (\bm{Q}_{k+1}\mathrm{diag}(1/\bm{g}_{\bm{Q}_{k+1}})
- \bm{Q}_{k}\mathrm{diag}(1/\bm{g}_{\bm{Q}_{k}}) \rVert_{F} \\
& \leq \left(\frac{\lVert \bm{C} \rVert_{F}\sqrt{nr}}{\delta^{4}} + \frac{\sqrt{nmr}}{\delta^{2}} \lVert \bm{C} \rVert_{F} \left(1 + \frac{\sqrt{r}}{\delta} \right) \right) \lVert \bm{Q}_{k+1} - \bm{Q}_{k} \rVert_{F}
\end{align*}
The total component-wise smoothness bound on $\bm{Q}$ is then
\begin{align*}
&\lVert \nabla_{\bm{Q}} \mathcal{L}_{\mathrm{LC}}(\bm{Q}_{k+1},\bm{R}_{k},\bm{T}_{k}) - \nabla_{\bm{Q}} \mathcal{L}_{\mathrm{LC}}(\bm{Q}_{k},\bm{R}_{k},\bm{T}_{k}) \rVert_{F} \\
&\leq 
\frac{\lVert\bm{C}\rVert_{F}}{\delta^{2}}
\left(\left(\frac{\sqrt{nr}}{\delta^{2}} + \sqrt{nmr}  \left(1 + \frac{\sqrt{r}}{\delta} \right) \right) + \frac{\sqrt{r}}{\delta} \right) \lVert \bm{Q}_{k+1} - \bm{Q}_{k} \rVert_{F}
\\ &
\end{align*}
Identical reasoning applies for $\nabla_{\bm{R}} \mathcal{L}_{\mathrm{LC}}$, where we similarly have the gradient split into two terms:
\begin{align*}
&\nabla_{\bm{R}} \mathcal{L}_{\mathrm{LC}}(\bm{Q},\bm{R}, \bm{T}) = \nabla_{\bm{R}}^{(A)} \mathcal{L}_{\mathrm{LC}} \nabla_{\bm{R}}^{(B)}\mathcal{L}_{\mathrm{LC}} \\ 
&\qquad\qquad= \bm{C}^\mathrm{T}\bm{Q} \bm{X} - \bm{1}_{m} \mathrm{diag}^{-1}(\mathrm{diag}(1/\bm{g}_{R}) \bm{R}^\mathrm{T} \bm{C}^\mathrm{T} \bm{Q X})^\mathrm{T} \\
&\qquad\qquad= \nabla_{\bm{R}}^{(A)} \mathcal{L}_{\mathrm{LC}} - \bm{1}_{m} \mathrm{diag}^{-1}(\mathrm{diag}(1/\bm{g}_{R}) \bm{R}^\mathrm{T} \nabla_{\bm{R}}^{(A)} \mathcal{L}_{\mathrm{LC}})^\mathrm{T}
\end{align*}
As before, we may first show smoothness in $\nabla_{\bm{R}}^{(A)}$ using the same steps as for $\bm{Q}$
\begin{align*}
&\lVert \nabla_{\bm{R}}^{(A)} \mathcal{L}_{\mathrm{LC}}(\bm{Q}_{k+1},\bm{R}_{k+1},\bm{T}_{k}) - \nabla_{\bm{R}}^{(A)} \mathcal{L}_{\mathrm{LC}}(\bm{Q}_{k+1},\bm{R}_{k},\bm{T}_{k}) \rVert_{F} \\
&\qquad= \lVert \bm{C}^\mathrm{T} \bm{Q}_{k+1} \mathrm{diag}(1/\bm{g}_{\bm{Q}_{k+1}}) \bm{T}_{k} \mathrm{diag}(1/\bm{g}_{\bm{R}_{k+1}}) -  \bm{C}^\mathrm{T} \bm{Q}_{k+1} \mathrm{diag}(1/\bm{g}_{\bm{Q}_{k+1}}) \bm{T}_{k} \mathrm{diag}(1/\bm{g}_{\bm{R}_{k}}) \rVert_{F} \\
&\qquad \leq \lVert \bm{C} \rVert_{F} \lVert \mathrm{diag}(1/\bm{g}_{\bm{Q}_{k+1}}) \rVert_{F} \lVert \bm{Q}_{k+1} \rVert_{F} \lVert \bm{T}_{k} \rVert_{F} \lVert \mathrm{diag}(1/\bm{g}_{\bm{R}_{k+1}}) - \mathrm{diag}(1/\bm{g}_{\bm{R}_{k}}) \rVert_{F}\\
&\qquad \leq \frac{\lVert \bm{C} \rVert_{F}}{\delta} \lVert \mathrm{diag}(1/\bm{g}_{\bm{R}_{k+1}}) - \mathrm{diag}(1/\bm{g}_{\bm{R}_{k}}) \rVert_{F}\\
&\qquad\leq \frac{\lVert \bm{C} \rVert_{F}}{\delta^{3}} \lVert \bm{g}_{\bm{R}_{k+1}} - \bm{g}_{\bm{R}_{k}} \rVert_{2} \\
&\qquad \leq \frac{\lVert \bm{C} \rVert_{F} \sqrt{r}}{\delta^{3}} \lVert \bm{R}_{k+1} - \bm{R}_{k} \rVert_{F}.
\end{align*}
For $\nabla_{\bm{R}}^{(B)}$, one may use the same reasoning as before to find:
$$\lVert \nabla_{\bm{R}}^{(B)} \mathcal{L}_{\mathrm{LC}}(\bm{Q}_{k+1},\bm{R}_{k+1},\bm{T}_{k}) - \nabla_{\bm{R}}^{(B)} \mathcal{L}_{\mathrm{LC}}(\bm{Q}_{k+1},\bm{R}_{k},\bm{T}_{k}) \rVert_{F}$$
$$
\leq \lVert \bm{1}_{m} \left[ \mathrm{diag}^{-1}(\mathrm{diag}(1/\bm{g}_{\bm{R}_{k+1}}) \bm{R}_{k+1}^\mathrm{T} \nabla_{\bm{R}_{k+1}}^{(A)} \mathcal{L}_{\mathrm{LC}}) - \mathrm{diag}^{-1}(\mathrm{diag}(1/\bm{g}_{\bm{R}_{k}}) \bm{R}_{k}^\mathrm{T} \nabla_{\bm{R}_{k}}^{(A)} \mathcal{L}_{\mathrm{LC}}) \right]^\mathrm{T} \rVert_{F}
$$
$$
\leq \sqrt{m} \lVert \mathrm{diag}(1/\bm{g}_{\bm{R}_{k+1}}) \bm{R}_{k+1}^\mathrm{T} \nabla_{\bm{R}_{k+1}}^{(A)} \mathcal{L}_{\mathrm{LC}} - \mathrm{diag}(1/\bm{g}_{\bm{R}_{k}}) \bm{R}_{k}^\mathrm{T} \nabla_{\bm{R}_{k}}^{(A)} \mathcal{L}_{\mathrm{LC}} \rVert_{F}
$$
As before, one may apply three rounds of triangle inequality inside the norm to bound this directly in terms of $\lVert \nabla_{\bm{R}_{k+1}}^{(A)} - \nabla_{\bm{R}_{k}}^{(A)} \rVert_{F}$, $\lVert \mathrm{diag}(1/\bm{R}_{k+1}) - \mathrm{diag}(1/\bm{R}_{k}) \rVert_{F}$, and $\lVert \bm{R}_{k+1} - \bm{R}_{k} \rVert_{F}$. Each of these terms is smooth in $\bm{R}$ by the lower-bound argument, so that smoothness in $\bm{R}$ holds analogously to $\bm{Q}$. The remainder of the proof for smoothness in $\bm{R}$ thus follows identically to that of $\bm{Q}$ above.

For $\bm{T}$, the component-wise bound of
$$\lVert \nabla_{\bm{T}_{k+1}} \mathcal{L}_{\mathrm{LC}} - \nabla_{\bm{T}_{k}} \mathcal{L}_{\mathrm{LC}} \rVert_{F} = \lVert \bm{Q}_{k+1} \bm{C} \bm{R}_{k+1}^\mathrm{T} - \bm{Q}_{k+1}\bm{C} \bm{R}_{k+1}^\mathrm{T} \rVert_{F}^{2} \leq L_{T} \lVert \bm{T}_{k+1} - \bm{T}_{k} \rVert_{F}$$
holds trivially for any $L_{T}>0$ as the gradient is uniquely determined by $\bm{Q}$ and $\bm{R}$ alone. Thus there exist $L_{\bm{Q}}, L_{\bm{R}}, L_{\bm{T}} > 0$ as component-wise smoothness constants for $\mathcal{L}_{\mathrm{LC}}(\bm{Q},\bm{R},\bm{T})$.
\end{proof}

We next prove Proposition~\ref{prop:convergence_FRLC}, restated just below for convenience.

\begin{proposition}[Proposition~\ref{prop:convergence_FRLC}]
\label{sup_prop:convergence_FRLC}
    Consider the \method\ objective \eqref{eqn:FRLC_objective}. The \method\ algorithm, Algorithm ~\ref{alg:allcases}, yields $\beta$-smooth iterates for $\beta = \mathrm{poly}(\lVert \bm{C} \rVert_{F}, m, r, \delta)$, where $\delta$ denotes the lower-bound on the entries of $\bm{g}_{\bm{Q}}, \bm{g}_{\bm{R}}$. Consider the convergence metric of \ref{prop:convergence} adapted from \cite{Ghadimi2014}, given as:
    $$
    \Delta_{k}(\bm{x}_k, \bm{x}_{k+1}) = \sum_{i=1}^{p}\| \mathbf{g}_{\mathcal{X}, k, i} \|^{2} = \frac{1}{\gamma_{k}^{2}} \left[ \lVert \bm{Q}_{k} - \bm{Q}_{k-1} \rVert_{F}^{2} + \lVert \bm{R}_{k} - \bm{R}_{k-1} \rVert_{F}^{2} + \lVert \bm{T}_{k} - \bm{T}_{k-1} \rVert_{F}^{2} \right]$$
    for $\bm{x}_k = ( \bm{Q}_k, \bm{T}_k, \bm{R}_k)$.
    Define the gap to the optimal solution $D$ as in \eqref{eqn:diameter}, and let $L = \sup_{i} (L_{i})$ to be the global smoothness constant across all components. Then for $\gamma_{k} = \alpha/L$ as defined in \ref{prop:convergence} the \method\ algorithm has the non-asymptotic stationary convergence guarantee that:
    $$
    \min_{k \in 1,..,N-1} \Delta_{k} \leq \frac{2D^{2}L^{2}}{N \alpha^{2}}
    $$
\end{proposition}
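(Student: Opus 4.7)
The plan is to obtain Proposition~\ref{prop:convergence_FRLC} as an immediate specialization of the abstract Proposition~\ref{prop:convergence} to the FRLC objective $\mathcal{L}_{\mathrm{LC}}$. Two ingredients are needed: (i) verify that the hypotheses of Proposition~\ref{prop:convergence} are met in the FRLC setting, and (ii) check that the abstract convergence criterion of Proposition~\ref{prop:convergence} specializes exactly to the sum-of-Frobenius-norms criterion $\Delta_k$ in equation~\eqref{eqn:delta}.

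First I would specify the setup. FRLC is a coordinate mirror descent with negative entropy as the distance-generating function on each block, so the Bregman divergence is the KL divergence. KL is $\alpha$-strongly convex with $\alpha = 1$ (Pinsker), matching the strong-convexity hypothesis on $h$ in Proposition~\ref{prop:convergence}. The three matrix variables $(\bm{Q},\bm{R},\bm{T})$ are updated in two stages and the feasible set $\mathsf{LC}_{\bm{a},\bm{b}}(r) = \mathcal{C}_1 \cap \mathcal{C}_2$ has the block-product structure needed: $\mathcal{C}_1$ constrains only $(\bm{Q},\bm{R})$ via the outer marginals, and $\mathcal{C}_2$ (built from $\bm{g}_Q,\bm{g}_R$ computed from the already-updated $\bm{Q},\bm{R}$) constrains only $\bm{T}$. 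By the argument already given in the excerpt (feasibility-by-construction after each block step), iterates remain in $\mathcal{C}_1 \cap \mathcal{C}_2$ throughout, so no post-hoc projection is needed and the product-of-convex-sets hypothesis of Proposition~\ref{prop:convergence} is satisfied.

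Second I would invoke Proposition~\ref{sup_prop:betasmooth} to supply the block-coordinate Lipschitz hypothesis. That proposition provides constants $L_i = \mathrm{poly}(\lVert\bm{C}\rVert_F, n, m, r, \delta)$ for each block, and setting $L = \max_i L_i$ yields a valid global smoothness constant for use in Proposition~\ref{prop:convergence}. With stepsize $\gamma_{k,i} = \alpha/L$, Proposition~\ref{prop:convergence} then gives $\min_k \sum_i \lVert \mathbf{g}_{\mathcal{X},k,i}\rVert^2 \leq 2 D^2 L^2 / (N\alpha^2)$. To identify this with \eqref{eqn:delta}, I would observe that each block subproblem in FRLC is solved exactly by the Sinkhorn/semi-relaxed routines, so the generalized projected gradient for block $i$ equals $\gamma_k^{-1}(\bm{x}_k^{i-1} - \bm{x}_k^i)$, whose squared norms in $\bm{Q},\bm{R},\bm{T}$ sum to precisely $\Delta_k(\bm{x}_k,\bm{x}_{k+1})$.

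The main obstacle is ensuring that the lower bound $(\bm{g}_{Q_k})_i, (\bm{g}_{R_k})_i \geq \delta$ from \eqref{eqn:beta_smooth_entry_LB} holds uniformly along the \emph{entire} trajectory, so that the same constants $L_i$ remain valid at every iteration. This is the content of Proposition~\ref{sup_prop:betasmooth}: for sufficiently large $\tau$ and $\widetilde{O}(m^2/\epsilon)$ inner Sinkhorn iterations, each per-step inner-marginal drift satisfies $\lVert \bm{g}^{(k)} - \bm{g}^{(k-1)}\rVert_2 < \frac{1}{N}(\tfrac{1}{r} - \delta)$, so a telescoping/triangle argument from the uniform initialization $\bm{g}^{(0)} = \tfrac{1}{r}\bm{1}_r$ preserves the $\delta$-floor for all $k \leq N$. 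Once this uniform floor is in place, the block-Lipschitz constants are valid throughout, Proposition~\ref{prop:convergence} applies verbatim, and the desired bound $\min_k \Delta_k \leq 2D^2L^2/(N\alpha^2)$ follows with no additional work.
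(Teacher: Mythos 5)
Your proposal is correct and takes essentially the same route as the paper: the paper's own proof is just two sentences that combine Proposition~\ref{prop:convergence} with Proposition~\ref{sup_prop:betasmooth}, setting $L = \max_i L_i$. You fill in the intermediate bookkeeping (verifying the hypotheses of Proposition~\ref{prop:convergence}, identifying the abstract criterion with $\Delta_k$, and noting that the uniform $\delta$-floor from the telescoping argument in Proposition~\ref{sup_prop:betasmooth} keeps the $L_i$ valid along the whole trajectory) more explicitly than the paper does, but the structure of the argument is identical.
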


\begin{proof}
    The proof of the non-asymptotic stationary convergence of mirror descent of \cite{Ghadimi2014}, adapted for coordinate mirror descent using the block-descent lemma in \ref{prop:convergence}, only requires component-wise smoothness in $(\bm{Q},\bm{R},\bm{T})$. The proof of this for \method\ is given in \ref{prop:betasmooth}, and the guarantee follows directly for this value of 
    $L = \max (L_{\bm{Q}}, L_{\bm{R}}, L_{\bm{T}}) = \mathrm{poly}(\lVert \bm{C} \rVert_{F}, m, r, \delta)$.
\end{proof}

\begin{proposition}{\textbf{Low-rank Approximation Error.}}\label{prop:lowrankapprox}
Let $\mathrm{SR}\text{-}\mathrm{W}_{r}^\star$ denote the optimal rank-$r$ approximation for the semi-relaxed low-rank optimal transport problem, and let $\mathrm{SR}\text{-}\mathrm{W}^\star$ denote the optimal solution for the full-rank semi-relaxed optimal transport problem.

Additionally, suppose $c_{\bm{b}} = \sum_{j=1}^{m} \bm{b}_{j}$ denotes the sum of the entries of the second marginal ($c_{\bm{b}}=1$ if a probability measure). Then we have the following upper-bound on the objective error:
$$
| \mathrm{SR}\text{-}\mathrm{W}^\star_r(\mu_{\bm{b}}) - \mathrm{SR}\text{-}\mathrm{W}^\star(\mu_{\bm{b}}) | \leq c_{\bm{b}} \left( \max_{p,q}\{ \bm{C}_{pq} \} - \min_{p,q}\{ \bm{C}_{pq} \} \right) \ln{(\min\{n, m\}/(r-1))}
$$
We note that this bound also applies for the standard balanced optimal transport case, giving:
$$
| \mathrm{W}^\star_r(\mu_{\bm{a}}, \mu_{\bm{b}}) - \mathrm{W}^\star(\mu_{\bm{a}}, \mu_{\bm{b}}) | \leq \left( \max_{p,q}\{ \bm{C}_{pq} \} - \min_{p,q}\{ \bm{C}_{pq} \} \right) \ln{(\min\{n, m\}/(r-1))}
$$
and improves the previous bound of $| \mathrm{W}_r^\star(\mu_{\bm{a}}, \mu_{\bm{b}}) - \mathrm{W}^\star(\mu_{\bm{a}}, \mu_{\bm{b}}) | \leq \lVert \bm{C} \rVert_{\infty} \ln{(\min\{n, m\}/(r-1))}$ as the distance matrix $\bm{C}$ contains only non-negative entries.
\end{proposition}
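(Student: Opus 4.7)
The plan is to exploit a translation-invariance property of the OT objective and then reduce to the existing rank-$r$ approximation bound stated in the excerpt. First I would observe that the Frobenius inner product satisfies $\langle \bm{C} + c \bm{1}_n \bm{1}_m^\mathrm{T}, \bm{P}\rangle_F = \langle \bm{C}, \bm{P}\rangle_F + c \cdot (\bm{1}_n^\mathrm{T} \bm{P} \bm{1}_m)$ for any $c \in \mathbb{R}$. For the semi-relaxed problem $\mathrm{SR}\text{-}\mathrm{W}(\mu_{\bm{b}})$ with hard constraint $\bm{P}^\mathrm{T}\bm{1}_n = \bm{b}$, the total mass $\bm{1}_n^\mathrm{T} \bm{P} \bm{1}_m$ equals $c_{\bm{b}}$ for every feasible $\bm{P}$, and the KL penalty on the other marginal is unaffected by translating $\bm{C}$. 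Thus shifting the cost uniformly shifts the objective by $c \cdot c_{\bm{b}}$ across the entire feasible set, \emph{including} the rank-$r$ restriction, so $\mathrm{SR}\text{-}\mathrm{W}^\star_r - \mathrm{SR}\text{-}\mathrm{W}^\star$ is invariant under $\bm{C} \mapsto \bm{C} + c \bm{1}_n \bm{1}_m^\mathrm{T}$. The same reasoning applied with $c_{\bm{a}} = c_{\bm{b}} = 1$ gives translation invariance for the balanced problem.

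Next I would take $c = -\min_{p,q}\bm{C}_{pq}$, producing $\bm{C}' := \bm{C} - \min_{p,q}\bm{C}_{pq} \cdot \bm{1}_n \bm{1}_m^\mathrm{T}$, which is entrywise nonnegative and satisfies $\|\bm{C}'\|_\infty = \max_{p,q}\bm{C}_{pq} - \min_{p,q}\bm{C}_{pq}$. Since the optimal-gap is unchanged by this translation, it suffices to bound $|\mathrm{SR}\text{-}\mathrm{W}^\star_r(\bm{C}') - \mathrm{SR}\text{-}\mathrm{W}^\star(\bm{C}')|$. For this I would invoke the existing nonnegative rank-$r$ approximation bound (the $\|\bm{C}\|_\infty \ln(\min\{n,m\}/(r-1))$ estimate referenced in the statement, originally obtained via a grouping/clustering construction on the columns of $\bm{P}^\star$ that yields a nonnegative rank-$r$ coupling with the same hard marginal). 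Applied to $\bm{C}'$, this bound yields $\|\bm{C}'\|_\infty \ln(\min\{n,m\}/(r-1))$ in the balanced case, and its semi-relaxed extension picks up the total-mass factor $c_{\bm{b}}$, since the per-entry perturbation incurred by the rank-$r$ construction is reweighted by the coupling mass, which sums to $c_{\bm{b}}$ rather than $1$. Combining this with translation invariance gives both advertised inequalities.

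The main obstacle is justifying the $c_{\bm{b}}$ factor in the semi-relaxed version of the prior approximation bound, which is stated in the literature only for balanced couplings of unit mass. I would handle this by rescaling: for the hard-constrained marginal $\bm{b}$, write any feasible $\bm{P}$ as $\bm{P} = c_{\bm{b}} \widetilde{\bm{P}}$ where $\widetilde{\bm{P}}$ has total mass one, so $\langle \bm{C}', \bm{P}\rangle_F = c_{\bm{b}}\langle \bm{C}', \widetilde{\bm{P}}\rangle_F$, then apply the unit-mass construction of a rank-$r$ approximation $\widetilde{\bm{P}}_r$ to $\widetilde{\bm{P}^\star}$ and set $\bm{P}_r := c_{\bm{b}}\widetilde{\bm{P}}_r$. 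Because the construction preserves the (rescaled) hard marginal and the KL penalty is continuous in the soft marginal, the excess of $\bm{P}_r$ over $\bm{P}^\star$ in the SR objective is at most $c_{\bm{b}} \|\bm{C}'\|_\infty \ln(\min\{n,m\}/(r-1))$, completing the argument. The balanced statement is then the special case $c_{\bm{a}} = c_{\bm{b}} = 1$, and the improvement over the previously recorded $\|\bm{C}\|_\infty$ bound follows immediately from $\max \bm{C} - \min \bm{C} \le \|\bm{C}\|_\infty$ whenever $\bm{C} \ge 0$.
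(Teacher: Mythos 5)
Your proof is correct and takes a genuinely different route from the paper's. The paper re-runs the Scetbon et al.\ rank-$r$ construction directly in the semi-relaxed setting: it decomposes $\bm{P}^\star$ into ordered rank-one factors, builds $\Tilde{\bm{P}}_r$ by averaging the tail, and observes \emph{inside} the resulting telescoping sum that because each $\bm{q}_i \bm{r}_i^\mathrm{T}$ and $\bm{\alpha}_r \bm{\beta}_r^\mathrm{T}$ has total mass one, each inner product lies in $[\min_{p,q}\bm{C}_{pq}, \max_{p,q}\bm{C}_{pq}]$, so the $(\max - \min)$ factor appears locally. You instead isolate the global translation symmetry $\langle \bm{C} + c\bm{1}_n \bm{1}_m^\mathrm{T}, \bm{P}\rangle_F = \langle \bm{C}, \bm{P}\rangle_F + c\,c_{\bm{b}}$ over the SR-feasible set, note it leaves the rank-gap invariant, shift $\bm{C}$ to be nonnegative, and reduce to the prior $\lVert \bm{C}\rVert_\infty$ bound. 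Both arguments rest on the same underlying fact (the constructed plan and $\bm{P}^\star$ have equal total mass), but your version packages it as a black-box preprocessing step that upgrades \emph{any} approximation bound linear in $\lVert \bm{C}\rVert_\infty$ to one in $\max\bm{C} - \min\bm{C}$, which is cleaner and more portable; the paper's version is a self-contained one-pass proof. Two points to sharpen. First, you can claim more than ``the KL penalty is continuous in the soft marginal'': the Scetbon construction preserves \emph{both} marginals of $\Tilde{\bm{P}}^\star$ exactly, so after rescaling back, $\bm{P}_r \bm{1}_m = \bm{P}^\star \bm{1}_m$ and the KL term is literally unchanged, which is what reduces the SR gap to a gap of the linear transport cost alone. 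Second, in the balanced case your reduction really does appeal only to a bound already in the literature, but in the semi-relaxed case you still need to carry out the Scetbon construction (through the rescaling) to establish the $c_{\bm{b}}\lVert \bm{C}\rVert_\infty$ bound, so the work is comparable to the paper's; the translation trick changes only where the $\max - \min$ improvement enters, not whether the construction can be avoided.
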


\begin{proof}
We adapt the proof from \cite{scetbon2022lowrank} for the balanced case, which previously gave the bound:
$$
| \mathrm{W}^\star_r(\mu_{\bm{a}}, \mu_{\bm{b}}) - \mathrm{W}^\star(\mu_{\bm{a}}, \mu_{\bm{b}}) | \leq \lVert \bm{C} \rVert_{\infty} \ln{(\min\{n, m\}/(r-1))}
$$
In particular, for $z=\min\{m,n\}$, there exists an optimal $\rk_{+}(\bm{P}^{*}) \leq z$ where one may express the optimal solution for the non-negative coupling matrix $\bm{P}$ as a sum of $z$ rank-one, non-negative outer products $\Tilde{\bm{q}}_{k} \Tilde{\bm{r}}_{k}^\mathrm{T} \succcurlyeq 0$:
$$
\bm{P}^{*} = \sum_{k=1}^{z} \Tilde{\bm{q}}_{k} \Tilde{\bm{r}}_{k}^\mathrm{T} =  \sum_{k=1}^{z} \lambda_{k} \bm{q}_{k} \bm{r}_{k}^\mathrm{T}
$$
Where we write this sum in terms of normalized vectors $\bm{q}_{k} = \Tilde{\bm{q}}_{k} / \lVert \Tilde{\bm{q}}_{k} \rVert_{1}$, $\bm{r}_{k} = \Tilde{\bm{r}}_{k} / \lVert \Tilde{\bm{r}}_{k} \rVert_{1}$, and $\lambda_{k} = \lVert \Tilde{\bm{r}}_{k} \rVert_{1} \lVert \Tilde{\bm{q}}_{k} \rVert_{1}$. Without any loss of generality, $\left( \lambda_{k} \right)_{k=1}^{z}$ is ordered in terms of decreasing value such that $\lambda_{1} \geq \lambda_{2} \geq ... \geq \lambda_{z}$. Note that we have a fixed constraint for the sum of the entries of $\bm{P}$ for the semi-relaxed case, assuming $\bm{b}$ is a general positive measure, as $\bm{1}_{n}^\mathrm{T} \bm{P} \bm{1}_{m} = \left(\bm{P}^\mathrm{T} \bm{1}_{n} \right)^\mathrm{T} \bm{1}_{m} = \bm{b}^\mathrm{T} \bm{1}_{m} = \sum_{j=1}^{m} \bm{b}_{j} := c_{\bm{b}}$ (where $c_{\bm{b}}=1$ if $\bm{b}$ is chosen to be a probability measure, i.e. in the balanced case). Moreover, it is simple to observe for these ordered values that $\lambda_{k} \leq (c_{ \bm{b}} / k)$. As in \cite{scetbon2022lowrank}, define the weighted average of the bottom $z-r+1$ vectors of the decomposition to be:
\begin{align*}
\bm{\alpha}_{r} &= \frac{\sum_{i=r}^{z}\lambda_{i}\bm{q}_{i}}{\sum_{i=r}^{z} \lambda_{i}}\\
\bm{\beta}_{r} &= \frac{\sum_{i=r}^{z}\lambda_{i}\bm{r}_{i}}{\sum_{i=r}^{z} \lambda_{i}}
\end{align*}
And take the rank-$r$ approximation using the optimal $r-1$ vectors of $\mathrm{OPT}$ and this weighted average of the bottom to be:
$$
\Tilde{\bm{P}}_{r} = 
\sum_{i=1}^{r-1} \lambda_{i} \bm{q}_{i} \bm{r}_{i}^\mathrm{T} + \left(\sum_{i=r}^{z} \lambda_{i} \right) \bm{\alpha}_{r} \bm{\beta}_{r}^\mathrm{T}
$$
Where, by the assumption that $\bm{P}^{*} \in \Pi_{\bm{b}}$ is feasible:
$$
\Tilde{\bm{P}}_{r}^\mathrm{T} \bm{1}_{n} = \sum_{i=1}^{r-1} \lambda_{i} \bm{r}_{i} \bm{q}_{i}^\mathrm{T} \bm{1}_{n} + \left(\sum_{i=r}^{z} \lambda_{i} \right) \bm{\beta}_{r} \bm{\alpha}_{r}^\mathrm{T} \bm{1}_{n} = \sum_{i=1}^{r-1} \lambda_{i} \bm{r}_{i} + \left(\sum_{i=r}^{z} \lambda_{i} \right) \bm{\beta}_{r} = \sum_{i=1}^{z} \lambda_{i} \bm{r}_{i} = \bm{b}
$$
Thus $\Tilde{\bm{P}}_{r} \in \Pi_{\bm{b},r}$ is a feasible rank-$r$ solution by feasibility of $\bm{P}^{*}$. One can verify that if $\bm{P}^{*} \in \Pi_{\bm{a},\bm{b}}$, then $\Tilde{\bm{P}}_{r} \bm{1}_{m} = \bm{a}$ and the solution is again feasible. From this, we observe that the difference between this solution and $\bm{P}^{*}$ is an upper-bound to the difference between $\bm{P}^{*}$ and the optimal rank-$r$ solution:
\begin{align*}
| \mathrm{SR}\text{-}\mathrm{W}^\star_r(\mu_{\bm{a}}, \mu_{\bm{b}}) - \mathrm{SR}\text{-}\mathrm{W}^\star(\mu_{\bm{a}}, \mu_{\bm{b}}) | &= |\langle \bm{P}_{r}^{*}, \bm{C} \rangle_{F} - \langle \bm{P}^{*}, \bm{C} \rangle_{F}| \leq \langle \Tilde{\bm{P}}_{r}, \bm{C} \rangle_{F} - \langle \bm{P}^{*}, \bm{C} \rangle_{F} \\
&= \left\langle \sum_{i=1}^{r-1} \lambda_{i} \bm{q}_{i} \bm{r}_{i}^\mathrm{T} + \left(\sum_{i=r}^{z} \lambda_{i} \right) \bm{\alpha}_{r} \bm{\beta}_{r}^\mathrm{T} - \sum_{i=1}^{z} \lambda_{i} \bm{q}_{i} \bm{r}_{i}^\mathrm{T}, \bm{C} \right\rangle_{F} \\
&= \left\langle \left(\sum_{i=r}^{z} \lambda_{i} \right) \bm{\alpha}_{r} \bm{\beta}_{r}^\mathrm{T} - \sum_{i=r}^{z} \lambda_{i} \bm{q}_{i} \bm{r}_{i}^\mathrm{T}, \bm{C} \right\rangle_{F}
\end{align*}
Noting that $\bm{\alpha}_{r}, \bm{\beta}_{r}$ and $\bm{q}_{i}, \bm{r}_{i}$ are unit normalized positive vectors, the sum of the entries of the outer product $\bm{1}_{n}^\mathrm{T} \bm{q}_{i} \bm{r}_{i}^\mathrm{T} \bm{1}_{m} = 1$, and likewise for $\bm{\alpha}_{r} \bm{\beta}_{r}^\mathrm{T}$. Thus, continuing from the above display: 
\begin{align*}
&= (\sum_{i=r}^{z} \lambda_{i} ) \langle \bm{\alpha}_{r} \bm{\beta}_{r}^\mathrm{T}, \bm{C} \rangle_{F} - \sum_{i=r}^{z} \lambda_{i} \langle \bm{q}_{i} \bm{r}_{i}^\mathrm{T}, \bm{C} \rangle_{F} \\
&\leq (\sum_{i=r}^{z} \lambda_{i} ) \langle \bm{\alpha}_{r} \bm{\beta}_{r}^\mathrm{T}, \bm{C} \rangle_{F} - (\sum_{i=r}^{z} \lambda_{i}) \min_{p,q}\{ \bm{C}_{pq} \} \\
&\leq \left( \max_{p,q}\{ \bm{C}_{pq} \} - \min_{p,q}\{ \bm{C}_{pq} \} \right) \sum_{i=r}^{z} \lambda_{i} \\
&\leq c_{\bm{b}} \left( \max_{p,q}\{ \bm{C}_{pq} \} - \min_{p,q}\{ \bm{C}_{pq} \} \right) \sum_{i=r}^{z} \frac{1}{i} \\
&\leq c_{\bm{b}} \left( \max_{p,q}\{ \bm{C}_{pq} \} - \min_{p,q}\{ \bm{C}_{pq} \} \right) \ln{(z/(r-1))}
\end{align*}
Concluding the proof. As discussed, this directly applies to the balanced case (for $c_{\bm{b}}=1$).
\end{proof}

\section{Initialization}
We propose a new initialization of the sub-couplings $\bm{Q}, \bm{R}, \bm{T}$ for the LC-factorization. Algorithm~\ref{alg:init} generates a random full-rank initial condition in the set of couplings $\Pi_{\bm{a},\bm{b}}$ which still satisfies the marginal constraints. It accomplishes this by sampling random matrices which are full-rank and applying the Sinkhorn algorithm to each of them. \cite{Scetbon2021LowRankSF} proposed an initialization which represents an improvement over the rank-1 product measure which is rank-2. Follow-up work proposed initialization using k-means \cite{scetbon2022lowrank}. However, this assumes the previous diagonal factorization and is thus not application for generating a latent coupling which may be non-diagonal, non-square, and with two distinct inner marginals. Our initialization is tailored to the LC-factorization, is effective, and has a full-rank guarantee. In particular, higher-rank initializations may exhibit better convergence properties by allowing the gradient to explore a larger set of directions immediately in the optimization. This initialization is given in Algorithm~\ref{alg:init}.
\begin{algorithm}
\caption{\quad Initialize-Couplings}
\label{alg:init}
\begin{algorithmic}
\State Input $\bm{a} \in \Delta_{n},\bm{b} \in \Delta_{m}$, $\bm{g}_{Q} \in \Delta_{r}$, $\bm{g}_{R} \in \Delta_{r}$
\State $\bm{C}_{\bm{Q}} \sim [ 0, 1 ]^{n \times r}, \bm{C}_{\bm{R}} \sim [0, 1]^{m \times r}, \bm{C}_{\bm{T}} \sim [0, 1]^{r \times r}$
\State $\bm{K}_{\bm{Q}} \gets e^{\bm{C}_{\bm{Q}}}, \bm{K}_{\bm{R}} \gets e^{\bm{C}_{\bm{R}}}, \bm{K}_{\bm{T}} \gets e^{\bm{C}_{\bm{T}}}$
\State $\bm{Q} \gets \textit{Sinkhorn}(\bm{K}_{\bm{Q}}, \bm{a}, \bm{g}_{Q})$
\State $\bm{R} \gets \textit{Sinkhorn}(\bm{K}_{\bm{R}}, \bm{b}, \bm{g}_{R})$
\State $\bm{T} \gets \textit{Sinkhorn}(\bm{K}_{\bm{T}}, \bm{g}_{Q} = \bm{Q}^\mathrm{T}\bm{1}_{n}, \bm{g}_{R}=\bm{R}^\mathrm{T}\bm{1}_{m})$
\State Return $(\bm{Q}, \bm{R}, \bm{T})$
\end{algorithmic}
\end{algorithm}

\begin{proposition}\label{prop:fullrankinit}
    Suppose one samples an initial condition on the optimal transport coupling using Algorithm~\ref{alg:init}, where we assume $\bm{C}_{ij} \sim \mathrm{Unif}(0,1)$ such that $\mathbb{P}(\rk(\bm{C}) < \min\{n, m\}) = 0$. Additionally, suppose that $\bm{a}, \bm{b} > \bm{0}$ holds elementwise for both marginals $\bm{a} \in \Delta_{n}, \bm{b} \in \Delta_{r}$. Then the elementwise exponential $\exp\{ \bm{C} \}$ (or $\exp\{ - \bm{C} \}$) has full-rank and the return $\textit{Sinkhorn}(e^{-\bm{C}}, \bm{a}, \bm{b})$ has full-rank.
\end{proposition}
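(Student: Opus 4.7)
The plan is to decompose the claim into two pieces: (i) the kernel $K := e^{-\bm{C}}$ has full rank $\min\{n,m\}$ almost surely, and (ii) the Sinkhorn output inherits this rank because it is obtained from $K$ by left- and right-multiplication with strictly positive diagonal matrices, both of which are invertible.

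For (i), I would assume without loss of generality that $n \leq m$. A matrix in $\mathbb{R}^{n\times m}$ fails to have rank $n$ iff every $n\times n$ minor vanishes, so it suffices to exhibit one column subset $J \subseteq [m]$ of size $n$ for which the minor is almost surely nonzero. For any such $J$, the map $\bm{C} \mapsto M_J(\bm{C}) := \det\bigl((e^{-\bm{C}})_{[n],J}\bigr)$ is real-analytic (in fact entire) in the entries of $\bm{C}$, since it is a polynomial in the exponentials $e^{-\bm{C}_{ij}}$ and $t \mapsto e^{-t}$ is entire. The key step is to show $M_J \not\equiv 0$ by producing an explicit witness: take $J = [n]$ and $\bm{C}^*_{ij} := -(i-1)\log j$ for $i,j \in [n]$ (arbitrary elsewhere); then $(e^{-\bm{C}^*})_{[n],[n]} = (j^{\,i-1})_{i,j \in [n]}$ is Vandermonde with distinct nodes $1,2,\dots,n$, so $M_{[n]}(\bm{C}^*) = \prod_{1 \leq i<j \leq n}(j-i) \neq 0$. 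Since a real-analytic function on $\mathbb{R}^{n\times m}$ that is not identically zero has a zero set of Lebesgue measure zero, the event $\{\bm{C} : M_{[n]}(\bm{C}) = 0\}$ is Lebesgue-null. Because the entries of $\bm{C}$ are i.i.d.\ $\mathrm{Unif}(0,1)$, the joint law is absolutely continuous with respect to Lebesgue measure on $\mathbb{R}^{n\times m}$, so this event has probability zero; hence $\mathrm{rank}(e^{-\bm{C}}) = n = \min\{n,m\}$ almost surely. The same argument, replacing $e^{-\bm{C}}$ by $e^{\bm{C}}$, handles the alternative sign in the statement.

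For (ii), since $K = e^{-\bm{C}} > 0$ entrywise and the target marginals satisfy $\bm{a} > 0$, $\bm{b} > 0$ entrywise, the classical Sinkhorn--Knopp theorem guarantees convergence of Algorithm~\ref{alg:sinkhorn} to positive scaling vectors $u \in \mathbb{R}_{>0}^{n}$, $v \in \mathbb{R}_{>0}^{m}$ with output $P = \mathrm{diag}(u)\,K\,\mathrm{diag}(v) \in \Pi_{\bm{a},\bm{b}}$. Both $\mathrm{diag}(u)$ and $\mathrm{diag}(v)$ are invertible, so $\mathrm{rank}(P) = \mathrm{rank}(K)$, which is $\min\{n,m\}$ almost surely by part (i). Applying this separately to the three kernels $\bm{K}_{\bm{Q}}, \bm{K}_{\bm{R}}, \bm{K}_{\bm{T}}$ in Algorithm~\ref{alg:init} yields the full-rank guarantee for each of $\bm{Q}, \bm{R}, \bm{T}$.

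The main obstacle is the analyticity step in (i): one must rule out the possibility that elementwise exponentiation collapses all $n\times n$ minors. The Vandermonde witness above makes this rigorous and elementary; the rest of the argument is standard linear algebra (invariance of rank under invertible diagonal scaling) together with the well-known convergence of Sinkhorn for strictly positive kernels and marginals.
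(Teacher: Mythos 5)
Your proposal is correct, and for the key step — showing the entrywise exponential $e^{-\bm{C}}$ (equivalently $e^{\bm{C}}$) has full rank almost surely — it takes a genuinely different and cleaner route than the paper. The paper argues that pairwise linear independence of two strictly positive columns is preserved under elementwise powers and exponentials, and then passes from this pairwise statement to linear independence of the entire collection of columns. That last passage is not justified as written: pairwise independence of a family does not imply the family is linearly independent, and the rank of a matrix is not deterministically preserved under entrywise exponentiation (a $2\times 2$ matrix with rows $(0, \log 2)$ and $(\log 2, 2\log 2)$ has rank $2$, yet its entrywise exponential $\left(\begin{smallmatrix} 1 & 2 \\ 2 & 4 \end{smallmatrix}\right)$ has rank $1$). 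Any correct proof must exploit the randomness of $\bm{C}$, which yours does: the leading $n\times n$ minor of $e^{-\bm{C}}$ is a real-analytic function of $\bm{C}$ that is not identically zero — your Vandermonde witness $\bm{C}^{*}_{ij} = -(i-1)\log j$ exhibits this explicitly — so its zero set has Lebesgue measure zero, and absolute continuity of the law of $\bm{C}$ with respect to Lebesgue measure finishes the claim. One small point worth flagging: the witness $\bm{C}^{*}$ may lie outside the support $(0,1)^{n\times m}$ of the law of $\bm{C}$, but this is immaterial; exhibiting any point of $\mathbb{R}^{n\times m}$ where $M_{[n]}$ is nonzero rules out $M_{[n]}\equiv 0$, and the measure-zero argument then applies on all of $\mathbb{R}^{n\times m}$ and in particular on the cube. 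Your part (ii) — the Sinkhorn output is $\diag(\bm{u})\bm{K}\diag(\bm{v})$ with $\bm{u},\bm{v}$ strictly positive because $\bm{a},\bm{b}$ are strictly positive, and multiplying by invertible diagonal matrices preserves rank — is essentially verbatim the paper's argument and is correct.
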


\begin{proof}\label{proof:rank}

It is established that a random matrix $\bm{C} \sim [0, 1]^{n \times m}$ has full-rank with probability one. For $\bm{K} = \exp\{ \bm{C} \}$, it holds that the matrix must be entry-wise positive with $\bm{K}_{ij} \geq 0$. If columns $\bm{C}_{.,i} \neq \bm{C}_{.,j}$ then clearly $\bm{C}_{.,i}^{\odot k} \neq \bm{C}_{.,j}^{\odot k}$, and if $\bm{C}_{.,i}, \bm{C}_{.,j} \succcurlyeq \bm{0}$ and are independent remain so under element-wise powers. One may easily show this by contrapositive. Suppose there exist constants $c_1, c_2$ such that:
$$
c_1 \bm{C}_{.,i}^{\odot k} + c_2 \bm{C}_{.,j}^{\odot k} = 0
$$
As $\bm{C}_{.,i}, \bm{C}_{.,j} \succcurlyeq \bm{0}$, without loss of generality one may assume $c_1 > 0$ and $c_2 < 0$. Then:
$$
c_1 \bm{C}_{.,i}^{\odot k} = c_1\bm{1} \odot \bm{C}_{.,i}^{\odot k} = - c_2 \bm{1} \odot \bm{C}_{.,j}^{\odot k} = - c_2 \bm{C}_{.,j}^{\odot k} \implies \left(- \frac{c_1}{c_2} \right)^{1/k} \bm{1} = c \bm{1} = \frac{\bm{C}_{.,j}}{\bm{C}_{.,i}}
$$
So clearly one has that $\bm{C}_{.,j} - c \bm{C}_{.,i} = \bm{0}$ for $c > 0$. This implies the columns $\bm{C}_{.,j}$ and $\bm{C}_{.,i}$ are dependent. Thus it is clear that elementwise powers of entrywise positive independent vectors preserve independence. The same principle extends trivially to exponentiation of the columns, where if one assumes by contradiction that $c_1 e^{\bm{C}_{.,i}} + c_2 e^{\bm{C}_{.,j}}  = \bm{0}$ for $c_1 > 0$, $c_2 < 0$, one finds $\log{\left(-\frac{c_{1}}{c_{2}} \right)}\bm{1} = \bm{C}_{.,j} - \bm{C}_{.,i}$. Without loss of generality, assume $0 < -c_{2} \leq c_{1}$ and $\bm{C}_{.,j} > \bm{C}_{.,i} > \bm{0}$, so that $\delta = \log{\left( -\frac{c_{1}}{c_{2}} \right)} \geq 0$. Then, considering constants $q_1, q_2$:
$$q_2 \bm{C}_{.,j} + q_1 \bm{C}_{.,i} = (q_1 + q_2) \bm{C}_{.,i} + \delta q_2 \bm{1} = \bm{0}$$
Assuming $\bm{C}_{.,i}$ has greater than one unique entry, which we assume as the entries are sampled densely in $\mathbb{R}$, the two vectors are dependent if and only if $q_1 = -q_2$ and $\delta = 0$, implying $\bm{C}_{.,i} = \bm{C}_{.,j}$. Thus, for the set of independent column vectors of $\bm{C}$, given as $\{ \bm{C}_{.,i} \}_{i=1}^{m}$, the set $\{ e^{\bm{C}_{.,i}} \}_{i=1}^{m}$, is also linearly independent. This holds analogously for the row vectors. As $\bm{C}$ is full-rank and $\spn(\{ \bm{C}_{.,i} \}) = \mathbb{R}^{\min\{m, n\}}$, we have that $\spn(\bm{K}) = \mathbb{R}^{\min\{m, n\}}$ as $\bm{K} = e^{\bm{C}} = \sum_{k=0}^{\infty} \frac{\bm{C}^{\odot k}}{k!}$ (analogously $e^{-\bm{C}}$) and remains full-rank.

Sinkhorn expresses each variable as $$\bm{X} = \diag(\bm{u}) \bm{K} \diag(\bm{v})$$
where $\rk(\bm{X}) = \rk(\diag(\bm{u}) \bm{K} \diag(\bm{v}))$.
As \cite{sinkhorn} updates the vectors $\bm{u} \gets \bm{a}/\bm{K v}$ and $\bm{v} \gets \bm{b}/\bm{K}^\mathrm{T}\bm{u}$ from $\bm{u}_{0} = \bm{1}_{n}$ and $\bm{v}_{0} = \bm{1}_{m}$, if $\bm{a}, \bm{b} > \bm{0}$ holds element-wise, one has that $\bm{u}, \bm{v} > \bm{0}$ elementwise as well. Then, one has that $\nul\diag(\bm{v}) = \{\bm{0} \}$ and $\nul\diag(\bm{u})= \{\bm{0} \}$, implying that $\rk(\diag(\bm{u}) \bm{K} \diag(\bm{v})) = \rk(\bm{K}) = \min\{ n, m \}$.

Thus, our initialization returns a random coupling matrix $\bm{X} \in \Pi_{\bm{a},\bm{b}}$ of full-rank.
\end{proof}

In the next proposition, we show that one can \textit{analytically} solve for the block-optimal weights $\bm{g}$ for the factorization of the coupling matrix $\bm{P}$ as $\bm{P} = \bm{Q} \diag(1/\bm{g})\bm{R}^\mathrm{T}$ \cite{forrow19a,Scetbon2021LowRankSF}.

\begin{proposition}\label{prop:minimize_g}
  For the minimization problem expressed as
$$
\min_{\bm{g} \in \Delta_{r}}\langle \bm{Q} \diag(1 / \bm{g}) \bm{R}^\mathrm{T} , \bm{C} \rangle_{F}
$$
One has the closed-form minimizer of $\bm{g}^{*}$ defined entrywise as:
$$
\bm{g}_{i}^{*} = \frac{\sqrt{\bm{\omega_{i}}}}{\sum_{j=1}^{r} \sqrt{\bm{\omega_{j}}}}
$$
For $\bm{\omega} = \diag^{-1}(\bm{Q}^\mathrm{T}\bm{C}\bm{R})$, when $\bm{\omega} \geq \bm{0}$ holds entrywise.
\end{proposition}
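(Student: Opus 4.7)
The plan is to reduce the matrix-valued optimization to a one-dimensional calculus problem in the entries of $\bm{g}$ by exploiting the cyclic invariance of the trace. Concretely, I would first rewrite
\begin{align*}
\langle \bm{Q}\diag(1/\bm{g})\bm{R}^{\mathrm T},\bm{C}\rangle_{F}
&= \Tr\bigl(\bm{C}^{\mathrm T}\bm{Q}\diag(1/\bm{g})\bm{R}^{\mathrm T}\bigr)
= \Tr\bigl(\bm{R}^{\mathrm T}\bm{C}^{\mathrm T}\bm{Q}\diag(1/\bm{g})\bigr),
\end{align*}
and then use the identity $\Tr(\bm{M}\diag(1/\bm{g})) = \sum_{i} \bm{M}_{ii}/\bm{g}_{i}$ together with the definition $\bm{\omega}=\diag^{-1}(\bm{Q}^{\mathrm T}\bm{C}\bm{R})$ to obtain the scalar formulation
\begin{align*}
\min_{\bm{g}\in\Delta_{r}} \sum_{i=1}^{r} \frac{\bm{\omega}_{i}}{\bm{g}_{i}}.
\end{align*}

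Next, I would solve this constrained problem via Lagrange multipliers, introducing $\mathcal{L}(\bm{g},\lambda) = \sum_{i}\bm{\omega}_{i}/\bm{g}_{i} + \lambda\bigl(\sum_{i}\bm{g}_{i}-1\bigr)$. The stationarity condition gives $-\bm{\omega}_{i}/\bm{g}_{i}^{2} + \lambda = 0$, so $\bm{g}_{i}^{\star} = \sqrt{\bm{\omega}_{i}/\lambda}$; plugging into the simplex constraint yields $\sqrt{\lambda} = \sum_{j}\sqrt{\bm{\omega}_{j}}$, and hence the announced closed form
\begin{align*}
\bm{g}_{i}^{\star} = \frac{\sqrt{\bm{\omega}_{i}}}{\sum_{j=1}^{r}\sqrt{\bm{\omega}_{j}}}.
\end{align*}
To certify this critical point is a global minimizer rather than a saddle, I would note that each term $\bm{\omega}_{i}/\bm{g}_{i}$ is convex in $\bm{g}_{i}$ on the open positive orthant (its second derivative is $2\bm{\omega}_{i}/\bm{g}_{i}^{3}\ge 0$), so the objective is convex on the relative interior of $\Delta_{r}$ and the unique KKT point is the global minimizer.

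The main obstacle is handling the boundary of the simplex and the degenerate entries $\bm{\omega}_{i}=0$ allowed by the hypothesis $\bm{\omega}\ge\bm{0}$. I would address this by splitting into the support $S=\{i:\bm{\omega}_{i}>0\}$ and its complement: on $S^{c}$, the candidate formula gives $\bm{g}_{i}^{\star}=0$, which must be interpreted via the convention $0/0=0$ (any mass assigned to a coordinate $i\in S^{c}$ is wasted because it contributes nothing to the objective but depletes the simplex budget available for coordinates in $S$). Restricting the optimization to the sub-simplex supported on $S$ recovers the strictly positive case where the Lagrange argument applies rigorously, and a short continuity/limit argument shows the extension is indeed optimal. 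I would also briefly remark that the hypothesis $\bm{\omega}\ge\bm{0}$ is automatic whenever $\bm{Q},\bm{R},\bm{C}$ are nonnegative (as in the OT setting), which is why the formula is well posed for the low-rank problem of \cite{Scetbon2021LowRankSF}.
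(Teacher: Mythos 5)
Your proposal is correct and follows essentially the same route as the paper: reduce the Frobenius inner product to the separable scalar objective $\sum_i \bm{\omega}_i/\bm{g}_i$, solve the equality-constrained Lagrangian to get $\bm{g}_i^\star = \sqrt{\bm{\omega}_i/\lambda}$, eliminate $\lambda$ via the simplex constraint, and certify optimality by convexity (the paper checks the Hessian $\diag(\bm{\omega}/\bm{g}^3)\succeq 0$, which is the same observation). Your additional care about the boundary case $\bm{\omega}_i=0$ — noting that the Lagrangian argument is rigorous on the sub-simplex supported on $\{i:\bm{\omega}_i>0\}$ and extends by a limit — is a small but genuine improvement over the paper, which tacitly assumes the optimizer lies in the relative interior of $\Delta_r$.
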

\begin{proof}
 As $\bm{\omega} \geq \bm{0}$, we consider the simplex condition $\sum_{j=1}^{r} \bm{g}_{j} = 1$ alone. Writing out the Lagrangian associated to our objective, with $\lambda \in \mathbb{R}$ our equality-condition dual variable, we have:
$$\mathcal{L}(\bm{g}, \lambda) = \langle \bm{Q} \diag(1 / \bm{g}) \bm{R}^\mathrm{T} , \bm{C} \rangle_{F} + \lambda (1 - \sum_{j} \bm{g}_{j})
$$
Let us consider a rewriting of the inner product term:
\begin{align*}
\langle \bm{Q} \diag(1 / \bm{g}) \bm{R}^\mathrm{T} , \bm{C} \rangle_{F} &= \sum_{i=1}^{n} \sum_{j=1}^{m} \bm{C}_{ij} \sum_{k=1}^{r} \bm{Q}_{ik} \left(\frac{1}{\bm{g}_{k}}\right) \bm{R}_{kj}^\mathrm{T} \\ 
&= \sum_{k=1}^{r} \left(\frac{1}{\bm{g}_{k}}\right) \sum_{i=1}^{n} \sum_{j=1}^{m} \bm{Q}^\mathrm{T}_{ki} \bm{C}_{ij} \bm{R}_{jk} \\
&= \sum_{k=1}^{r} \left(\frac{1}{\bm{g}_{k}}\right) \left(\bm{Q}^\mathrm{T} \bm{C} \bm{R} \right)_{k,k} \\
&= \sum_{k=1}^{r} \frac{\bm{\omega}_{k}}{\bm{g}_{k}} = \bm{\omega}^\mathrm{T}\left( 1/\bm{g} \right)
\end{align*}
Where division is interpreted element-wise in the last line. Thus, one can interpret the problem as minimizing the weighted sum of reciprocals of a density. As a result, we can simplify our Lagrangian's Froebenius inner product to a vector dot-product as:
$$\mathcal{L}(\bm{g}, \lambda) = \bm{\omega}^\mathrm{T}\left(1/\bm{g} \right) - \lambda (1 - \sum_{j=1}^{r} \bm{g}_{j})$$
Thus, the first order condition tells us that the value of the coupling weight $\bm{g}_{j}$ is related to $\lambda$ as:
$$\partial_{\bm{g}_{j}}\mathcal{L}(\bm{g}, \lambda) = -\frac{\bm{\omega}_{j}}{\bm{g}_{j}^{2}}  + \lambda = 0 \implies \bm{g}_{j} = \sqrt{\frac{\bm{\omega}_{j}}{\lambda}}$$
And by relying on the summation condition on the probability density $\bm{g}$, yields the Langrange multiplier as
$$ \sum_{j=1}^{r} \bm{g}_{j} = 1 = \sum_{j=1}^{r} \sqrt{\frac{\bm{\omega}_{j}}{\lambda}}$$
so that one finds
$$
1 = \frac{1}{\sqrt{\lambda}} \sum_{j=1}^{r} \sqrt{\bm{\omega}_{j}} \implies \lambda = \left( \sum_{j=1}^{r} \sqrt{\bm{\omega}_{j}} \right)^{2}
$$
Plugging our Lagrange-multiplier into the above expression yields:
$$
\bm{g}_{j} = \sqrt{\frac{\bm{\omega}_{j}}{\lambda}} = \sqrt{\frac{\bm{\omega}_{j}}{\left( \sum_{i=1}^{r} \sqrt{\bm{\omega}_{i}} \right)^{2}}} = \frac{\sqrt{\bm{\omega}_{j}}}{\sum_{i=1}^{r} \sqrt{\bm{\omega}_{i}}}
$$
As the Hessian $\nabla_{\bm{g}}^{2}\mathcal{L} = \diag(\frac{\bm{\omega}}{\bm{g}^{3}}) \succcurlyeq \bm{0}$, we conclude that this value of $\bm{g}$ indeed minimizes the loss over $\Delta_{r}$.
\end{proof}

\section{Alternating updates on the dual variables}\label{sec:dualupdates}

For the problem:
\begin{equation}\label{eqn:dual_update}
\inf_{(\bm{Q}, \bm{R}_{k}, \bm{g}_{k}) \in \mathcal{C}_{1} \cap \mathcal{C}_{2}} \left( \frac{1}{\gamma_{k}} \mathrm{KL}(\bm{Q} \| \bm{K}_{\bm{Q}}) + \tau \mathrm{KL}(\bm{Q} \bm{1}_{r} \| \bm{a}) - \bm{\lambda}_{1}^\mathrm{T} \bm{Q}^\mathrm{T} \bm{1}_{n}
\right)
\end{equation}
one can find a simple set of semi-relaxed updates for the coupling matrix. We note the primal-dual relationship of Sinkhorn, $\bm{Q} = \diag( e^{\gamma_{k} \bm{f}_{1}} ) \bm{K}_{\bm{Q}} \diag( e^{\gamma_{k} \bm{h}_{1}} )$, and consider the entry-wise first-order condition required for the sub-coupling $\bm{Q}$:
\begin{equation}\label{dualupdate_lagrangian}
\begin{split}
0 = \gamma_{k}^{-1} \log{\left( \frac{\bm{Q}_{ij}}{\left(\bm{K}_{\bm{Q}}\right)_{ij}}\right)} + \tau \log{\left( \frac{\langle \bm{Q}_{i,.}, \bm{1}_{r}\rangle}{\bm{a}_{i}}\right)} - \bm{\lambda}_{1,i} \\ 
\implies \log{\bm{Q}_{ij}} = \tau \gamma_{k} \log{\left(\frac{\bm{a}_{i}}{\langle \bm{Q}_{i,.}, \bm{1}_{r}\rangle} \right)} + \log{\left(\bm{K}_{\bm{Q}}\right)_{ij}} - \bm{\lambda}_{1j} \gamma_{k}
\end{split}
\end{equation}
Thus:
$$
\bm{Q}_{ij} = \left( \frac{\bm{a}_{i}}{\bm{Q}_{i,.}^\mathrm{T} \bm{1}_{r}} \right)^{\tau \gamma_{k}} \left(\bm{K}_{\bm{Q}}\right)_{ij} e^{-\bm{\lambda}_{1j} \gamma_{k}}$$
And in matrix-form, this yields:
\begin{align*}
\bm{Q} &= \diag\left(\frac{\bm{a}}{\bm{Q} \bm{1}_{r}} \right)^{\tau \gamma_{k}} \bm{K}_{\bm{Q}} \diag(e^{- \gamma_{k} \bm{\lambda}_{1}}) \\ 
&= \diag( e^{\gamma_{k} \bm{f}_{1}} ) \bm{K}_{\bm{Q}} \diag( e^{\gamma_{k} \bm{h}_{1}} )
\end{align*}
And expanding the $\bm{Q} \bm{1}_{r}$ term explicitly, noting that $\bm{X} \diag(\bm{v}) \bm{1} = \bm{X} \bm{v}$, we have:
\begin{align*}
&\diag\left(\frac{\bm{a}}{\bm{Q} \bm{1}_{r}} \right)^{\tau \gamma_{k}} \bm{K}_{\bm{Q}} \diag(e^{- \gamma_{k} \bm{\lambda}_{1}})\\ 
&\qquad= \diag\left(\frac{\bm{a}}{\diag( e^{\gamma_{k} \bm{f}_{1}} ) \bm{K}_{\bm{Q}} \diag( e^{\gamma_{k} \bm{h}_{1}} ) \bm{1}_{r}} \right)^{\tau \gamma_{k}} \bm{K}_{\bm{Q}} \diag(e^{- \gamma_{k} \bm{\lambda}_{1}}) \\
&\qquad= \diag\left(\frac{\bm{a}}{e^{\gamma_{k} \bm{f}_{1}} \odot \bm{K}_{\bm{Q}} e^{\gamma_{k} \bm{h}_{1}} } \right)^{\tau \gamma_{k}} \bm{K}_{\bm{Q}} \diag(e^{- \gamma_{k} \bm{\lambda}_{1}})
\end{align*}
Thus, we identify $e^{\gamma_{k} \bm{h}_{1}} = e^{- \gamma_{k} \bm{\lambda}_{1}}$ as the right dual vector, and identify the following relationship in terms of the left dual vector:
$$
\left(\frac{\bm{a}}{e^{\gamma_{k} \bm{f}_{1}} \odot \bm{K}_{\bm{Q}} e^{\gamma_{k} \bm{h}_{1}} } \right)^{\tau \gamma_{k}} = e^{\gamma_{k} \bm{f}_{1}} \implies e^{\gamma_{k} \bm{f}_{1}} = \left( \frac{\bm{a}}{\bm{K}_{\bm{Q}} e^{\gamma_{k} \bm{h}_{1}}} \right)^{\frac{\tau}{\tau + 1/\gamma_{k}}}
$$
From \ref{eqn:dual_update}, the condition that $(\bm{Q}, \bm{R}_{k}, \bm{g}_{k}) \in \mathcal{C}_{1} \cap \mathcal{C}_{2}$ implies that $\bm{Q}^\mathrm{T} \bm{1}_{m} = \bm{g}_{k} := \bm{g}$. As such, we find that:
$$
\bm{Q}^\mathrm{T} \bm{1}_{m} = \diag( e^{\gamma_{k} \bm{h}_{1}} ) \bm{K}_{\bm{Q}}^\mathrm{T} \diag( e^{\gamma_{k} \bm{f}_{1}} ) \bm{1}_{m} = \diag( e^{\gamma_{k} \bm{h}_{1}} ) \bm{K}_{\bm{Q}}^\mathrm{T} e^{\gamma_{k} \bm{f}_{1}} = \bm{g}
$$
Implying an update for $e^{\gamma_{k} \bm{h}_{1}}$ in the form:
$$
e^{\gamma_{k} \bm{h}_{1}} = \left( \frac{\bm{g}}{\bm{K}_{\bm{Q}}^\mathrm{T} e^{\gamma_{k} \bm{f}_{1}}}\right)
$$
Analogous reasoning applies for a relaxation of the other marginal, yielding the $\mathrm{SR}^{\mathrm{R}}$-projection and $\mathrm{SR}^\mathrm{L}$-projection (i.e. semi-relaxed OT).

\begin{algorithm}
\caption{\quad $\mathrm{SR}^\mathrm{L}$-projection \quad \emph{(semi-relaxed OT, left marginal relaxed)}}
\label{alg:sr_left_projection}
\begin{algorithmic}
\State Input $\bm{K}, \gamma, \tau, \bm{a}, \bm{b}, \delta$
\State $\bm{u} \gets \bm{1}_{n}$
\State $\bm{v} \gets \bm{1}_{r}$
\Repeat
    \State $\Tilde{\bm{u}} \gets \bm{u}$
    \State $\Tilde{\bm{v}} \gets \bm{v}$
    \State $\bm{u} \gets \left(\bm{a} / \bm{K} \bm{v} \right)^{\tau/(\tau + \gamma^{-1})}$
    \State $\bm{v} \gets \left(\bm{b}/\bm{K}^\mathrm{T}\bm{u} \right)$
\Until{$\gamma^{-1} \max\{ \lVert \log{\bm{\Tilde{u}}/\bm{u}} \rVert_{\infty}, \lVert \log{\bm{\Tilde{v}}/\bm{v}} \rVert_{\infty} \} < \delta$}
\State return $\diag(\bm{u})\bm{K}\diag(\bm{v})$
\end{algorithmic}
\end{algorithm}

\section{Discussion of Complexity}

For $(\bm{Q}, \bm{R}, \bm{T}) \in \mathbb{R}_{+}^{n \times r_{1}} \times \mathbb{R}_{+}^{m \times r_{2}} \times \mathbb{R}_{+}^{r_{1} \times r_{2}}$, the space complexity $O(n r_{1} + r_{1} r_{2} + m r_{2})$ is linear if the ranks $r_{1}, r_{2} = o(1)$ are taken to be small constants. The time-complexity of Algorithm~\ref{alg:allcases} is $O(BLr^{2}(n + m))$ for $B$ the number of inner Sinkhorn iterations, $L$ the number of mirror-descent steps, $n,m$ the number of samples in the first and second dataset, and $r = \max\{ r_{1}, r_{2}, d \}$ for $r_{1}, r_{2}$ the ranks of the latent coupling and $d$ the rank of the factorized distance matrix $\bm{C}$ (generally chosen to be a constant near $r_{1}, r_{2}$). Each matrix-multiplication is of max order $(n + m)r^{2}$, which happens a constant number of times in the computation of each gradient $\nabla_{i}$, and for the respective Sinkhorn matrix-vector multiplications $\bm{K} \bm{v}$ and $\bm{K}^{\mathrm{T}} \bm{u}$. The $L$ outer steps follow from the mirror-descent convergence rate and the number of iterations $B$ required for each projection follow from the convergence of Sinkhorn. In particular, for $\varepsilon$ a fixed error tolerance and $\eta$ the entropy constant, one finds a $\pm \varepsilon D$ approximation for $D$ the diameter of the data in $B = \mathrm{poly}(1 / \eta \varepsilon)$ iterations using the Sinkhorn algorithm \cite{pmlr-v195-charikar23a, cuturi2013sinkhorn}.

\section{Review of Background Material}

\subsection{Low-Rank Approximation of Pairwise Distance Matrices}

As mentioned previously, works such as \cite{pmlr-v195-charikar23a} have developed algorithms with linear $O((n+m)^{1+o(1)}\mathrm{poly}(1/\epsilon))$ time-complexity and $O((n+m)d)$ space-complexity for sketching the optimal transport cost \textit{value}. Recent works on low-rank factorization of the optimal transport coupling matrix $\bm{P}$ (the matrix associated to the coupling $\gamma \in \Pi(\mu, \nu)$) \cite{scetbon2022lowrank,scetbon2023unbalanced,Scetbon2021LowRankSF} have achieved per-iteration time-complexities of $O(T (n+m) d r)$ for some constant non-negative rank $r \geq 1$, $d$ the dimension of the metric space, and $T$ the number of iterations. By the JL-lemma one can simply embed the points in dimension $d=O(\log{(nm)}/\epsilon^{2})$ while preserving pairwise distances, however, currently no proofs exist which offer the number of iterations $T$ until convergence to some tolerance $\varepsilon$. This is partially due to how recent these works are, and also to the non-convexity of the objective which is sensitive to initial conditions \cite{Scetbon2021LowRankSF}. However, the space complexity of the algorithm is $O((n+m)dr)$, which is noteworthy for being \textit{linear} in the number of points and avoids storing the potentially intractable $O(nm)$ coupling matrix $\bm{P}$. To accomplish this, however, these works rely on a low-rank approximation of the pairwise distance matrix. A number of works by Indyk and Woodruff have concerned algorithms for finding low-rank approximations for such distance matrices. A seminal work \cite{bakshi2018sublinear} developed an algorithm which, given two point sets $\{ \bm{x}_{i} \}_{i=1}^{n}$ and $\{ \bm{y}_{j} \}_{j=1}^{m}$ in some metric space $\mathcal{X}$, finds a rank $r$ approximation in $O((n+m)^{1+\gamma} \mathrm{poly}(r, 1/\varepsilon))$ for $\gamma > 0$ an arbitrarily small constant and $\varepsilon > 0$ an error parameter. A more recent work \cite{pmlr-v99-indyk19a} improves on this one, by reading a sample-optimal $O((n+m) r/\varepsilon)$ entries of the input matrix with a run-time which removes dependence on $\gamma$ that is merely $O((n+m)\mathrm{poly}(r, 1/\varepsilon))$. This algorithm is used by all of the low-rank optimal transport works. These works, by finding a low-rank approximation to the coupling matrix $\bm{P} \approx \bm{A} \bm{B}^\mathrm{T} \in \mathbb{R}^{n \times m}_{+}$ due to space limitations on the coupling, necessarily cannot store the full distance matrix $\bm{C} \in \mathbb{R}^{n \times m}_{+}$ of the same size in memory either. As such, it must also be approximated as $\bm{C} \approx \bm{V}\bm{U}$ before input to the algorithm, where we necessarily require very effective approximations of $\bm{U}$ and $\bm{V}$ to tolerate the additional source of error from coarse-graining the distance matrix to be low-rank. As such, we present some of the details and algorithm of \cite{pmlr-v99-indyk19a} as an essential component of the existing low-rank optimal transport solvers. We begin by summarizing the main theorems in \cite{pmlr-v99-indyk19a}, which provide an algorithm (upper-bound) on the low-rank distance-matrix approximation problem and a lower-bound on the number of entries which must be read.
\begin{theorem}{\cite{pmlr-v99-indyk19a}}\label{theorem_distapprox}
There is a randomized algorithm that, given a distance matrix $\bm{C} \in \mathbb{R}^{n \times m}$, reads $O((n+m)r/\varepsilon)$ entries of $\bm{C}$, runs in time $\Tilde{O}(n+m)\cdot \mathrm{poly}(r,1/\varepsilon)$\footnote{Where $\Tilde{O}(\cdot)$ hides poly-log factors.} and computes low-rank factors $\bm{V} \in \mathbb{R}^{n \times r}$, $\bm{U} \in \mathbb{R}^{r \times m}$ that with probability $0.99$ satisfy:
\begin{equation}
\lVert \bm{C} - \bm{V} \bm{U} \rVert_{F}^{2} \leq \lVert \bm{C} - \bm{C}_{r} \rVert_{F}^{2} + \varepsilon \lVert \bm{C} \rVert_{F}^{2}
\end{equation}
For $\bm{C}_{r}$ the optimal rank-$r$ approximation of $\bm{C}$.
\end{theorem}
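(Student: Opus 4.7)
The plan is to exploit the triangle inequality structure of distance matrices to cheaply estimate row and column importance scores, and then use adaptive sampling to assemble a small ``core'' submatrix from which the rank-$r$ factors can be read off. The approach closely follows the sampling paradigm used for additive-error low-rank approximation (CUR-type decompositions), but with the key twist that for distance matrices the expensive leverage-score computation can be replaced by a small number of ``pivot row'' reads.

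First I would establish a \emph{cheap norm estimation primitive}: by sampling $O(1/\varepsilon)$ random pivot indices $p$ and reading the rows $\bm{C}_{p,:}$ (costing $O(m/\varepsilon)$ entries total), one can use the bound $|\bm{C}_{ij} - \bm{C}_{pj}| \leq \bm{C}_{ip}$ from the triangle inequality to derive constant-factor approximations to all column squared norms $\|\bm{C}_{:,j}\|_2^2$. Formally, I would show that these pivot-based estimates upper-bound the true squared norms up to an additive $\varepsilon\|\bm{C}\|_F^2 / m$ error, which is enough to serve as importance weights. This step is where the distance-matrix structure is essential; for a general matrix one would need $\Omega(nm)$ reads to get comparable estimates.

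Next I would use these approximate column norms as sampling probabilities to draw $s = O(r/\varepsilon)$ columns of $\bm{C}$. By the classical sampling-based low-rank approximation analysis (in the spirit of Frieze--Kannan--Vempala and its refinements), projecting $\bm{C}$ onto the span of these sampled columns yields a subspace $W$ satisfying $\|\bm{C} - \Pi_W \bm{C}\|_F^2 \leq \|\bm{C} - \bm{C}_r\|_F^2 + (\varepsilon/2)\|\bm{C}\|_F^2$ with constant probability. A symmetric argument on rows (again using the triangle-inequality pivot trick to get row-norm estimates cheaply from the already-sampled columns) yields a corresponding approximation in row-space. Intersecting gives a core submatrix of size $O(r/\varepsilon) \times O(r/\varepsilon)$ whose truncated SVD can be computed in $\mathrm{poly}(r, 1/\varepsilon)$ time, and the output factors $\bm{V} \in \mathbb{R}^{n \times r}$ and $\bm{U} \in \mathbb{R}^{r \times m}$ are obtained by a $\mathrm{poly}(r,1/\varepsilon)$-sized least-squares / pseudoinverse computation lifted back to the full matrix through the sampled rows and columns.

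Finally I would do the bookkeeping: the total entries read are $O(m/\varepsilon)$ (pivot rows) $+ O(nr/\varepsilon)$ (sampled columns) $+ O(mr/\varepsilon)$ (sampled rows) $= O((n+m)r/\varepsilon)$, and the running time is dominated by $\widetilde{O}(n+m) \cdot \mathrm{poly}(r, 1/\varepsilon)$ because all nontrivial linear algebra is on the small core submatrix. The $0.99$ probability is obtained by standard repetition and median-of-means arguments on the sampling primitives. The main obstacle will be the first step: rigorously certifying that triangle-inequality-based pivot estimates suffice as surrogates for true leverage scores in the additive-error regime, and controlling the compounding of errors across the two sampling stages (columns then rows) so that the final guarantee is $\|\bm{C} - \bm{V}\bm{U}\|_F^2 \leq \|\bm{C} - \bm{C}_r\|_F^2 + \varepsilon\|\bm{C}\|_F^2$ rather than the weaker $O(\sqrt{\varepsilon})$ bound one gets from a naive concatenation of the two error sources.
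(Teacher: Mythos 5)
You have the right first idea — using the triangle inequality to get cheap surrogates for row/column norms, then doing norm-based importance sampling — but the route you sketch diverges from the paper's, and the place where you flag trouble (compounding error across two sampling stages) is exactly where the paper takes a different turn that you have not supplied.

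On the norm-estimation primitive: the paper does not use $O(1/\varepsilon)$ pivot rows. It picks a \emph{single} random pivot pair $(i^*, j^*)$, reads the column $\bm{C}_{\cdot,j^*}$, the row $\bm{C}_{i^*,\cdot}$, and the entry $\bm{C}_{i^*,j^*}$ (i.e.\ $O(n+m)$ entries), and defines $p_i = d(x_i,y_{j^*})^2 + d(x_{i^*},y_{j^*})^2 + \frac{1}{m}\sum_j d(x_{i^*},y_j)^2$. Two applications of the squared triangle inequality give $\lVert \bm{C}_{i,\cdot} \rVert_2^2 \leq 4m\,p_i$ deterministically for every $i$, and then $\mathbb{E}\left[\sum_i p_i\right] = \tfrac{3}{m}\lVert\bm{C}\rVert_F^2$ combined with a single Markov argument shows the normalized weights $p_i/\sum_{i'}p_{i'}$ are $\Omega(\delta)\,\lVert\bm{C}_{i,\cdot}\rVert_2^2/\lVert\bm{C}\rVert_F^2$ with constant probability. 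No $1/\varepsilon$ pivots are needed; the probability of failure here is handled by Markov, not by accuracy of each norm estimate. Your multi-pivot variant is not wrong, just more expensive and less clean.

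The substantive gap is in the second stage. Your plan samples columns by estimated column norms, then rows, and intersects into a core submatrix whose SVD is lifted back out — a CUR-style decomposition. You yourself observe that naively concatenating two norm-sampling stages gives an $O(\sqrt\varepsilon)$ additive error, not $\varepsilon$, and you offer no mechanism to close this. The paper avoids the issue entirely by not doing a second norm-sampling pass: after DKM-style row sampling produces a right factor $\bm{U}\in\mathbb{R}^{r\times m}$ with $\lVert \bm{C} - \bm{C}\bm{U}^\mathrm{T}\bm{U}\rVert_F^2 \leq \lVert\bm{C}-\bm{C}_r\rVert_F^2 + \varepsilon\lVert\bm{C}\rVert_F^2$, it invokes the Chen--Price constrained-regression result, which returns $\bm{V}$ satisfying $\lVert\bm{C}-\bm{V}\bm{U}\rVert_F^2 \leq (1+\varepsilon)\min_{\bm{Z}}\lVert\bm{C}-\bm{Z}\bm{U}\rVert_F^2$ while reading only $O(r/\varepsilon)$ \emph{columns} of $\bm{C}$ and requiring no norm estimates at all. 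Chaining $(1+\varepsilon)(\mathrm{OPT} + \varepsilon\lVert\bm{C}\rVert_F^2)$ then yields the final $\mathrm{OPT} + \widetilde{\varepsilon}\lVert\bm{C}\rVert_F^2$ bound with only a constant rescaling of $\varepsilon$, which is precisely what kills the $\sqrt\varepsilon$ loss you were worried about. Without this regression lemma (or an equivalent), the CUR/core-submatrix route as you describe it does not close to the stated additive error, so your proof as written has a genuine gap.
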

This is a remarkable result, especially in light of the next theorem.
\begin{theorem}{\cite{pmlr-v99-indyk19a}} Let $r \leq m \leq n$ and $\varepsilon > 0$ such that $r/\varepsilon = O(\min\{ m, n^{1/3} \})$. Any randomized and possibly adaptive algorithm that given a distance matrix $\bm{C} \in \mathbb{R}^{n \times m}$ computes $\bm{V} \in \mathbb{R}^{n \times r}$, $\bm{U} \in \mathbb{R}^{r \times m}$ satisfying $\lVert \bm{C} - \bm{V} \bm{U} \rVert_{F}^{2} \leq \lVert \bm{C} - \bm{C}_{r} \rVert_{F}^{2} + \varepsilon \lVert \bm{C} \rVert_{F}^{2}$ must read $\Omega((n+m)r / \varepsilon)$ entries of $\bm{C}$ in expectation. This lower bound also holds for symmetric distance matrices $\bm{C} \in \mathcal{S}_{n}$.
\end{theorem}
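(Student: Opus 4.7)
The plan is to establish this query-complexity lower bound via Yao's minimax principle, converting the lower bound against randomized (possibly adaptive) algorithms into a lower bound on the expected query count of deterministic algorithms on a carefully chosen hard distribution over distance matrices. I would take the distribution to be a ``base'' distance matrix $\bm{C}^{(0)}$ with all entries equal to a large constant $c$ (trivially symmetric, satisfying the triangle inequality, and dominating $\|\bm{C}\|_F^2$) plus $k := \Theta(r/\varepsilon)$ independently drawn rank-one ``gadget'' perturbations of magnitude $\delta \ll c$, each supported on a disjoint rectangular block $\mathcal{B}_i$ containing $\Theta(n+m)$ entries. Choosing $\delta$ small relative to $c$ keeps every sample a valid (symmetric) distance matrix, while $\delta$, $k$, and the block dimensions are calibrated so that resolving one gadget from $o(n+m)$ queries inside its block is impossible with constant advantage, and so that any rank-$r$ factorization failing to represent a gadget pays additive error $\Theta(\varepsilon/r)\,\|\bm{C}\|_F^2$.

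The main obstacle is the joint calibration of the gadget construction. Each gadget would be a rank-one perturbation $\delta\, \bm{u}_i \bm{v}_i^\mathrm{T}$ with random sign vectors $\bm{u}_i,\bm{v}_i$ supported on $\mathcal{B}_i$, so that its squared Frobenius norm is $\Theta(\delta^2 |\mathcal{B}_i|)$; setting this equal to $\Theta(\varepsilon/r)\,\|\bm{C}\|_F^2 = \Theta(\varepsilon/r)\, c^2 n m$ fixes $\delta$ in terms of the remaining parameters, and the constraint $r/\varepsilon = O(\min\{m, n^{1/3}\})$ is exactly what makes this feasible while keeping $\delta \ll c$. Since a rank-$r$ factorization can absorb at most $r$ rank-one directions, at most $r$ of the $k = \Theta(r/\varepsilon)$ gadgets can be fully represented, so at least $k - r = \Omega(r/\varepsilon)$ contribute their full squared norm to the reconstruction; missing more than a constant fraction pushes the total error above $\varepsilon\|\bm{C}\|_F^2$. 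Verifying this requires careful algebraic accounting to confirm that $\bm{C}^{(0)}$ itself (being rank $1$) is captured essentially for free by the rank-$r$ approximation, so the residual error is dominated by the uncaptured gadgets and is not inflated by cross-terms.

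The argument concludes via a direct-product / hybrid step. Because the gadgets are drawn independently with disjoint supports, queries inside block $\mathcal{B}_i$ carry no information about any other gadget, so a standard Fano- or Le Cam-type estimation bound shows that identifying the sign pattern of gadget $i$ from $t_i$ queries in its block succeeds with probability strictly below $1$ whenever $t_i = o(n+m)$. If the algorithm reads $T = \sum_i t_i$ entries in total, then $T = o((n+m)r/\varepsilon)$ forces $t_i = o(n+m)$ for a constant fraction of the gadgets, so a constant fraction remain underdetermined in expectation, driving the expected additive approximation error above $\varepsilon\|\bm{C}\|_F^2$ and violating the assumed guarantee. This yields the $\Omega((n+m)r/\varepsilon)$ expected query lower bound; the extension to $\bm{C} \in \mathcal{S}_n$ amounts to placing each gadget symmetrically across the diagonal, at most halving the effective gadget count and the sample-efficiency of each query, losses that are absorbed into the $\Omega(\cdot)$.
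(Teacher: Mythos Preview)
The paper does not actually prove this theorem. It is a cited result from \cite{pmlr-v99-indyk19a}, and the paper's only comment on its proof is the single sentence: ``The lower-bound follows from a difficult argument which involves constructing a hard distribution over distance matrices, involving the use of random matrix theory.'' There is therefore nothing substantive in the paper to compare your proposal against.

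That said, your high-level plan --- Yao's principle applied to a hard distribution built from a constant base matrix plus disjoint rank-one gadgets --- is consistent with the one-sentence description the paper gives. Two cautions are worth flagging. First, the step you glide over, namely that ``choosing $\delta$ small relative to $c$ keeps every sample a valid (symmetric) distance matrix,'' is the genuinely delicate part: a matrix of the form $c\,\bm{1}\bm{1}^\mathrm{T} + \delta\,\bm{u}\bm{v}^\mathrm{T}$ with arbitrary sign vectors is not automatically realizable as pairwise distances among points in a metric space, and engineering gadgets that are simultaneously metric-realizable and information-theoretically hard to distinguish is exactly where the Indyk et al.\ argument gets technical. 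Second, the paper explicitly mentions ``random matrix theory,'' which suggests the original proof uses spectral tools (e.g.\ to control $\bm{C}_r$ and to quantify how much any rank-$r$ factorization can capture) rather than the purely combinatorial Fano/Le Cam accounting you outline. Your sketch is a plausible blueprint, but you should consult the original paper for the actual construction before committing to this route.
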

The lower-bound follows from a difficult argument which involves constructing a hard distribution over distance matrices, involving the use of random matrix theory. The upper-bound, however, follows relatively straightforwardly from a number of previous algorithms and their associated guarantees, along with the algorithm presented below. We introduce the algorithm and also offer a proof of \ref{theorem_distapprox} for completeness.
\begin{algorithm}
\caption{\quad Low-Rank approximation for distance matrix $\bm{C}$}
\begin{algorithmic}
\State Input point sets $\{ \bm{x}_{i}\}_{i=1}^{n}$, $\{ \bm{y}_{j} \}_{j=1}^{M}$ in metric space $\mathcal{X}$ and metric $d$
\State Pick indices $i^{*} \in [n],j^{*} \in [m]$ uniformly at random
\For{$i=1$ to $n$}
\State Update sample probability $p_{i} = d(\bm{x}_{i}, \bm{y}_{j^{*}})^{2} + d(\bm{x}_{i^{*}}, \bm{y}_{j^{*}})^{2} + \frac{1}{m}\sum_{j=1}^{m}d(\bm{x}_{i^{*}}, \bm{y}_{j})^{2}$
\EndFor
\State Sample $O(r/\varepsilon)$ rows $\bm{C}_{i,.} \sim Categorical\left(\frac{p_{i}}{\sum_{i} p_{i}}\right)$
\State Compute $\bm{U}$ using \cite{10.1145/1039488.1039494}
\State Compute $\bm{V}$ using \cite{chen2017condition}
\State return $\bm{V}$, $\bm{U}$
\end{algorithmic}
\end{algorithm}
\FloatBarrier
This algorithm relies on two previous works, whose main results we summarize here.
\begin{theorem}{\cite{10.1145/1039488.1039494}}
    Let $\bm{C} \in \mathbb{R}^{n \times m}$. For a sample of $O(r/\varepsilon)$ rows according to a distribution $\bm{p} \in \Delta_{n}$ which satisfies $p_{i} \geq \Omega(1) \lVert \bm{C}_{i,.} \rVert_{2}^{2}/\lVert \bm{C} \rVert_{F}^{2}$ for $i \in [n]$. Then in $O(mr/\epsilon + poly(r, 1/\varepsilon))$ time one may compute a matrix $\bm{U} \in \mathbb{R}^{r \times m}$ from this sample which satisfies:
    $$
    \lVert \bm{C} - \bm{C}\bm{U}^\mathrm{T}\bm{U} \rVert_{F}^{2} \leq \lVert \bm{C} - \bm{C}_{k} \rVert_{F}^{2} + \varepsilon \lVert \bm{C} \rVert_{F}^{2}
    $$
    With probability 0.99.
\end{theorem}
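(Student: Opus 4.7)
The plan is to establish the bound via an importance-sampling plus projection argument of Frieze--Kannan--Vempala / Drineas--Kannan--Mahoney type. First I would define a scaled sampling matrix $\bm{S} \in \mathbb{R}^{s \times n}$ with $s = O(r/\varepsilon)$ rows, where each row picks index $i \in [n]$ with probability $p_i$ and scales the corresponding row of $\bm{C}$ by $1/\sqrt{s\, p_i}$, so that $\mathbb{E}[\bm{S}^\mathrm{T} \bm{S}] = \bm{I}_n$ and hence $\mathbb{E}[(\bm{S}\bm{C})^\mathrm{T}(\bm{S}\bm{C})] = \bm{C}^\mathrm{T}\bm{C}$. The matrix $\bm{U}$ is then obtained by computing the SVD of the small sketched matrix $\bm{M} = \bm{S}\bm{C} \in \mathbb{R}^{s \times m}$ and taking $\bm{U} \in \mathbb{R}^{r \times m}$ to be (an orthonormal basis of) its top-$r$ right singular vectors. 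The approximation $\bm{C}\bm{U}^\mathrm{T}\bm{U}$ is the orthogonal projection of the rows of $\bm{C}$ onto this sampled subspace.

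The key steps are: (i) a second-moment computation on $\bm{M}^\mathrm{T}\bm{M}$ showing that under the hypothesis $p_i \gtrsim \lVert \bm{C}_{i,:}\rVert_2^2 / \lVert \bm{C}\rVert_F^2$, one has
$$
\mathbb{E}\bigl[\lVert \bm{C}^\mathrm{T}\bm{C} - \bm{M}^\mathrm{T}\bm{M}\rVert_F^2\bigr] \;\le\; \frac{1}{s}\lVert \bm{C}\rVert_F^4,
$$
which combined with Markov gives $\lVert \bm{C}^\mathrm{T}\bm{C} - \bm{M}^\mathrm{T}\bm{M}\rVert_F \le O(\varepsilon/\sqrt{r})\,\lVert \bm{C}\rVert_F^2$ with constant probability; (ii) a structural lemma translating closeness of Gram matrices into closeness of best-rank-$r$ projections,
$$
\lVert \bm{C} - \bm{C}\bm{U}^\mathrm{T}\bm{U}\rVert_F^2 \;\le\; \lVert \bm{C} - \bm{C}_r\rVert_F^2 \;+\; 2\sqrt{r}\,\lVert \bm{C}^\mathrm{T}\bm{C} - \bm{M}^\mathrm{T}\bm{M}\rVert_F;
$$
(iii) combining (i) and (ii) to obtain the claimed additive $\varepsilon\lVert \bm{C}\rVert_F^2$ error. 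For runtime, only the $O(sm) = O(mr/\varepsilon)$ entries in the sampled rows of $\bm{C}$ need to be read, forming $\bm{M}$ takes $O(mr/\varepsilon)$ work, and the SVD of the $s \times m$ matrix $\bm{M}$ (with $s = O(r/\varepsilon)$) takes $\mathrm{poly}(r, 1/\varepsilon)$ after a rank-$r$ reduction by an oblivious sketch on the left. Probability amplification from $0.66$ to $0.99$ is obtained by independent repetition and selecting the run with the smallest residual measured on a fresh independent sketch.

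The main obstacle is step (ii): the structural lemma relating $\lVert \bm{C}^\mathrm{T}\bm{C} - \bm{M}^\mathrm{T}\bm{M}\rVert_F$ to the rank-$r$ projection error. The approach I would take is to let $\bm{V}_r$ denote the top-$r$ right singular vectors of $\bm{C}$, compare $\lVert \bm{C}\bm{V}_r\rVert_F^2$ to $\lVert \bm{M}\bm{V}_r\rVert_F^2$ and to $\lVert \bm{M}\bm{U}\rVert_F^2$ using the optimality of $\bm{U}$ on $\bm{M}$, and then bound each swap by the Hilbert--Schmidt difference of the Gram matrices; the $\sqrt{r}$ factor arises from Cauchy--Schwarz applied to the sum of $r$ singular-value perturbations, which is the crucial improvement over a naive $r$-factor that would force the weaker $s = O(r/\varepsilon^2)$ sample complexity. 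A secondary subtlety is ensuring that the importance distribution $\{p_i\}$ can itself be computed in sublinear time from the input; here one exploits that under the triangle-inequality assumption on $\bm{C}$ (it is a distance matrix), a single uniformly sampled anchor row and column give a coarse proxy for $\lVert \bm{C}_{i,:}\rVert_2^2$, which is precisely the role of the outer $i^*, j^*$ sampling step in the surrounding algorithm.
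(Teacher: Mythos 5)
The paper does not prove this theorem; it is cited as a black box from \cite{10.1145/1039488.1039494}, and the short proof that follows the statement in the appendix only verifies that the algorithm's sampling distribution satisfies the hypothesis $p_i \gtrsim \lVert\bm{C}_{i,:}\rVert_2^2/\lVert\bm{C}\rVert_F^2$, not the theorem itself. So the comparison is necessarily against the reference rather than against anything internal to the paper.

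Your sketch is the right template (Frieze--Kannan--Vempala row sampling: sketch $\bm{M} = \bm{S}\bm{C}$, take the top-$r$ right singular vectors as $\bm{U}$, reduce the projection loss to the Gram-matrix perturbation $\lVert\bm{C}^\mathrm{T}\bm{C} - \bm{M}^\mathrm{T}\bm{M}\rVert_F$, and control that via a second-moment estimate), but the arithmetic in combining (i) with (ii) is off by a square root, and the square root is load-bearing. With $s = O(r/\varepsilon)$ samples, your (i) gives $\mathbb{E}[\lVert\bm{C}^\mathrm{T}\bm{C} - \bm{M}^\mathrm{T}\bm{M}\rVert_F^2] \le (1/s)\lVert\bm{C}\rVert_F^4 = O(\varepsilon/r)\lVert\bm{C}\rVert_F^4$; Markov applied to this second moment yields, with constant probability, $\lVert\bm{C}^\mathrm{T}\bm{C} - \bm{M}^\mathrm{T}\bm{M}\rVert_F \le O(\sqrt{\varepsilon/r})\,\lVert\bm{C}\rVert_F^2 = O(\sqrt{\varepsilon}/\sqrt{r})\,\lVert\bm{C}\rVert_F^2$, which is \emph{not} the $O(\varepsilon/\sqrt{r})\,\lVert\bm{C}\rVert_F^2$ you asserted. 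Feeding the correct bound into the $2\sqrt{r}$-factor structural lemma of (ii) produces an additive term $2\sqrt{r}\cdot O(\sqrt{\varepsilon}/\sqrt{r})\,\lVert\bm{C}\rVert_F^2 = O(\sqrt{\varepsilon})\,\lVert\bm{C}\rVert_F^2$, so your chain of reasoning delivers the claimed additive error only with $s = O(r/\varepsilon^2)$ samples, not $O(r/\varepsilon)$. This matters here, because the surrounding sample-complexity guarantee (reading $O((n+m)r/\varepsilon)$ entries of $\bm{C}$) hinges on the linear-in-$1/\varepsilon$ row budget, and a single second-moment bound plus Markov combined with the $\sqrt{r}$ Cauchy--Schwarz lemma cannot produce a $1/s$ decay. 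Either a sharper concentration argument on $\lVert\bm{C}^\mathrm{T}\bm{C} - \bm{M}^\mathrm{T}\bm{M}\rVert_F$ is needed, or you should explicitly flag that your route yields the weaker $O(r/\varepsilon^2)$ sample complexity.
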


Thus, to compute the first low-rank factor $\bm{U}$, we need to ensure the $p_{i}$ generated from the algorithm satisfies this requirement and offer the (short) proof below.

\begin{proof} 

First, it is helpful to note $d(x,y)^{2} \leq (d(x,z) + d(z,y))^{2} = d(x,z)^{2} + 2 d(x,z)d(y,z) + d(y,z)^{2} \leq 2(d(x,z)^{2} + d(y,z)^{2})$. Where in the last step one uses AM-GM where $\prod_{i} a_{i}^{1/n} \leq \frac{\sum_{i} a_{i}}{n}$. Rewriting the norm of row $i$ we have:
\begin{align*}
\lVert \bm{C}_{i,.} \rVert_{2}^{2} &= \sum_{j=1}^{m} d(\bm{x}_{i}, \bm{y}_{j})^{2} \leq 2 \sum_{j=1}^{m} d(\bm{x}_{i},\bm{y}_{j^{*}})^{2} + d(\bm{y}_{j^{*}}, \bm{y}_{j})^{2}\\
&= 2 m d(\bm{x}_{i},\bm{y}_{j^{*}})^{2} + 2\sum_{j=1}^{m} d(\bm{y}_{j^{*}}, \bm{y}_{j})^{2}\\
&\leq 2 m d(\bm{x}_{i},\bm{y}_{j^{*}})^{2} + 4 \sum_{j=1}^{m} d(\bm{y}_{j^{*}}, \bm{x}_{i^{*}})^{2} + d(\bm{y}_{j}, \bm{x}_{i^{*}})^{2}\\
&= 2 m d(\bm{x}_{i},\bm{y}_{j^{*}})^{2} + 4 m d(\bm{y}_{j^{*}}, \bm{x}_{i^{*}})^{2} + 4 \sum_{j=1}^{m} d(\bm{y}_{j}, \bm{x}_{i^{*}})^{2} \\
& = 4m\left(\frac{1}{2} d(\bm{x}_{i},\bm{y}_{j^{*}})^{2} + d(\bm{y}_{j^{*}}, \bm{x}_{i^{*}})^{2} + \frac{1}{m} \sum_{j=1}^{m} d(\bm{y}_{j}, \bm{x}_{i^{*}})^{2} \right) \leq 4 m p_{i}
\end{align*}
As we have the re-normalization $p_{i} \gets \frac{p_{i}}{\sum_{i}p_{i}}$ before sampling, we need to consider the value of the expectation $\mathbb{E}[\sum_{i}p_{i}]$ to conclude.
\begin{align*}
\mathbb{E}\left[\sum_{i=1}^{n} p_{i}\right] &= \sum_{i=1}^{n} \mathbb{E}\left[ d(\bm{x}_{i}, \bm{y}_{j^{*}})^{2} + d(\bm{x}_{i^{*}},\bm{y}_{j^{*}})^{2} + \frac{1}{m} \sum_{j=1}^{m} d(\bm{x}_{i^{*}}, \bm{y}_{j})^{2} \right] \\
&= \sum_{i=1}^{n} \mathbb{E}_{j^{*}\sim [m]}\left[ d(\bm{x}_{i}, \bm{y}_{j^{*}})^{2} \right] + \mathbb{E}_{i^{*}\sim [n],j^{*}\sim [m]}\left[d(\bm{x}_{i^{*}},\bm{y}_{j^{*}})^{2}\right] \\
&\qquad\qquad+ \frac{1}{m} \sum_{j=1}^{m} \mathbb{E}_{i^{*}\sim [n]}\left[d(\bm{x}_{i^{*}}, \bm{y}_{j})^{2} \right] \\
&= \sum_{i=1}^{n} \frac{1}{m} \sum_{j=1}^{m} d(\bm{x}_{i}, \bm{y}_{j}) + n \sum_{i=1}^{n}\sum_{j=1}^{m} \frac{1}{nm} d(\bm{x}_{i}, \bm{y}_{j}) + \frac{n}{m} \sum_{j=1}^{m} \frac{1}{n} \sum_{i=1}^{n} d(\bm{x}_{i}, \bm{y}_{j}) \\ 
&= \frac{3}{m} \sum_{i=1}^{n} \sum_{j=1}^{m} d(\bm{x}_{i}, \bm{y}_{j}) = \frac{3}{m} \lVert \bm{C} \rVert_{F}^{2}
\end{align*}

By an application of Markov's inequality we have that:
$$
\mathds{P}\left[\left(\sum_{i=1}^{n} p_{i}\right)^{-1} \geq \left(\frac{3 \lVert \bm{C} \rVert_{F}^{2}}{m \delta}\right)^{-1} \right] =
\mathds{P}\left[\sum_{i=1}^{n} p_{i} \leq \frac{3 \lVert \bm{C} \rVert_{F}^{2}}{m \delta} \right] \geq 1 - \frac{\mathbb{E}[\sum_{i=1}^{n} p_{i}]}{\frac{3 \lVert \bm{C} \rVert_{F}^{2}}{m \delta}} = 1 - \delta
$$
Thus with probability $1-\delta$ we have:
$$
\frac{p_{i}}{\sum_{i'=1}^{n} p_{i'}} \geq \frac{m p_{i} \delta }{3 \lVert \bm{C} \rVert_{F}^{2}} = \frac{4 m p_{i} \delta }{12 \lVert \bm{C} \rVert_{F}^{2}} \geq \frac{\lVert \bm{C}_{i,.} \rVert_{2}^{2} \delta}{12 \lVert \bm{C} \rVert_{F}^{2}} = \Omega(\delta) \frac{\lVert \bm{C}_{i,.} \rVert_{2}^{2}}{\lVert \bm{C} \rVert_{F}^{2}}
$$
This indicates the algorithm presented has probabilities with an appropriate bound for using the algorithm of \cite{10.1145/1039488.1039494} to sample the $O(r/\varepsilon)$ rows of $\bm{C}$ and generate a rank-r factor $\bm{U}$.
\end{proof}

To conclude the result of \cite{pmlr-v99-indyk19a} requires reference to an additional work which solves a regression problem for $\bm{V}$ given $\bm{C}$ and $\bm{U}$.

\begin{theorem}{\cite{chen2017condition}} There is a randomized algorithm $\mathcal{A}$, given matrices $\bm{C} \in \mathbb{R}^{n \times m}$, $\bm{U} \in \mathbb{R}^{r \times m}$ reads only $O(r/\varepsilon)$ columns of $\bm{C}$ with time-complexity $O(mr) + poly(r/\varepsilon)$ and returns $\bm{V} \in \mathbb{R}^{n \times r}$ which satisfies
$$
\lVert 
\bm{C} - \bm{VU} \rVert_{F}^{2} \leq (1 + \varepsilon) \min_{\bm{Z} \in \mathbb{R}^{n \times r}} \lVert \bm{C} - \bm{Z U} \rVert_{F}^{2}
$$
with probability $0.99$.   
\end{theorem}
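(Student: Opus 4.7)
The plan is to prove this as a leverage-score regression result. The starting observation is that $\min_{\bm{Z} \in \mathbb{R}^{n \times r}} \lVert \bm{C} - \bm{ZU} \rVert_F^2$ decomposes row-by-row into $n$ independent least-squares problems $\min_{\bm{z}_i} \lVert \bm{c}_i - \bm{z}_i \bm{U} \rVert_2^2$ which share a common design matrix $\bm{U}$. Because $\bm{U}$ is given in full, all sketching effort can be focused on the response matrix $\bm{C}$, by restricting each row regression to a small subset of columns indexed by a carefully chosen sample $S \subset [m]$.

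First I would construct a leverage-score sampling-and-rescaling matrix $\bm{S} \in \mathbb{R}^{m \times s}$ with $s = O(r/\varepsilon)$, drawing columns of $\bm{U}$ with probabilities proportional to the column leverage scores $\ell_j = \bm{u}_j^\mathrm{T} (\bm{UU}^\mathrm{T})^{+} \bm{u}_j$. Standard sketching theory guarantees that $\bm{S}$ is simultaneously (i) an $\varepsilon$-subspace embedding for $\mathrm{range}(\bm{U}^\mathrm{T})$ and (ii) $\varepsilon$-approximate-matrix-multiplication preserving, both with constant success probability. Crucially, both events depend only on $\bm{U}$ and the sample, \emph{not} on $\bm{C}$, so one draw of $\bm{S}$ is reusable across all $n$ row regressions — no union bound over rows is required.

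Next I would solve the sketched regression $\tilde{\bm{V}} := \arg\min_{\bm{Z}} \lVert (\bm{C} - \bm{ZU})\bm{S} \rVert_F^2 = \bm{CS}\,(\bm{US})^{+}$, which requires reading only the $s = O(r/\varepsilon)$ columns of $\bm{C}$ indexed by the sample. Combining the two properties above via the normal-equation analysis (see, e.g., Woodruff's sketching survey) produces the desired bound $\lVert \bm{C} - \tilde{\bm{V}}\bm{U} \rVert_F^2 \leq (1+\varepsilon)\min_{\bm{Z}} \lVert \bm{C} - \bm{ZU}\rVert_F^2$ with constant probability, which is boosted to $0.99$ by $O(1)$ independent repetitions and selecting the candidate of smallest residual (estimated by an additional small subspace embedding).

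The main obstacle, I expect, is not the approximation analysis but meeting the $O(mr) + \mathrm{poly}(r/\varepsilon)$ runtime. Computing leverage scores naively requires forming $(\bm{UU}^\mathrm{T})^{+}$ at cost $O(mr^2 + r^\omega)$, which is already too expensive. The fix is a second sketching layer: apply a fast Johnson–Lindenstrauss transform to $\bm{U}^\mathrm{T}$ so that leverage scores are estimated up to a constant factor in time $O(mr)$, which is sufficient for oversampling by a constant to preserve the subspace-embedding guarantee. Threading this approximate-leverage machinery through while maintaining the $(1+\varepsilon)$ guarantee — rather than the regression step itself — is where the delicate accounting lies.
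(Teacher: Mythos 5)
The paper does not prove this theorem; it quotes it verbatim from \cite{chen2017condition} as background material with no accompanying argument, so there is no in-paper proof to compare against. I evaluate the proposal on its own terms.

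Your sketch has a genuine gap, and it is not where you locate it. I.i.d.\ leverage-score sampling on the columns of $\bm{U}$ cannot meet the stated $O(r/\varepsilon)$ column budget: the subspace-embedding step pays an unavoidable $\Theta(\log r)$ factor (a matrix-Chernoff / coupon-collector effect, not removable by tuning the failure probability or by constant-factor oversampling), so your argument yields $O(r\log r + r/\varepsilon)$ columns. When $\varepsilon = \Omega(1/\log r)$ this is strictly worse than the theorem's guarantee. Removing that $\log r$ is precisely the technical contribution of Chen and Price: rather than drawing columns i.i.d.\ from the leverage distribution, they construct a ``well-balanced'' sampling scheme in the Batson--Spielman--Srivastava spectral-sparsification tradition, choosing and reweighting columns adaptively via barrier potential functions so that a genuinely linear-size set of $O(r/\varepsilon)$ columns suffices for both the spectral and approximate-matrix-multiplication conditions. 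The rest of your setup is sound --- the row-by-row decomposition with a single reusable column sketch and no union bound over $n$, the sketched normal-equation analysis delivering $(1+\varepsilon)$, constant-factor leverage approximations via a fast JL layer to keep the time at $O(mr) + \mathrm{poly}(r/\varepsilon)$, and boosting constant success probability to $0.99$ by independent repetition --- but you misidentify the main difficulty: it lies in designing the sampling distribution itself, not in the approximate-leverage plumbing or the residual accounting.
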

Thus, using the result of Chen and Price \cite{chen2017condition}, one may find a satisfying $\bm{V}$ easily for a fixed $\bm{U}$, $\bm{C}$. In particular, \cite{pmlr-v99-indyk19a} concludes by tying together the low-rank distance matrix algorithm and the guarantees of the algorithms from \cite{10.1145/1039488.1039494} \cite{chen2017condition} as follows
\begin{align*}
\lVert \bm{C} - \bm{V U} \rVert_{F}^{2} &\leq (1 + \varepsilon) \min_{\bm{Z}} \lVert \bm{C} - \bm{Z U} \rVert_{F}^{2}\\ 
&\leq (1 + \varepsilon) \lVert \bm{C} - \bm{C} \bm{U}^\mathrm{T} \bm{U} \rVert_{F}^{2} \\
&\leq  (1 + \varepsilon) ( \lVert \bm{C} - \bm{C}_{r} \rVert_{F}^{2} + \varepsilon \lVert \bm{C} \rVert_{F}^{2} ) \\
&= \lVert \bm{C} - \bm{C}_{r} \rVert_{F}^{2} + \varepsilon \lVert \bm{C} - \bm{C}_{r} \rVert_{F}^{2} + (1 + \varepsilon) \varepsilon \lVert \bm{C} \rVert_{F}^{2} \\ 
&\leq \lVert \bm{C} - \bm{C}_{r} \rVert_{F}^{2} + (1 + 2 \varepsilon) \varepsilon \lVert \bm{C} \rVert_{F}^{2} \leq \lVert \bm{C} - \bm{C}_{r} \rVert_{F}^{2} + \Tilde{\varepsilon} \lVert \bm{C} \rVert_{F}^{2}
\end{align*}
Which achieves the bound up to a constant scaling of $\varepsilon$ and shows the result of \cite{pmlr-v99-indyk19a}. We next investigate low-rank optimal transport solvers, which assume the result and algorithm of \cite{pmlr-v99-indyk19a} to tractably scale to massive datasets.

\section{Connection of Optimal Transport to Projection Problems}\label{sect:projection}

Before discussing works which have address the problem of finding low-rank decompositions of the coupling matrix $\bm{P}$, we discuss a few relevant preliminaries. The Bregman-divergence of some function $F(\bm{x})$, defined by the first-order Taylor expansion of $F$:
$$
D_{F}(\bm{x}, \bm{y}) = F(\bm{x}) - F(\bm{y}) + \nabla F(\bm{y})^\mathrm{T} \left( \bm{x} - \bm{y}  \right)
$$
For the negative entropy function $-\mathrm{H}(\bm{\xi})$, this corresponds to the KL-divergence $\mathrm{KL}(\bm{\zeta} \mid \bm{\xi} )$. We introduce the iterative-Bregman projection algorithm in the context of the KL-divergence owing to the direct connection with entropically-regularized optimal transport.

\begin{definition}{Iterative Bregman Projections}{\cite{bregman1967relaxation}}

Suppose $\mathcal{C} = \cap_{l=1}^{L} \mathcal{C}_{l}$ is an intersection of closed convex sets $\{ \mathcal{C}_{l} \}_{l=1}^{L}$. For $n > L$, let the indexing be $L$-periodic as $\mathcal{C}_{n} := \mathcal{C}_{{n}\mod{L}}$. Suppose we want to find a minimizer $\bm{\zeta}$ of the KL-divergence with some positive vector $\bm{\xi} \in \mathbb{R}^{n \times m}_{+}$ such that $\bm{\zeta} \in \mathcal{C}$. This is to say, we hope to solve the problem of minimizing a distance subject to the condition that one remains in this intersection of convex sets:
$$
\min_{\bm{\zeta} \in \mathcal{C}} \mathrm{KL}(\bm{\zeta} \| \bm{\xi})
$$
Where the projection of $\bm{\xi}$ onto the set $\mathcal{C}$ is denoted by $$\bm{\zeta}^{*} = \argmin_{\bm{\zeta} \in \mathcal{C}} \mathrm{KL}(\bm{\zeta} \| \bm{\xi}) := \mathcal{P}^{KL}_{\mathcal{C}}(\bm{\xi})$$
 Supposing that each $\mathcal{C}_{l}$ forms a quotient space $\mathcal{C}_{l} = V / U$ for a subspace $U \subset V$, defined as $V / U = \{ \bm{v} + U \mid \bm{v} \in V \}$\footnote{This is to say, $\bm{x}, \bm{y} \in \mathcal{C}_{l} \implies c_{1}\bm{x} + c_{2} \bm{y} \in V / U$ for any $c_{1},c_{2} \in \mathbb{R}$.}, the iterative Bregman projection algorithm alternates projections onto each set $\mathcal{C}_{n}$ as
 \begin{equation}\label{bregman1}
\bm{\zeta}^{(n)} \gets \mathcal{P}_{C_{n}}^{KL}(\bm{\zeta}^{(n-1)})
\end{equation}
starting from $\bm{\zeta}^{(0)} = \bm{\xi}$.
\end{definition}
One may show \cite{bregman1967relaxation} the convergence of Bregman projections to the unique minimizer in $\mathcal{C}$, $\bm{\zeta}^{*}$, where we have the guarantee that $\bm{\zeta}^{(n)} \to \bm{\zeta}^{*}$ as $n \to \infty$. These iterative projections only have convergence guarantees when the constraint sets are quotient spaces--this is clearly not the case for the constraints of the optimal transport LP, and a few more notions are required.

\begin{definition}{Dykstra's Algorithm}{\cite{Dykstra1983}}

Given a point $\bm{x}_{0} \in E$ for $E$ a Euclidean space \footnote{e.g. $\mathbb{R}, \mathbb{R}^{d}$, $\mathbb{R}^{n \times m}$, etc.}, to find the unique point in $\mathcal{C} = \cap_{l=1}^{L} \mathcal{C}_{l}$ for closed, convex sets $\mathcal{C}_{l}$ that minimize the distance to $\bm{x}_{0}$ as
$$\bm{x}^{*} = \argmin_{\bm{x} \in \mathcal{C}}\lVert \bm{x} - \bm{x}_{0} \rVert_{2}$$
One may initialize the residuals $\bm{q}_{-(L-1)} = ... = \bm{q}_{0} = 0$ and apply the algorithm:
\begin{align*}
\bm{x}_{n} &= \mathcal{P}_{\mathcal{C}_{n}}(\bm{x}_{n-1} + \bm{q}_{n-1}) \\
\bm{q}_{n} &= (\bm{x}_{n-1} - \bm{x}_{n}) + \bm{q}_{n-L}
\end{align*}
Where $\mathcal{C}_{n} := \mathcal{C}_{n \mod L}$ as before and $\mathcal{P}_{\mathcal{C}_{n}}$ denotes the projection operator onto the convex set $\mathcal{C}_{n}$.
\end{definition}

One can note that Dykstra's algorithm for projections onto intersections of convex sets no longer relies on the assumption that the set is a quotient space, and applies in the case that it is merely closed under convex-combinations. To generalize Dykstra's for more general functions that than the $\ell_{2}$-norm, one may define the projection with respect to the Bregman divergence of a cost function $F$ and define the projection by the minimization: $\mathcal{P}_{\mathcal{C}_{n}} := \argmin_{\bm{x}} D_{F}(\bm{x}, \bm{y})$. It was proven in \cite{Bauschke2000} that the generalized form of Dykstra's iterations are given as:
\begin{align*}
\bm{x}_{n} &= \mathcal{P}_{\mathcal{C}_{n}}\left( \nabla F^{*} (\nabla F(\bm{x}_{n-1}) + \bm{q}_{n-1}) \right) \\
\bm{q}_{n} &= (\nabla F(\bm{x}_{n-1}) - \nabla F(\bm{x}_{n})) + \bm{q}_{n-L}
\end{align*}
For $F^{*}$ denoting the Fenchel-conjugate of $F$, which we define later in connection to the low-rank dual problem. Notably, \cite{Bauschke2000} also provided guarantees of convergence to the optimal solution which extend to the case that $F$ is the negative entropy and $D_{F}$ the KL-divergence. These constitute Dykstra's algorithm with cyclic Bregman projections, and project a point to the closest point in the intersection of convex sets $\mathcal{C} = \cap_{l=1}^{L} \mathcal{C}_{l}$ for an arbitrary cost function $F$ and its associated Bregman-divergence $D_{F}$.

\subsection{Dykstra's algorithm with cyclic Bregman projections}
As such, we can see the updates for $F$ being the negative entropy and $D_{F}$ the KL-divergence. Without proof, the conjugate of the negative entropy is simply given as $F^{*}(\bm{\zeta}) = \exp\{ \bm{\zeta} - \bm{1} \}$. Thus, for the minimization problem:
$$
\bm{\zeta}^{*} = \argmin_{\bm{\zeta} \in \mathcal{C}} \mathrm{KL}(\bm{\zeta} \| \bm{\xi}) := \mathcal{P}^{\mathrm{KL}}_{\mathcal{C}}(\bm{\xi})
$$
Letting $\log{\bm{q}_{-(L-1)}} = ... = \log{\bm{q}_{0}} = 0$, one may combine the two algorithms above to solve this minimization using generalized Dykstra's iterations. In particular, we have:
\begin{align*}
\bm{\zeta}^{(n)} &= \mathcal{P}_{\mathcal{C}_{n}}\left( \nabla F^{*} (\nabla F(\bm{\zeta}^{(n-1)}) + \log{\bm{q}_{n-1}}) \right) \\
&= \mathcal{P}_{\mathcal{C}_{n}}\left( \exp\left(\nabla F(\bm{\zeta}^{(n-1)}) + \log{\bm{q}_{n-1}} - \bm{1} \right) \right) \\ 
&= \mathcal{P}_{\mathcal{C}_{n}}\left( \exp\left( \log{\bm{\zeta}^{(n-1)}} + \bm{1} + \log{\bm{q}_{n-1}} - \bm{1} \right)\right) \\ 
&= \mathcal{P}_{\mathcal{C}_{n}}\left( \bm{\zeta}^{(n-1)} \odot \bm{q}_{n-1} \right)
\end{align*}
And:
\begin{align*}
\log\bm{q}_{n} &= (\nabla F(\bm{\zeta}^{(n-1)}) - \nabla F(\bm{\zeta}^{(n)})) + \log\bm{q}_{n-N} \\ 
&= \left( \log{\bm{\zeta}^{(n-1)}} + \bm{1} - (\log{\bm{\zeta}^{(n)}} + \bm{1})
+ \log{\bm{q}_{n-L}}
\right) \\ 
&= \log{\left(\bm{q}_{n-L} \odot \frac{\bm{\zeta}^{(n-1)}}{\bm{\zeta}^{(n)}} \right)}
\end{align*}
So that:
\begin{equation}\label{dykstra1}
\bm{\zeta}^{(n)} \gets \mathcal{P}_{\mathcal{C}_{n}}\left( \bm{\zeta}^{(n-1)} \odot \bm{q}_{n-1} \right)
\end{equation}
\begin{equation}\label{dykstra2}
\bm{q}_{n} \gets \bm{q}_{n-L} \odot \frac{\bm{\zeta}^{(n-1)}}{\bm{\zeta}^{(n)}}
\end{equation}
Where division is interpreted to be element-wise, $\odot$ refers to the Hadamard product, and the logarithm is applied elementwise.
\subsection{Connection to Sinkhorn distances}
Interestingly, the Sinkhorn algorithm described in Algorithm~\ref{alg:sinkhorn} can be alternatively derived in the context of Bregman iterations \cite{Benamou2015} as a minimization of the form:
$$
\mathrm{W}_{\epsilon}(\mu, \nu) = \epsilon \min_{\bm{P} \in \Pi(\mu, \nu)} \mathrm{KL}(\bm{P} \| \bm{\xi})
$$
Where $\bm{\xi}$ is the kernel $\bm{\xi} = e^{- \bm{C}/\epsilon}$ and $\Pi(\mu, \nu) = \mathcal{C}_{1} \cap \mathcal{C}_{2}$ for the convex constraint sets $\mathcal{C}_{1} = \{ \bm{P} : \bm{P}\bm{1}_{m} = \bm{a} \}$ and $\mathcal{C}_{2} = \{ \bm{P} : \bm{P}^\mathrm{T} \bm{1}_{n} = \bm{b} \}$. To cast this into the Bregman-projection framework, one alternates between the two updates:
\begin{align*}
\bm{P}^{(l)} &= \mathcal{P}_{\mathcal{C}_{1}}(\bm{P}^{(l-1)}) \\
\bm{P}^{(l+1)} &= \mathcal{P}_{\mathcal{C}_{2}}(\bm{P}^{(l)}) 
\end{align*}
Where for the first projection one has the following first-order KKT condition:
\begin{align*}
\nabla\left( \epsilon \mathrm{KL}(\bm{P} \| \bm{P}^{(l-1)}) + \bm{\lambda}^\mathrm{T}(\bm{P1}_{m} - \bm{a}) \right) = \epsilon \log{\left(\frac{\bm{P}}{\bm{P}^{(l-1)}} \right)} + \bm{\lambda}\bm{1}_{m}^\mathrm{T} = \bm{0}\\ \implies \bm{P} = \diag(e^{-\bm{\lambda}/\epsilon}) \bm{P}^{(l-1)}
\end{align*}
With the constraint of $\mathcal{C}_{1}$ that $\bm{P 1}_{m} = \bm{a}$, this implies $\diag(\bm{a}/\bm{P}^{(l-1)}\bm{1}_{m}) = \diag(e^{-\bm{\lambda}/\epsilon})$ and recovers the first update of Sinkhorn $\bm{P}^{(l)} \gets \diag(\bm{a}/\bm{P}^{(l-1)}\bm{1}_{m}) \bm{P}^{(l-1)}$. An analogous argument gives the second, where all iterates satisfy $\bm{P}^{(l)} = \diag(\bm{u}^{(l)}) e^{-\bm{C}/\epsilon} \diag(\bm{v}^{(l)})$ for $\bm{u}$, $\bm{v}$ as defined in Algorithm~\ref{alg:sinkhorn}.

\section{Additional Simulations}
\label{sec:sup_gaussian_simulation}
We tested on two additional synthetic datasets, both used as benchmarking datasets in \cite{Scetbon2021LowRankSF}. We follow exactly the parameters provided in \cite{Scetbon2021LowRankSF} to simulate these datasets. For the first one, we simulated $n = m = 10,000$ points from two Gaussian mixtures in 2D (Fig. \ref{fig:2D_Gaussian_mixture_visual}). The first Gaussian mixture is a mixture of three Gaussian distributions with means $(0, 0), (0, 1), (1, 1)$ respectively. The mixture proportion is $\frac{1}{3}$ and the covariance is  0.05 $\times$ identity for each Gaussian. The second Gaussian mixture is a mixture of two Gaussian distributions with means $(0.5, 0.5), (-0.5, 0.5)$ respectively. The mixture proportions is $\frac{1}{2}$ and the covariance is 0.05 $\times$ identity for each Gaussian. 

For the second dataset, we simulated $n = m = 5,000$ points from two Gaussian mixtures in 10D. The first Gaussian mixture is a mixture of three Gaussian distributions with means $(0, 0, 0, \cdots, 0), (0, 1, 0, \cdots, 0), (1, 1, 0, \cdots, 0)$ respectively. The mixture proportions is $\frac{1}{3}$ and the covariance is  0.05 $\times$ identity for each Gaussian. The second Gaussian mixture is a mixture of two Gaussian distributions with means $(0.5, 0.5, 0, \cdots, 0), (-0.5, 0.5, 0, \cdots, 0)$ respectively. The mixture proportions is $\frac{1}{2}$ and the covariance is 0.05 $\times$ identity for each Gaussian. 

For each dataset, we repeat the same procedure as in \S~\ref{sec: synthetic_experiment}, running FRLC and LOT with Euclidean distance as cost to find a low-rank coupling matrix between the two Gaussian mixtures with rank between 50 and 200. We observe the same pattern as Fig. \ref{fig:synthetic}b. \method\ obtains lower transport cost with increasing rank, and achieves lower cost for each rank than LOT under all initializations (Fig. \ref{fig:synthetic}3c, Fig. \ref{fig:gaussian_mixture}).

\begin{figure}[tbp]
\begin{center}
\centerline{\includegraphics[scale=0.75]{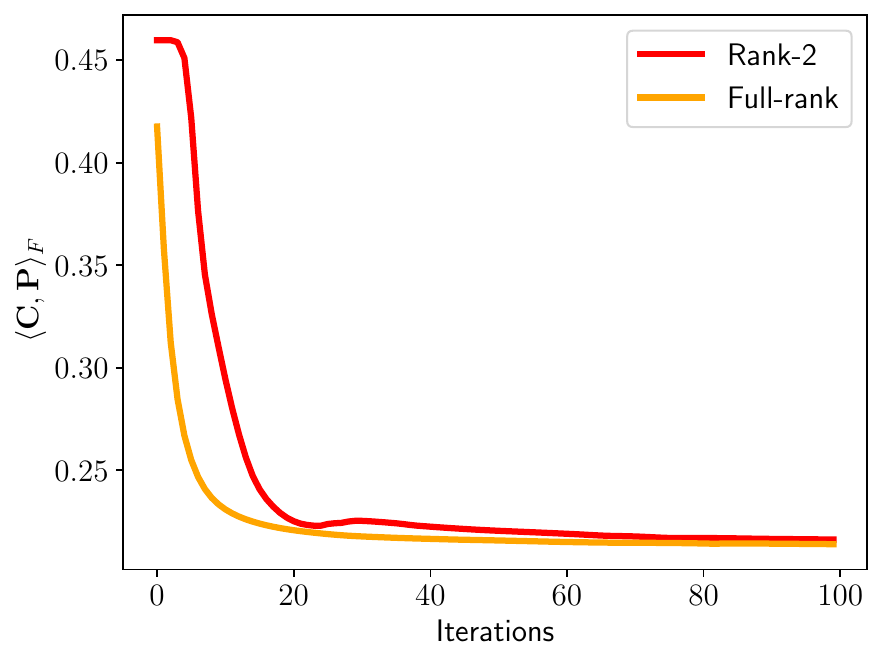}}
\caption{Transport cost $\langle \bm{C}, \bm{P} \rangle_F$ against number of iterations for \method\ with rank 200 on the synthetic dataset of two moons and eight Gaussians. Smooth convergence is observed for both rank-2 and full-rank random initialization. }
\label{fig:convergence}
\end{center}
\end{figure}

\begin{table}[]
\centering
\begin{tabular}{cccccc}
\toprule
Method & Dataset & Time in seconds  & Cost Value  & $\lVert \bm{P 1}_{n} - \bm{a} \rVert_{2}$ & $\lVert \bm{P}^{T} \bm{1}_{m} - \bm{b} \rVert_{2}$ \\
\, & \, & (CPU) & $\langle \bm{C}, \bm{P} \rangle_{F}$ & \, & \, \\
\midrule
FRLC & \footnotesize{2-moons,}  
& \textbf{0.379} 
& \textbf{0.207} 
& 1.56E-5 
& 6.1E-18 
\\
LOT & \footnotesize{8-Gaussians}  
& 0.751  
& 0.210 
& 1.90E-5 
& 1.89E-5 
\\ \midrule
FRLC & \footnotesize{Gaussian mixture} 
& \textbf{0.354} 
& \textbf{0.178} 
& 1.05E-5 
& 7.0E-18
\\ 
LOT & \footnotesize{(2D)} 
&  0.735 
& 0.181
& 1.65E-5
& 1.70E-5
\\ \midrule
FRLC & \footnotesize{Gaussian mixture} 
& \textbf{0.323} 
& \textbf{0.294} 
& 2.48E-6
& 8.9E-18
\\ 
LOT & \footnotesize{(10D)} 
& 0.677 
& 0.307 
& 1.39E-5
& 1.46E-5
\\
\bottomrule
\end{tabular}
\caption{Runtime of \method\ and LOT on the synthetic datasets of 1000 samples, as well as cost value $\langle \bm{C}, \bm{P} \rangle_{F}$ and marginal tightness for context. This was done with the \method\ setting \texttt{max\_inneriters\_balanced}=1000, \texttt{max\_inneriters\_relaxed}=50, \texttt{min\_iter}=7 and rank $r=100$. This time excludes the extra time incurred for \texttt{ott-jax} problem setup and includes the setup time for \method. }
\label{table:runtime}
\vskip -0.2in
\end{table}

\begin{figure}[tbp]
\begin{center}
\centerline{\includegraphics[scale=0.75]{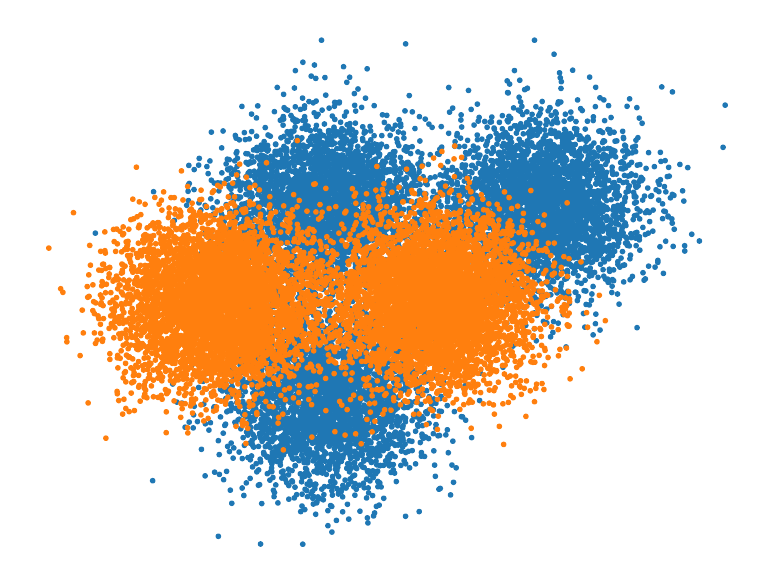}}
\caption{Plot of the two simulated mixtures of Gaussians in 2D, following the same parameters as \cite{Scetbon2021LowRankSF}.}
\label{fig:2D_Gaussian_mixture_visual}
\end{center}
\end{figure}

\begin{figure}[tbp]
\begin{center}
\centerline{\includegraphics[scale=0.75]{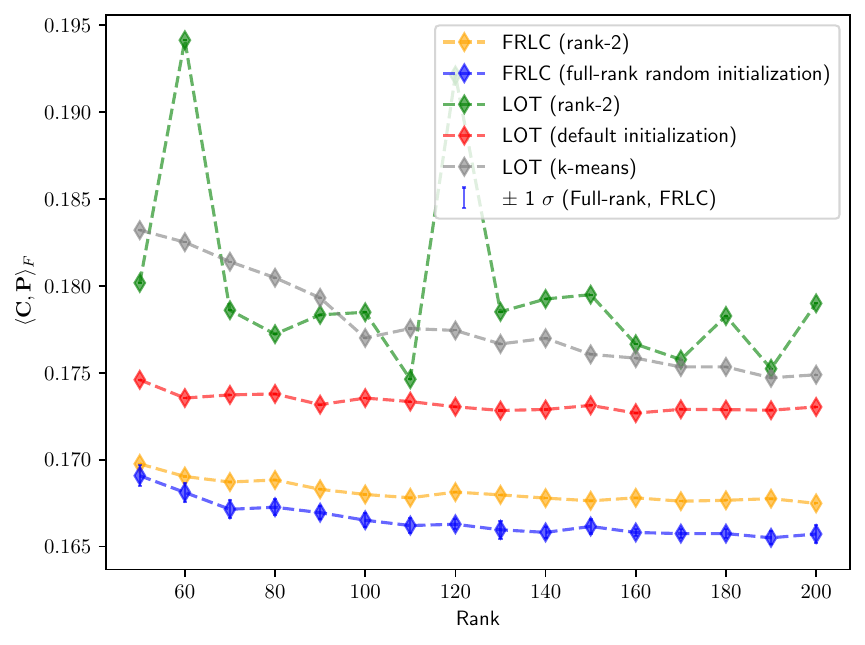}}
\caption{Transport cost $\langle \bm{C}, \bm{P} \rangle_F$ achieved by LOT \cite{Scetbon2021LowRankSF} and \method\ across different ranks and different initializations on the Wasserstein problem on the synthetic dataset of two mixtures of Gaussians in 2D.}
\label{fig:gaussian_mixture}
\end{center}
\end{figure}

\section{Graph Partitioning}
\label{sec:graphpartitioning}
\subsection{Evaluation on a Graph Partitioning Task}
We next evaluate \method\ on an unsupervised graph partitioning (node clustering) problem described by \cite{chowdhury2021generalized}. Specifically, given a graph $G = (V, E)$ of $n$ nodes, we represent the graph as $G = (\bm{A}, \bm{h})$, where $\bm{A} \in \mathbb{R}^{n \times n}$ encodes the intra-graph node relationship (e.g. adjacency matrix) and $\bm{h} \in \Delta_n$ is a uniform measure. We cluster the nodes of $G$ by estimating a GW coupling between $G$ and a smaller graph $\overline{G} = (\overline{\bm{B}}, \overline{\bm{h}})$, where $\overline{\bm{B}} \in \mathbb{R}^{m \times m}$ represents the relationship between each of the $m$ clusters and $\overline{\bm{h}}\in \Delta_m$ is the proportion of nodes of $G$ in each cluster. Without a priori knowledge, $\overline{\bm{h}}$ is set to be uniform and $\overline{\bm{B}}$ is set as the identity matrix. \cite{ICLR_semi} notes that instead of solving a balanced GW problem, semi-relaxed GW with the right marginal $\overline{\bm{h}}$ relaxed learns $\overline{\bm{h}}$ from data and leads to more accurate node clustering.

\begin{table}[h]
\centering
\begin{tabular}{cccccc}
\toprule
\multicolumn{2}{c}{Dataset} 
& 
GWL & 
\method 
& SpecGWL 
& Spec\method 
\\ 
\, & \, & \, & SR-GW & \, & SR-GW\\
\midrule
Wikipedia 
& \footnotesize{sym, raw} & 0.314 & 0.387 & 0.372 & \textbf{0.444}  \\
\footnotesize{(1998 nodes,\,2700 edges)}  
& \footnotesize{sym, noisy}                    & 0.250 & 0.361 & 0.293 & \textbf{0.400}  \\
& \footnotesize{asym, raw}                     & 0.263 & 0.276 & 0.194 & \textbf{0.304}  \\
& \footnotesize{asym, noisy}                   & \textbf{0.208} & 0.201 & 0.141 & 0.177  \\ \midrule
EU-email 
& \footnotesize{sym, raw}  & 0.434 & \textbf{0.464} & 0.009 & 0.040  \\
\footnotesize{(1005 nodes, \,25571 edges)}
& \footnotesize{sym, noisy}                    & 0.392 & \textbf{0.422} & 0.009 & 0.014  \\
& \footnotesize{asym, raw}                     & 0.388 & \textbf{0.398} & 0.012 & 0.028  \\
& \footnotesize{asym, noisy}                   & \textbf{0.385} & 0.348 & 0.008 & 0.012  \\ \midrule
Amazon 
& \footnotesize{raw}         & 0.322 & 0.338 & \textbf{0.505} & 0.479  \\
\footnotesize{(1501 nodes, 4626 edges)} 
& \footnotesize{noisy}                         & 0.274 & 0.257 & 0.438 & \textbf{0.453}  \\ \midrule
Village  
& \footnotesize{raw}       & 0.531 & \textbf{0.710} & 0.553 & 0.579  \\
\footnotesize{(1991 nodes, 8423 edges)} 
& \footnotesize{noisy}                         & 0.413 & 0.536 & 0.397 & \textbf{0.829}  \\ \bottomrule
\end{tabular}
\caption{Performance (measured using Adjusted Mutual Information (AMI)) in graph partitioning for \textit{full-rank} OT algorithms GWL, SpecGWL and full-rank semi-relaxed \method. The top performing method for each dataset is highlighted in bold.}
\label{table:graph_partition}
\end{table}

We benchmark \method\ for semi-relaxed GW on four real-world graphs: a Wikipedia hyperlink network with 15 webpage categories \cite{yang2012defining}; an email interaction network within a European institute with 42 departments \cite{yin2017local}; an Amazon product network with 12 product categories \cite{yang2012defining}; and a network of interactions between 12 Indian villages \cite{banerjee2013diffusion}. We also test on the symmetric and noisy versions of each graph provided by \cite{chowdhury2021generalized}. We compare with two OT-based methods: (1) GWL \cite{xu2019gromov}, which solves a balanced GW problem between $G$ and $\overline{G}$ with the adjacency matrix of $G$ as the intra-domain cost matrix $\bm{A}$; (2) SpecGWL \cite{chowdhury2021generalized} which uses the heat kernel on the graph Laplacian as $\bm{A}$. We similarly run our \method\ algorithm using with both the adjacency matrix (denoted \method-SR-GW) and heat kernel (denoted Spec\method-SR-GW). Since the number of clusters in each dataset is not large, we compute the full-rank coupling matrix in each case. 

Overall, \method\ achieves the best clustering performance on 9 out of 12 datasets (Table \ref{table:graph_partition}).  When using the adjacency matrix, our semi-relaxed algorithm achieves better clustering result than GWL on 9 out of 12 datasets. When using the heat kernel, our semi-relaxed algorithm achieves better clustering result than SpecGWL on 11 out of 12 datasets. These results show the importance of semi-relaxed OT on real-world problems, as well as the accuracy of \method.

\label{sec:sup_graph_partition}
\subsection{Problem Statement}
As discussed in \cite{ICLR_semi, chowdhury2021generalized}, it is possible to achieve unsupervised graph partitioning (node clustering) through Gromov-Wasserstein (GW) OT. Given a graph $(V, E)$ of $n$ nodes, we encode it as $G = (\bm{A}, \bm{h})$, where $\bm{A} \in \mathbb{R}^{n \times n}$ encodes the intra-graph node relationship (e.g. adjacency matrix, Laplacian) and $\bm{h} \in \Delta{n}$ is a uniform measure over the nodes. If we want to cluster the nodes of $G$ into $m$ clusters, we can define a new graph $\overline{G} = (\overline{\bm{B}}, \overline{\bm{h}})$, where $\overline{\bm{B}} \in \mathbb{R}^{m \times m}$ is a diagonal matrix representing the cluster's connections and $\overline{\bm{h}}$ is a distribution over clusters estimating the proportion of nodes in $G$ in each cluster. Since we don't know the density of clusters a priori, we can set $\overline{\bm{h}}$ to be uniform. Usually $\overline{\bm{B}}$ is set as the identity matrix.

To cluster the nodes in $G$, we can solve a GW problem matching nodes in $G$ with nodes in $\overline{G}$, with intra-domain cost matrices $\bm{A}$ and $\overline{\bm{B}}$: 

\begin{equation} \label{eq:graph_partition_gwot}
\begin{split}
\min_{}\qquad  & \sum_{ij'kl'}(\bm{A}_{ik} - \overline{\bm{B}}_{j'l'})^2 \bm{P}_{ij'}\bm{P}_{kl'} \\
s.t.\qquad  & \bm{P}\in \Pi_{\bm{h}, \overline{\bm{h}}}
\end{split}
\end{equation}

The cluster assignment of each node in $G$ can then be recovered from $\bm{P}$ by finding the node in $\overline{G}$ mapped to it with the maximum weight. 

However, since the proportion of nodes in $G$ in each cluster is not known a priori, solving a balanced GW problem fixing the marginal of $\bm{P}$ on $\overline{G}$ to be a uniform $\overline{\bm{h}}$ significantly constrains the expressivity of the algorithm. Therefore, as proposed in \cite{ICLR_semi}, we can instead solve a semi-relaxed GW problem with the right marginal relaxed, fixing the marginal on $G$ to be $\bm{h}$ but allowing the marginal on $\overline{G}$ to deviate from $\overline{\bm{h}}$:

\begin{equation} \label{eq:graph_partition_srgwot}
\begin{split}
\min_{}\qquad  & \sum_{ij'kl'}(\bm{A}_{ik} - \overline{\bm{B}}_{j'l'})^2 \bm{P}_{ij'}\bm{P}_{kl'} + \tau \mathrm{KL}(\bm{P}^\mathrm{T} \mathbf{1}_{n} \mid \overline{\bm{h}})
         \\
s.t.\qquad  & \bm{P}\in \Pi_{\bm{h}, \cdot}
\end{split}
\end{equation}

The learned $\overline{\bm{h}}$ from semi-relaxed GW estimates the posterior proportion of nodes in $G$ in each of the $m$ clusters.

\subsection{Dataset and Preprocessing}
We run our semi-relaxed \method\ algorithm on four real-world graph datasets: a Wikipedia hyperlink network with 1998 nodes and 15 clusters \cite{yang2012defining}, a directed graph of email interactions in a European research institute with 1005 nodes and 42 clusters \cite{yin2017local}, an Amazon product network with 1501 nodes and 12 clusters \cite{yang2012defining}, and a network of interactions between Indian villages with 1991 nodes and 12 clusters \cite{banerjee2013diffusion}. The Wikipedia and EU-email graphs are directed, so we also use undirected versions of them. We also use noisy version of each graph by adding up to 10\% additional edges following \cite{chowdhury2021generalized}, leading to a total of 12 different graphs to cluster. 

\subsection{Experiment Settings}
We compare our algorithm with two baseline methods, GWL and SpecGWL. GWL \cite{xu2019gromov} solves the entropy-regularized version of the balanced GW problem of \eqref{eq:graph_partition_gwot} with the adjacency matrix of $G$ as $\bm{A}$. We set $\bm{h}$ such that the density of each node is proportional to its degree. We set $\overline{\bm{h}}$ to be a distribution estimated by sorting the weights of $\bm{h}$ and sampling $m$ values via linear interpolation, following \cite{chowdhury2021generalized}. We set $\overline{\bm{B}} = diag(\overline{\bm{h}})$. We set the entropy regularization $\eta = 10^{-6}$. SpecGWL \cite{chowdhury2021generalized} solves the same problem as GWL, but instead of using the adjacency matrix as $\bm{A}$, it uses the heat kernel on the normalized graph Laplacians \cite{chung2005laplacians}. We set the heat parameter $t=10$.

For our method, we solve the semi-relaxed GW problem of \eqref{eq:graph_partition_srgwot} with full rank solution. We use both adjacency matrix and heat kernel as $\bm{C}$ and label the result of the two representations as \method-SR-GW and Spec\method-SR-GW. We set $\tau=0.01$ as to minimize the conformation to the right marginal. Since our method depends on random initialization, we run our method 10 times on each dataset and report the mean performance. We evaluate the resulting clusterings of all methods by computing the Adjusted Mutual Information (AMI) between the computed clustering and the ground truth clustering. This experiment, and the experiments on mouse embryo spatial transcriptomics, were conducted on cluster GPUs.

\section{Spatial Transcriptomics Alignment} \label{sec:sup_spatiotemporal}
\subsection{Problem Statement}

\begin{figure*}[]
\centerline{\includegraphics[scale=0.75]{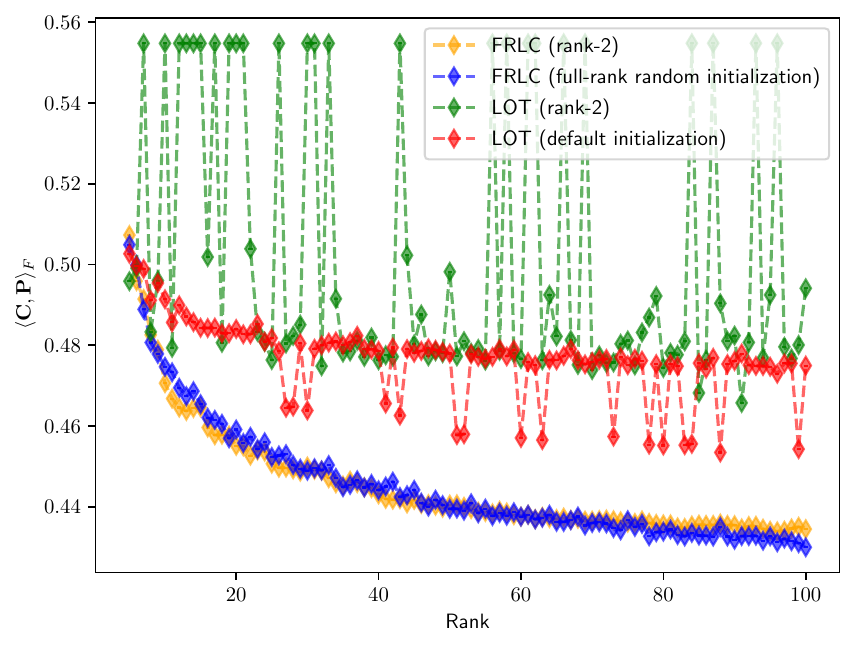}}
\caption{\method\ achieves lower primal cost $\langle \bm{C}, \bm{P} \rangle_{F}$ for $\bm{P} \in \Pi_{\bm{a},\bm{b}}$ than \cite{Scetbon2021LowRankSF} on a spatial-transcriptomics dataset of mouse embryonic development \cite{chen2022spatiotemporal}. \method\ demonstrates a more robust trend of improved performance with higher rank.}
\label{fig:mouse_embryo_W}
\end{figure*}

In this problem, we use \method\ to find a low-rank alignment matrix between cells from two spatial transcriptomics (ST) \cite{staahl2016visualization} slices, collected at two timepoints, then use the computed alignment matrix for two downstream prediction tasks. An ST experiment on a 2D tissue slice yields a pair $(X, Z)$. $X \in \mathbb{R}^{n\times p}$ is the gene expression matrix, where $n$ is the number of cells on the slice and $p$ is the number of genes measured. $X_{ij} \in \mathbb{R}$ is the gene expression level of gene $j$ in cell $i$, where a higher number indicates stronger expression. $Z \in \mathbb{R}^{n \times 2}$ is the spatial coordinate matrix, where each row $i$ stores the x-y coordinate of cell $i$ on the slice. Therefore, each cell on the slice has a gene expression vector of length $p$, which encodes the feature of the cell, as well as a coordinate vector of length two, which encodes the geometrical information of the cell on the slice. 

Our input data is a pair of ST slices $\mathcal{S}_0 = (X^0, Z^0), \mathcal{S}_1 = (X^1, Z^1)$, with $n$ and $m$ cells, of the same tissue region. We assume $\mathcal{S}_0$ is collected at timepoint $t=0$ and $\mathcal{S}_1$ is collected at timepoint $t=1$, hence the transition from $\mathcal{S}_0$ to $\mathcal{S}_1$ reflects the biological development of the tissue during the time period. We would like to find the ancestor-descendant relationship between cells from $\mathcal{S}_0$ and $\mathcal{S}_1$ by computing an optimal transport coupling matrix between cells from $\mathcal{S}_0$ and cells from $\mathcal{S}_1$. The state-of-the-art spatial transcriptomics alignment method moscot \cite{klein2023moscot} claims that unbalanced OT is the most appropriate setup for this problem. Specifically, given discrete uniform measure $\bm{a}$ and $\bm{b}$ over cells from $\mathcal{S}_0$ and $\mathcal{S}_1$, we solve the unbalanced Wasserstein problem

\begin{equation} \label{eq:spatiotemporal_u_w}
\begin{split}
\min_{\bm{P} \in \mathbb{R}_+^{n \times m}}  \langle \bm{C}, \bm{P} \rangle + \tau\mathrm{KL}( \bm{P} \bm{1}_m \| \bm{a}) + \tau \mathrm{KL}( \bm{P}^\mathrm{T} \bm{1}_n \| \bm{b}) 
\end{split}
\end{equation}

$\bm{C} \in \mathbb{R}_+^{n \times m}$ has entries $\bm{C}_{ij} = c(X^0_i, X^1_j)$, where $c$ is the Euclidean distance between the features of cell $i$ from $\mathcal{S}_0$ and cell $j$ from $\mathcal{S}_1$.

The above formulation only considers the feature information of the two slices, but not the geometrical information. Therefore, we can also solve the unbalanced GW problem

\begin{equation} \label{eq:spatiotemporal_u_gw}
\min_{\bm{P} \in \mathbb{R}_+^{n \times m}} \sum_{i, j', k, l'}
(\bm{A}_{ik} - \bm{B}_{j'l'})^2 
\bm{P}_{ij'} \bm{P}_{kl'} + \tau\mathrm{KL}( \bm{P} \bm{1}_m \| \bm{a}) + \tau \mathrm{KL}( \bm{P}^\mathrm{T} \bm{1}_n \| \bm{b}) 
\end{equation}

where $\bm{A} \in \mathbb{R}^{n \times n}$ is the Euclidean distance matrix between the spatial location of cells within $\mathcal{S}_0$, and $\bm{B} \in \mathbb{R}^{m \times m}$ is the Euclidean distance matrix between the spatial location of cells within $\mathcal{S}_1$. Similarly, we can combine the above two formulations to solve the unbalanced FGW problem.

Notice the inherent asymmetry stemming from the temporal nature of the problem: all cells on $\mathcal{S}_1$ should have an ancestor from $\mathcal{S}_1$, but not all cells from $\mathcal{S}_1$ need to have a descendent in $\mathcal{S}_0$ because of cell death. Therefore, the most natural OT task for this problem is semi-relaxed OT with the left marginal (the marginal on the first/ancestor slice) relaxed. Specifically, we can also solve the semi-relaxed Wasserstein problem

\begin{equation} \label{eq:spatiotemporal_w}
    \min_{\bm{P} \in \Pi_{\,\cdot, \bm{b}}}   \langle \bm{C}, \bm{P} \rangle + \tau \mathrm{KL} ( \bm{P} \bm{1}_m \| \bm{a} )
\end{equation}

as well as semi-relaxed GW and FGW problem.

\paragraph{Gene expression prediction task} Given the alignment matrix $\bm{P}$ linking cells from $\mathcal{S}_0$ to $\mathcal{S}_1$, we can predict properties of cells in $\mathcal{S}_1$ from properties of cells in $\mathcal{S}_0$. Let the expression of a gene $j$ in $\mathcal{S}_0$ be a vector $\bm{f}_j \in \mathbb{R}^{n}$, such that $\bm{f}_{ji}$ is the expression level of gene $j$ in cell $i$, we can predict the expression of gene $j$ in $\mathcal{S}_1$ as $\widetilde{\bm{f}}_j = m \times \bm{P}^\mathrm{T} \times \bm{f}_j \in \mathbb{R}^m$. The accuracy of the prediction can be measured by the Spearman correlation between the predicted expression and the ground truth expression $\overline{\bm{f}}_j$ of gene $j$ in $\mathcal{S}_1$: $\rho(\widetilde{\bm{f}}_j, \overline{\bm{f}}_j)$. In this work, we test the prediction accuracy on 10 test genes: \texttt{Tubb2b}, \texttt{Pantr1}, \texttt{Actc1}, \texttt{Tnni1}, \texttt{Afp}, \texttt{Hbb-bh1}, \texttt{Fez1}, \texttt{Crabp1}, \texttt{Crabp2}, \texttt{Col3a1}, which are markers genes for various cell types in mouse embryo.

\paragraph{Cell type prediction task} We can also use the cell type labels of cells in $\mathcal{S}_0$ to predict the cell type labels of cells in $\mathcal{S}_1$. Specifically, for each cell $j$ in $\mathcal{S}_1$, we can assign it the type of the cell $\text{argmax}_i \bm{P}_{ij}$ in $\mathcal{S}_0$. We can measure the accuracy of the cell type prediction by computing the Adjusted Rand Index (ARI) and Adjusted Mutual Information (AMI) between the predicted clustering of cells in $\mathcal{S}_1$ and the ground truth clustering.

\subsection{Dataset and Preprocessing}
We use the large-scale real-world dataset of mouse embryo development \cite{chen2022spatiotemporal}, consisting of eight timepoints of ST slices during the whole process of mouse embryo development. In this work, we align the pair of adjacent timepoints of E11.5 and E12.5 embryos, consisting of 30,124 cells and 51,365 cells, respectively. We preprocess the dataset using the standard SCANPY \cite{wolf2018scanpy} pipeline. We first filter the two slices to have the same set of genes, resulting in 26,436 genes for all cells from both slices. We then log-normalize the gene expression of all cells from the two slices, and apply Principle Component Analysis (PCA) to reduce the dimensionality of gene expressions to 30. We take the Euclidean distance between the gene expression in the PCA space as the cost matrix $\bm{C}$. We take the Euclidean distance between the 2D coordinate of each cell within each slice as the intra-domain cost matrices $\bm{A}$ and $\bm{B}$. 

Fig. \ref{fig:embryo} visualizes the two slices in this dataset, with each cell annotated with a cell type from the original publication. 

\begin{figure}[tbp]
\begin{center}
\centerline{\includegraphics[width=\linewidth]{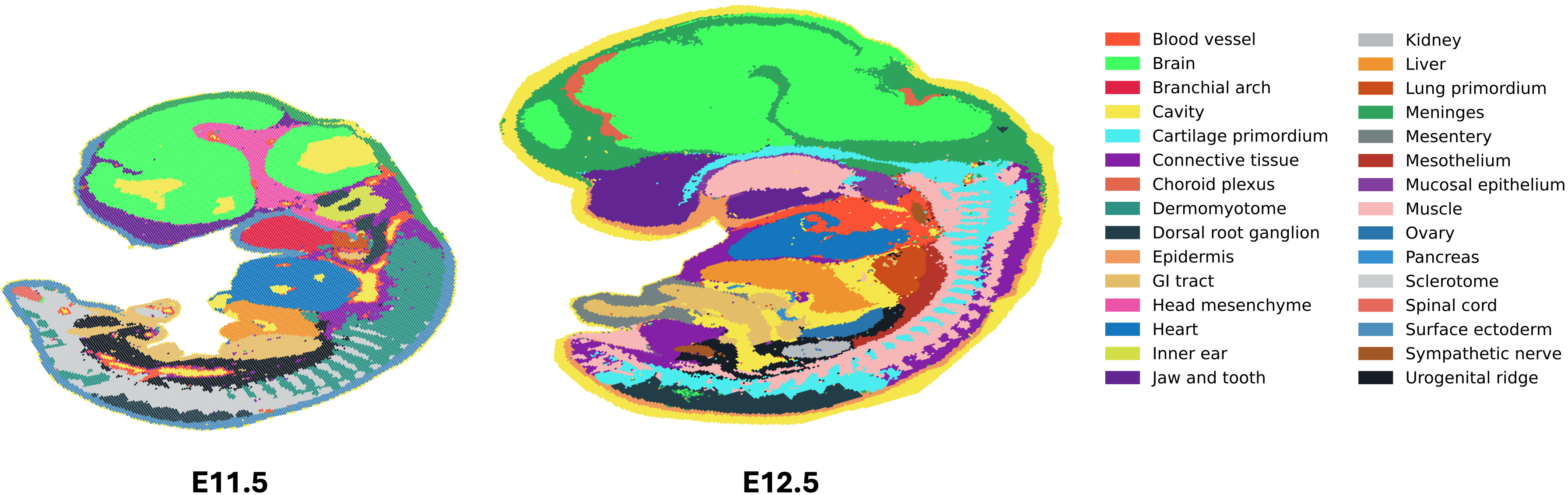}}
\caption{Visualization of the E11.5 and E12.5 mouse embryos, with each cell colored by the cell type annotated by \cite{chen2022spatiotemporal}.}
\label{fig:embryo}
\end{center}
\end{figure}

\subsection{Experiment Settings}
\label{sec:sup_spatiotemporal_validation}
We compare with the unbalanced low-rank optimal transport algorithm of \cite{scetbon2023unbalanced}, the backbone of the spatial transcriptomics alignment method moscot \cite{klein2023moscot}, which was shown by \cite{scetbon2023unbalanced, klein2023moscot} to achieve state-of-the art performance on spatial transcriptomics alignment. We use \cite{scetbon2023unbalanced} to solve three unbalanced problems for spatial transcriptomics alignment: the unbalanced Wasserstein problem of \eqref{eq:spatiotemporal_u_w} (result denoted as LOT-U-W), the unbalanced GW problem of \eqref{eq:spatiotemporal_u_gw} (LOT-U-GW), and the unbalanced FGW problem with a convex combination of the previous two costs (LOT-U-FGW). We use \method\ to solve the same three unbalanced problems (results denoted as \method-U-W, \method-U-GW, \method-U-FGW). Since the two slices contain $>$ 30,000 cells and $>$ 50,000 cells respectively, a full-rank solution is not feasible, hence we solve for low-rank solutions with the rank validated. We also solve semi-relaxed versions of Wasserstein (result denoted as \method-SR-W), GW (\method-SR-GW), FGW (\method-SR-FGW) problems using our \method\ algorithm, as well as using a particular setting of LOT-U that is equivalent to semi-relaxed solver (results denoted as LOT-SR-W, LOT-SR-GW, LOT-SR-FGW). 

We perform extensive grid search to find the best hyperparameter combinations for each method and each problem. The grid of hyperparameters searched for each method is reported in Table. \ref{table: spatiotempopral_grid}. The best performing hyperparameter combination for each method is reported in Table. \ref{table: spatiotempopral_validation} along with the performance on the validation genes. We pick the best hyperparameters using the Spearman correlation on the gene expression prediction task for 10 validation genes: \texttt{Ckb}, \texttt{Fabp7}, \texttt{Myl4}, \texttt{Tnnt2}, \texttt{Apoa2}, \texttt{Hba-x}, \texttt{Tubb3}, \texttt{Epha7}, \texttt{Ldha}, \texttt{Col1a2}, which are marker genes of various cell types as well. We report the performance of the alignment computed by each method using the Spearman correlation on the gene expression prediction task for 10 test genes, as well as the ARI and AMI on the cell type prediction task. Fig. \ref{fig:cell_type} visualizes the ground truth cell type classification versus the classification predicted by our method \method-SR-W.

\subsection{Runtime}
We report the runtime and OT cost of \method\ and LOT \cite{Scetbon2021LowRankSF} on this dataset (mouse embryo E11.5-12.5) as well as two other datasets (E9.5-10.5, E10.5-11.5) from \cite{chen2022spatiotemporal} in Table \ref{table:mouseemberyoruntimecost}. For all three datasets, \method\ achieves a better OT cost in a shorter time.

\begin{table}[h!]
\centering
\resizebox{\columnwidth}{!}{
\begin{tabular}{lcccc}
\toprule
\textbf{Dataset} & \textbf{LOT (seconds)} & \textbf{FRLC (seconds)} & \textbf{LOT (OT Cost)} & \textbf{FRLC (OT Cost)} \\
\midrule
Mouse embryo (E9.5--10.5)  & 2.545 & 1.112 & 0.440 & 0.385 \\
Mouse embryo (E10.5--11.5) & 4.209 & 1.190 & 0.371 & 0.344 \\
Mouse embryo (E11.5--12.5) & 8.667 & 1.889 & 0.478 & 0.439 \\
\bottomrule
\end{tabular}
}
\caption{Comparison of methods on Stereo-Seq mouse embryo spatial transcriptomics datasets using GPU and default settings: \texttt{min\_iter=10}, \texttt{max\_iter=100}, rank $r=50$.}
\label{table:mouseemberyoruntimecost}
\end{table}

\section{$n^{\mathrm{th}}$-roots of unity} \label{sec:nth_roots}
\subsection{Problem Statement} 

To test if FRLC can effectively coarse-grain transport between two datasets with obvious cluster structure, we generate a pair of two-dimensional datasets $\mathbf{Z}^{(1)}$ and $\mathbf{Z}^{(2)}$, with ten and five clusters respectively. We run FRLC with $\bm{T} \in \mathbb{R}_+^{10 \times 5}$ to see if the barycentric projections for the LC factorization (Definition~\ref{def:LC_projection}) can recover the cluster structure. These projections induce 10 and 5 barycenters for the first and second dataset,
and are defined by
\begin{align*}
\mathbf{Y}^{(1)} := \mathrm{diag}(1/\bm{g}_Q) \bm{Q}^\mathrm{T} \mathbf{Z}^{(1)}, \quad 
\mathbf{Y}^{(2)} := \mathrm{diag}(1/\bm{g}_R) \bm{R}^\mathrm{T} \mathbf{Z}^{(2)}.
\end{align*}
We examine, visually, whether these barycenters are good representatives of the clusters in each dataset, and whether the latent coupling depicts a reasonable transfer of mass between barycenters. We also run FRLC with $\bm{T} \in \mathbb{R}_+^{10 \times 10}$ and plot the barycenters from the resulting factorization for comparison. As discussed in \S~\ref{subsec:LC_projection}, the barycentric projections defined above, and in Definition~\ref{def:LC_projection} can be applied to factored couplings \cite{forrow19a, Scetbon2021LowRankSF}, yielding projections of the form:
\begin{align*}
\mathbf{Y}^{(1)} := \mathrm{diag}(1/\bm{g}) \bm{Q}^\mathrm{T} \mathbf{Z}^{(1)}, \quad 
\mathbf{Y}^{(2)} := \mathrm{diag}(1/\bm{g}) \bm{R}^\mathrm{T} \mathbf{Z}^{(2)}.
\end{align*}
Thus, we also ran the method of \cite{Scetbon2021LowRankSF} on this data, called LOT throughout the experiment, with $\bm{g} \in (\mathbb{R}_+^*)^5$ and $\bm{g} \in (\mathbb{R}_+^*)^{10}$, plotting its barycenters in each case along with the diagonal latent coupling $\mathrm{diag}(\bm{g})$.  

\subsection{Dataset and Preprocessing}

We instantiate a pair of two-dimensional datasets $\mathbf{Z}^{(1)}$ and $\mathbf{Z}^{(2)}$ as follows. Let $\mathcal{U}_n$ denote the $n^{\mathrm{th}}$-roots of unity:
\begin{align*}
\mathcal{U}_n := \{ e^{2\pi i k/ n} : k = 0, \dots, n \}.
\end{align*}
These complex numbers can be equivalently expressed as ordered pairs on the unit circle:
$$
\mathsf{c}_{k} = 
\begin{pmatrix}
\mathrm{Re}(e^{2\pi i k/ n}) \\
\mathrm{Im}(e^{2\pi i k/ n})
\end{pmatrix} = 
\begin{pmatrix}
\cos(2\pi k/ n)) \\
\sin(2\pi k/ n)
\end{pmatrix}.
$$
We consider $n$ uniformly weighted mixtures of Gaussians, where for each sample $X$, we first sample a root of unity uniformly,
$$
k \sim \mathrm{Uniform}(n),
$$
and conditionally on $k$, we sample $X$ from an isotropic Gaussian centered at this root of unity
$$
X \sim \mathcal{N}(\mathsf{c}_{k}, \sigma^{2} \mathrm{Id}_{2}),
$$
using a standard deviation $\sigma = 0.1$. Samples are generated with the \texttt{make\_blobs} function from \texttt{sklearn.datasets}. We generate two datasets in this way, $\mathbf{Z}^{(1)}$ for $n=10$, and $\mathbf{Z}^{(2)}$ for $n=5$. 
To generate dataset $\mathbf{Z}^{(1)}$, we first homogeneously scale the roots of unity to lie on a circle of radius $3$ before sampling, so that the two datasets do not overlap but still use the same standard deviation. We do not scale the centers used for $\bm{Z}^{(2)}$, so they all lie on the unit circle. 
We generate $\mathbf{Z}^{(1)} = \{ \mathbf{z}_1^{(1)}, \dots, \mathbf{z}_{1000}^{(1)} \}$ and $\mathbf{Z}^{(2)} = \{ \mathbf{z}_1^{(2)}, \dots, \mathbf{z}_{1000}^{(2)} \}$ using $1000$ samples each and form the empirical measures 
\begin{align*}
\mu := \sum_{i=1}^{1000} \frac{1}{1000} \delta_{\mathbf{z}_{i}^{(1)}} , \quad 
\nu := \sum_{j=1}^{1000} \frac{1}{1000} 
\delta_{\mathbf{z}_{j}^{(2)}} .
\end{align*}
supported on $\mathbf{Z}^{(1)}$ and $\mathbf{Z}^{(2)}$, corresponding to uniform probability vectors $\bm{a}, \bm{b} \in \Delta_{1000}$.

\subsection{Experiment Settings}

We used default hyperparameter settings for both methods. In particular, FRLC sets $\tau = 75$ and $\gamma=90$, with a maximum of 200 iterations and a minimum of 25 iterations, subject to the stopping criterion $\Delta$ given in \eqref{eqn:delta}. The LOT default settings are $\gamma = 10$ (there is no hyperparameter analogous to $\tau$ in LOT). Both methods use a random initialization and were run on CPU. 

\section{Discussion of differences between \method\ and existing low-rank optimal transport algorithms}

We provide an extensive comparison of \method\ against the parametrization and objective of \cite{Scetbon2021LowRankSF} in Appendix~\ref{sec:lowrankOT_overview}. As Latent-OT of \cite{lin2021making} is an extension of the $k$-Wasserstein barycenter problem, it has a distinct objective and thus performs worse on primal OT cost (Table~\ref{table:comparisonLin}), making a direct experimental comparison on primal cost minimization only appropriate relative to the works of \cite{Scetbon2021LowRankSF, scetbon2023unbalanced, scetbon2022linear}. This stated, we still list a number of distinctions between \method\ and Latent-OT -- noting that most differences between \method\ and \cite{forrow19a} transfer from this discussion since \cite{lin2021making} extends the $k$-Barycenter problem of \cite{forrow19a}. (1) Lin et al. optimize two sets of variables: sub-couplings $(\bm{Q}, \bm{R}, \bm{T})$ and anchor points $(z^x, z^y)$ on which the sub-couplings depend. \method\ only has $(\bm{Q},\bm{R},\bm{T})$ as variables of the optimization. (2) Cost matrices used in Lin et al. (2021) are built from distances between each dataset and its representative anchor points for $\bm{Q}, \bm{R}$, or the distances between the two sets of anchor points for $\bm{T}$. In contrast, ground costs used in FRLC to update $(\bm{Q},\bm{R},\bm{T})$ are always derived from the distance matrix $\bm{C}$ in the Wasserstein objective $\langle \bm{C}, \bm{P} \rangle_F$. Specifically, the cost matrices used by Lin et al. are:
$$
[\bm{C}_{\bm{Q}}]_{ik} = \| x_i - z_k^x \| _ 2^2, \quad [\bm{C}_{\bm{R}}] _ {j\ell} = \| y_j - z_\ell^y \| _ 2^2, \quad [\bm{C}_{\bm{T}}] _ {k\ell} = \| z_k^x - z_\ell^y \| _ 2^2,
$$
optionally using a Wasserstein distance for the entries of $\bm{C}_{\bm{T}}$. (3) \method\ costs are given in the exponents of the Gibbs kernels written above and below Equation~\ref{eqn:T_update_main}. These are derived directly from the rank-$r$ Wasserstein problem $\min _ {\bm{P} \in \Pi _ r(a,b)} \langle \bm{C}, \bm{P} \rangle _ {F}$ and differ substantially from those of the proxy objective in \cite{lin2021making, forrow19a}. (4) The different objectives and variables lead to very different algorithms: Lin et al. alternate updates to the sub-couplings $(\bm{Q},\bm{R},\bm{T})$ using Dykstra, with updates to the latent anchor points $(z^{x}, z^{y})$ using first-order conditions. In contrast, FRLC alternates semi-relaxed OT to update $(\bm{Q},\bm{R})$ and balanced OT to update $\bm{T}$. (5) Because FRLC does not require anchor points to define costs, FRLC can handle cost matrices which are not simple functions of distance. For example, if $\bm{C} _ {ij}$ is the price of transporting good $i$ to warehouse $j$ one may not be able to re-evaluate a price $c(x _ {i}, z _ {k}^x)$ between $x _ i$ and latent anchor $z _ {k}^x$. In such situations, while finding a low-rank plan may make sense (e.g. to approximate an assignment for a massive dataset), an ``anchor" may not have clear definition in the setting of general cost matrices. (6) The Lin et al. objective is only a proxy for a Wasserstein-type loss, and Lin et al. do not explore extensions to Gromov-Wasserstein (GW), or Fused GW, which FRLC readily generalizes to. A summary of the existing low-rank OT algorithms and key distinctions between them is given in Table~\ref{table:comparison}.

For completeness, we offer a compare against the work Latent OT \cite{lin2021making}, which solves a variation of the $k$-Wasserstein barycenter problem. As discussed, while their factorization is similar, their problem is distinct from \method\, as it does not solve the primal OT problem for general cost. We report the cost obtained by \method\ and by Latent OT on various simulated datasets in Table~\ref{table:comparisonLin}.

\begin{table}[h!]
\centering
\begin{tabular}{@{}lcc@{}}
\toprule
\textbf{Dataset} & \textbf{OT-cost (FRLC)} & \textbf{OT-cost (Lin et al.)} \\ \midrule
\makecell[l]{5\textsuperscript{th} and 10\textsuperscript{th} roots of unity \\ (rank $r_{1},r_{2}=5,10$)} & \textbf{1.174} & 2.124 \\ \midrule
\makecell[l]{Two-moons and 8-Gaussians \\ (rank $r=20$)} & \textbf{2.716} & 4.291 \\ \midrule
\makecell[l]{2D Gaussian mixture \\ (rank $r=20$)} & \textbf{0.552} & 0.922 \\ \midrule
\makecell[l]{10D Gaussian mixture \\ (rank $r=20$)} & \textbf{1.038} & 1.298 \\ \bottomrule
\end{tabular}
\caption{Comparison against \cite{lin2021making} in primal OT-cost $\langle \bm{C}, \bm{P} \rangle_{F}$.}
\label{table:comparisonLin}
\end{table}

\section{Limitations}
\label{sec:limitations}

Our method introduces an additional hyperparameter $\tau$ relative to previous approaches \cite{Scetbon2021LowRankSF}, controlling the strength of the KL penalty on the inner marginals when updating $\bm{Q}$ and $\bm{R}$. Empirically, we found FRLC to be robust to different choices of $\tau$, but applying the method optimally requires this additional hyperparamter in any grid-search. 

We also note that the non-asymptotic criterion $\Delta(\cdot,\cdot)$ is weak relative to stronger notions of convergence, and that often users might prefer to simply run the method up to some number of maximal iterations by setting the parameter for whether $\Delta(\cdot,\cdot)$ is used to \texttt{False}. The $\mathrm{W}$ optimization empirically converges to minima smoothly, so for the most part there is not much of a need for $\Delta$ except for early stopping. We recommend that users plot the loss over iterations and use it to set the tolerance parameter \texttt{tol}, and the minimum and maximum iteration parameters for the time-being. The needs for these parameters might vary widely--the minimum number of iterations should be very low (around 5) for simple datasets and substantially higher for high-dimensional, structured ones.

Although we demonstrate strong performance already, there is massive room for improvement as our implementation is preliminary and not at the level of a high-performance library like \texttt{ott-jax}. We use lightweight vanilla implementations of Sinkhorn as a sub-routine, not taking advantage of the momentum-based techniques which could accelerate it massively. Thus, one can imagine that the potential scalability and speed of this method could be much higher than reported in this document.

\section{Broader impacts}
\label{sec:broader_impacts}
FRLC is general enough to be used modularly within any ML algorithm using OT as a subroutine to help with scalability. We also note that the LC factorization is similar to a PCA in the context of OT, yielding an optimal low-rank coupling with an interpretable latent coupling factor.

\newpage
\begin{table}[]
\centering
\begin{tabular}{cc}
\toprule
Hyperparameter & Values  \\ \midrule
rank (Both)  & 50, 100, 200    \\
$\tau$ (Ours)    & 30, 50, 100 \\
$\tau$ (LOT-U)    & 0.99, 0.9, 0.7 \\
$\epsilon$ (LOT-U)    & 0.001, 0.01, 0.1 \\
\bottomrule
\end{tabular}
\caption{Hyperparameter grid considered in hyperparameter search for validation. \cite{scetbon2023unbalanced,klein2023moscot} scales $\tau' = \frac{\tau}{1-\tau}$ and their $\tau'$ are in the same range.} \label{table: spatiotempopral_grid}
\end{table}

\begin{table}[]
\centering
\begin{tabular}{cccccc}
\toprule
Solver & Rank & $\tau$ (Ours) & $\tau$ (UL) & $\epsilon$ (UL) & Spearman $\rho$ (Validation) \\ \midrule
\method-SR-W (Ours) & 200 & 30 & - & - & 0.465 \\
\method-SR-GW (Ours) & 100 & 100 & - & - & 0.288\\
\method-SR-FGW (Ours) & 200 & 50 & - & - & 0.465\\
\method-U-W (Ours) & 200 & 100 & - & - & 0.471 \\
\method-U-GW (Ours) & 50 & 30 & - & - & 0.282\\
\method-U-FGW (Ours) & 200 & 30 & - & - & 0.353 \\
LOT-U-W & 200 & - & 0.9 & 0.001 & 0.394\\
LOT-U-GW & 100 & - & 0.99 & 0.01 & 0.001\\
LOT-U-FGW & 200 & - & 0.7 & 0.001 & 0.393\\
LOT-SR-W & 200 & - & 0.7 & 0.001 & 0.394\\
LOT-SR-GW & 200 & - & 0.7 & 0.1 & 0.003\\
LOT-SR-FGW & 200 & - & 0.99 & 0.001 & 0.399\\
\bottomrule
\end{tabular}
\caption{The best performing hyperparameters for each solver and the performance on the validation genes.} \label{table: spatiotempopral_validation}
\end{table}

\begin{figure}[tbp]
\begin{center}
\centerline{\includegraphics[width=\linewidth]{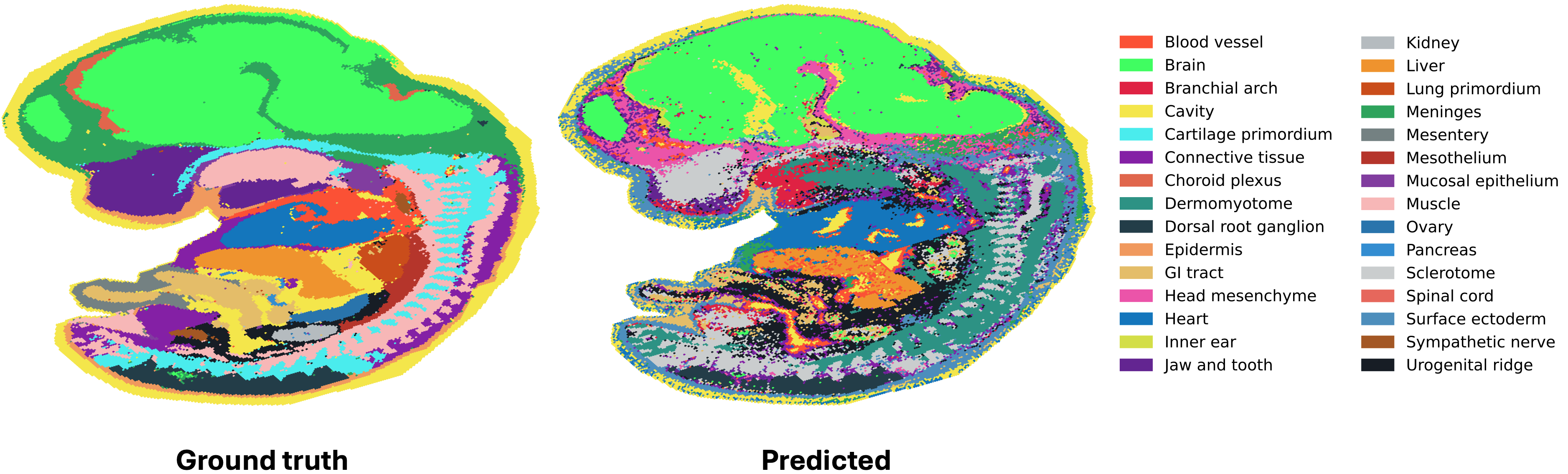}}
\caption{Ground truth and the predicted cell type classification of the E12.5 embryo.}
\label{fig:cell_type}
\end{center}
\end{figure}

\begin{figure*}[]
\begin{center}
\centerline{\includegraphics[width=\linewidth]{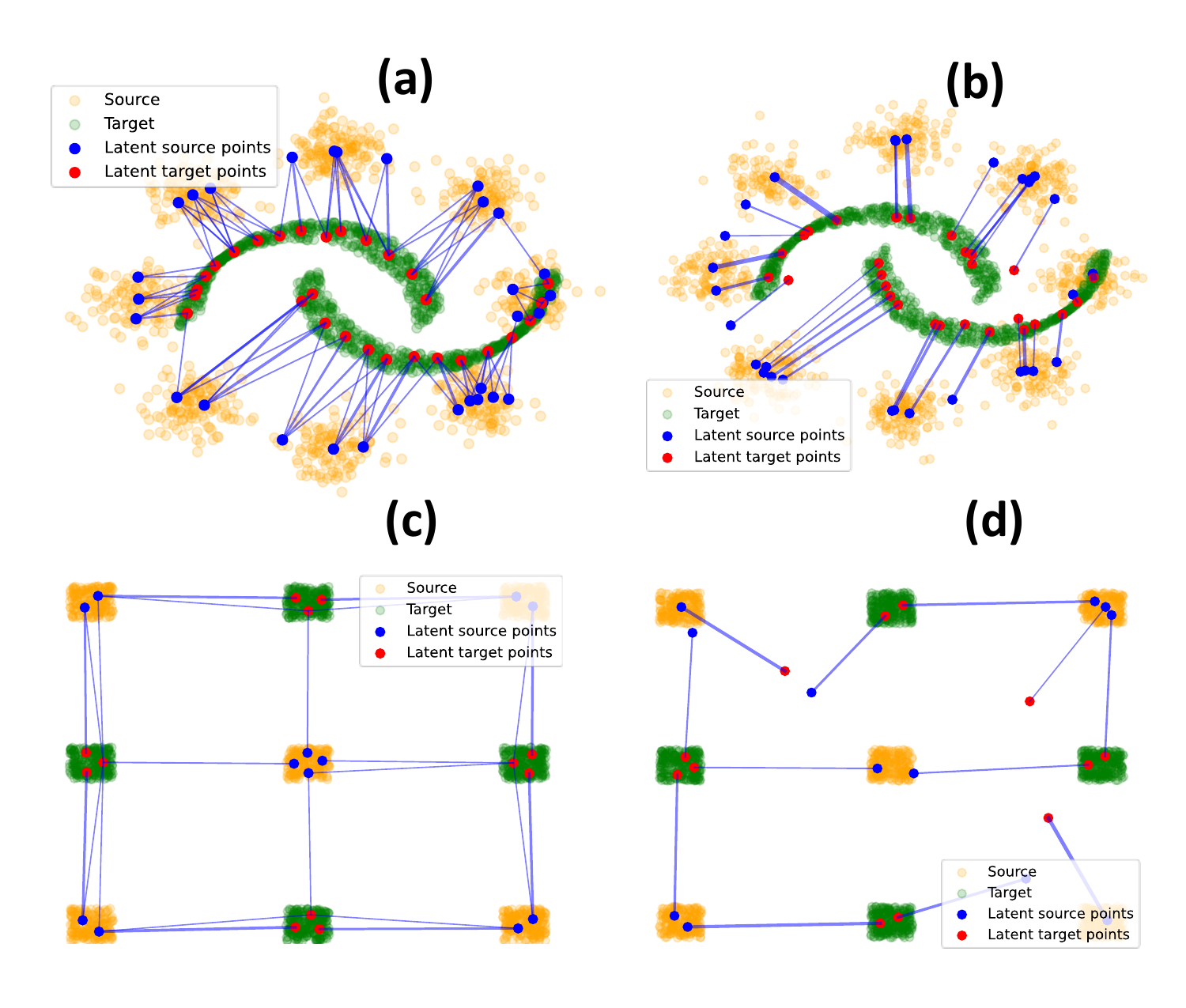}}
\caption{(a) LC-projection barycenters aligned with \method\ latent-coupling $\bm{T}$ on (a) two moons and eight Gaussians ($r=30$), (b) LC-projection barycenters aligned with $\mathrm{diag}(\bm{g})$ from \cite{Scetbon2021LowRankSF} ($r=30$),  (c) the checkerboard dataset with \method\ latent coupling aligned barycenters ($r=12$), and (d) with diagonal alignment \cite{Scetbon2021LowRankSF} ($r=12$). We show in \ref{remark:diagonalization} that the output of \method\ can be diagonalized to the factorization of \cite{forrow19a} (Figure~\ref{fig:diagprojection}).}
\label{fig:projection}
\end{center}
\end{figure*}

\begin{figure*}[]
\centerline{\includegraphics[scale=0.4]{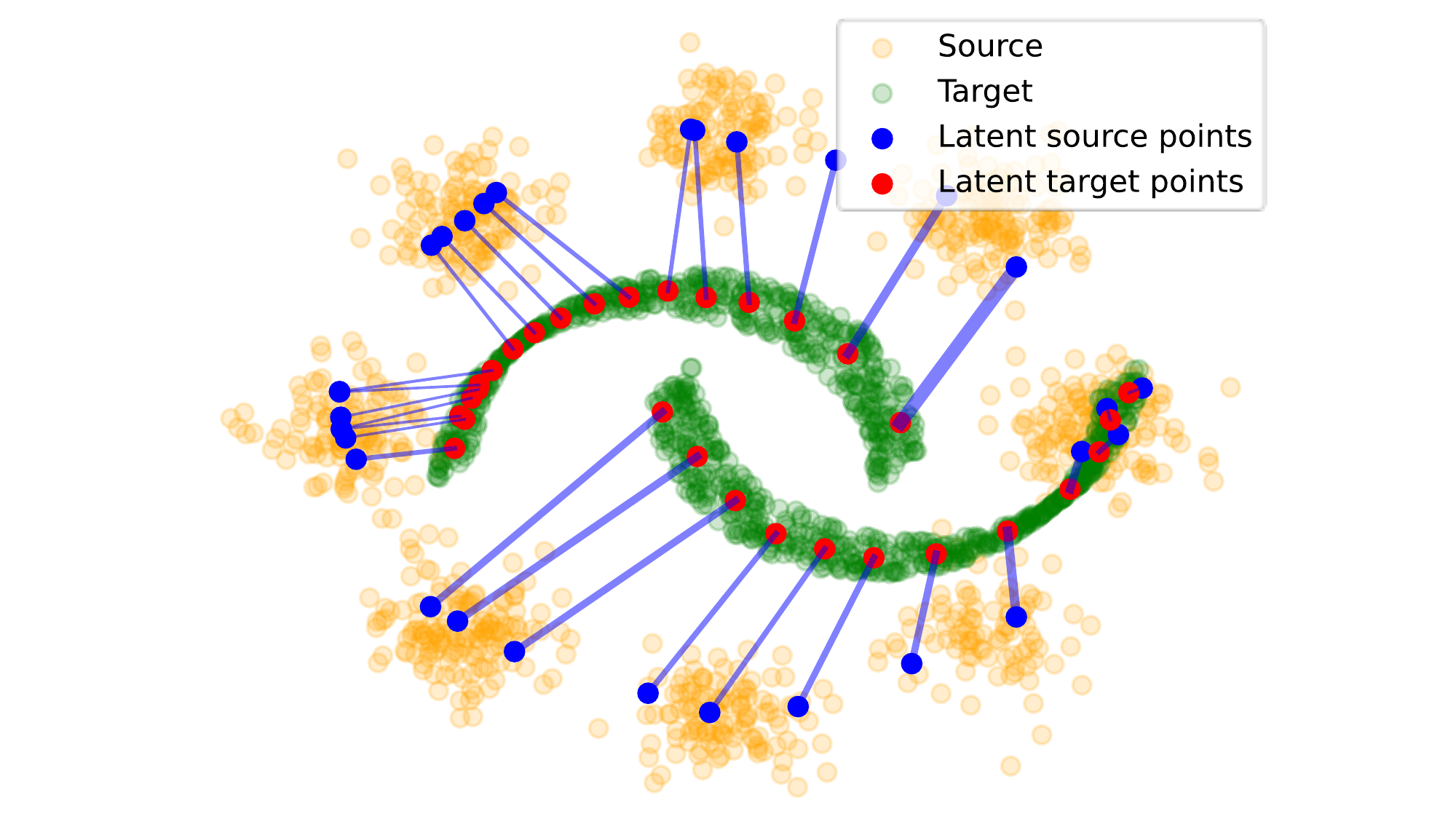}}
\caption{As discussed in \ref{remark:diagonalization}, one may recover the factorization of \cite{forrow19a} as a sub-case of the LC-factorization. Shown is the factorization found by diagonalizing the output of FRLC from $(\bm{Q},\bm{R},\bm{T}) \mapsto (\bm{Q}',\bm{R},\diag(\bm{g}_{R}))$.}
\label{fig:diagprojection}
\end{figure*}

\end{document}